\ifdefined\XeTeXversion\else\ifdefined\pdfoutput\pdfoutput=1\fi\fi
\documentclass[11pt]{article}

\usepackage[letterpaper,margin=1in,headheight=12pt,headsep=25pt,footskip=30pt]{geometry}
\usepackage[T1]{fontenc}
\usepackage[utf8]{inputenc}
\usepackage{microtype}
\usepackage{amsmath,amssymb,amsthm,mathtools,bm}
\usepackage{booktabs,array,multirow}
\usepackage{tabularx}
\usepackage{enumitem}
\usepackage{graphicx}
\usepackage{diagbox}
\usepackage{authblk}
\usepackage{algorithm}
\usepackage{algpseudocode}
\usepackage{mathrsfs}
\usepackage{float}
\usepackage{caption}
\usepackage{tikz}
\usetikzlibrary{arrows.meta,calc,positioning,shapes.geometric}
\usepackage[round,compress,authoryear]{natbib}
\usepackage{xcolor}
\usepackage{hyperref}
\usepackage[bf,big]{titlesec}
\usepackage[sc]{mathpazo}
\usepackage{setspace}
\usepackage{lmodern}

\definecolor{darkblue}{rgb}{0.0,0.0,0.65}
\definecolor{darkred}{rgb}{0.68,0.05,0.0}
\definecolor{darkgreen}{rgb}{0.0,0.29,0.29}
\definecolor{darkpurple}{rgb}{0.47,0.09,0.29}
\hypersetup{colorlinks=true,citecolor=darkblue,linkcolor=darkred,
  filecolor=darkblue,urlcolor=darkblue,
  pdftitle={Optimal Regret for Sobolev Kernel-Based Reinforcement Learning},
  pdfauthor={Haichen Hu and David Simchi-Levi}}

\allowdisplaybreaks

\newtheoremstyle{mytheoremstyle}{\topsep}{\topsep}{\normalfont}{}
  {\bfseries}{.}{.5em}{}
\theoremstyle{mytheoremstyle}
\newtheorem{theorem}{Theorem}[section]
\newtheorem{lemma}{Lemma}[section]
\newtheorem{proposition}{Proposition}[section]
\newtheorem{corollary}{Corollary}[section]
\newtheorem{assumption}{Assumption}[section]
\newtheorem{definition}{Definition}[section]

\numberwithin{equation}{section}

\makeatletter
\renewcommand\AB@authnote[1]{\textsuperscript{#1}\hspace{5pt}}
\def\maketag@@@#1{\hbox{\m@th\normalfont\normalsize#1}}
\def\@fnsymbol#1{\ensuremath{\ifcase#1\or\natural\or\dagger\or *\or\ddagger\or
  \mathsection\or\mathparagraph\or\|\or **\or\dagger\dagger\or\ddagger\ddagger
  \else\@ctrerr\fi}}
\renewcommand{\thefootnote}{\fnsymbol{footnote}}
\newlength\aftertitskip
\newlength\beforetitskip
\newlength\interauthorskip
\newlength\aftermaketitskip
\renewcommand{\maketitle}{%
  \par\begingroup
  \def\thefootnote{\fnsymbol{footnote}}%
  \def\@makefnmark{\hbox to 4pt{$^{\@thefnmark}$\hss}}%
  \@maketitle\@thanks
  \endgroup
  \setcounter{footnote}{0}%
  \let\maketitle\relax\let\@maketitle\relax
  \gdef\@thanks{}\gdef\@author{}\gdef\@title{}\let\thanks\relax
}
\def\@startauthor{\large\bfseries}
\def\@endauthor{}
\def\@maketitle{\vbox{\hsize\textwidth
  \linewidth\hsize\vskip\beforetitskip
  {\begin{center}\LARGE\bfseries\@title\par\end{center}}\vskip\aftertitskip
  {\begin{center}
    \def\and{\unskip\enspace{\rm and}\enspace}%
    \def\addr{\normalsize\color{black!95}\it}%
    \def\email{\hfill\small\bf\color{darkpurple}\tt}%
    \def\name{\normalsize\color{darkgreen}\bf}%
    \def\AND{\@endauthor\rm\hss\vskip\interauthorskip\@startauthor}%
    \@startauthor\@author\@endauthor
  \end{center}}%
}}
\makeatother
\renewenvironment{abstract}
  {\vskip0.250in\centerline{\large\bfseries Abstract}\vspace{0.5ex}\begin{quote}}
  {\par\end{quote}\vskip1ex}

\renewcommand{\tilde}[1]{\widetilde{#1}}

\usepackage{notation-2}
\renewcommand{\hat}[1]{\widehat{#1}}

\newcommand{\reg}{\text{Reg}}

\newcommand{\pll}{\kern0.3em/\kern-0.9em/\kern0.3em}

\title{\normalfont Minimax Regret for Sobolev Kernel-Based Reinforcement Learning}
\author[1,2]{Haichen Hu\thanks{{Email: \texttt{huhc@mit.edu}}}}
\author[2,3]{David Simchi-Levi\thanks{{Email: \texttt{dslevi@mit.edu}}}}
\affil[1]{Center for Computational Science and Engineering, MIT}
\affil[2]{Department of Civil and Environmental Engineering, MIT}
\affil[3]{Institute for Data, Systems, and Society, MIT}
\date{}

\begin{document}
\maketitle
\singlespacing

\begin{abstract}
We study the minimax regret rate for episodic kernel-based reinforcement learning, where the expected reward and transition density belong to a Sobolev reproducing kernel Hilbert space, with continuous state and action spaces, and adversarial initial states. For Sobolev smoothness parameter $\nu>0$, state dimension $d_S$, and action dimension $d_A$, we prove that the minimax expected regret over $T$ episodes is
$\widetilde\Theta(T^{\max\{e_A,e_B\}})$, where $e_A=\frac{\nu+d_A}{2\nu+d_A}$, $e_B=\frac{\nu+d_A+3d_S/2}{2\nu+d_A+2d_S}$
with the other model parameters fixed. The two exponents reflect the distinct difficulties of action selection and learning across states. We introduce Sobolev Optimal Value Iteration (\texttt{SOVI}), which attains this rate from observed rewards and transitions. For fixed smoothness and reference-kernel lengthscale, its regret bound has a prefactor polynomial in the state-action dimension $d=d_S+d_A$, horizon $H$, transition and reward RKHS norm bounds $u,B_r$, and kernel norm-comparison factor $\Lambda$, up to logarithmic factors. The algorithm separates initial action selection from continuation-value estimation and combines local polynomial approximation with the shared RKHS structure of the approximation errors. Our matching upper and lower bounds determine the optimal dependence on the number of episodes for Sobolev kernel-based reinforcement learning. Thus, we partially address the open problem in \citet{vakili2024} under the condition of the Sobolev kernel setup.

\medskip
\noindent\textbf{Keywords:} reinforcement learning, reproducing kernel Hilbert space, Sobolev kernel, minimax regret.
\end{abstract}

\section{Introduction}
\label{sec:introduction}
Reinforcement learning (RL) provides a framework for learning to make
sequential decisions in unknown environments, with applications in
recommendation systems \citep{zhao2018page,ie2019slateq},
inventory management \citep{oroojlooyjadid2022beer},
and healthcare \citep{komorowski2018clinician}.
Motivated by such applications, a substantial body of work studies
the statistical and computational foundations of RL and related
contextual decision-making problems
\citep[e.g.,][]{azar2017,jin2018,jin2020,jia2020value,
foster2020beyond,jin2021bellman,mou2022markovian,xia2023krylov,
jia2024offline,qian2024offline,hu2025contextual,hu2026model}.

Function approximation enables reinforcement learning (RL) in environments with large or continuous state spaces. Its theoretical foundations extend from tabular Markov decision processes (MDPs) \citep{azar2017,jin2018} to linear models \citep{jin2020} and kernel-based models \citep{yang2020provably}. Kernel methods provide rich nonlinear representations while retaining a mathematical structure suitable for analysis. Their connection to wide neural networks through the neural tangent kernel \citep{jacot2018} further motivates understanding their statistical limits. A central question is how the complexity of a kernel model determines the regret achievable through online interaction \citep{vakili2024}.

We consider episodic MDPs with unknown transition densities and unknown expected rewards. A known positive definite kernel $k$ specifies an RKHS $\cH_k$ on the state-action space. For every next state $s'$, the section $P^h(s'\mid\cdot)$ has RKHS norm at most $u$, and each expected reward $r^h$ has RKHS norm at most $B_r$. The learner observes reward samples in $[0,1]$ with conditional mean $r^h(s,a)$. We study Sobolev-type kernels, including Mat\'ern kernels, on a state space of dimension $d_S$ and an action space of dimension $d_A$. Write $d:=d_S+d_A$.

A central difficulty is that Bellman regression estimates $r^h+P^hV$ using continuation values $V$ learned from the same observations. Confidence bounds must therefore account for both reward noise and the data dependence of these continuation values. \citet{yang2020provably} address the latter through a covering argument over value functions. For Mat\'ern kernels, their regret bound has order $\widetilde O(T^{\frac{\nu+3d/2}{2\nu+d}})$ at a fixed horizon \citep{maran2024a}, which is not sublinear when $d\ge2\nu$. \citet{chowdhury2023} obtain tighter confidence intervals under an additional optimistic closure assumption. Using local approximation in a broader smooth-MDP framework, \citet{maran2024a} establish the bound $\widetilde O(T^{(\nu+2d)/(2\nu+2d)})$, which is sublinear for every $\nu>0$. These results motivate a sharper characterization of the regret in kernel-based reinforcement learning.

\citet{vakili2024} formulated this question: does the kernel-based transition assumption permit no-regret learning, and what is the optimal regret growth in the number of episodes and the horizon? The kernel-bandit order $T^{\frac{\nu+d}{2\nu+d}}$ provides a natural candidate when $d$ is the dimension of the entire decision domain \citep{scarlett2017}. In their discussion of lower bounds, \citet{maran2024a} propose this order and $T^{\frac{\nu+2d}{2\nu+2d}}$ as two possibilities for continuous kernel-based RL. In an MDP, however, the learner selects actions while subsequent states are generated by the transition law. Determining the minimax rate therefore requires separating the difficulty of action selection from that of learning across states. In this paper, we narrow down to one specific type of RKHS, the Sobolev kernel that contains the Mat\'ern setting, and ask the following question: 

\begin{center}\itshape
What is the minimax regret of Sobolev kernel based reinforcement learning, and how do the state and action dimensions determine this rate?
\end{center}

\paragraph{Our Contributions.}
We prove that the minimax expected regret for Sobolev-kernel RL is
\[
\inf_{\mathsf{Alg}}
\sup_{M\in\mathcal M(u,B_r)}
\sup_{\mathsf{Adv}}
\mathbb E^{M,\mathsf{Alg},\mathsf{Adv}}[\reg(T)]
=\widetilde\Theta\!\left(T^{\max(e_A,e_B)}\right),
\]
with $H$ and the other model parameters fixed, where
\[
e_A:=\frac{\nu+d_A}{2\nu+d_A},\ 
e_B:=\frac{\nu+d_A+1.5d_S}{2\nu+d_A+2d_S}.
\]
Here $\mathcal M(u,B_r)$ is the model class with parameter conditions
specified in Definition \ref{def:model-class}; $\mathsf{Alg}$ ranges over
nonanticipating algorithms that know the model class parameters,
and $\mathsf{Adv}$ ranges over admissible initial-state adversaries.
Therefore, our result disproves the conjectured order for kernel-based RL \citep{maran2024a}.

We introduce Sobolev Optimal Value Iteration (\texttt{SOVI}), which attains this rate with unknown rewards, continuous actions, and adaptively chosen initial states. It jointly estimates reward and continuation contributions through Bellman regression, with polynomial dependence on $H$.


\section{Related Works}
\label{sec:related}

\paragraph{Kernel-Based RL.}
A central difficulty in kernel-based RL is that the regression targets involve value functions estimated from the same data. Under Bellman completeness, \citet{yang2020provably} address this issue by covering the class of possible value functions. However, their Mat\'ern regret bound is $\widetilde O(T^{(\nu+3d/2)/(2\nu+d)})$ \citep{maran2024a}, which is not sublinear when $d\geq2\nu$. Here and below, we compare rates at fixed horizon and write $d=d_S+d_A$. \citet{chowdhury2023} obtain tighter confidence intervals through kernel mean embeddings, but require the additional optimistic closure assumption. The earlier order-optimality claim of \citet{vakili2024kernelized} relies on an invalid local covering step, as explained by \citet[Appendix~F]{maran2024a} and also noted by \citet{pavlovic2026}.

Other kernel-based regret guarantees use assumptions on mean dynamics \citep{chowdhury2019,curi2020,kakade2020} or product-RKHS transitions \citep{yangWang2020}, which differ from our pointwise RKHS constraint on transition densities. Related results concern average-reward RL \citep{vakiliOlkhovskaya2024}, environments generated from GP priors \citep{bayrooti2025,flynn2026}, and trajectory-preference feedback \citep{pavlovic2026}. Meanwhile, \citet{yeh2023,kayal2025} study sample complexity rather than cumulative online regret. We determine the minimax regret for Sobolev kernel-based RL with unknown reward functions and transition densities, addressing the Sobolev case of the open problem posed by \citet{vakili2024}.

\paragraph{Smooth MDPs.}
Lipschitz and adaptive-discretization methods exploit regularity of the state-action space \citep{song2019,sinclair2019,cao2020,sinclair2020,touati2020,domingues2021,sinclair2023}, with average-reward extensions studied by \citet{kar2025}. For smoother MDPs, \citet{maran2024c} study regret, while \citet{maran2024b} give sample-complexity guarantees with a generative model. In particular, \citet{maran2024a} establish the regret bound $\widetilde O(T^{(\nu+2d)/(2\nu+2d)})$ for all Mat\'ern kernels through their Mildly Smooth MDP framework. Their framework permits unknown smooth rewards and guarantees sublinear regret, but its upper bound does not match the bandit lower-bound benchmark. For our Sobolev RKHS model, we exploit the norm constraints and bounded state-occupancy densities to obtain matching upper and lower bounds with a smaller regret exponent.

\paragraph{Kernel Bandits and the Minimax Benchmark.}
Kernel-bandit algorithms and information-gain bounds have been studied by \citet{valko2013,vakili2021,whitehouse2023}. For Mat\'ern kernel bandits, the lower bound of \citet{scarlett2017} and matching upper bounds \citep{salgia2021,liScarlett2022} establish the exponent $(\nu+d)/(2\nu+d)$. This rate has also been used as an RL benchmark: \citet{maran2024a} discuss it and their own upper-bound exponent as two candidates for the optimal rate. Neither candidate gives the minimax exponent for our Sobolev RKHS model. Specifically, for $d_S\geq1$ and $H\geq2$, our matching upper and lower bounds have exponent $\max(e_A,e_B)<(\nu+d)/(2\nu+d)$ (Corollary~\ref{cor:kernel-bandit-not-minimax}). Our minimax rate is therefore polynomially smaller than the full-domain kernel-bandit benchmark. Unlike a kernel bandit, the learner chooses only actions, while subsequent state distributions have bounded densities; the bandit lower bound cannot be transferred unchanged to this model.

\paragraph{Notations.}
We write $[n]:=\{1,\ldots,n\}$ and $\mathbb N_0:=\{0,1,\ldots\}$. For a finite set $\mathcal X$, $|\mathcal X|$ denotes its cardinality; for a vector $x\in\mathbb R^n$, $|x|=\|x\|_2$ denotes its Euclidean norm. We write $I_n$ for the $n\times n$ identity matrix. For symmetric matrices $A$ and $B$, $A\preceq B$ (or $B\succeq A$) means that $B-A$ is positive semidefinite.
For a set $E$, $\mathbf1_E$ denotes its indicator function. We write $\|f\|_\infty$ for the supremum norm and $\|f\|_{L^p(\mu)}:=(\int |f|^p\,d\mu)^{1/p}$ for $p\in\{1,2\}$. For a measurable space $\mathcal X$, $\Delta(\mathcal X)$ denotes the set of probability distributions on $\mathcal X$. For random variables $X_1,\ldots,X_n$, $\sigma(X_1,\ldots,X_n)$ denotes the $\sigma$-field they generate. The Gamma function is $\Gamma(x):=\int_0^\infty t^{x-1}e^{-t}\,dt$ for $x>0$.
We use $O(\cdot)$ for upper bounds up to constant factors. The notation $\widetilde O(\cdot)$ additionally suppresses logarithmic factors, while $\widetilde\Theta(\cdot)$ denotes matching upper and lower bounds up to constant and logarithmic factors.

\section{Sobolev Kernel-Based Reinforcement Learning}
\label{sec:setup}

In this section, we describe the interaction protocol and the Sobolev RKHS assumptions on the unknown rewards and transitions.

An episodic Markov decision process (MDP) is specified by
\[
M=(\cS,\cA,H,\{P^h\}_{h=1}^{H-1},\{r^h\}_{h=1}^{H}),
\]
where $\cS$ is the state space, $\cA$ is the action space, and $H\ge2$ is the number of steps in each episode. Write $\cZ:=\cS\times\cA$ for the state-action space. At step $h<H$, $P^h(\cdot\mid s,a)$ specifies the distribution of the next state after the decision maker (DM) takes action $a$ in state $s$. The function $r^h:\cZ\to[0,1]$ specifies the expected reward at step $h$. Both the reward functions and transition kernels are unknown to the DM.

The state space $\cS$ is equipped with a finite reference measure $\mu$. The action space $\cA$ is equipped with its Borel $\sigma$-field, and $\cZ$ carries the product $\sigma$-field. Each transition kernel admits a jointly measurable density with respect to $\mu$, also denoted by $P^h(s'\mid s,a)$. Thus, for every $h\in[H-1]$, $z\in\cZ$, and measurable $E\subseteq\cS$,
\[
\int_{\cS}P^h(s'\mid z)\mu(ds')=1,\ P^h(E\mid z)=\int_E P^h(s'\mid z)\mu(ds').
\]
For any bounded measurable $V:\cS\to\mathbb R$, we write $(P^hV)(z):=\int_{\cS}V(s')P^h(s'\mid z)\mu(ds'),\ z\in\cZ$,
and set $P^HV\equiv0$.

A randomized Markov policy $\pi=(\pi^1,\ldots,\pi^H)$ is a sequence of measurable stochastic kernels from $\cS$ to $\cA$, where $\pi^h(\cdot\mid s)\in\Delta(\cA)$ specifies the action distribution at state $s$ and step $h$. A history-dependent policy may additionally depend on the states, actions, and rewards observed earlier in the episode. Let $\Pi$ denote the class of randomized, history-dependent policies, which contains all Markov policies.

The DM interacts with the MDP over $T\ge2$ episodes. At the beginning of episode $t$, an adaptive adversary selects an initial state $s_t^1$. The DM observes $s_t^1$ and chooses a policy $\pi_t\in\Pi$. At each step $h\in[H]$, the DM observes $s_t^h$, selects $a_t^h$ according to $\pi_t$, and receives a reward $R_t^h\in[0,1]$. For a Markov policy, the action is sampled from $\pi_t^h(\cdot\mid s_t^h)$. If $h<H$, the DM also observes the next state $s_t^{h+1}$. For simplicity, we write the joint state action pairs as
\[
z_t^h:=(s_t^h,a_t^h),\ z_t^{1:H}:=(z_t^1,\ldots,z_t^H).
\]

Let $\mathcal I_t^h$ denote the information available after choosing $a_t^h$ and before observing the reward and next state. We assume
\begin{equation}
\mathbb E[R_t^h\mid\mathcal I_t^h]=r^h(z_t^h),\ h\in[H],\ s_t^{h+1}\mid\mathcal I_t^h\sim P^h(\cdot\mid z_t^h),\ h<H.
\label{eq:observation-law}
\end{equation}
The observation mechanism is fixed for each model and depends only on the current stage and state-action pair; we leave its distribution implicit in $M$. The reward and next state may be dependent. Deterministic unknown rewards are included by taking $R_t^h=r^h(z_t^h)$.

For a model $M$, a policy $\pi\in\Pi$, and an initial state $s$, write $\mathbb P_s^{M,\pi}$ and $\mathbb E_s^{M,\pi}$ for probability and expectation under the process initialized at $s^1=s$. Let $\cH_1$ be the trivial $\sigma$-field. The history before episode $t$ is $\cH_t:=\sigma\left(z_\tau^h,R_\tau^h:\tau<t,\ h\in[H]\right)$. The adversary selects $s_t^1$ using this history and its own private randomization, without access to the DM's private randomness or future observations. The initial states may therefore be fixed, i.i.d., or chosen adaptively.

Now, we are ready to introduce the more specific kernel-based RL setup. Throughout the paper, we take $\cS=[0,1]^{d_S}$ with $d_S\ge1$ and use Lebesgue measure for $\mu$. We take $\cA=[0,1]^{d_A}$ with $d_A\ge1$ as our continuous action space. Thus, $\cZ=[0,1]^{d_S+d_A}$. 

Let $k:\cZ\times\cZ\to\mathbb R$ be a measurable positive definite kernel satisfying $k(z,z)\le1$ for every $z\in\cZ$. Write $k_z:=k(z,\cdot)$ and define $\cH_{k,0}:=\operatorname{span}\{k_z:z\in\cZ\}$. It induces an inner product on $\cH_{k,0}$:
\[
\left\langle\sum_{i=1}^{m}a_i k_{z_i},\sum_{j=1}^{n}b_j k_{w_j}\right\rangle_{\cH_{k,0}}:=\sum_{i=1}^{m}\sum_{j=1}^{n}a_i b_j k(z_i,w_j),
\]
for all $m,n\ge1$, $a_i,b_j\in\mathbb R$, and $z_i,w_j\in\cZ$. The associated reproducing kernel Hilbert space (RKHS) $\cH_k$ is the completion of $\cH_{k,0}$ under the induced norm $\|f\|_{\cH_{k,0}}:=\sqrt{\langle f,f\rangle_{\cH_{k,0}}}$. Its elements are identified with functions on $\cZ$ through pointwise limits. We denote the extended inner product and norm by $\langle\cdot,\cdot\rangle_{\cH_k}$ and $\|\cdot\|_{\cH_k}$. For every $f\in\cH_k$ and $z,z'\in\cZ$, we have
\[
k_z\in\cH_k,\ \langle k_z,k_{z'}\rangle_{\cH_k}=k(z,z'),\ f(z)=\langle f,k_z\rangle_{\cH_k}.
\]
Consequently, we know that $|f(z)|\le\|f\|_{\cH_k}\sqrt{k(z,z)}\le\|f\|_{\cH_k}$.

Now, we impose the following RKHS condition on the unknown transitions \citep{vakili2024,vakili2024kernelized,yeh2023}.

\begin{assumption}
\label{ass:kernel-transition}
There is a known constant $u<\infty$ such that $\forall h\in[H-1],\ \forall s'\in\cS$, we have
\[
P^h(s'\mid\cdot)\in\cH_k,\ \|P^h(s'\mid\cdot)\|_{\cH_k}\le u.
\]
\end{assumption}

This condition is an RKHS analog of the linear MDP \citep{jin2020} and is widely adopted in kernel-based RL \citep{vakili2024}. It constrains the dependence of the transition density on the current state-action pair, uniformly over the next state $s'$.

We assume that the expected reward functions belong to the same RKHS.

\begin{assumption}
\label{ass:kernel-reward}
There is a constant $0\le B_r<\infty$ such that $r^h\in\cH_k,\ \|r^h\|_{\cH_k}\le B_r,\ \forall h\in[H]$.
\end{assumption}

We furthermore retain the following regularity condition.

\begin{assumption}
\label{ass:action-regularity}
The map $(a,a')\mapsto k((s,a),(s,a'))$ is continuous on $\cA\times\cA$, and $a\mapsto r^h(s,a)$ is continuous on $\cA$, for every $s\in\cS$ and $h\in[H]$.
\end{assumption}

In this paper, we specifically focus on the Sobolev-type kernels, whose RKHSs are norm-equivalent to Sobolev spaces \citep{adamsFournier2003}. For $n\ge1$ and $\tau>n/2$, we define
\[
\begin{aligned}
H^\tau(\mathbb R^n)&:=\left\{F\in L^2(\mathbb R^n):\|F\|_{H^\tau(\mathbb R^n)}<\infty\right\},\ \|F\|_{H^\tau(\mathbb R^n)}^2&:=\int_{\mathbb R^n}(1+4\pi^2|\eta|^2)^\tau|\widehat F(\eta)|^2\ d\eta.
\end{aligned}
\]
Here $\widehat F$ denotes the $L^2$ Fourier transform, with the convention $\widehat F(\eta)=\int_{\mathbb R^n}F(x)e^{-2\pi i x\cdot\eta}\ dx$ for $F\in L^1(\mathbb R^n)\cap L^2(\mathbb R^n)$. Since $\tau>n/2$, each element of $H^\tau(\mathbb R^n)$ has a continuous representative, which we use throughout. For $\mathcal X\subseteq\mathbb R^n$, we define
\[
H^\tau(\mathcal X):=\{F|_{\mathcal X}:F\in H^\tau(\mathbb R^n)\},\ \|g\|_{H^\tau(\mathcal X)}:=\inf_{\substack{F\in H^\tau(\mathbb R^n)\\F|_{\mathcal X}=g}}\|F\|_{H^\tau(\mathbb R^n)}.
\]
The restriction norm makes $H^\tau(\mathcal X)$ a Hilbert space. Since $\tau>n/2$, point evaluations are bounded \citep{adamsFournier2003}, so $H^\tau(\mathcal X)$ is also an RKHS \citep{aronszajn1950}. This motivates considering kernels whose RKHSs coincide with $H^\tau(\mathcal X)$ as sets of functions with equivalent norms.

\begin{definition}[Sobolev kernel]
\label{def:sobolev-kernel}
A positive definite kernel $k$ on $\mathcal X\subseteq\mathbb R^n$ is of Sobolev type of order $\tau>n/2$ if $\cH_k(\mathcal X)=H^\tau(\mathcal X)$ as sets of functions, and there are constants $0<c_1\le c_2<\infty$ such that
\[
c_1\|g\|_{H^\tau(\mathcal X)}\le\|g\|_{\cH_k(\mathcal X)}\le c_2\|g\|_{H^\tau(\mathcal X)},\ g\in H^\tau(\mathcal X).
\]
We use $\nu:=\tau-n/2>0$ as the smoothness parameter. Thus, smoothness $\nu$ corresponds to Sobolev order $\nu+n/2$ in dimension $n$.
\end{definition}
For a fixed $\nu>0$, let $d:=d_S+d_A$ denote the dimension of the joint state-action variable $z=(s,a)$. We take $k$ on $\cZ$ to be of Sobolev type of order $\nu+d/2$, with $k(z,z)\le1$. The smoothness condition, therefore, concerns the joint state-action variables.

\begin{example}[Mat\'ern kernel]
\label{ex:Matern}
\label{lem:matern-sobolev-type}
Let $n\ge1$, $\nu>0$, $\ell>0$, and $m:=\nu+n/2$. The normalized Mat\'ern kernel is $k(x,x'):=\kappa_\nu(x-x')$, where
\[
\kappa_\nu(x)=\frac{2^{1-\nu}}{\Gamma(\nu)}\left(\frac{\sqrt{2\nu}|x|}{\ell}\right)^\nu K_\nu\!\left(\frac{\sqrt{2\nu}|x|}{\ell}\right),\ x\ne0,
\]
$K_\nu$ is the modified Bessel function of the second kind, and $\kappa_\nu(0)=1$ by continuity. This kernel is continuous and positive definite, with spectral density \citep{rasmussen2006}
\[
\widehat\kappa_\nu(\eta)=C_{\nu,n}\left(2\nu/\ell^2+4\pi^2|\eta|^2\right)^{-m},\ C_{\nu,n}:=\frac{2^n\pi^{n/2}\Gamma(m)(2\nu)^\nu}{\Gamma(\nu)\ell^{2\nu}}.
\]
Setting $b:=\sqrt{2\nu}/\ell$. Since $\min\{1,b^2\}(1+4\pi^2|\eta|^2)\le b^2+4\pi^2|\eta|^2\le\max\{1,b^2\}(1+4\pi^2|\eta|^2)$, Lemma~\ref{lem:fourier-criterion} and \ref{lem:rkhs-uniqueness-restriction} imply that the Mat\'ern kernel $k$ is a Sobolev kernel of order $m=\nu+n/2$ on every $\mathcal X\subseteq\mathbb R^n$, with norm-equivalence constants $c_1=\sqrt{\frac{\min\{1,b^2\}^{m}}{C_{\nu,n}}},\ c_2=\sqrt{\frac{\max\{1,b^2\}^{m}}{C_{\nu,n}}}$.
\end{example}

Our model, therefore, includes Mat\'ern kernel-based reinforcement learning \citep{yang2020provably,vakili2024kernelized} as a special case.

We abstract the set of all the MDPs that satisfies the conditions above as our model class.
\begin{definition}
\label{def:model-class}
For $u>0$ and $B_r\ge0$, let $\mathcal M(u,B_r)$ be the class of Sobolev kernel-based MDPs satisfying
Assumptions~\ref{ass:kernel-transition}-\ref{ass:action-regularity}
with norm bounds $u$ and $B_r$.
\end{definition}
In this paper, the learner knows that the underlying MDP lies in $\cM(u,B_r)$, but does not know which one is the true model. He needs to learn the unknown reward and transition probability kernel from the sequential interaction process with the environment described above.

\paragraph{Main Result.}
We prove that our algorithm \texttt{SOVI} achieves a
high-probability regret upper bound of order
$\widetilde O(T^{\max\{e_A,e_B\}})$ in Theorem~\ref{thm:regret-upper}. For fixed smoothness and reference-kernel lengthscale, the prefactor is polynomial in
$d,H,u,B_r$ and the kernel norm-comparison factor $\Lambda$
defined in Section~\ref{sec:skovi}, up to logarithmic factors.

Conversely, from Theorem~\ref{thm:regret-lower}, if $B_r>0$ and $u>\|\mathbf1_{\cZ}\|_{\cH_k}$,
every algorithm incurs the worst-case expected regret of order
$\Omega(T^{\max\{e_A,e_B\}})$. Consequently, under these
conditions, and with all model parameters other than $T$ fixed, we show that
\[
\inf_{\mathsf{Alg}}
\sup_{M\in\mathcal M(u,B_r)}
\sup_{\mathsf{Adv}}
\mathbb E^{M,\mathsf{Alg},\mathsf{Adv}}[\reg(T)]
=\widetilde\Theta\!\left(T^{\max(e_A,e_B)}\right),
\]
where $e_A=\frac{\nu+d_A}{2\nu+d_A}$ and
$e_B=\frac{\nu+d_A+3d_S/2}{2\nu+d_A+2d_S}$.
The dominant exponent changes at $d_A=2\nu$, since
\[
e_A-e_B=
\frac{d_S(d_A-2\nu)}
{2(2\nu+d_A)(2\nu+d_A+2d_S)}.
\]
When $d_A>2\nu$, $e_A$ dominates and the exponent of $T$
is independent of $d_S$. When $d_A<2\nu$, $e_B$ dominates,
so both state and action dimensions enter the exponent.
At $d_A=2\nu$, the two exponents coincide at $3/4$.

This distinction reflects the different roles of states and actions.
A small action region may contain the optimal action, so its volume
alone does not make it negligible. From step two onward, every policy
visits a state region $E$ with probability at most $u\mu(E)$
(Lemma~\ref{lem:state-occupancy-bound}). Thus, a bounded error confined
to $E$ contributes to expected regret in proportion to its visitation
probability, allowing continuation errors to be averaged over states.

The shared Sobolev norm also couples the difficulty across states:
spreading independent local perturbations over more state regions
requires smaller amplitudes. The two exponents capture searching for
good actions at a fixed state and learning which actions are good
across many states. Their maximum determines the minimax rate.

Now, we return to the reinforcement learning setup. We define the value functions and regret in reinforcement learning \citep{puterman2014}. For a Markov policy $\pi$, set $V^{H+1}(\cdot;\pi)\equiv0$, and for $h=H,\ldots,1$, define
\[
Q^h(s,a;\pi):=r^h(s,a)+(P^hV^{h+1}(\cdot;\pi))(s,a),\ V^h(s;\pi):=\int_{\cA}Q^h(s,a;\pi)\pi^h(da\mid s).
\]
These functions give the expected total reward from step $h$ onward. For a history-dependent policy $\pi\in\Pi$, define its value at the beginning of an episode by
\[
V^1(s;\pi):=\mathbb E_s^{M,\pi}\left[\sum_{h=1}^H r^h(s^h,a^h)\right].
\]
By \eqref{eq:observation-law}, this is also the expected sum of observed rewards.

The optimal values are defined by $V_*^{H+1}\equiv0$ and the Bellman optimality equations
\begin{equation}
Q_*^h(s,a):=r^h(s,a)+(P^hV_*^{h+1})(s,a),\ V_*^h(s):=\max_{a\in\cA}Q_*^h(s,a),\ h=H,\ldots,1.
\label{eq:bellman-optimality}
\end{equation}
The following lemma establishes that the maxima are attained and identifies the optimal value over history-dependent policies.

\begin{lemma}
\label{lem:optimal-values}
Under Assumptions~\ref{ass:kernel-transition}, \ref{ass:kernel-reward}, and \ref{ass:action-regularity}, the functions $Q_*^h$ and $V_*^h$ are measurable. There exists a deterministic Markov policy $\pi_*$ such that
\[
V^h(s;\pi_*)=V_*^h(s),\ s\in\cS,\ h\in[H].
\]
Moreover, we have $V_*^1(s)=\sup_{\pi\in\Pi}V^1(s;\pi),\ s\in\cS$.
\end{lemma}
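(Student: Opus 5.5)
We argue by backward induction on $h$ from $H$ down to $1$. Before starting we record two regularity facts about $Q_*^h$, namely joint measurability on $\cZ$ and continuity in $a$ for every fixed $s$. These facts, together with compactness of $\cA=[0,1]^{d_A}$, let us take maxima and choose maximizers measurably.

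\textbf{Regularity.} Fix $s$. By Assumption~\ref{ass:action-regularity}, the RKHS distance
\[
\|k_{(s,a)}-k_{(s,a')}\|_{\cH_k}^2=k((s,a),(s,a))-2k((s,a),(s,a'))+k((s,a'),(s,a'))
\]
is continuous in $(a,a')$ and vanishes on the diagonal. Hence for every $g\in\cH_k$ we get
\[
|g(s,a)-g(s,a')|\le\|g\|_{\cH_k}\,\|k_{(s,a)}-k_{(s,a')}\|_{\cH_k},
\]
and $a\mapsto g(s,a)$ is continuous, uniformly over $\|g\|_{\cH_k}\le u$. Apply this with $g=P^h(s'\mid\cdot)$, which is allowed by Assumption~\ref{ass:kernel-transition}. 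For bounded measurable $V$ and $a_n\to a$, the integrand of $(P^hV)(s,a_n)$ converges uniformly in $s'$ on $\cS$, and $\mu$ is finite. Therefore $a\mapsto(P^hV)(s,a)$ is continuous. Joint measurability of $(P^hV)$ in $z$ follows from Fubini, since the density is jointly measurable. Adding $r^h$, which is continuous in $a$ by assumption and measurable as an RKHS element (a pointwise limit of measurable finite combinations of the measurable $k_z$), shows that $Q_*^h$ is Carath\'eodory: measurable in $s$ and continuous in $a$.

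\textbf{Induction.} Suppose $V_*^{h+1}$ is measurable with values in $[0,H-h]$. Then $Q_*^h$ is Carath\'eodory with values in $[0,H-h+1]$. The maximum over the compact set $\cA$ is therefore attained, and $V_*^h(s)=\sup_{a\in\cA\cap\mathbb Q^{d_A}}Q_*^h(s,a)$ is measurable as a countable supremum. The measurable maximum theorem (Kuratowski--Ryll-Nardzewski / Aumann selection for Carath\'eodory functions on compact metric spaces) supplies a measurable selector $\pi_*^h(s)\in\arg\max_aQ_*^h(s,a)$. A second backward induction then shows $V^h(\cdot;\pi_*)=V_*^h$: from the definition of $Q^h(\cdot;\pi)$,
\[
Q^h(s,a;\pi_*)=r^h(s,a)+(P^hV_*^{h+1})(s,a)=Q_*^h(s,a),
\]
and evaluating at $a=\pi_*^h(s)$ gives $V_*^h(s)$.

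\textbf{Optimality over $\Pi$.} Fix a history-dependent $\pi\in\Pi$ and define
\[
W^h:=\mathbb E_s^{M,\pi}\Bigl[\textstyle\sum_{j\ge h}r^j(z^j)\Bigm|\text{history up to }s^h\Bigr].
\]
We show $W^h\le V_*^h(s^h)$ almost surely by backward induction on $h$. Condition on the history through $a^h$. The tower property and the Markov transition law \eqref{eq:observation-law} give
\[
W^h\le\mathbb E\bigl[r^h(z^h)+(P^hV_*^{h+1})(z^h)\mid\cdot\bigr]=\mathbb E\bigl[Q_*^h(z^h)\mid\cdot\bigr]\le V_*^h(s^h).
\]
Taking $h=1$ yields $V^1(s;\pi)\le V_*^1(s)$. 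The reverse inequality follows because $\pi_*\in\Pi$ and $V^1(s;\pi_*)=V_*^1(s)$.

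\textbf{Main obstacle.} The delicate step is measurability: continuity of $Q_*^h$ in $a$ together with joint measurability, which is needed to invoke the measurable selection theorem. We handle it through the RKHS continuity estimate, uniform in $s'$, combined with dominated convergence.
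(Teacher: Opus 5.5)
Your proof is correct and follows essentially the same route as the paper. Both run a backward induction. Both obtain continuity in the action from the RKHS bound $|P^h(s'\mid s,a)-P^h(s'\mid s,b)|\le u\|k_{(s,a)}-k_{(s,b)}\|_{\mathcal H_k}$. Both apply a measurable maximum and selection theorem on the compact set $\mathcal A$, and both compare against history-dependent policies through the Bellman inequality $r^h+\mathbb E[V_*^{h+1}(s^{h+1})\mid\mathcal I^h]=Q_*^h(z^h)\le V_*^h(s^h)$. The differences are cosmetic. You obtain measurability of $V_*^h$ as a countable supremum over rational actions, and you phrase the optimality step as a backward induction on conditional values $W^h$, whereas the paper directly telescopes the expectations $\mathbb E[V_*^h(s^h)]-\mathbb E[V_*^{h+1}(s^{h+1})]$.
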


We measure the DM's performance by the cumulative regret
\begin{equation}
\reg(T):=\sum_{t=1}^T\left[V_*^1(s_t^1)-V^1(s_t^1;\pi_t)\right].
\label{eq:regret}
\end{equation}
This quantity is random through the selected policies and initial states. Its expectation $\mathbb E[\reg(T)]$ is taken over the reward and transition observations as well as the randomization of the DM and the adversary.

For a bounded measurable $V:\cS\to\mathbb R$, define the Bellman backup
\begin{equation}
(\mathcal B^hV)(z):=r^h(z)+(P^hV)(z),\ z\in\cZ,\ h\in[H].
\label{eq:bellman-backup}
\end{equation}
In particular, $\mathcal B^H0=r^H$. The RKHS assumptions control both the transition term and the full backup.

\begin{lemma}[Bellman completeness]
\label{lem:bellman-completeness}
Under Assumptions~\ref{ass:kernel-transition} and~\ref{ass:kernel-reward}, for every bounded measurable $V:\cS\to\mathbb R$ and $h\in[H-1]$,
\begin{equation}
P^hV\in\cH_k,\ \|P^hV\|_{\cH_k}\le u\int_{\cS}|V(s')|\mu(ds')\le u\|V\|_\infty.
\label{eq:bellman-completeness}
\end{equation}
For every $h\in[H]$, we also have $\mathcal B^hV\in\cH_k$ and
\begin{equation}
\|\mathcal B^hV\|_{\cH_k}\le B_r+u\int_{\cS}|V(s')|\mu(ds')\le B_r+u\|V\|_\infty.
\label{eq:bellman-backup-norm}
\end{equation}
\end{lemma}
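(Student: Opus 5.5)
The idea is to view $P^hV$ as an $\cH_k$-valued (Bochner) integral of the sections $g_{s'}:=P^h(s'\mid\cdot)$ weighted by $V(s')\mu(ds')$. We then bound its norm by $\int|V(s')|\,\|g_{s'}\|_{\cH_k}\mu(ds')\le u\int|V|\,d\mu$. To avoid Bochner-measurability issues (which need separability of $\cH_k$ and weak measurability of $s'\mapsto g_{s'}$), I would instead argue through the Riesz representation.

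\textbf{Step 1: a bounded functional.} Define the linear functional on $\cH_k$ by
\[
L(f):=\int_{\cS}V(s')\,\langle g_{s'},f\rangle_{\cH_k}\,\mu(ds').
\]
Measurability of the integrand needs care. For $f=k_z$ we have $\langle g_{s'},k_z\rangle=P^h(s'\mid z)$, which is jointly measurable by assumption. For a general $f$, take $f_n\in\cH_{k,0}$ with $f_n\to f$ in $\cH_k$. Then $\langle g_{s'},f_n\rangle\to\langle g_{s'},f\rangle$ for every $s'$, since $|\langle g_{s'},f-f_n\rangle|\le u\|f-f_n\|$. So $s'\mapsto\langle g_{s'},f\rangle$ is a pointwise limit of measurable functions, hence measurable. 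The integrand is bounded by $|V(s')|u\|f\|$, and $\mu$ is finite (Lebesgue on $[0,1]^{d_S}$). Therefore $L$ is well defined and linear, with $|L(f)|\le u\|f\|_{\cH_k}\int|V|\,d\mu$.

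\textbf{Step 2: identification.} By Riesz there is a unique $g\in\cH_k$ with $L(f)=\langle g,f\rangle$ and $\|g\|_{\cH_k}=\|L\|\le u\int|V|\,d\mu$. Evaluating at $f=k_z$ gives
\[
g(z)=L(k_z)=\int V(s')P^h(s'\mid z)\,\mu(ds')=(P^hV)(z).
\]
Hence $P^hV=g\in\cH_k$ with the first bound in \eqref{eq:bellman-completeness}. The second inequality follows because $\mu(\cS)=1$, so $\int|V|\,d\mu\le\|V\|_\infty$.

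\textbf{Step 3: the backup.} For $h\le H-1$, $\mathcal B^hV=r^h+P^hV$ is a sum of two elements of $\cH_k$. The triangle inequality together with Assumption~\ref{ass:kernel-reward} gives \eqref{eq:bellman-backup-norm}. For $h=H$, $P^HV\equiv0$, so $\mathcal B^HV=r^H$ with norm at most $B_r$, which is trivially bounded by the right-hand side.

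The only delicate point is the measurability of $s'\mapsto\langle g_{s'},f\rangle$ in Step 1. The density-approximation argument above settles it without invoking separability.
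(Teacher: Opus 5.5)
Your proposal is correct and matches the paper's proof essentially step for step. Measurability of $s'\mapsto\langle P^h(s'\mid\cdot),f\rangle_{\mathcal H_k}$ is obtained from finite kernel expansions together with the uniform section bound $u$. The functional $L_V$ is then represented via Riesz and identified with $P^hV$ by testing against $k_z$. The backup bound follows from the triangle inequality, with $\mathcal B^HV=r^H$ at the last stage.
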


Since the expected reward at each step lies in $[0,1]$, the continuation value at stage $h+1$ is at most $H-h$. Consequently, for every Markov policy $\pi$ and $h\in[H]$, both $Q_*^h$ and $Q^h(\cdot;\pi)$ belong to $\cH_k$, with
\begin{equation}
\max\left\{\|Q_*^h\|_{\cH_k},\ \|Q^h(\cdot;\pi)\|_{\cH_k}\right\}\le B_r+u(H-h).
\label{eq:q-rkhs-bound}
\end{equation}

The transition assumption also bounds the state-occupancy densities after the first step.

\begin{lemma}[Bounded state-occupancy densities]
\label{lem:state-occupancy-bound}
Under Assumption~\ref{ass:kernel-transition} and the normalization $k(z,z)\le1$,
\[
0\le P^h(s'\mid z)\le u,\ h\in[H-1],\ z\in\cZ,\ s'\in\cS.
\]
For every policy $\pi\in\Pi$, initial state $s\in\cS$, and $h\in\{2,\ldots,H\}$, the distribution of $s^h$ under $\mathbb P_s^{M,\pi}$ admits a density with respect to $\mu$ bounded by $u$. Equivalently, for every measurable $E\subseteq\cS$,
\begin{equation}
\mathbb P_s^{M,\pi}(s^h\in E)\le u\mu(E).
\label{eq:state-occupancy-bound}
\end{equation}
The same bound holds conditionally during interaction $\mathbb P(s_t^h\in E\mid\mathcal I_t^1)\le u\mu(E), h\in[2:H]$.
\end{lemma}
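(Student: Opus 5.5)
The proof rests on the reproducing property together with the normalization $k(z,z)\le1$. Everything else is a conditioning argument on the step $h-1$ variables.

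\textbf{Pointwise bound.} Fix $h\in[H-1]$ and $s'\in\cS$. By Assumption~\ref{ass:kernel-transition}, the section $f:=P^h(s'\mid\cdot)$ lies in $\cH_k$ with $\|f\|_{\cH_k}\le u$. For every $z\in\cZ$, the reproducing property and Cauchy--Schwarz give
\[
|f(z)|=|\langle f,k_z\rangle_{\cH_k}|\le\|f\|_{\cH_k}\sqrt{k(z,z)}\le u .
\]
Nonnegativity holds because $P^h(\cdot\mid z)$ is a probability density with respect to $\mu$. A density is determined only $\mu$-a.e., so one point needs care: the assumption is stated for the specified jointly measurable version, and we use that version throughout. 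Hence $0\le P^h(s'\mid z)\le u$ for all $z$ and $s'$.

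\textbf{Occupancy density.} Fix $\pi\in\Pi$, $s\in\cS$ and $h\in\{2,\dots,H\}$. Let $\mathcal G$ be the $\sigma$-field generated by the history up to and including $a^{h-1}$, and write $z^{h-1}=(s^{h-1},a^{h-1})$. By the observation law \eqref{eq:observation-law}, conditionally on $\mathcal G$ the next state $s^h$ has density $P^{h-1}(\cdot\mid z^{h-1})$. This holds even for history-dependent and randomized policies, since the transition depends only on the current pair. Therefore, for measurable $E\subseteq\cS$, Fubini/Tonelli and joint measurability give
\[
\mathbb P_s^{M,\pi}(s^h\in E)=\mathbb E_s^{M,\pi}\!\left[\int_E P^{h-1}(s'\mid z^{h-1})\,\mu(ds')\right]=\int_E \mathbb E_s^{M,\pi}\!\left[P^{h-1}(s'\mid z^{h-1})\right]\mu(ds').
\]
This shows that the law of $s^h$ has density $p^h(s'):=\mathbb E_s^{M,\pi}[P^{h-1}(s'\mid z^{h-1})]$. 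By the pointwise bound, $p^h\le u$, and \eqref{eq:state-occupancy-bound} follows.

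\textbf{Conditional version.} During interaction, we run the same computation conditionally on $\mathcal I_t^1$, the information after choosing $a_t^1$. For $h\ge2$, the tower property gives
\[
\mathbb P(s_t^h\in E\mid\mathcal I_t^1)=\mathbb E\!\left[\int_E P^{h-1}(s'\mid z_t^{h-1})\mu(ds')\,\Big|\,\mathcal I_t^1\right].
\]
This uses $s_t^{h}\mid\mathcal I_t^{h-1}\sim P^{h-1}(\cdot\mid z_t^{h-1})$ and $\mathcal I_t^1\subseteq\mathcal I_t^{h-1}$. The integrand is at most $u\mu(E)$.

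The only delicate step is the measure-theoretic justification. One must check that the conditional law of $s^h$ given the past is given by the density evaluated at the random pair $z^{h-1}$, and that Tonelli applies. Both follow from joint measurability of $(s',z)\mapsto P^{h-1}(s'\mid z)$ and nonnegativity. Everything else is the reproducing-property estimate.
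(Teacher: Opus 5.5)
Your proof is correct and follows the paper's argument essentially step for step. It gets the pointwise bound $0\le P^h(s'\mid z)\le u$ from the reproducing property and $k(z,z)\le1$, identifies the occupancy density $\mathbb E_s^{M,\pi}[P^{h-1}(s'\mid z^{h-1})]$ via the observation law and Tonelli, and obtains the conditional version by the tower property using $\mathcal I_t^1\subseteq\mathcal I_t^{h-1}$.
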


\section{\texttt{SOVI}: Sobolev Optimal Value Iteration Overview}
\label{sec:skovi}

For the continuous-action model in Section~\ref{sec:setup}, with $H\ge2$, we introduce Sobolev Optimal Value Iteration (\texttt{SOVI}), which learns from observed rewards and transitions. In each episode, \texttt{SOVI} retains a version space of candidate action-value functions consistent with past observations. Given the initial state, it selects a first action and a continuation candidate, then follows the candidate's greedy policy from step two onward.

Our algorithm separates the loss from first-action selection under adversarial initial states from the loss due to value function estimation. The first-step selection rule controls the former, and value iteration controls the latter. We therefore describe these components separately.

This section introduces the candidate value functions, the decision
objective, and the local polynomial estimation tools shared by the
two components of our algorithm.
Section~\ref{sec:sovi-version-space} constructs the version space
from past observations.
Section~\ref{sec:sovi-first-selection} uses this version space to
select the first action and continuation value function candidate.
Together, these components specify \texttt{SOVI}.
Proofs of the lemmas in these three sections are deferred to
Appendix~\ref{app:section-4-proofs}.

\subsection{Candidate Action-Value Functions and the Decision Objective}

Value iteration uses a continuation value to evaluate each action. Since these continuation values are learned from data, we begin with a fixed class of candidate action-value functions. Using the stagewise bounds from Section~\ref{sec:setup}, define
\begin{equation}
\mathcal F^h:=\left\{g\in\cH_k:\|g\|_{\cH_k}\le B_r+u(H-h),\ 0\le g\le H-h+1\right\},\ \mathcal F:=\prod_{h=1}^H\mathcal F^h.
\label{eq:candidate-class}
\end{equation}
Each $f=(f^1,\ldots,f^H)\in\mathcal F$ induces continuation values and a greedy policy:
\[
V_f^h(s):=\max_{a\in\cA}f^h(s,a),\ \pi_f^h(s)\in\operatorname*{arg\,max}_{a\in\cA}f^h(s,a),\ V_f^{H+1}:=0.
\]
We fix measurable greedy selectors throughout. For the zero candidate, choose a fixed action $a_0\in\cA$ at every state and stage. By \eqref{eq:q-rkhs-bound}, $Q_*:=(Q_*^1,\ldots,Q_*^H)\in\mathcal F$. The terminal component is also learned from reward observations.

We define the corresponding class of candidate continuation value functions as $\mathcal V_h:=\{V_f^{h+1}:f\in\mathcal F\}\cup\{0\},\ h\in[H]$.

Let $\mathcal F_t\subseteq\mathcal F$ denote the version space retained in episode $t$, constructed in Section~\ref{sec:sovi-version-space}. The continuation values induced by its candidates belong to $\mathcal V_h$. For a nonempty $\mathcal F_t$ and initial state $s_t^1$, the ideal decision objective is
\begin{equation}
\sup_{(a,f)\in\cA\times\mathcal F_t}(\mathcal B^1V_f^2)(s_t^1,a).
\label{eq:initial-objective}
\end{equation}
Here $\mathcal B^h$ is the true Bellman operator: $(\mathcal B^hV)(z)=r^h(z)+(P^hV)(z)$, with continuation term zero at $h=H$. The objective evaluates the first-step reward and transition using the true model, then uses $f$ to evaluate the continuation. Whenever $Q_*\in\mathcal F_t$, its value is at least $V_*^1(s_t^1)$. Since the model is unknown, we estimate these backups from observed rewards and next states.

\subsection{Value Function Approximation via Local Legendre Polynomials}

For a continuation value $V:\cS\to[0,H-h]$, the observed Bellman response is
\begin{equation}
Y_i^h(V):=
\begin{cases}
R_i^h+V(s_i^{h+1}),&h<H,\\
R_i^H,&h=H.
\end{cases}
\label{eq:bellman-response}
\end{equation}
At $h=H$, the prescribed range forces $V\equiv0$. For each fixed $V$, the observation model gives
\[
\mathbb E[Y_i^h(V)\mid\mathcal I_i^h]=(\mathcal B^hV)(z_i^h),\ 0\le Y_i^h(V)\le H-h+1.
\]
Thus, regressing the full response estimates both the reward and continuation contributions. The learner can evaluate this response for every candidate continuation after observing $(R_i^h,s_i^{h+1})$.

\paragraph{Local features for value function approximation.}
We use low-degree polynomials to represent the local variation of a Bellman backup and retain its remaining component through a kernel feature map. This keeps the number of regression coefficients polynomial in $d:=d_S+d_A$ for fixed $\nu$.

We construct the local features using a normalized Mat\'ern reference kernel $k_0$ of smoothness $\nu$. Its lengthscale $\ell_0>0$ controls decay with distance, and we write $\kappa:=\sqrt{2\nu}/\ell_0$ for its inverse lengthscale parameter. These parameters describe the reference kernel $k_0$; a general Sobolev kernel $k$ need not itself have a lengthscale parameter.

For a general Sobolev kernel $k$ satisfying Definition~\ref{def:sobolev-kernel}, we choose $k_0$ to be the normalized Mat\'ern kernel of smoothness $\nu$ and lengthscale $\ell_0=\sqrt{2\nu}$, so that $\kappa=1$. By the spectral representation of the Mat\'ern kernel \citep{rasmussen2006} and the Sobolev norm comparison, this choice satisfies
\[
\|g\|_{\cH_{k_0}}
\le \Lambda\|g\|_{\cH_k},
\ \forall g\in\cH_k,
\ 
\Lambda:=
\frac1{c_1}
\left\{
\frac{\Gamma(\nu)}
{(4\pi)^{d/2}\Gamma(\nu+d/2)}
\right\}^{1/2}.
\]
Here $d=d_S+d_A$, $\Gamma$ denotes the Gamma function, and $c_1>0$ is a known valid lower norm-comparison constant satisfying $c_1\|g\|_{H^{\nu+d/2}(\cZ)}
\le \|g\|_{\cH_k},
\ \forall g\in\cH_k$.

If the original kernel $k$ is itself a normalized Mat\'ern kernel of smoothness $\nu$, we instead take $k_0=k$ and $\Lambda=1$. In this case, $\ell_0$ is the known lengthscale specified by $k$, and $\kappa=\sqrt{2\nu}/\ell_0$. Thus, in either case, $k_0$, $\kappa$, and $\Lambda$ are fixed by the known kernel specification before data collection.

We write $\Phi(z):=k_0(z,\cdot)$ for the kernel feature map and $\mathcal H_0:=\cH_{k_0}$ for the associated RKHS. The feature $\Phi(z)$ is a function indexed by $z$: for every $u$, $k_0(z,u)=\Phi(z)(u)$.

We partition $\cZ$ for both value function approximation and first action selection, using the following definition.
\begin{definition}
\label{def:partition}
A finite measurable partition of $\cZ$ is a finite collection $\mathcal C$ of nonempty, pairwise disjoint measurable sets satisfying $\bigcup_{c\in\mathcal C}c=\cZ$.
\end{definition}

For each cell $c\in\mathcal C$, choose an axis-aligned rectangle $\overline c:=\prod_{q=1}^d[\ell_{c,q},u_{c,q}]$ with $c\subseteq\overline c\subseteq\cZ$ and $\ell_{c,q}<u_{c,q}$ for all $q\in[d]$.
We also put $\rho:=\lfloor\nu\rfloor,\ D:=\binom{d+\rho}{\rho},\ X^2:=3^\rho D,\ \lambda:=X^2$.

On $\overline c$, we use the tensor-product Legendre polynomials of total degree at most $\rho$, which are orthonormal under normalized Lebesgue measure. We define these polynomials below.
\begin{definition}[Normalized local Legendre polynomials]
\label{def:local-legendre}
Let $\mathcal C$ be a finite measurable partition of $\cZ=[0,1]^d$. For each cell $c\in\mathcal C$, choose a bounding rectangle $\overline c:=\prod_{q=1}^d[\ell_{c,q},u_{c,q}]\subseteq\cZ$ containing $c$, with $\ell_{c,q}<u_{c,q}$. Let $P_j(x):=(2^j j!)^{-1}\frac{d^j}{dx^j}(x^2-1)^j$ be the degree-$j$ Legendre polynomial on $[-1,1]$. For $\mathbf b\in\mathbb N_0^d$, the local tensor Legendre polynomial is defined as
\[
p_{c,\mathbf b}(z):=\prod_{q=1}^d\sqrt{2b_q+1}\,P_{b_q}\left(\frac{2(z_q-\ell_{c,q})}{u_{c,q}-\ell_{c,q}}-1\right),\ z\in\overline c.
\]
These polynomials are orthonormal under normalized Lebesgue measure on the bounding rectangle:
\[
\frac1{|\overline c|}\int_{\overline c}p_{c,\mathbf b}(z)p_{c,\mathbf b'}(z)\,dz=\mathbf1\{\mathbf b=\mathbf b'\},\ |\overline c|:=\prod_{q=1}^d(u_{c,q}-\ell_{c,q}).
\]
\end{definition}
For $\rho\in\mathbb N_0$, the local feature vector on $c$ is $\phi_c(z):=(p_{c,\mathbf b}(z))_{|\mathbf b|\le\rho}$, where $|\mathbf b|:=\sum_{q=1}^d b_q$. It contains $D=\binom{d+\rho}{\rho}$ polynomials of total degree at most $\rho$, evaluated at $z\in c$. Its norm satisfies
\begin{equation}
\begin{aligned}
\sup_{z\in\overline c}\|\phi_c(z)\|_2^2=\sup_{z\in\overline c}\sum_{|\mathbf b|\le\rho}|p_{c,\mathbf b}(z)|^2
\le\sum_{|\mathbf b|\le\rho}\prod_{q=1}^d(2b_q+1)\le\sum_{|\mathbf b|\le\rho}3^{|\mathbf b|}
\le D3^\rho=X^2.
\end{aligned}
\label{eq:local-features}
\end{equation}
The full family of local Legendre polynomials $\{p_{c,\mathbf b}:\mathbf b\in\mathbb N_0^d\}$ forms an orthonormal basis of $L^2(\overline c)$ under normalized Lebesgue measure. The kernel feature map therefore admits the expansion
\[
\Phi(z)=\sum_{\mathbf b\in\mathbb N_0^d}p_{c,\mathbf b}(z)\left\{\frac1{|\overline c|}\int_{\overline c}\Phi(u)p_{c,\mathbf b}(u)\,du\right\},
\]
where the series converges in $L^2(\overline c;\mathcal H_0)$. Retaining the terms of total degree at most $\rho$, we write
\[
\Phi(z)=\sum_{|\mathbf b|\le\rho}p_{c,\mathbf b}(z)\left\{\frac1{|\overline c|}\int_{\overline c}\Phi(u)p_{c,\mathbf b}(u)\,du\right\}+\Psi_{\mathcal C}(z),\ z\in c,
\]
where $\Psi_{\mathcal C}(z)$ is the approximation remainder, defined pointwise as $\Phi(z)$ minus the finite sum. Denoting the projection coefficients by $\zeta_{c,\mathbf b}:=\frac1{|\overline c|}\int_{\overline c}\Phi(u)p_{c,\mathbf b}(u)\,du$, we obtain
\begin{equation}
\Phi(z)=\sum_{|\mathbf b|\le\rho}p_{c,\mathbf b}(z)\zeta_{c,\mathbf b}+\Psi_{\mathcal C}(z), z\in c.
\label{eq:feature-decomposition}
\end{equation}
By the reproducing property, for every $g\in\mathcal H_0$ and $z\in c$, we have
\[
\begin{aligned}
g(z)&=\langle g,\Phi(z)\rangle_{\mathcal H_0}=\left\langle g,\sum_{|\mathbf b|\le\rho}p_{c,\mathbf b}(z)\zeta_{c,\mathbf b}+\Psi_{\mathcal C}(z)\right\rangle_{\mathcal H_0}=\sum_{|\mathbf b|\le\rho}p_{c,\mathbf b}(z)\langle g,\zeta_{c,\mathbf b}\rangle_{\mathcal H_0}+\langle g,\Psi_{\mathcal C}(z)\rangle_{\mathcal H_0}.
\end{aligned}
\]
Thus, $\langle g,\zeta_{c,\mathbf b}\rangle_{\mathcal H_0}$ are the local polynomial coefficients of $g$, and $\langle g,\Psi_{\mathcal C}(z)\rangle_{\mathcal H_0}$ is its approximation error.
All these features depend only on the known kernel and partition.

By the definition of our feature vector $\phi_c$, every $g\in\mathcal H_0$ has the exact representation
\[
g(z)=\phi_c(z)^\top\theta_c(g)+\langle g,\Psi_{\mathcal C}(z)\rangle_{\mathcal H_0},\ \theta_c(g):=(\langle g,\zeta_{c,\mathbf b}\rangle_{\mathcal H_0})_{|\mathbf b|\le\rho}.
\]
The coefficient norm satisfies $\|\theta_c(g)\|_2\le\|g\|_{L^\infty(\overline c)}$. The following lemma bounds the remainder in \eqref{eq:feature-decomposition} when all bounding rectangles have maximum side length at most $a$.
\begin{lemma}
\label{lem:feature-remainder}
Let $k_0$ be a normalized Mat\'ern kernel with smoothness $\nu>0$ and inverse lengthscale $\kappa>0$. Set $\rho:=\lfloor\nu\rfloor$, $D:=\binom{d+\rho}{\rho}$, and $X:=\sqrt{3^\rho D}$. If every bounding rectangle $\overline c$ has maximum side length at most $a>0$, then the remainder after local polynomial projection of degree $\rho$ satisfies
\[
\sup_{z\in\cZ}\|\Psi_{\mathcal C}(z)\|_{\mathcal H_0}\le C_\nu(1+X)\min\{1,(\kappa\sqrt d\,a)^\nu\},
\]
where $C_\nu$ depends only on $\nu$.
\end{lemma}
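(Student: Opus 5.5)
The plan is to bound $\|\Psi_{\mathcal C}(z)\|_{\mathcal H_0}$ by duality. For $z\in c$, $\Psi_{\mathcal C}(z)\in\mathcal H_0$ and
\[
\|\Psi_{\mathcal C}(z)\|_{\mathcal H_0}=\sup_{\|g\|_{\mathcal H_0}\le1}\langle g,\Psi_{\mathcal C}(z)\rangle_{\mathcal H_0}=\sup_{\|g\|_{\mathcal H_0}\le1}\bigl(g(z)-(\Pi_c g)(z)\bigr),
\]
where $\Pi_c g:=\sum_{|\mathbf b|\le\rho}p_{c,\mathbf b}\langle g,\zeta_{c,\mathbf b}\rangle_{\mathcal H_0}$ is the $L^2(\overline c)$-orthogonal projection of $g$ onto polynomials of total degree at most $\rho$. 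The identification $\langle g,\zeta_{c,\mathbf b}\rangle_{\mathcal H_0}=|\overline c|^{-1}\int_{\overline c}g\,p_{c,\mathbf b}$ follows by moving the inner product through the Bochner integral and applying the reproducing property. So it suffices to bound the pointwise error of local $L^2$ polynomial projection uniformly over the unit ball of $\mathcal H_0$.

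First, the trivial bound. Since $|g(z)|\le\|g\|_{\mathcal H_0}\le1$, we have $|(\Pi_cg)(z)|\le\|\phi_c(z)\|_2\|\theta_c(g)\|_2\le X\|g\|_{L^\infty(\overline c)}\le X$ by \eqref{eq:local-features} and the stated coefficient bound. Hence $\|\Psi_{\mathcal C}(z)\|_{\mathcal H_0}\le 1+X$, which handles the branch $\min=1$.

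Second, the small-cell bound. Because $\Pi_c$ reproduces polynomials of degree $\le\rho$, for any such polynomial $q$ we have
\[
|g(z)-\Pi_cg(z)|\le|g(z)-q(z)|+|\Pi_c(g-q)(z)|\le(1+X)\|g-q\|_{L^\infty(\overline c)}.
\]
Take $q$ to be the Taylor polynomial of $g$ of degree $\rho$ (or $\rho-1$ when $\nu$ is an integer) at a point of $\overline c$. The Mat\'ern RKHS unit ball embeds into a Hölder-type class: using the spectral density $\widehat\kappa_\nu(\eta)\propto(\kappa^2+4\pi^2|\eta|^2)^{-\nu-d/2}$, write $g(x)=\int\widehat g(\eta)e^{2\pi i x\cdot\eta}d\eta$ and bound the Taylor remainder by
\[
\int|\widehat g(\eta)|\min\{|2\pi\eta|^{\rho}|x-z|^{\rho}/\rho!\cdot 2,\;|2\pi\eta|^{\rho+1}|x-z|^{\rho+1}/(\rho+1)!\}\,d\eta,
\]
that is, the standard bound $|e^{i\theta}-\sum_{j\le m}(i\theta)^j/j!|\le\min\{2|\theta|^m/m!,|\theta|^{m+1}/(m+1)!\}$. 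By Cauchy--Schwarz with weight $\widehat\kappa_\nu$, this is at most $\bigl(\int\widehat\kappa_\nu(\eta)\min\{\cdots\}^2d\eta\bigr)^{1/2}$. Splitting the integral at $|\eta|\sim1/r$ with $r=|x-z|\le\sqrt d\,a$, and rescaling $\eta\mapsto\kappa\eta$, gives a bound $C_\nu(\kappa r)^\nu$. The $d$-dependence of the angular factors cancels against the normalization $C_{\nu,d}$ because $k_0(0,0)=1$; indeed the integral equals $\mathbb E$ of a function of $|\eta|$ under the normalized spectral measure, which is a Student-type radial law.

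The main obstacle is exactly this step: obtaining a constant $C_\nu$ free of $d$. I would handle it by reducing to the one-dimensional law of $\kappa^{-1}|2\pi\eta|\cdot$(direction projected onto $x-z$). Only the projection $\eta\cdot(x-z)$ enters the phase, and under the normalized spectral measure $\eta\cdot e$ has a one-dimensional Student-$t$ law with $2\nu$ degrees of freedom, scaled by $\kappa/(2\pi)$, independent of $d$. Its moments of order $<2\nu$ are finite, and the truncated moment $\mathbb E\min\{(\theta r)^{2\rho},(\theta r)^{2\rho+2}\}$ (with the appropriate split when $\nu$ is an integer) is $\le C_\nu(\kappa r)^{2\nu}$ by the tail $P(|\theta|>t)\asymp t^{-2\nu}$. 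Combining the two branches yields $C_\nu(1+X)\min\{1,(\kappa\sqrt d a)^\nu\}$.
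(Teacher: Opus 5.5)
Your duality reduction, the trivial branch, the Lebesgue-constant step $|g(z)-\Pi_cg(z)|\le(1+X)\|g-q\|_{L^\infty(\overline c)}$, and the one-dimensional Student-$t$ reduction that keeps the constant free of $d$ are all correct. Together they are the dual form of the paper's proof, which approximates $\Phi$ itself by an $\mathcal H_0$-valued polynomial and then uses $\|\Pi_c\|_{\infty\to\infty}\le X$. For non-integer $\nu$ your argument is complete. Since $\rho<\nu<\rho+1$, both pieces of $\mathbb E\min\{(\theta r)^{2\rho},(\theta r)^{2\rho+2}\}$ are $O((\kappa r)^{2\nu})$, with a constant that depends only on $\nu$ and blows up near the integers.

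The gap is the integer case $\nu=\rho$, which the statement includes. There the degree-$\rho$ Taylor polynomial of $g$ need not exist, because $\mathbb E|\omega^\top e|^{2\nu}=\infty$, so you drop to degree $\rho-1$. That choice cannot give the claimed rate. The variable $\theta=\omega^\top(x-z)$ has the law of $\kappa rY$, where $Y$ is a scaled $t_{2\nu}$ variable with density decaying like $|y|^{-2\nu-1}$. For $|\theta|\le1$ the degree-$(\nu-1)$ remainder of $e^{\mathrm i\theta}$ is of order $|\theta|^\nu$, and $\mathbb E[|\theta|^{2\nu}\mathbf 1\{|\theta|\le1\}]\asymp(\kappa r)^{2\nu}\ln(1/(\kappa r))$.

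By your own duality, $\sup_{\|g\|_{\mathcal H_0}\le1}|g(x)-T_{\rho-1}g(x)|$ equals the $\mathcal H_0$-norm of the feature's Taylor remainder. So this is a lower bound, not a lossy estimate: no ``appropriate split'' recovers $(\kappa\sqrt d\,a)^\nu$, and your route loses a factor $\sqrt{\ln(1/(\kappa\sqrt d\,a))}$.

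The missing idea is to take a degree-$\rho$ Taylor polynomial of a spectrally truncated part of $g$, not of $g$ itself. The paper truncates on the mixing variable. With $\omega=\kappa Z/\sqrt W$ and $t=\kappa\sqrt d\,a<1$, it keeps the degree-$\rho$ expansion about the cell center only on $\{W\ge t^2\}$, where the coefficients lie in $L^2$, and discards $\{W<t^2\}$. This gives
\[
\mathbb P(W<t^2)+\frac{\kappa^{2k}(2k-1)!!\,(\sqrt d\,a)^{2k}}{(k!)^2}\,\mathbb E\bigl[W^{-k}\mathbf 1\{W\ge t^2\}\bigr]\le C_\nu t^{2\nu},\qquad k=\rho+1,
\]
which uses only $k>\nu$ and therefore covers integer $\nu$. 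In your scalar language, $q$ should be the degree-$\rho$ Taylor polynomial at the cell center of the part of $g$ carried by $\{W\ge t^2\}$ in its spectral representation. Truncate on $W$ rather than on $|\omega|$: a cutoff at $|\omega|\le 1/r$ discards spectral mass of order $(\kappa\sqrt d\,r)^{2\nu}$. That would cost an extra factor $d^{\nu/2}$ and break the $d$-free constant you worked to obtain.
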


We now use these local features to estimate Bellman backups by ridge regression.

\paragraph{Local polynomial ridge regression.}
Fix any measurable function $V:\cS\mapsto \RR$ and suppose that the observed responses are generated by \eqref{eq:bellman-response}.
For each $h\in[H]$ and $c\in\mathcal C$, let $\mathcal T_t^h(c):=\{i\in[t-1]:z_i^h\in c\}$ index the past observations at step $h$ in cell $c$, and let $n_t^h(c):=|\mathcal T_t^h(c)|$ denote its size. The ridge regression objective is
\begin{equation}
\mathcal L_t^h(\theta;V,c):=\sum_{i\in\mathcal T_t^h(c)}\bigl(Y_i^h(V)-\phi_c(z_i^h)^\top\theta\bigr)^2+\lambda\|\theta\|_2^2.
\label{eq:local-ridge-loss}
\end{equation}
With $G_t^h(c):=\lambda I_D+\sum_{i\in\mathcal T_t^h(c)}\phi_c(z_i^h)\phi_c(z_i^h)^\top$, its minimizer and prediction are
\begin{equation}
\widehat\theta_t^h(V,c)=G_t^h(c)^{-1}\sum_{i\in\mathcal T_t^h(c)}\phi_c(z_i^h)Y_i^h(V),\ \widehat{\mathcal B}_t^h[V](z):=\phi_c(z)^\top\widehat\theta_t^h(V,c).
\label{eq:local-ridge-estimate}
\end{equation}
Recall that for any value function $V$, we have
$ \mathbb E[Y_i^h(V)\mid z_i^h]=r^h(z_i^h)+(P^hV)(z_i^h)=(\mathcal B^hV)(z_i^h)$.
Thus, $\widehat{\mathcal B}_t^h[V]$ estimates $\mathcal B^hV$ by locally regressing the observed responses on the polynomial features.
 
For $z\in c$, we set
\begin{equation}
\begin{aligned}
w_t^h(z):=\sqrt{\phi_c(z)^\top G_t^h(c)^{-1}\phi_c(z)},\ 
g_t^h(z):=\Psi_{\mathcal C}(z)-\sum_{i\in\mathcal T_t^h(c)}\phi_c(z)^\top G_t^h(c)^{-1}\phi_c(z_i^h)\Psi_{\mathcal C}(z_i^h).
\end{aligned}
\label{eq:profiled-features}
\end{equation}
The width $w_t^h$ measures uncertainty in the polynomial fit. The feature $g_t^h$ accounts for both the remaining component at the query and its contribution to the fitted coefficients.

We need uniform control over all candidate continuation functions $V\in \cV_h$ and all $H$ steps. The following entropy bound makes its dimension dependence explicit.

\begin{lemma}
\label{lem:continuation-entropy}
Let $d:=d_S+d_A$. Suppose that $k_0$ is a normalized Mat\'ern kernel with smoothness $\nu>0$ and inverse length scale $\kappa>0$, and that $\|g\|_{\mathcal H_0}\le\Lambda\|g\|_{\cH_k}$ for every $g\in\cH_k$. There is a constant $C_{\nu,\kappa}$ depending only on $\nu$ and $\kappa$ such that
\begin{equation}
\max_{h\in[H]}\int_0^H\sqrt{\ln\bigl(2\cN(\varepsilon,\mathcal V_h,\|\cdot\|_\infty)\bigr)}\,d\varepsilon\le J:=C_{\nu,\kappa}\left[H+d\left(\frac{2\nu+d}{2\nu}\right)^2\Lambda(B_r+uH)\right].
\label{eq:continuation-entropy-bound}
\end{equation}
Here $\cN(\varepsilon,\mathcal V_h,\|\cdot\|_\infty)$ denotes the covering number in the supremum norm, with centers in $\mathcal V_h$. We choose $C_{\nu,\kappa}$ large enough that the displayed integral is at most $J/2$.
\end{lemma}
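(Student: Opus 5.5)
The plan is to reduce the covering of $\mathcal V_h$ to a covering of a Sobolev ball in the supremum norm, and then evaluate Dudley's entropy integral. The key point is the nonexpansiveness of the maximum over actions. For $f,g\in\mathcal F$ and every $s\in\cS$,
\[
\Bigl|\max_{a\in\cA}f^{h+1}(s,a)-\max_{a\in\cA}g^{h+1}(s,a)\Bigr|\le\sup_{a\in\cA}|f^{h+1}(s,a)-g^{h+1}(s,a)|,
\]
so $\|V_f^{h+1}-V_g^{h+1}\|_\infty\le\|f^{h+1}-g^{h+1}\|_{L^\infty(\cZ)}$. An $\varepsilon/2$-net of $\mathcal F^{h+1}$ in sup norm (not necessarily centered in the class) therefore maps to an $\varepsilon$-net of $\mathcal V_h$ with centers in $\mathcal V_h$, after the standard recentering step. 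The extra element $\{0\}$ adds one center, and the case $h=H$ is trivial since $\mathcal V_H=\{0\}$. Next I use the norm comparison $\|g\|_{\mathcal H_0}\le\Lambda\|g\|_{\cH_k}$. Every $f^{h+1}\in\mathcal F^{h+1}$ satisfies $\|f^{h+1}\|_{\cH_k}\le B_r+uH=:R$, so $\mathcal F^{h+1}$ lies in the $\mathcal H_0$-ball of radius $\Lambda R$. Hence
\[
\ln\cN(\varepsilon,\mathcal V_h,\|\cdot\|_\infty)\le 1+\ln\cN\bigl(\varepsilon/2,\ \Lambda R\cdot\mathbb B_{\mathcal H_0},\ \|\cdot\|_{L^\infty(\cZ)}\bigr).
\]

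The core step is a sup-norm entropy bound for the unit ball of the Mat\'ern RKHS $\mathcal H_0$ on $[0,1]^d$ with explicit dependence on $d$. The target is a bound of the form
\[
\ln\cN(\delta,\mathbb B_{\mathcal H_0},\|\cdot\|_\infty)\le C'_{\nu,\kappa}\,d^{\,2}\Bigl(\tfrac{2\nu+d}{2\nu}\Bigr)^{2}\,\delta^{-2d/(2\nu+d)}
\]
for $\delta\le1$, possibly up to a logarithmic factor that can be absorbed. I would try a frequency-truncation argument first. Write $g(z)=\int\widehat g(\eta)e^{2\pi i z\cdot\eta}d\eta$, using the extension attaining the restriction norm, and split at the radius $|\eta|=\Omega$. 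For the high-frequency part, Cauchy--Schwarz against the spectral density from Example~\ref{ex:Matern} gives a sup-norm tail of order $\bigl(\int_{|\eta|>\Omega}\widehat\kappa_\nu\bigr)^{1/2}\lesssim\Omega^{-\nu}$, where the Gamma-function constants are tracked in $d$. The low-frequency part is a band-limited function with a bounded coefficient vector. I would cover it by discretizing on a grid of mesh $\asymp 1/(\sqrt d\,\Omega)$, using Bernstein's inequality to control the sup norm through grid values, and covering the grid values by a volumetric argument in dimension about $(\sqrt d\,\Omega)^d$. Optimizing $\Omega\asymp\delta^{-1/\nu}$ produces the exponent $d/(\nu+d/2)=2d/(2\nu+d)$. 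As an alternative for this step, one could cite a known entropy estimate for Sobolev balls, such as Edmunds--Triebel, and track its constants. The honest difficulty is keeping the $d$-dependence polynomial; this is the step I expect to be the main obstacle.

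Finally I would integrate the entropy. With $\alpha:=d/(2\nu+d)<1$ and $\varepsilon\le H$, the bound above gives
\[
\sqrt{\ln(2\cN(\varepsilon,\mathcal V_h,\|\cdot\|_\infty))}\le\sqrt{\ln 2+1}+C''\,d\,\tfrac{2\nu+d}{2\nu}\,(2\Lambda R/\varepsilon)^{\alpha},
\]
where the square root of the entropy bound halves the exponent $2\alpha$ to $\alpha$. Integrating over $[0,H]$ gives $O(H)$ from the constant term. The power term contributes $C''d\frac{2\nu+d}{2\nu}(2\Lambda R)^\alpha H^{1-\alpha}/(1-\alpha)$, and $1/(1-\alpha)=\frac{2\nu+d}{2\nu}$ produces the second factor of $\frac{2\nu+d}{2\nu}$. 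The weighted AM--GM inequality $(\Lambda R)^\alpha H^{1-\alpha}\le\Lambda R+H$ then gives a bound of the form $C\bigl[H+d(\frac{2\nu+d}{2\nu})^2\Lambda(B_r+uH)\bigr]$. Here the $H$ coming from AM--GM carries the factor $d(\frac{2\nu+d}{2\nu})^2$. That factor is at least $1$, but it depends on $d$, so I cannot simply absorb it into the standalone $H$ term with a constant depending only on $\nu$ and $\kappa$. Closing this gap will likely require either a sharper split of the integral near $\varepsilon\approx\Lambda R$ or using $\Lambda R$ large relative to $H$, so this last step also needs care. Once this is settled, enlarging $C_{\nu,\kappa}$ by a factor $2$ yields the stated bound $J$ together with the $J/2$ normalization in Lemma~\ref{lem:continuation-entropy}.
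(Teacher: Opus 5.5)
\textbf{Gap: the core entropy step.} Your reduction to the $\mathcal H_0$-ball of radius $B=\Lambda(B_r+uH)$, via the $1$-Lipschitz map $g\mapsto\max_a g(\cdot,a)$, is fine and is what the paper does. The problem is the ball's entropy estimate. A single frequency cutoff at $\Omega\asymp\delta^{-1/\nu}$, followed by a grid and a volumetric covering of the grid values, gives a bound on $\ln\mathcal N(\delta)$ proportional to the number of grid points times $\ln(1/\delta)$. That is of order $(\sqrt d\,\Omega)^d\ln(1/\delta)=d^{d/2}\delta^{-d/\nu}\ln(1/\delta)$, not $\delta^{-2d/(2\nu+d)}$. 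Your claim that this optimization produces the exponent $d/(\nu+d/2)$ is an arithmetic slip.

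This is not a cosmetic loss. Since $d/\nu>2d/(2\nu+d)$, the resulting bound on $\sqrt{\ln\mathcal N(\varepsilon)}$ is of order $\varepsilon^{-d/(2\nu)}$. For $d\ge2\nu$ that is not integrable at $0$, so it cannot bound the integral in \eqref{eq:continuation-entropy-bound}. The single-scale argument uses only the worst-case sup-norm tail $\Omega^{-\nu}$. It then treats the band-limited part as a full cube of dimension $\Omega^d$, discarding the fact that the RKHS ball is an ellipsoid with rapidly decaying axes. That structure is exactly what lowers the exponent from $d/\nu$ to $d/\tau$ with $\tau=\nu+d/2$. A frequency-based repair would need a dyadic multiscale decomposition, $\ell_2^m\to\ell_\infty^m$ entropy-number estimates, and a bit allocation across scales. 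Separately, the factor $d^{d/2}$ is exponential in $d$. Falling back on Edmunds--Triebel fixes the exponent but gives no control of the $d$-dependence, whereas the lemma requires $C_{\nu,\kappa}$ to depend only on $\nu,\kappa$.

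\textbf{How the paper gets the exponent and the polynomial factor.} Using $\omega=\kappa Z/\sqrt W$ and a Taylor approximation of $e^{\mathrm i\omega^\top z}$ of rank $\binom{d+m}{d}\le(aL)^d$, it proves a $d$-uniform spectral bound $\lambda_j(K_\sigma)\le C_{\nu,\kappa}j^{-1-2\nu/d}$. This holds for every probability measure $\sigma$, in particular empirical ones. If $n$ points are $\varepsilon$-fat-shattered by the ball, then the cube $\varepsilon[-1,1]^n$ lies in the evaluation ellipsoid $BK_n^{1/2}B_2^n$. Comparing volumes, using $\det K_n\le(C_{\nu,\kappa}n)^n/(n!)^{1+2\nu/d}$, gives $\operatorname{fat}_\varepsilon\le(C_{\nu,\kappa}B/\varepsilon)^{2d/(2\nu+d)}$ with no $d$-dependent constant. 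Lemma~\ref{lem:fat-covering} on a H\"older grid then gives $\ln\mathcal N(\varepsilon)\le C_{\nu,\kappa}d^2(B/\varepsilon)^{p}\ln^2(C_{\nu,\kappa}dB/\varepsilon)$ with $p=2d/(2\nu+d)$. The exponentially large grid enters only through $\ln^2$, which is the source of the factor $d^2$.

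\textbf{Your closing AM--GM worry is easy to remove.} Since $k_0(z,z)=1$, every member of $\mathcal V_h$ has sup norm at most $\min\{H,B\}$, and $0\in\mathcal V_h$. Hence $\mathcal N(\varepsilon,\mathcal V_h,\|\cdot\|_\infty)=1$ for $\varepsilon\ge B$. Integrate the power term only over $[0,\min\{H,B\}]$ and substitute $x=\varepsilon/B$; with $s=2\nu/(2\nu+d)$ this gives $C_{\nu,\kappa}dB[\ln(C_{\nu,\kappa}d)/s+1/s^2]\le C_{\nu,\kappa}dB/s^2$. The constant part contributes only $CH$.
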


Using Lemma~\ref{lem:continuation-entropy}, we define the confidence parameters for uniform concentration over all functions in $\cV_h$. For a partition with $M$ cells and a failure probability $\eta\in(0,1)$, set
\begin{equation}
\beta(M,\eta):=C\left[H\sqrt{D\ln(1+T/D)+\ln(8MHT/\eta)}+J\right],\ \overline\beta(M,\eta):=\sqrt\lambda H+\beta(M,\eta).
\label{eq:regression-confidence-parameter}
\end{equation}
Here $C$ is a sufficiently large universal constant, $D=\binom{d+\rho}{\rho}$ is the number of local polynomial features, and $\lambda=X^2$ is the regularization parameter. 
The resulting local regression bound is as follows.
\begin{lemma}
\label{lem:local-ridge}
For a fixed partition with at most $M$ cells and a predictable sample assignment, for any $\eta\in(0,1)$, with probability at least $1-\eta$, uniformly over $ t\in[T],h\in[H],\ V\in\mathcal V_h,\ z\in[0,1]^d$, we have
\begin{equation}
\left|(\mathcal B^hV)(z)-\widehat{\mathcal B}_t^h[V](z)-\langle\mathcal B^hV,g_t^h(z)\rangle_{\mathcal H_0}\right|
\le\overline\beta(M,\eta)w_t^h(z).
\label{eq:local-ridge-confidence}
\end{equation}
On the same event, we also have $
\|\sum_{i\in\mathcal T_t^h(c)}
\phi_c(z_i^h)\bigl[Y_i^h(V)-(\mathcal B^hV)(z_i^h)\bigr]
\|_{G_t^h(c)^{-1}}
\le\beta(M,\eta)$.
\end{lemma}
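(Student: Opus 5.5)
Fix $h$, $t$, a cell $c$, and $z\in c$, and abbreviate $f:=\mathcal B^hV$. By Lemma~\ref{lem:bellman-completeness} and the comparison $\|g\|_{\mathcal H_0}\le\Lambda\|g\|_{\cH_k}$, we have $f\in\mathcal H_0$. Hence the exact representation $f(x)=\phi_c(x)^\top\theta_c(f)+\langle f,\Psi_{\mathcal C}(x)\rangle_{\mathcal H_0}$ holds for all $x\in c$. I will write each response as $Y_i^h(V)=f(z_i^h)+\epsilon_i(V)$, with $\epsilon_i(V):=Y_i^h(V)-f(z_i^h)$, and substitute this into \eqref{eq:local-ridge-estimate}. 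This gives
\[
\widehat\theta_t^h(V,c)=G^{-1}\sum_i\phi_c(z_i^h)\phi_c(z_i^h)^\top\theta_c(f)+G^{-1}\sum_i\phi_c(z_i^h)\langle f,\Psi_{\mathcal C}(z_i^h)\rangle+G^{-1}S_t(V),
\]
where $G=G_t^h(c)$ and $S_t(V):=\sum_{i\in\mathcal T_t^h(c)}\phi_c(z_i^h)\epsilon_i(V)$. Using $G^{-1}\sum_i\phi_c\phi_c^\top=I-\lambda G^{-1}$, the first term equals $\theta_c(f)-\lambda G^{-1}\theta_c(f)$. Forming $f(z)-\phi_c(z)^\top\widehat\theta$, the remainder terms combine exactly into $\langle f,g_t^h(z)\rangle_{\mathcal H_0}$ by the definition \eqref{eq:profiled-features}. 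So the quantity inside the absolute value in \eqref{eq:local-ridge-confidence} equals
\[
\lambda\,\phi_c(z)^\top G^{-1}\theta_c(f)-\phi_c(z)^\top G^{-1}S_t(V).
\]

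I bound the two terms by Cauchy--Schwarz in the $G^{-1}$ geometry. For the bias term,
\[
|\lambda\phi_c(z)^\top G^{-1}\theta_c(f)|\le\lambda\|\phi_c(z)\|_{G^{-1}}\|\theta_c(f)\|_{G^{-1}}\le\sqrt\lambda\, w_t^h(z)\|\theta_c(f)\|_2,
\]
because $G\succeq\lambda I$. Since $\|\theta_c(f)\|_2\le\|f\|_{L^\infty(\overline c)}\le H-h+1\le H$, this term is at most $\sqrt\lambda H w_t^h(z)$. The noise term is at most $w_t^h(z)\|S_t(V)\|_{G^{-1}}$. The claim therefore reduces to the second assertion of the lemma: $\|S_t(V)\|_{G_t^h(c)^{-1}}\le\beta(M,\eta)$ uniformly over $t,h,c$ and $V\in\mathcal V_h$. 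The case $h=H$ is the same argument with $V\equiv0$.

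For a fixed $V$, $\epsilon_i(V)$ is a martingale difference bounded by $H$ with respect to the predictable filtration, and the assignment $z_i^h\in c$ is predictable. The self-normalized bound for vector-valued martingales then gives $\|S_t(V)\|_{G^{-1}}\le H\sqrt{2\ln(\det(G)^{1/2}\lambda^{-D/2}/\delta)}$. Using $\|\phi_c\|_2^2\le X^2=\lambda$, we get $\ln\det(G)/\lambda^D\le D\ln(1+T/D)$. A union bound over the $M$ cells and $H$ steps, with $\delta=\eta/(2MH)$, together with a stopping-time argument over $t$ (or a further union bound over $T$), accounts for the $H\sqrt{D\ln(1+T/D)+\ln(8MHT/\eta)}$ term.

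The main obstacle is uniformity over the data-dependent $V\in\mathcal V_h$. A naive $\varepsilon$-net union bound would multiply in $\ln\mathcal N$ inside the square root, rather than producing the additive $J$. I would first write $\epsilon_i(V)=\epsilon_i(0)+\xi_i(V)$ with $\xi_i(V):=V(s_i^{h+1})-(P^hV)(z_i^h)$; the $\epsilon_i(0)$ part is handled above. For the process $V\mapsto\langle x,\sum_i\phi_c(z_i^h)\xi_i(V)\rangle_{G^{-1}}$ I would run a chaining argument. The increments satisfy $|\xi_i(V)-\xi_i(V')|\le2\|V-V'\|_\infty$, so each increment is sub-Gaussian after self-normalization, with a scale proportional to $\|V-V'\|_\infty$. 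A generic-chaining (Dudley) bound for self-normalized martingale processes, for example via a union bound over dyadic nets combined with the self-normalized inequality at each level, then yields an additive $C\int_0^H\sqrt{\ln(2\mathcal N(\varepsilon,\mathcal V_h,\|\cdot\|_\infty))}\,d\varepsilon\le CJ/2$ by Lemma~\ref{lem:continuation-entropy}. Handling the supremum over directions $x$ in dimension $D$ costs a factor absorbed into $\sqrt{D\ln(1+T/D)}$. The delicate point is making the chaining compatible with the random normalization $G^{-1}$. I would handle this by conditioning on a peeling of $\det G$ scales, or by using the mixture-martingale form, whose exponential supermartingale is uniform in $G$.
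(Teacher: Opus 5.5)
Your proposal follows the paper's proof: the same exact identity $\lambda\phi_c(z)^\top G^{-1}\theta_c(f)-\phi_c(z)^\top G^{-1}S_t(V)$, the same $\sqrt\lambda H$ bias bound, the split $\epsilon_i(V)=\epsilon_i(0)+\xi_i(V)$, and chaining over dyadic sup-norm nets with the self-normalized inequality applied at each level under a union bound. The ``delicate point'' you flag does not arise, so no net over directions and no peeling of $\det G$ is needed: $V\mapsto\sum_i\phi_c(z_i^h)\xi_i(V)$ is linear and $\|\cdot\|_{G_t^h(c)^{-1}}$ is a norm on every realization, so the paper applies the time-uniform vector bound of Lemma~\ref{lem:self-normalized} to each fixed link $V^{(q)}-V^{(q-1)}$ (sup-norm at most $3H2^{-q}$), sums these bounds by the triangle inequality, and the per-level terms $3H2^{-q}\sqrt{D\ln(1+T/D)}$ then add up to the $H\sqrt{D\ln(1+T/D)}$ part of $\beta$.
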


\section{\texttt{SOVI}: Version Space Construction}
\label{sec:sovi-version-space}

In this section, we construct the version space $\mathcal F_t$ from observations collected before episode $t$. The local regression with normalized Legendre polynomial features in Section~\ref{sec:skovi} applies to any fixed partition. We first specify this partition and then give two tests for retaining candidates.

For steps $2,\ldots,H$, we construct a fixed rectangular partition of $\cZ=[0,1]^d$, where $d=d_S+d_A$. Set
\begin{equation}
R:=
\begin{cases}
T^{1/(2\nu+2d_S+d_A)},&d_A<4\nu\text{ or }d_S\ge d_A,\\
T^{d_A/[d(2\nu+d_A)]},&d_A\ge4\nu\text{ and }d_S<d_A.
\end{cases}
\label{eq:version-space-resolution}
\end{equation}
Let $m_R:=\lfloor\log_2R\rfloor$ and $b_R:=\lfloor d\log_2R\rfloor-dm_R$. Let $\mathcal U_q$ divide coordinate $q$ into $2^{m_R+1}$ equal intervals for $q\le b_R$ and $2^{m_R}$ otherwise, left-closed and right-open except that the final interval includes $1$. Define
\[
\mathcal C:=\left\{I_1\times\cdots\times I_d:I_q\in\mathcal U_q,\ q\in[d]\right\}.
\]
Write $c(z)$ for the unique cell containing $z$. Each cell's closure
is its bounding rectangle $\overline c$, and
\[
|\mathcal C|=2^{dm_R+b_R}=2^{\lfloor d\log_2R\rfloor}\le R^d,\ 
\max_{c\in\mathcal C}\max_{q\in[d]}(u_{c,q}-\ell_{c,q})
=2^{-m_R}\le\frac{2}{R}.
\]

For $h\in[H]$ and $t\in[T]$, use the observation sets $\mathcal T_t^h(c)$ defined above with this partition. For a candidate $f$ and $1\le p\le t$, define the observed Bellman residual and its cumulative feature-weighted sum:
\begin{equation}
d_i^h(f):=f^h(z_i^h)-Y_i^h(V_f^{h+1}),\ v_{p,c}^h(f):=\sum_{i<p:z_i^h\in c}\phi_c(z_i^h)d_i^h(f).
\label{eq:empirical-bellman-residual}
\end{equation}
\textbf{First test.} We require the cumulative residual vector in each cell to stay within its regression confidence bound. By Lemma~\ref{lem:local-ridge} and the Bellman identity
$Q_*^h=\mathcal B^hV_*^{h+1}$, with probability at least
$1-\delta/8$, the optimal candidate $Q_*$ satisfies
\[
\|v_{p,c}^h(Q_*)\|_{G_p^h(c)^{-1}}
\le\beta(|\mathcal C|,\delta/8),\
2\le h\le H,\ c\in\mathcal C,\ 1\le p\le t,
\]
at every episode $t\in[T]$. We therefore retain all
candidates $f\in\mathcal F$ such that
\begin{equation}
\|v_{p,c}^h(f)\|_{G_p^h(c)^{-1}}
\le\beta(|\mathcal C|,\delta/8),\
2\le h\le H,\ c\in\mathcal C,\ 1\le p\le t.
\label{eq:coarse-feasibility}
\end{equation}
Candidates violating any constraint are rejected, and we denote the remaining class by $\cF_t^1$. We retain every historical constraint, so an excluded candidate cannot re-enter the version space. These constraints also supply the historical residual bounds needed for the second test.

\textbf{Second test.}
To determine what remains after the first test, we decompose the Bellman residual. For $f\in\mathcal F$ and $h\in[H]$, define
\[
\Delta_f^h:=f^h-\mathcal B^hV_f^{h+1}.
\]
The argument of Lemma~\ref{lem:local-ridge} gives the following decomposition.

\begin{lemma}
\label{lem:bellman-residual-decomposition}
Fix a partition $\mathcal C$ with at most $M$ cells and a predictable sample assignment. For any $\eta\in(0,1)$, with probability at least $1-\eta$, uniformly over $t\in[T]$, $h\in[H]$, $f\in\mathcal F$, $c\in\mathcal C$, and $z\in c$,
\begin{equation}
\left|
\Delta_f^h(z)
-\phi_c(z)^\top G_t^h(c)^{-1}v_{t,c}^h(f)
-\langle\Delta_f^h,g_t^h(z)\rangle_{\mathcal H_0}
\right|
\le\overline\beta(M,\eta)w_t^h(z).
\label{eq:bellman-residual-confidence}
\end{equation}
The quantities $G_t^h(c)$, $v_{t,c}^h(f)$, and $g_t^h(z)$ use the same assigned observations. This inequality holds on the same event as in Lemma~\ref{lem:local-ridge}.
\end{lemma}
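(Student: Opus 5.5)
The plan is to reduce the statement to Lemma~\ref{lem:local-ridge} by exact algebra that applies to every candidate simultaneously. Fix $f\in\mathcal F$ and $h$, and put $V:=V_f^{h+1}\in\mathcal V_h$; at $h=H$ we have $V\equiv0$. Then $\Delta_f^h=f^h-\mathcal B^hV$. Both $f^h$ and $\mathcal B^hV$ lie in $\cH_k\subseteq\mathcal H_0$, the latter by Lemma~\ref{lem:bellman-completeness}, so $\Delta_f^h\in\mathcal H_0$.

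The key step is the exact representation $g(z)=\phi_c(z)^\top\theta_c(g)+\langle g,\Psi_{\mathcal C}(z)\rangle_{\mathcal H_0}$, which holds for every $g\in\mathcal H_0$. Apply it to $g=f^h$ at the query point $z$ and at each sample point $z_i^h$ with $i\in\mathcal T_t^h(c)$. Since
\[
\phi_c(z)^\top G_t^h(c)^{-1}v_{t,c}^h(f)=\sum_{i}\phi_c(z)^\top G_t^h(c)^{-1}\phi_c(z_i^h)\bigl[f^h(z_i^h)-Y_i^h(V)\bigr],
\]
I split $f^h(z_i^h)=\phi_c(z_i^h)^\top\theta_c(f^h)+\langle f^h,\Psi_{\mathcal C}(z_i^h)\rangle$. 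This gives
\[
f^h(z)-\phi_c(z)^\top G_t^h(c)^{-1}v_{t,c}^h(f)=\widehat{\mathcal B}_t^h[V](z)+\lambda\,\phi_c(z)^\top G_t^h(c)^{-1}\theta_c(f^h)+\langle f^h,g_t^h(z)\rangle_{\mathcal H_0}.
\]
Here I used $\sum_i\phi_c\phi_c^\top=G_t^h(c)-\lambda I_D$ and the definition of $g_t^h$ in \eqref{eq:profiled-features}.

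Subtracting $\mathcal B^hV(z)$ and using linearity of $\langle\cdot,g_t^h(z)\rangle$, the left side of \eqref{eq:bellman-residual-confidence} equals
\[
\bigl|\widehat{\mathcal B}_t^h[V](z)+\langle\mathcal B^hV,g_t^h(z)\rangle-\mathcal B^hV(z)+\lambda\phi_c(z)^\top G_t^h(c)^{-1}\theta_c(f^h)\bigr|.
\]
The first three terms are exactly the quantity controlled in \eqref{eq:local-ridge-confidence}.

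The step I expect to need care is the bias term $\lambda\phi_c^\top G^{-1}\theta_c(f^h)$. By Cauchy--Schwarz in the $G^{-1}$ geometry, it is at most $\lambda\|\theta_c(f^h)\|_{G^{-1}}w_t^h(z)\le\sqrt\lambda\|\theta_c(f^h)\|_2w_t^h(z)$, since $G\succeq\lambda I$. We also have $\|\theta_c(f^h)\|_2\le\|f^h\|_{L^\infty(\overline c)}\le H$. This yields a contribution $\sqrt\lambda H w_t^h(z)$. However, the analogous ridge bias for $\mathcal B^hV$ already appears inside Lemma~\ref{lem:local-ridge}, and a naive combination would give $2\sqrt\lambda H+\beta$. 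To avoid this, I would reopen the proof of Lemma~\ref{lem:local-ridge}, as the statement suggests ("the argument of Lemma~\ref{lem:local-ridge}"), rather than use it as a black box. The point is that only the polynomial coefficients of $\Delta_f^h$ enter: the combined bias is $\lambda\phi_c^\top G^{-1}\theta_c(\Delta_f^h)$, and $\|\theta_c(\Delta_f^h)\|_2\le\|\Delta_f^h\|_\infty\le H$, because $f^h$ and $\mathcal B^hV$ both take values in $[0,H-h+1]$.

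The noise term is $\bigl\|\sum_i\phi_c(z_i^h)[Y_i^h(V)-\mathcal B^hV(z_i^h)]\bigr\|_{G^{-1}}w_t^h(z)\le\beta w_t^h(z)$ by the second claim of Lemma~\ref{lem:local-ridge}. Since that claim holds uniformly over $V\in\mathcal V_h$, it covers the data-dependent $V_f^{h+1}$ for all $f$ at once. The total is therefore at most $(\sqrt\lambda H+\beta)w_t^h(z)=\overline\beta(M,\eta)w_t^h(z)$, on the same event as Lemma~\ref{lem:local-ridge}, uniformly over $t,h,f,c,z$.
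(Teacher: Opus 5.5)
Your argument is correct and is essentially the paper's proof. After combining the bias terms you obtain the exact identity $\Delta_f^h(z)-\phi_c(z)^\top G_t^h(c)^{-1}v_{t,c}^h(f)-\langle\Delta_f^h,g_t^h(z)\rangle_{\mathcal H_0}=\lambda\phi_c(z)^\top G_t^h(c)^{-1}\theta_c(\Delta_f^h)+\phi_c(z)^\top G_t^h(c)^{-1}\sum_{i\in\mathcal T_t^h(c)}\phi_c(z_i^h)\xi_i^h(V_f^{h+1})$, and you bound the two terms by $\sqrt\lambda H w_t^h(z)$ and $\beta(M,\eta)w_t^h(z)$ on the noise event of Lemma~\ref{lem:local-ridge}, just as the paper does. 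The only difference is that the paper reaches this identity in one step, by substituting $d_i^h(f)=\Delta_f^h(z_i^h)-\xi_i^h(V_f^{h+1})$ into $v_{t,c}^h(f)$ and expanding $\Delta_f^h$ directly, so it avoids your detour through the first claim of Lemma~\ref{lem:local-ridge}.
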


Take $\eta=\delta/8$. On this event, for every $f\in\mathcal F_t^1$, $2\le h\le H$, $1\le i\le t$, $c\in\mathcal C$, and $z\in c$, we have
\begin{align*}
\left|\Delta_f^h(z)-\langle\Delta_f^h,g_i^h(z)\rangle_{\mathcal H_0}\right|
&\le
\left|\Delta_f^h(z)-\phi_c(z)^\top G_i^h(c)^{-1}v_{i,c}^h(f)
-\langle\Delta_f^h,g_i^h(z)\rangle_{\mathcal H_0}\right|
+\left|\phi_c(z)^\top G_i^h(c)^{-1}v_{i,c}^h(f)\right|\\
&\le \overline\beta(|\mathcal C|,\delta/8)w_i^h(z)
+\|\phi_c(z)\|_{G_i^h(c)^{-1}}
\|v_{i,c}^h(f)\|_{G_i^h(c)^{-1}}\\
&\le\bigl(\overline\beta(|\mathcal C|,\delta/8)
+\beta(|\mathcal C|,\delta/8)\bigr)w_i^h(z).
\end{align*}
Thus, the first test controls the difference between the Bellman residual and the term $\langle\Delta_f^h,g_i^h(z)\rangle_{\mathcal H_0}$ by the displayed confidence width. The second test examines this remaining term through observable residuals.

The Bellman residual depends on the unknown reward mean and transition law. We therefore use responses computed from stored rewards and next states to construct the test. For a fixed $V\in\mathcal V_h$, define $\xi_i^h(V):=Y_i^h(V)-(\mathcal B^hV)(z_i^h)$.

Let $\mathcal I_i^h$ be the information after choosing the current action and before observing its reward and next state; then $\mathbb E[\xi_i^h(V)\mid\mathcal I_i^h]=0$. For every candidate $f$,
\begin{align*}
d_i^h(f)=f^h(z_i^h)-Y_i^h(V_f^{h+1})=f^h(z_i^h)-(\mathcal B^hV_f^{h+1})(z_i^h)
-\xi_i^h(V_f^{h+1})=\Delta_f^h(z_i^h)-\xi_i^h(V_f^{h+1}).
\end{align*}
Every response, including historical ones, is evaluated at the current candidate $f$.

For $c_i:=c(z_i^h)$, define the innovation as the observed Bellman residual minus its regression prediction from earlier observations in the same cell:
\begin{equation}
U_i^h(f):=d_i^h(f)
-\phi_{c_i}(z_i^h)^\top G_i^h(c_i)^{-1}v_{i,c_i}^h(f).
\label{eq:bellman-innovation}
\end{equation}
This subtraction applies the same regression adjustment to the observed residual as in the construction of $g_i^h$. In particular,
\begin{equation}
U_i^h(f)
=\langle\Delta_f^h,g_i^h(z_i^h)\rangle_{\mathcal H_0}
+e_i^h(f)-\xi_i^h(V_f^{h+1}),
\label{eq:innovation-decomposition}
\end{equation}
where
\[
e_i^h(f):=
\Delta_f^h(z_i^h)
-\phi_{c_i}(z_i^h)^\top G_i^h(c_i)^{-1}v_{i,c_i}^h(f)
-\langle\Delta_f^h,g_i^h(z_i^h)\rangle_{\mathcal H_0}.
\]
By \eqref{eq:bellman-residual-confidence},
$|e_i^h(f)|\le
\overline\beta(|\mathcal C|,\delta/8)w_i^h(z_i^h)$.
For a fixed candidate, this decomposition expresses the observable innovation as the remaining term plus bounded regression error and conditionally mean-zero observation noise. The second test looks for systematic weighted deviations that are too large to be explained by these errors. We construct the weights from correlated Gaussian sequences using the same regression features.

\paragraph{Parameters for the second test.}
We first specify the quantities that determine the Gaussian sequences and their acceptance threshold. The parameters $\nu$, $\kappa$, and $\Lambda$ come from the reference-kernel construction above.

Use $\rho,D,\lambda$ from Section~\ref{sec:skovi}, together with
$R$ in \eqref{eq:version-space-resolution} and the partition
$\mathcal C$ constructed above. In this section only, abbreviate
the confidence parameters by $\beta:=\beta(|\mathcal C|,\delta/8),\
\overline\beta:=\sqrt\lambda H+\beta$.

Let $J$ be the bound on the continuation class's entropy integral in Lemma~\ref{lem:continuation-entropy}. Define
\[
B_{\rm res}:=1+2\Lambda(B_r+Hu),\
K_{\rm c}:=2D\ln(1+T/D).
\]
Here $B_{\rm res}$ bounds $\|\Delta_f^h\|_{\mathcal H_0}$ for every candidate and stage, and $K_{\rm c}$ bounds the accumulated squared regression widths within one cell.

To include boundary points in the uniform remainder bound, define on each cell closure $\Psi_c(z):=\Phi(z)
-\sum_{|\mathbf b|\le\rho}p_{c,\mathbf b}(z)\zeta_{c,\mathbf b},
\ z\in\overline c$, with $\Psi_{\mathcal C}(z)=\Psi_{c(z)}(z)$ at an actual query. The known remainder envelope is
\begin{equation}
A_{\rm vs}:=R^{-\nu}
+\max_{c\in\mathcal C}\sup_{z\in\overline c}
\|\Psi_c(z)\|_{\mathcal H_0}.
\label{eq:vs-remainder-envelope}
\end{equation}
This positive quantity is determined by the known kernel and partition.

To define the action partition parameter $F_{\rm vs}$ used in the analysis, put $\nu_S:=\nu+d_S/2$, $k_A:=\lceil2\nu\rceil+1$, and
\[
a_{\rm mix}:=\max\left\{1,
2\sqrt{d_A}
\left[
2^{1/4}\left(\frac{\kappa}{\sqrt2}\right)^\nu
(2\pi)^{d_S/4}
\left(\frac{\Gamma(\nu_S)}{\Gamma(\nu)}\right)^{1/2}
\right]^{1/\nu_S}
\right\}.
\]
Choose
\begin{equation}
\begin{aligned}
F_{\rm vs}&:=
\begin{cases}
a_{\rm mix}R,
&d_A<4\nu\text{ or }d_S\ge d_A,\\
T^{\nu/[k_A(2\nu+d_A)]}R^{(k_A-\nu)/k_A},
&d_A\ge4\nu\text{ and }d_S<d_A,
\end{cases}\ L_{\rm vs}:=2^{\lfloor d_A\log_2F_{\rm vs}\rfloor}.
\end{aligned}
\label{eq:vs-action-resolution}
\end{equation}
Here $L_{\rm vs}$ counts the action blocks used in the analysis of continuation policies. This count enters the scale of the second test below.

We next bound the supremum of an auxiliary Gaussian family. Let
$(W_c(z):c\in\mathcal C,\ z\in\overline c)$ be a centered jointly Gaussian family with $\operatorname{Cov}(W_c(z),W_{c'}(z'))
=\langle\Psi_c(z),\Psi_{c'}(z')\rangle_{\mathcal H_0}$.

Using a separable version that is continuous on each cell closure, we define
\begin{equation}
q_0:=1+\frac{2}{A_{\rm vs}}
\mathbb E_W\max_{c\in\mathcal C}
\sup_{z\in\overline c}|W_c(z)|.
\label{eq:vs-gaussian-envelope}
\end{equation}
The expectation is finite and determined by the known kernel and partition. By Markov's inequality,
\begin{align*}
\Pr\left(
\max_{c\in\mathcal C}\sup_{z\in\overline c}|W_c(z)|
\le q_0A_{\rm vs}\right)
&=1-\Pr\left(
\max_{c\in\mathcal C}\sup_{z\in\overline c}|W_c(z)|
>q_0A_{\rm vs}\right)\ge 1-\frac{
\mathbb E\left[\max_{c\in\mathcal C}\sup_{z\in\overline c}|W_c(z)|\right]
}{q_0A_{\rm vs}}\\
&=1-\frac{
\mathbb E\left[\max_{c\in\mathcal C}\sup_{z\in\overline c}|W_c(z)|\right]
}{
A_{\rm vs}
+2\mathbb E\left[\max_{c\in\mathcal C}\sup_{z\in\overline c}|W_c(z)|\right]
}\ge\frac12.
\end{align*}
Set the Gaussian scale to
\begin{equation}
\alpha_{\rm vs}:=
\left[
\frac{HB_{\rm res}\sqrt{uL_{\rm vs}}}
{q_0A_{\rm vs}(1+\sqrt{K_{\rm c}})\sqrt T}
\right]^{1/2}.
\label{eq:version-space-tuning}
\end{equation}
All quantities defining this scale are fixed before data collection. The choice balances the two contributions $\frac{HB_{\rm res}\sqrt{uL_{\rm vs}T}}{\alpha_{\rm vs}}
\ \text{and}\
\alpha_{\rm vs}q_0A_{\rm vs}(1+\sqrt{K_{\rm c}})T$ in the later-stage error bound.

\paragraph{A Gaussian sequence for each observed prefix.}
For stage $2\le h\le H$ and prefix endpoint $p\le t$, define
the known covariance matrix
\begin{equation}
\Sigma_p^h[i,j]:=
\langle g_i^h(z_i^h),g_j^h(z_j^h)\rangle_{\mathcal H_0},
\ 1\le i,j<p,
\label{eq:vs-gaussian-covariance}
\end{equation}
and the auxiliary Gaussian law
\begin{equation}
\boldsymbol Z=(Z_1,\ldots,Z_{p-1})
\sim\mathsf P_p^h:=\mathcal N(0,\alpha_{\rm vs}^2\Sigma_p^h).
\label{eq:vs-gaussian-sequence}
\end{equation}
This positive semidefinite covariance is computed from the known kernel and historical designs. Each $g_i^h$ uses only observations preceding episode $i$.

For a fixed realized history, this law is obtained from the fixed Gaussian family by setting
\[
Z_i=\alpha_{\rm vs}\left[
W_{c_i}(z_i^h)
-\sum_{j\in\mathcal T_i^h(c_i)}
\phi_{c_i}(z_i^h)^\top G_i^h(c_i)^{-1}
\phi_{c_i}(z_j^h)W_{c_i}(z_j^h)
\right].
\]
All prefixes use consistent finite-dimensional evaluations of one fixed Gaussian family, with the historical regression adjustments applied to each coordinate. The covariance uses the same features $g_i^h$ that appear in the remaining term of the Bellman residual. This allows the test to examine residual patterns across observations and cells.

The threshold combines the Gaussian supremum bound with the regression confidence width. For $z\in c$, define
\begin{equation}
\tau_i^h(z):=
\alpha_{\rm vs}q_0A_{\rm vs}
\bigl(1+\sqrt{n_i^h(c)}\,w_i^h(z)\bigr)
+(\overline\beta+\beta)w_i^h(z),
\label{eq:innovation-threshold}
\end{equation}
where $n_i^h(c)$ counts stage-$h$ observations in $c$ before episode $i$. We soft-threshold each Gaussian coordinate and clip its magnitude at $H$ to obtain the test weights
\begin{equation}
\chi_i^h(Z_i):=
\operatorname{sgn}(Z_i)
\min\left\{H,\bigl(|Z_i|-\tau_i^h(z_i^h)\bigr)_+\right\},
\ x_+:=\max\{x,0\}.
\label{eq:innovation-test-weights}
\end{equation}
Take $\operatorname{sgn}(0)=0$. The Gaussian law and thresholds depend only on the observed history, so at each fixed prefix the same weights are shared by all $f\in\mathcal F$. The following lemma gives a uniform bound on the resulting statistic for calibrating the second test.

\begin{lemma}
\label{lem:uniform-confidence}
Define
\begin{equation}
\Xi_p^h(f):=
\ln\mathbb E_{\boldsymbol Z\sim\mathsf P_p^h}
\exp\left\{
\frac{1}{320H^2}
\sum_{i<p}\left[
\chi_i^h(Z_i)U_i^h(f)
-\beta w_i^h(z_i^h)|\chi_i^h(Z_i)|
-\frac{\chi_i^h(Z_i)^2}{16}
\right]
\right\},
\label{eq:innovation-test}
\end{equation}
with $\Xi_1^h(f):=0$.
With probability at least $1-\delta/8$, uniformly over
$t\in[T]$, $2\le h\le H$, $1\le p\le t$, and
$f\in\mathcal F_t^1$, we have
\begin{equation}
\Xi_p^h(f)
\le
\frac{J^2}{H^2}+1+\ln\frac{8HT}{\delta}
+\frac{p-1}{320H}
\|\Delta_f^h\|_\infty,
\label{eq:innovation-statistic-bound}
\end{equation}
where $J$ is specified in Lemma~\ref{lem:continuation-entropy}.
\end{lemma}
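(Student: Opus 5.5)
We plan to prove Lemma~\ref{lem:uniform-confidence} by a supermartingale (PAC-Bayes style) argument: first for a fixed continuation value, then made uniform over $\mathcal V_h$ by chaining with the entropy integral of Lemma~\ref{lem:continuation-entropy}. Throughout we work on the event of Lemma~\ref{lem:bellman-residual-decomposition} with $\eta=\delta/8$. On this event we have the decomposition \eqref{eq:innovation-decomposition}
\[
U_i^h(f)=\langle\Delta_f^h,g_i^h(z_i^h)\rangle_{\mathcal H_0}+e_i^h(f)-\xi_i^h(V_f^{h+1}),\qquad |e_i^h(f)|\le\overline\beta w_i^h(z_i^h).
\]

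\textbf{Step 1: bound the deterministic part of the exponent.} The term $\chi_i U_i^h(f)$ splits into three pieces. First, $\chi_i\langle\Delta_f^h,g_i^h(z_i^h)\rangle$ is bounded by $|\chi_i|\,\|\Delta_f^h\|_\infty$, up to the remainder in Lemma~\ref{lem:bellman-residual-decomposition}. Here we use the first-test inequality displayed before the lemma: $|\Delta_f^h(z)-\langle\Delta_f^h,g_i^h(z)\rangle|\le(\overline\beta+\beta)w_i^h(z)$. Hence
\[
|\langle\Delta_f^h,g_i^h\rangle|\le\|\Delta_f^h\|_\infty+(\overline\beta+\beta)w_i^h,
\]
and the width part is absorbed by the soft threshold $\tau_i^h$ together with the subtracted term $\beta w_i^h|\chi_i|$. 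Since $|\chi_i|\le H$, the leftover contributes at most $\frac{1}{320H^2}\sum_{i<p}H\|\Delta_f^h\|_\infty=\frac{p-1}{320H}\|\Delta_f^h\|_\infty$, which is exactly the last term of \eqref{eq:innovation-statistic-bound}. Second, the regression error $\chi_i e_i^h$ is at most $\overline\beta w_i^h|\chi_i|$. I would check how the bookkeeping pairs this with $-\beta w_i^h|\chi_i|$; if it does not pair cleanly, I would instead use that $\chi_i\neq0$ only when $|Z_i|>\tau_i^h\ge(\overline\beta+\beta)w_i^h$, so that on $\{\chi_i\ne0\}$ the width terms can be charged against $|Z_i|$. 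What remains is the noise term $-\chi_i\xi_i^h(V_f^{h+1})$.

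\textbf{Step 2: supermartingale for fixed $V$.} Fix $V\in\mathcal V_h$ and write $\xi_i=\xi_i^h(V)$, with $|\xi_i|\le H$ and $\mathbb E[\xi_i\mid\mathcal I_i^h]=0$. The weight $\chi_i(Z_i)$ depends only on $Z$ and on history strictly before episode $i$ (the law $\mathsf P_p^h$ uses $g_j$ built from earlier data), so it is predictable given $Z$. Hoeffding's lemma gives
\[
\mathbb E\exp\{-\lambda'\chi_i\xi_i\}\le\exp\{\lambda'^2\chi_i^2H^2/2\}.
\]
With $\lambda'=1/(320H^2)$, the quadratic term $\chi_i^2/(2\cdot320^2H^2)$ is dominated by the compensator $\chi_i^2/(16\cdot320H^2)$. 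Hence $\exp\{\frac{1}{320H^2}\sum_{i<p}[-\chi_i\xi_i-\chi_i^2/16]\}$ is a supermartingale for each fixed $Z$. Averaging over $Z\sim\mathsf P_p^h$ is an issue, since the law changes with $p$. We would handle this by realizing all prefixes from the single Gaussian family $W$ (as described in the paper), so $Z$ is one fixed random object independent of the data. Fubini then gives a nonnegative supermartingale in $p$, and Ville's inequality plus a union bound over $h$ (and $t$, trivially via $p$) yields the $\ln(8HT/\delta)$ term.

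\textbf{Step 3: uniformity over $f$.} This is the main obstacle. $V_f^{h+1}$ ranges over the infinite class $\mathcal V_h$, and $\xi_i(V)$ is linear in $V(s_i^{h+1})$. We would chain: choose sup-norm nets at scales $H2^{-k}$, apply Step 2 to increments $\xi_i(V_k)-\xi_i(V_{k-1})$, which are bounded by $2H2^{-k}$, with union bounds over the nets. Summing the resulting $\sqrt{\ln N}$ terms produces the entropy integral, bounded by $J$ via Lemma~\ref{lem:continuation-entropy}. Because the statistic is an exponential moment, we would use a Cauchy--Schwarz/Hölder split across chaining levels, and optimize the free multiplier to obtain $J^2/H^2+1$ rather than a linear $J$ term. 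The $+1$ absorbs the discretization at the finest level.
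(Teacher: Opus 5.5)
Your Steps 2--3 follow the paper's route. The paper proves the chained exponential bound over $\mathcal V_h$ as Lemma~\ref{lem:innovation-offset-moment}, using H\"older weights across levels and the fixed compensator $1/16$. It applies that bound for each fixed realization $W$ of the Gaussian family, where the weights $\chi_i^h(Z_i(W))$ are predictable, bounded by $H$, and common to all $V$. It then integrates over $W$ by Tonelli and uses Markov plus a union bound over $(h,p)$.

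\textbf{The gap is in Step 1.} You split $U_i^h(f)$ as $\langle\Delta_f^h,g_i^h(z_i^h)\rangle_{\mathcal H_0}+e_i^h(f)-\xi_i^h(V_f^{h+1})$ and bound the two deterministic pieces separately. This produces width terms of total size $(2\overline\beta+\beta)w_i^h(z_i^h)|\chi_i^h(Z_i)|$, but the statistic subtracts only $\beta w_i^h(z_i^h)|\chi_i^h(Z_i)|$.

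The surplus $2\overline\beta w_i^h|\chi_i^h|$ is present whenever the weight is nonzero, and the soft threshold cannot absorb it. The threshold $\tau_i^h$ only decides when $\chi_i^h\neq0$ and shrinks its size; it contributes no negative term to the exponent. Your fallback options do not help either. Charging widths against $|Z_i|$ leaves $Z$-dependent quantities. Using $|\chi_i^h|\le H$ leaves $\sum_{i<p}\overline\beta w_i^h/H$. Borrowing from the quadratic term leaves $\sum_{i<p}\overline\beta^2(w_i^h)^2/H^2$. None of these appears in \eqref{eq:innovation-statistic-bound}. In addition, working on the event of Lemma~\ref{lem:bellman-residual-decomposition} spends another $\delta/8$ beyond the claimed failure probability.

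\textbf{The fix.} The decomposition is unnecessary and lossy, because $\langle\Delta_f^h,g_i^h(z_i^h)\rangle_{\mathcal H_0}+e_i^h(f)$ equals $\Delta_f^h(z_i^h)-\phi_{c_i}(z_i^h)^\top G_i^h(c_i)^{-1}v_{i,c_i}^h(f)$ exactly. Work directly from the definitions:
\[
U_i^h(f)=\Delta_f^h(z_i^h)-\xi_i^h(V_f^{h+1})-\phi_{c_i}(z_i^h)^\top G_i^h(c_i)^{-1}v_{i,c_i}^h(f).
\]
Since $f\in\mathcal F_t^1$ and $i<p\le t$, the first-test constraint at prefix $i$ holds by construction, with no probability needed. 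Cauchy--Schwarz then gives $\bigl|\phi_{c_i}(z_i^h)^\top G_i^h(c_i)^{-1}v_{i,c_i}^h(f)\bigr|\le\|\phi_{c_i}(z_i^h)\|_{G_i^h(c_i)^{-1}}\|v_{i,c_i}^h(f)\|_{G_i^h(c_i)^{-1}}\le\beta w_i^h(z_i^h)$. This cancels exactly against the subtracted $\beta w_i^h|\chi_i^h|$.

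What remains is $\chi_i^h\Delta_f^h(z_i^h)-\chi_i^h\xi_i^h(V_f^{h+1})-(\chi_i^h)^2/16$. The first term sums to at most $(p-1)H\|\Delta_f^h\|_\infty$, which gives the last term of the bound after dividing by $320H^2$. The noise term is dominated by $\sup_{V\in\mathcal V_h}[-\sum_{i<p}\chi_i^h\xi_i^h(V)]$, which your Steps 2--3 handle. On this route the only random event is the noise-chaining one, so you get exactly the stated $1-\delta/8$. The threshold $\tau_i^h$ plays no role in this lemma; it is used only later, in Lemma~\ref{lem:upper-historical-residual}.
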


Since $\Delta_{Q_*}^h=0$, the optimal candidate satisfies
\[
\Xi_p^h(Q_*)\le
\frac{J^2}{H^2}+1+\ln\frac{8HT}{\delta},
\ 2\le h\le H,\ 1\le p\le t,
\]
on the event of Lemma~\ref{lem:uniform-confidence} whenever
$Q_*\in\mathcal F_t^1$.
We use this bound as the acceptance threshold for the second test. Candidates must pass it at every available prefix and stage, in addition to satisfying the first test. The resulting version space is
\begin{equation}
\mathcal F_t:=
\left\{
f\in\mathcal F_t^1:
\Xi_p^h(f)\le
\frac{J^2}{H^2}+1+\ln\frac{8HT}{\delta},
\ 2\le h\le H,\ 1\le p\le t
\right\}.
\label{eq:version-space}
\end{equation}
By the first test and Lemma~\ref{lem:uniform-confidence}, with probability at least $1-\delta/4$,
$Q_*\in\mathcal F_t$ for every $t\in[T]$.
The sets $\mathcal F_t$ depend only on past observations and satisfy
$\mathcal F_{t+1}\subseteq\mathcal F_t$ for $t<T$.

Algorithm~\ref{alg:version-space} applies both tests to the stored
trajectories and returns the candidates available in episode $t$.

\begin{algorithm}[ht]
\caption{Version space construction (\textsc{ConstructVersionSpace})}
\label{alg:version-space}
\small
\begin{algorithmic}[1]
\Require Episode $t$, trajectories from episodes $i<t$, and the fixed class, partition, and test parameters defined above.
\Ensure The version space $\mathcal F_t$.
\State Form the historical designs $G_p^h(c)$ for $2\le h\le H$, $1\le p\le t$, and $c\in\mathcal C$.
\State For each $f\in\mathcal F$, form $d_i^h(f)$ and $v_{p,c}^h(f)$ by \eqref{eq:empirical-bellman-residual}.
\State Retain in $\mathcal F_t^1$ the candidates satisfying every constraint in \eqref{eq:coarse-feasibility}.
\State Construct the shared Gaussian laws and test weights by \eqref{eq:vs-gaussian-sequence}-\eqref{eq:innovation-test-weights}.
\State For each $f\in\mathcal F_t^1$, form $U_i^h(f)$ by \eqref{eq:bellman-innovation} and $\Xi_p^h(f)$ by \eqref{eq:innovation-test}.
\State Retain in $\mathcal F_t$ the candidates satisfying the second test in \eqref{eq:version-space} for every stage and prefix.
\State \Return $\mathcal F_t$.
\end{algorithmic}
\end{algorithm}

\section{\texttt{SOVI}: First Action and Value Function Selection}
\label{sec:sovi-first-selection}

Given a nonempty version space $\mathcal F_t$ from
Section~\ref{sec:sovi-version-space} and an initial state $s_t^1$,
we seek an action $a_t^1$ and a candidate value function
$f_t\in\mathcal F_t$ that maximize the first-step Bellman backup
\[
\sup_{a\in\cA,\ f\in\mathcal F_t}
(\mathcal B^1V_f^2)(s_t^1,a).
\]
The Bellman backups are unknown, so selecting $(a,f)$ requires
estimates from observed rewards and transitions. Because initial
states need not repeat in a continuous space, we pool nearby
observations using local polynomial regression on state-action cells.

To adapt this pooling to uneven initial-state visits, we prepare
partitions of the state space at several spatial scales, indexed by a \emph{level} $l$
and ordered from coarse to fine. At each level, a counting partition
$\mathcal D_l$ counts initial-state visits for scale selection.
A regression partition $\mathcal P_l$ groups nearby states; its product
with an action partition defines the regression cells.

After observing $s_t^1$, we select the first level whose counting
cell containing that state has not reached its prescribed visit
threshold. The selected level determines the regression partition
used in this episode. At each level, successive assigned visits
are grouped into epochs, which specify when regression records
and model weights restart.

Local polynomial regression leaves an unknown kernel remainder.
Within each epoch, a fixed finite set of remainder models completes
the estimates; each model scores all pairs $(a,f)$ and recommends
a maximizer. We sample a model according to its current weight and retain its
recommended continuation candidate $f_t$. We mix the recommended
first action with G-optimal exploration to obtain feedback for
comparing the models. Both branches follow the greedy policy of
$f_t$ from step two onward, and first-step feedback updates the
model weights.

We present this construction in five parts:
\begin{enumerate}[label=(\roman*),leftmargin=*]
\item Subsection~\ref{subsec:state-action-partition} discretizes the action space and defines two state partitions
for counting visits and pooling regression data, together with an
action partition for local estimation.

\item Subsection~\ref{sec:first-step-g-optimal} constructs the G-optimal
exploration policy to collect data for comparing recommendations
at different actions.

\item Subsection~\ref{sec:first-step-selection-principle} specifies the
level selection and epoch rules, then fixes each epoch's remainder
models, learning rate, and exploration probability.

\item Subsection~\ref{sec:first-step-scores} constructs the scores,
samples a remainder model from the current distribution, and selects
the first action and continuation candidate using its recommendation
and the exploration policy.

\item Subsection~\ref{sec:first-step-weight-updates} estimates feedback
at the models' recommended pairs and uses it to update the model
distribution for subsequent selections.
\end{enumerate}
\subsection{State and Action Partition}
\label{subsec:state-action-partition}
First, we discretize the action space $\cA=[0,1]^{d_A}$ and restrict
the first action to a finite grid. For each integer $j\ge0$, define $\mathcal A^{(j)}
:=
\{k2^{-j}:k=0,\ldots,2^j\}^{d_A}$.

Set $\bar B:=\max\{1,\Lambda(B_r+Hu)\}$.
Since $\|\mathcal B^1V_f^2\|_{\mathcal H_0}\le \bar B$
for every $f\in\mathcal F$, the reproducing property and
Cauchy-Schwarz inequality give, for any $s\in\cS$ and $a,\bar a\in\cA$,
\begin{align*}
&\left|
(\mathcal B^1V_f^2)(s,a)
-(\mathcal B^1V_f^2)(s,\bar a)
\right|=
\left|
\left\langle
\mathcal B^1V_f^2,\Phi(s,a)-\Phi(s,\bar a)
\right\rangle_{\mathcal H_0}
\right|\le
\bar B
\|\Phi(s,a)-\Phi(s,\bar a)\|_{\mathcal H_0}.
\end{align*}
We choose the smallest grid index that makes this approximation
error at most $T^{-2}$:
\[
j_T
:=
\min\left\{
j\in\mathbb N_0:
\bar B
\sup_{s\in\cS,a\in\cA}
\min_{\bar a\in\mathcal A^{(j)}}
\|\Phi(s,a)-\Phi(s,\bar a)\|_{\mathcal H_0}
\le T^{-2}
\right\}.
\]
Define $\mathcal A_T:=\mathcal A^{(j_T)}$.
Thus, for every nonempty $\mathcal F_t$, we have
\begin{align*}
0\le{}&
\sup_{a\in\cA,\ f\in\mathcal F_t}
(\mathcal B^1V_f^2)(s_t^1,a)
-
\sup_{a\in\mathcal A_T,\ f\in\mathcal F_t}
(\mathcal B^1V_f^2)(s_t^1,a)
\le T^{-2}.
\end{align*}
Restricting the first action to $\mathcal A_T$ therefore
incurs an approximation loss of at most $T^{-1}$ over $T$ episodes.
The greedy policies from step two onward still maximize over
the continuous action space $\cA$.

We now construct the counting and regression partitions at each level.
Recall $d=d_S+d_A$ and set the partition parameters
\[
S_{\max}:=
\min\left\{
T^{1/(2\nu+d_A)},
T^{(2\nu+d_S)/[2\nu(2\nu+2d_S+d_A)]}
\right\},\
L_{\max}:=\lceil d\log_2S_{\max}\rceil.
\]
For each level $l=0,\ldots,L_{\max}$, define $S_l:=\min\{2^{l/d},S_{\max}\}$.

\textbf{The counting partition $\mathcal D_l$.}
We count all earlier initial-state visits in each cell, including
episodes assigned to other levels. To construct these cells, set
\[
M_l:=
\max\left\{
1,\left\lfloor
\frac{S_l}{64d_S^2\sqrt{d_A}}
\right\rfloor
\right\}.
\]
If $M_l=1$, take
$\mathcal D_l=\{[0,1]^{d_S}\}$.
Otherwise, form the vertex grid $\mathcal G_l
:=
\{0,1/M_l,\ldots,1\}^{d_S}$.

Using the distance $\operatorname{dist}(s,v)
:=
\max_{q\in[d_S]}(s_q-v_q)_+
+
\max_{q\in[d_S]}(v_q-s_q)_+$, we assign each state to its nearest grid vertex: $v_l(s)
\in
\operatorname*{arg\,min}_{v\in\mathcal G_l}
\operatorname{dist}(s,v)$, with lexicographic tie-breaking.
The states assigned to the same vertex form one counting cell:
\[
\mathcal D_l
:=
\left\{\{s\in[0,1]^{d_S}:v_l(s)=v\}:
v\in\mathcal G_l\right\}.
\]

\textbf{The regression partition $\mathcal P_l$.}
We use a coarser state grid to pool observations from nearby initial
states for the same local fit, using data assigned to that level. Set
\[
\widetilde M_l:=
\max\left\{
1,\left\lfloor
\frac{S_l}{4096(d_S+1)d_S^{5/2}\sqrt{d_A}}
\right\rfloor
\right\}.
\]
If $\widetilde M_l=1$, take
$\mathcal P_l=\{[0,1]^{d_S}\}$,
with the whole state cube as its bounding rectangle.
Otherwise, we define $\widetilde{\mathcal G}_l
:=
\{0,1/\widetilde M_l,\ldots,1\}^{d_S}$ 
and apply the same nearest-vertex rule $\widetilde v_l(s)
\in
\operatorname*{arg\,min}_{v\in\widetilde{\mathcal G}_l}
\operatorname{dist}(s,v)$, with lexicographic tie-breaking.
Define
\[
\mathcal P_l
:=
\left\{\{s\in[0,1]^{d_S}:\widetilde v_l(s)=v\}:
v\in\widetilde{\mathcal G}_l\right\}.
\]

For the regression cell $P\in\mathcal P_l$ assigned to vertex $v$,
use the bounding rectangle
\[
\overline P
:=
\prod_{q=1}^{d_S}
\left[
\max\{0,v_q-1/\widetilde M_l\},
\min\{1,v_q+1/\widetilde M_l\}
\right].
\]
Both partitions cover the state space with pairwise disjoint cells.
Regression cells determine observation membership; their bounding
rectangles define the domains of the local Legendre features.

We next partition the action space for estimation.
For $r\ge1$ and $q\in[d_A]$, set
\[
m_r:=\lfloor\log_2r\rfloor,\
b_r:=\lfloor d_A\log_2r\rfloor-d_Am_r,\
n_{r,q}:=2^{m_r+\mathbf1\{q\le b_r\}}.
\]
Define
\[
I_{r,q,j}:=
\begin{cases}
[j/n_{r,q},(j+1)/n_{r,q}),&0\le j<n_{r,q}-1,\\
[(n_{r,q}-1)/n_{r,q},1],&j=n_{r,q}-1,
\end{cases}
\]
and
\[
\mathscr U(r):=
\left\{\prod_{q=1}^{d_A}I_{r,q,j_q}:
j_q\in\{0,\ldots,n_{r,q}-1\},\ q\in[d_A]\right\}.
\]
The partition $\mathscr U(r)$ has
$2^{\lfloor d_A\log_2r\rfloor}\le r^{d_A}$ pairwise disjoint cells
covering $\cA$, each with side lengths at most $2/r$.

Therefore, at level $l$, the joint regression partition is
\begin{equation}
\mathcal C_l:=
\{P\times U:P\in\mathcal P_l,\ U\in\mathscr U(S_l)\}.
\label{eq:initial-partitions}
\end{equation}
For $c=P\times U$, its bounding rectangle is
$\overline c:=\overline P\times\operatorname{cl}(U)$,
where $\operatorname{cl}(U)$ denotes the closure of $U$.
Thus, $\mathcal D_l$ counts initial-state visits for level selection,
whereas $\mathcal P_l$ groups nearby states for local regression.

\subsection{G-optimal Design Policy}
\label{sec:first-step-g-optimal}

Exploration supplies data for comparing recommendations made at
different actions. At each level, we partition the finite action
grid into blocks and choose a G-optimal design for the polynomial
features on each block.

Recall $k_A=\lceil2\nu\rceil+1$ and define $k_T:=\min\left\{k\in\mathbb N:\ k\ge1,\
\bar B\frac{(2\sqrt{d_A})^k}{k!}\le T^{-2}
\right\}$. The following choices specify the action blocks and their polynomial degree:
\[
(F_l,j_l):=
\begin{cases}
(S_l,k_T-1),&d_A<4\nu,\\
\bigl(T^{\nu/[k_A(2\nu+d_A)]}S_l^{(k_A-\nu)/k_A},k_A-1\bigr),
&d_A\ge4\nu.
\end{cases}
\]
Define the nonempty action blocks by $\mathscr U_l:=
\{U\cap\mathcal A_T:U\in\mathscr U(F_l),\
U\cap\mathcal A_T\ne\varnothing\}$. Their number and a bound on their total polynomial dimension are
\[
L_l:=|\mathscr U_l|,\
r_l:=L_l\binom{d_A+j_l}{j_l}.
\]
On each block $U$, let $\varphi_U:U\to\mathbb R^{d_U}$ be a
basis for the restrictions of action polynomials of total degree at
most $j_l$, so that $d_U\le\binom{d_A+j_l}{j_l}$.
To control the second moment of the feedback estimator, choose an
exploration distribution $\pi_U^{\rm G}$ satisfying
\begin{equation}
\Sigma_U:=\mathbb E_{a\sim\pi_U^{\rm G}}
\varphi_U(a)\varphi_U(a)^\top,\
\max_{a\in U}\varphi_U(a)^\top\Sigma_U^{-1}\varphi_U(a)\le d_U.
\label{eq:feedback-design}
\end{equation}
Such a distribution exists on each finite block and is called a
G-optimal design.

At level $l$, define $\pi_l^{\rm G}:=L_l^{-1}\sum_{U\in\mathscr U_l}\pi_U^{\rm G}$,
extending each $\pi_U^{\rm G}$ by zero outside $U$.
To sample this policy, choose a block uniformly and draw an action
from its G-optimal design \citep{fedorov2013optimal}. The bound in
\eqref{eq:feedback-design} controls the polynomial prediction
variance within each block and enters the feedback bound in
Lemma~\ref{lem:first-step-feedback}.

\subsection{Level Selection and Epoch Initialization}
\label{sec:first-step-selection-principle}

With the partitions and exploration policies fixed at each level,
we now assign each episode to a level and determine whether to start
a new epoch or resume the current one. We then specify the models
and parameters initialized at the start of each epoch.

\textbf{Selecting a level.}
The visit threshold for selecting level $l$ is
\begin{equation}
N_l:=\left\lceil T(S_l/S_{\max})^{2\nu}\right\rceil,\
0\le l\le L_{\max}.
\label{eq:initial-scale-schedule}
\end{equation}

At the beginning of episode $t$, after observing $s_t^1$,
let $B_l(s_t^1)$ be its cell in the counting partition $\mathcal D_l$.
Let $m_t(B):=\sum_{i<t}\mathbf1\{s_i^1\in B\}$ count earlier
initial-state visits to a cell $B$. Using the thresholds $N_l$ in
\eqref{eq:initial-scale-schedule}, select the first level whose
current counting cell has not reached its threshold:
\begin{equation}
l_t:=\min\left\{l\in\{0,\ldots,L_{\max}\}:
m_t(B_l(s_t^1))<N_l\right\}.
\label{eq:adaptive-cube}
\end{equation}
The rule is evaluated at the current initial state, so the selected
level can vary with the state. The counts include all previous
initial-state visits, regardless of the levels selected in those episodes.
Since $N_{L_{\max}}=T$, a level is always available.

\textbf{Determining the epoch.}
For each level $l$, let
\begin{equation}
n_t(l):=\sum_{i=1}^{t}\mathbf1\{l_i=l\}
\label{eq:level-visit-count}
\end{equation}
be the number of episodes assigned to level $l$ up to and including
episode $t$. We divide each level's selected visits into epochs
of capacities $1,2,4,\ldots$; since $n_t(l_t)\ge1$,
the current epoch is
\begin{equation}
e_t:=\left\lfloor\log_2 n_t(l_t)\right\rfloor,\
2^{e_t}\le n_t(l_t)<2^{e_t+1}.
\label{eq:epoch-index}
\end{equation}
Epoch $e$ at a given level contains its visits numbered
$2^e,\ldots,2^{e+1}-1$, truncated when the interaction ends.
These visits need not occur in consecutive episodes, and a new
epoch begins when $n_t(l_t)=2^{e_t}$.

A new epoch resets first-step regression and the model weights
at that level. The initial-state counts and history used for the
version-space tests are retained.

\textbf{Initializing an epoch.}
At the start of each epoch, we fix a finite collection of kernel
remainder models. To construct this collection, recall that the
decomposition in
\eqref{eq:feature-decomposition} leaves the remainder
$\langle\mathcal B^1V,\Psi_{\mathcal C_l}(z)\rangle_{\mathcal H_0}$
at level $l$.
Because all continuations share the same reward and transition model,
this function belongs to the known class
\[
\mathcal R_l:=\left\{
(z,V)\mapsto
\left\langle\Theta,\Psi_{\mathcal C_l}(z)\otimes(1,V/(H-1))\right\rangle:
\Theta\in\mathcal H_0\otimes(\mathbb R\oplus L^2(\mu)),\
\|\Theta\|\le\Lambda\sqrt{B_r^2+(H-1)^2u^2}
\right\}.
\]
Here $\otimes$ denotes the Hilbert-space tensor product, combining
$\Psi_{\mathcal C_l}(z)$ with $1$ and $V/(H-1)$ to represent the
reward and continuation contributions, respectively, through
the shared coefficient $\Theta$.
The variables satisfy $z\in\cZ$ and $V\in\mathcal V_1$, and the norm and inner product
are those of the displayed tensor-product Hilbert space.
The same coefficient $\Theta$ determines the remainder at every
$(z,V)$.

We measure distance between remainder models by
$\|\tau-\tau'\|_\infty:=
\sup_{z\in\cZ,V\in\mathcal V_1}|\tau(z,V)-\tau'(z,V)|$.

Fix a level $l$ and epoch $e$. Set its capacity, confidence
allocation, and model approximation accuracy to
\[
N:=2^e,\
\delta_{l,e}:=\frac{\delta}{64(l+1)^2(e+1)^2},\
\varepsilon:=S_l^{-\nu/2}(r_l/N)^{1/4}.
\]
Choose a fixed $\varepsilon$-net $\{\tau_1,\ldots,\tau_M\}
\subseteq\mathcal R_l$ of minimum cardinality $M$ in the distance above.
Every function in $\mathcal R_l$ is uniformly within $\varepsilon$
of one of these models. The net is determined by the known kernel
and continuation class, independently of the observations.
Using the resulting model count $M$, set the learning rate $\eta$
for weight updates and the exploration probability $\gamma$ to
\begin{equation}
\eta:=\frac1{256r_l}
\min\left\{1,
\sqrt{\frac{r_l\ln(2M/\delta_{l,e})}
{N\ln(8L_lN/\delta_{l,e})}}\right\},\
\gamma:=128r_l\eta\le\frac12.
\label{eq:first-step-tuning}
\end{equation}
The net and parameters are fixed separately for each $(l,e)$.
At the epoch's first visit, set $\omega_{t,j}=1/M$, $j\in[M]$.
On later visits, use the same net and parameters together with the
accumulated first-step records and saved weights.

\subsection{Scoring and Selecting Action-Candidate Pairs}
\label{sec:first-step-scores}

Fix the current level $l=l_t$ and epoch $e=e_t$, with the models
and parameters specified in
Subsection~\ref{sec:first-step-selection-principle}. We construct
each model's scores and recommendation, then use the model weights
and exploration policy to select the first action and continuation.

\textbf{First-step value estimation.}
To compare $(a,f)\in\mathcal A_T\times\mathcal F_t$, we
estimate $(\mathcal B^1V_f^2)(s_t^1,a)$ from past rewards and transitions.
Let $c=P\times U\in\mathcal C_{l_t}$ be the cell containing
$(s_t^1,a)$.

On each cell's bounding rectangle, use the Legendre features $\phi_c$
of total degree at most $\rho$, with dimension $D$ and regularization
$\lambda$ as defined in Section~\ref{sec:skovi}.

The estimate uses observations in this cell from the current
level and epoch:
\[
\widetilde{\mathcal T}_t(c)
:=
\{i<t:l_i=l_t,\ e_i=e_t,\ z_i^1\in c\}.
\]

For each continuation $V\in\mathcal V_1$, the stored
observation $(z_i^1,R_i^1,s_i^2)$ supplies the response
$Y_i^1(V)=R_i^1+V(s_i^2)$.
The shared ridge design matrix is $G_t(c):=
\lambda I_D+
\sum_{i\in\widetilde{\mathcal T}_t(c)}
\phi_c(z_i^1)\phi_c(z_i^1)^\top$.
The local Legendre polynomial estimate is
\begin{equation}
\widehat{\mathcal P}_t[V](z)
:=
\phi_c(z)^\top G_t(c)^{-1}
\sum_{i\in\widetilde{\mathcal T}_t(c)}
\phi_c(z_i^1)\bigl(R_i^1+V(s_i^2)\bigr),\
z\in c.
\label{eq:layered-backup-estimate}
\end{equation}
For a candidate pair $(a,f)$, evaluate this estimate at $z=(s_t^1,a)$
with $V=V_f^2$ in the cell containing this query.
All candidate value functions share the same design matrix in each
cell; their responses use their respective continuation values.

To express the estimation error, apply the regression adjustment
to the kernel remainder $\Psi_{\mathcal C_l}$ used in the model class.
For $c\in\mathcal C_{l_t}$ and $z\in c$, define the regression
width and adjusted kernel remainder by
\begin{equation}
\begin{aligned}
w_t(z)
&:=
\sqrt{\phi_c(z)^\top G_t(c)^{-1}\phi_c(z)},\ g_t(z)
:=
\Psi_{\mathcal C_{l_t}}(z)
-
\sum_{i\in\widetilde{\mathcal T}_t(c)}
\phi_c(z)^\top G_t(c)^{-1}\phi_c(z_i^1)
\Psi_{\mathcal C_{l_t}}(z_i^1).
\end{aligned}
\label{eq:layered-width}
\end{equation}
Using the confidence allocation fixed at epoch initialization,
define the regression allowance
\[
b_t(z):=
\overline\beta(|\mathcal C_{l_t}|,\delta_{l_t,e_t})w_t(z).
\]
Applying Lemma~\ref{lem:local-ridge} to the observations assigned
to each level and epoch gives the following result.

\begin{lemma}
\label{lem:initial-confidence}
With probability at least $1-\delta/8$, for all
$t\in[T]$, $f\in\mathcal F$, and $z\in\cZ$,
\begin{equation}
\left|
(\mathcal B^1V_f^2)(z)
-\widehat{\mathcal P}_t[V_f^2](z)
-\langle\mathcal B^1V_f^2,g_t(z)\rangle_{\mathcal H_0}
\right|
\le b_t(z).
\label{eq:initial-confidence}
\end{equation}
\end{lemma}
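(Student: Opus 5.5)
The plan is to obtain Lemma~\ref{lem:initial-confidence} by applying Lemma~\ref{lem:local-ridge} at stage $h=1$ once for each pair (level $l$, epoch $e$), and then taking a union bound with the allocations $\delta_{l,e}$.

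\emph{Step 1: fix one level and epoch.} Fix $l\in\{0,\dots,L_{\max}\}$ and $e\ge0$, and take the fixed partition $\mathcal C_l$, which has at most $|\mathcal C_l|$ cells. The sample assignment is $i\mapsto\mathbf 1\{l_i=l,\ e_i=e\}$, with the observation placed in cell $c(z_i^1)$.

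This assignment is predictable. The quantities $l_i$ and $e_i$ are computed from $s_i^1$ and from past counts, so they are $\mathcal I_i^1$-measurable. Hence the response noise $\xi_i^1(V)=Y_i^1(V)-(\mathcal B^1V)(z_i^1)$ still has zero conditional mean given the selection and the design point.

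With this assignment, the index set, design matrix, width and adjusted remainder of Lemma~\ref{lem:local-ridge} become exactly $\widetilde{\mathcal T}_t(c)$, $G_t(c)$, $w_t$ and $g_t$ from \eqref{eq:layered-width}. The local ridge prediction becomes $\widehat{\mathcal P}_t[V]$ from \eqref{eq:layered-backup-estimate}.

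The regression uses the bounding rectangles $\overline c=\overline P\times\operatorname{cl}(U)$ rather than closures of dyadic cells. I will check that the proof of Lemma~\ref{lem:local-ridge} needs only two facts about the rectangles: orthonormality of the Legendre features on $\overline c$, and the bound \eqref{eq:local-features}. Both hold for any rectangle.

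\emph{Step 2: invoke the lemma.} With $\eta=\delta_{l,e}$, the class $\mathcal V_1\ni V_f^2$, and $h=1$, Lemma~\ref{lem:local-ridge} gives an event $E_{l,e}$ of probability at least $1-\delta_{l,e}$. On $E_{l,e}$, for all $t$, $V\in\mathcal V_1$ and $z$,
\[
\left|(\mathcal B^1V)(z)-\widehat{\mathcal P}_t[V](z)-\langle\mathcal B^1V,g_t(z)\rangle_{\mathcal H_0}\right|\le\overline\beta(|\mathcal C_l|,\delta_{l,e})w_t(z).
\]
This bound is applied at episodes $t$ with $l_t=l$ and $e_t=e$.

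One subtlety is that the epoch's data form a subsequence of episodes, truncated at $2^{e+1}-1$ visits. The time index in Lemma~\ref{lem:local-ridge} ranges over $t\le T$ and the self-normalized bound is uniform in time, so restricting to a subsequence costs nothing. The lemma's $\ln(8MHT/\eta)$ term already accounts for all $t\in[T]$.

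A bound on $|\mathcal C_l|$ is not needed, since the width in the lemma is stated with $|\mathcal C_l|$ itself.

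\emph{Step 3: union bound.} Only pairs $(l,e)$ with $l\le L_{\max}$ and $2^e\le T$ can occur. Therefore
\[
\sum_{l,e}\delta_{l,e}\le\frac{\delta}{64}\Bigl(\sum_{l\ge0}(l+1)^{-2}\Bigr)^2=\frac{\delta\pi^4}{64\cdot 36}<\frac{\delta}{8}.
\]
On $\bigcap_{l,e}E_{l,e}$, each $t$ is covered by the event for its own pair $(l_t,e_t)$, which gives \eqref{eq:initial-confidence} with $b_t(z)$.

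The main obstacle is Step 1's predictability claim. The pair $(l_t,e_t)$ depends on the adversarially chosen $s_t^1$, which in turn depends on $\mathcal H_t$. I must confirm that the martingale and covering argument in Lemma~\ref{lem:local-ridge} allows data-dependent inclusion indicators. My approach is to treat the indicators as part of the design: take the filtration to be $\mathcal I_i^1$ and use $\phi_c(z_i^1)\mathbf 1\{\cdot\}$ as a predictable feature, which reduces the claim to the lemma's hypotheses.
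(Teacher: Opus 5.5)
Your proposal is correct and follows the same route as the paper. For each level–epoch pair $(l,e)$ you apply Lemma~\ref{lem:local-ridge} at $h=1$ with the fixed partition $\mathcal C_l$, the predictable inclusion indicator $\mathbf 1\{l_i=l,e_i=e\}$, and failure probability $\delta_{l,e}$, then take a union bound to get a total failure probability of $\pi^4\delta/2304<\delta/8$. Folding the indicator into the feature $x_i$ under the filtration $\mathcal I_i^1$ is exactly how the paper's proof of Lemma~\ref{lem:local-ridge} handles predictable sample assignments, so your stated main obstacle is already covered by that lemma.
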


\textbf{Scoring each candidate pair.}
The following operations apply when $\mathcal F_t\ne\varnothing$.
Otherwise, use $a_0$ at every stage and skip the model-weight update.
To account for the kernel term in Lemma~\ref{lem:initial-confidence},
use each fixed remainder model $\tau_j$: subtract its predictions
from the stored responses, fit a local polynomial to the adjusted
responses, and add $\tau_j$ back at the query point.
The model approximation affects both the query and the fitted
historical responses. For $z\in c\in\mathcal C_l$, include both
contributions in the total error allowance
\[
\bar b_t(z):=b_t(z)
+\varepsilon\bigl(1+\sqrt{|\widetilde{\mathcal T}_t(c)|}\,w_t(z)\bigr).
\]
For $z=(s_t^1,a)\in c$ and $V=V_f^2$, the resulting score is
\begin{equation}
\widehat Q_{t,j}(a,f):=
\tau_j(z,V)+
\phi_c(z)^\top G_t(c)^{-1}
\sum_{i\in\widetilde{\mathcal T}_t(c)}
\phi_c(z_i^1)\bigl[Y_i^1(V)-\tau_j(z_i^1,V)\bigr]
+\bar b_t(z).
\label{eq:optimistic-score}
\end{equation}
Every term uses a fixed remainder model, stored observations,
or known features. Model $j$ recommends
\begin{equation}
(a_t(j),f_t(j))\in
\operatorname*{arg\,max}_{a\in\mathcal A_T,\ f\in\mathcal F_t}
\widehat Q_{t,j}(a,f),\ j\in[M],
\label{eq:continuation-choice}
\end{equation}
with fixed measurable tie-breaking.

The next lemma shows that one fixed remainder model provides an
optimistic approximation throughout the epoch.
\begin{lemma}
\label{lem:finite-remainder-comparison}
On the event of Lemma~\ref{lem:initial-confidence}, for every
level and epoch $(l,e)$, one of its $M$ remainder models has an
index $j_*\in[M]$, fixed throughout that epoch, such that
\[
0\le\widehat Q_{t,j_*}(a,f)
-(\mathcal B^1V_f^2)(s_t^1,a)
\le2\bar b_t(s_t^1,a),\
\forall t:(l_t,e_t)=(l,e),\
a\in\mathcal A_T,\ f\in\mathcal F_t.
\]
\end{lemma}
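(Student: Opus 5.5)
The idea is to choose $j_*$ as the net element closest to the true remainder function, and then read the claimed sandwich off Lemma~\ref{lem:initial-confidence}.

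\textbf{The true remainder lies in $\mathcal R_l$.} Fix a level and epoch $(l,e)$. Define $\tau^\star(z,V):=\langle\mathcal B^1V,\Psi_{\mathcal C_l}(z)\rangle_{\mathcal H_0}$. We claim $\tau^\star\in\mathcal R_l$. Write $\mathcal B^1V=r^1+P^1V=r^1+\int V(s')P^1(s'\mid\cdot)\,\mu(ds')$. Set $\Theta:=r^1\otimes 1+(H-1)\int P^1(s'\mid\cdot)\otimes\delta$-type kernel. More precisely, identify $\mathcal H_0\otimes L^2(\mu)$ with $L^2(\mu;\mathcal H_0)$ and take $\Theta=(r^1,\,s'\mapsto (H-1)P^1(s'\mid\cdot))$. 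Then
\[
\langle\Theta,\Psi\otimes(1,V/(H-1))\rangle=\langle r^1,\Psi\rangle+\int V(s')\langle P^1(s'\mid\cdot),\Psi\rangle\,\mu(ds'),
\]
which equals $\tau^\star(z,V)$. By Assumptions~\ref{ass:kernel-transition} and \ref{ass:kernel-reward}, $\|r^1\|_{\mathcal H_0}\le\Lambda B_r$ and $\|P^1(s'\mid\cdot)\|_{\mathcal H_0}\le\Lambda u$ for every $s'$. Since $\mu$ is Lebesgue measure on the unit cube, $\|\Theta\|^2\le\Lambda^2(B_r^2+(H-1)^2u^2)$. Measurability of $s'\mapsto P^1(s'\mid\cdot)$ as an $\mathcal H_0$-valued map needs a brief check via joint measurability and separability.

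\textbf{Choice of $j_*$.} Pick $j_*$ with $\|\tau_{j_*}-\tau^\star\|_\infty\le\varepsilon$. This choice depends only on the true model and the fixed net, so it is fixed throughout the epoch.

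\textbf{Algebraic decomposition.} Fix $t$ in the epoch, $a$, $f$, and put $z=(s_t^1,a)\in c$ and $V=V_f^2$. Write $\tau_{j_*}=\tau^\star+\epsilon$ with $|\epsilon|\le\varepsilon$. By linearity of the ridge predictor,
\[
\widehat Q_{t,j_*}(a,f)-\bar b_t(z)=\widehat{\mathcal P}_t[V](z)+\Bigl[\tau^\star(z,V)-\sum_{i}\phi_c(z)^\top G_t(c)^{-1}\phi_c(z_i^1)\tau^\star(z_i^1,V)\Bigr]+E,
\]
where $E=\epsilon(z,V)-\sum_i\phi_c(z)^\top G_t(c)^{-1}\phi_c(z_i^1)\epsilon(z_i^1,V)$.

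The bracket equals $\langle\mathcal B^1V,g_t(z)\rangle_{\mathcal H_0}$ by the definition \eqref{eq:layered-width}. By Lemma~\ref{lem:initial-confidence}, the first two terms are therefore within $b_t(z)$ of $(\mathcal B^1V)(z)$.

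For $E$, apply Cauchy--Schwarz to get
\[
\Bigl|\sum_i\phi_c(z)^\top G^{-1}\phi_c(z_i^1)\epsilon_i\Bigr|\le w_t(z)\,\Bigl\|\sum_i\phi_c(z_i^1)\epsilon_i\Bigr\|_{G^{-1}}.
\]
Then use $\|\sum_i\phi_i\epsilon_i\|_{G^{-1}}^2\le n\sum_i\epsilon_i^2\cdot\max$-type bound. More precisely, $\|\sum_i\phi_i\epsilon_i\|_{G^{-1}}\le\sqrt n\,\varepsilon$, because $\Phi_n^\top G^{-1}\Phi_n\preceq I$ and $\|\boldsymbol\epsilon\|_2\le\sqrt n\varepsilon$. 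Hence $|E|\le\varepsilon(1+\sqrt{|\widetilde{\mathcal T}_t(c)|}\,w_t(z))$.

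\textbf{Conclusion.} Combining the pieces,
\[
|\widehat Q_{t,j_*}(a,f)-\bar b_t(z)-(\mathcal B^1V)(z)|\le b_t(z)+\varepsilon\bigl(1+\sqrt{n}\,w_t(z)\bigr)=\bar b_t(z).
\]
Adding $\bar b_t(z)$ to both sides gives $0\le\widehat Q_{t,j_*}(a,f)-(\mathcal B^1V_f^2)(z)\le2\bar b_t(z)$, which is the claim.

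The main obstacle is the membership $\tau^\star\in\mathcal R_l$ with the stated norm, in particular the tensor/Bochner identification and its measurability. The rest is linear algebra.
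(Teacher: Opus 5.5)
Your proposal is correct and follows essentially the same route as the paper. You exhibit the true remainder $\tau^\star=\langle\mathcal B^1V,\Psi_{\mathcal C_l}(\cdot)\rangle_{\mathcal H_0}$ as an element of $\mathcal R_l$ via $\Theta_*=(r^1,(H-1)P^1(\cdot\mid\cdot))$, fix $j_*$ as its nearest net element, identify the regression-adjusted $\tau^\star$ with $\langle\mathcal B^1V,g_t(z)\rangle_{\mathcal H_0}$, and bound the adjusted net error by $\varepsilon(1+\sqrt n\,w_t(z))$ using Cauchy--Schwarz and $\sum_i\phi_c(z_i^1)\phi_c(z_i^1)^\top\preceq G_t(c)$. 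The only additional ingredient in the paper's proof is a short compactness (Arzel\`a--Ascoli) argument showing that a finite internal $\varepsilon$-net of $\mathcal R_l$ exists, which you take as given from the algorithm's construction.
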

The index of this model is unknown, so feedback from the selected
actions guides the model probabilities.

\textbf{Selecting the action and continuation.}
The current model weights and exploration policy determine the
marginal distribution of the first action:
\begin{equation}
p_t(a):=(1-\gamma)\sum_{j=1}^M
\omega_{t,j}\mathbf1\{a_t(j)=a\}+\gamma\pi_l^{\rm G}(a).
\label{eq:first-step-sampling}
\end{equation}
Before drawing the model index or action, use this distribution
to compute the feedback coefficients in
Subsection~\ref{sec:first-step-weight-updates}.

Draw $j_t\in[M]$ with probabilities $(\omega_{t,j})_{j=1}^M$
and set $f_t:=f_t(j_t)$. With probability $1-\gamma$, execute
$a_t(j_t)$; otherwise draw an action from the G-optimal exploration
policy $\pi_l^{\rm G}$.
Both branches follow $\pi_{f_t}$ from step two onward.

\subsection{Feedback and Model-Weight Updates}
\label{sec:first-step-weight-updates}

For the selection rule above, we prepare feedback coefficients
before sampling and update the model weights after observing the
first-step transition. The update evaluates each model through its
recommended pair $(a_t(j),f_t(j))$.

The observed reward and next state give $Y_t^1(V)=R_t^1+V(s_t^2)$
for every continuation, but only at the executed action.
To compare models that recommend different actions, we use local
action polynomials to estimate the Bellman backups at their
recommended pairs.

\textbf{Preparing the feedback coefficients.}
Before sampling the model index or action, for each query $a\in U$ with
$U\in\mathscr U_l$, choose coefficients $h_t(a,x)$,
$x\in U$, by the following convex optimization:
\begin{equation}
\begin{aligned}
\underset{h:U\to\mathbb R}{\operatorname{minimize}}\ &
\sum_{x\in U}p_t(x)h(x)^2\\
\text{subject to }\ &
\sum_{x\in U}p_t(x)h(x)\varphi_U(x)=\varphi_U(a),\\
&\sum_{x\in U}p_t(x)|h(x)|\le2\sqrt{d_U},\
|h(x)|\le r_l/\gamma,\ x\in U.
\end{aligned}
\label{eq:minimum-variance-feedback}
\end{equation}
Set $h_t(a,x)=0$ outside $U$ and at coordinates with $p_t(x)=0$.
The first constraint reproduces action polynomials on the block.
The remaining constraints control approximation error
and the size of a single observation's contribution.
The objective minimizes the resulting second-moment bound.

After observing $(R_t^1,s_t^2)$, estimate
$(\mathcal B^1V)(s_t^1,a)$ by
\begin{equation}
\widehat\mu_t(a,V):=h_t(a,a_t^1)\bigl(R_t^1+V(s_t^2)\bigr).
\label{eq:first-step-feedback-estimate}
\end{equation}
For each query action, all continuations share the same feedback coefficients.
Let $\mathbb E_t$ denote expectation conditional on the history
and $s_t^1$, before the model index and action are sampled.

\begin{lemma}[First-step feedback]
\label{lem:first-step-feedback}
The optimization in \eqref{eq:minimum-variance-feedback} is feasible.
For any $a\in U$ and $V\in\mathcal V_1$, if
$\inf_{b\in\mathbb R^{d_U}}\max_{x\in U}
|(\mathcal B^1V)(s_t^1,x)-\varphi_U(x)^\top b|\le\epsilon$
for some $\epsilon\ge0$, then
\[
\left|\mathbb E_t\widehat\mu_t(a,V)
-(\mathcal B^1V)(s_t^1,a)\right|
\le(1+2\sqrt{d_U})\epsilon,\
\mathbb E_t\widehat\mu_t(a,V)^2
\le H^2\sum_{x\in U}p_t(x)h_t(a,x)^2.
\]
The coefficients satisfy
$\sum_{x\in U}p_t(x)h_t(a,x)^2\le r_l/\gamma$.
Moreover, the recommendation-weighted sum satisfies $\sum_{j=1}^M\omega_{t,j}
\sum_{x\in\mathcal A_T}p_t(x)h_t(a_t(j),x)^2
\le C r_l\ln(\mathrm e L_l/\gamma)$, where $C$ is a universal constant.
\end{lemma}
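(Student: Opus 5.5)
Feasibility and the coefficient bound come first, from the G-optimal design. Since $\gamma\le1/2$ and $\pi_l^{\rm G}$ places mass $L_l^{-1}\pi_U^{\rm G}(x)$ on each $x\in U$, we have $p_t(x)\ge(\gamma/L_l)\pi_U^{\rm G}(x)$ on $U$. Set $\Sigma_t:=\sum_{x\in U}p_t(x)\varphi_U(x)\varphi_U(x)^\top\succeq(\gamma/L_l)\Sigma_U$. The candidate $h(x)=\varphi_U(x)^\top\Sigma_t^{-1}\varphi_U(a)$ satisfies the reproduction constraint. For it, $\sum_x p_t(x)h(x)^2=\varphi_U(a)^\top\Sigma_t^{-1}\varphi_U(a)\le (L_l/\gamma)d_U\le r_l/\gamma$, and by Cauchy--Schwarz $\sum p_t|h|\le\sqrt{\varphi_U(a)^\top\Sigma_t^{-1}\varphi_U(a)}$.

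This candidate is not yet feasible: the $\ell^1$ bound needs $\sqrt{d_U}$ rather than $\sqrt{L_ld_U/\gamma}$, and $|h(x)|$ must also be controlled. I would fix this with a different candidate that avoids the mixture. Write $q:=\pi_U^{\rm G}$ restricted appropriately, choose $h(x)=\frac{q'(x)}{p_t(x)}\varphi_U(x)^\top\Sigma_{q'}^{-1}\varphi_U(a)$ for a suitable design $q'$ on $U$ dominated by $p_t$, and verify each constraint. Feasibility follows once one feasible point exists, and the minimum-variance property transfers the second-moment bound $\le r_l/\gamma$.

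Next come the bias and variance statements. Pick $b$ attaining the approximation error $\epsilon$ and write $(\mathcal B^1V)(s_t^1,x)=\varphi_U(x)^\top b+e(x)$ with $|e|\le\epsilon$. The first-step law is conditionally $\mathbb E[R_t^1+V(s_t^2)\mid a_t^1=x]=(\mathcal B^1V)(s_t^1,x)$, and $a_t^1$ has marginal $p_t$ under $\mathbb E_t$, because the model index and action are drawn after the coefficients are fixed. Hence
\[
\mathbb E_t\widehat\mu_t(a,V)=\sum_x p_t(x)h_t(a,x)(\varphi_U(x)^\top b+e(x))=\varphi_U(a)^\top b+\sum_xp_th_te.
\]
This differs from the target by at most $\epsilon+2\sqrt{d_U}\epsilon$, using the $\ell^1$ constraint. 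The second moment bound follows from $0\le R_t^1+V(s_t^2)\le H$.

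The main obstacle is the recommendation-weighted bound. Here the argument exploits that $p_t$ itself contains $(1-\gamma)\sum_j\omega_{t,j}\mathbf1\{a_t(j)=\cdot\}$. I would group the $j$ by recommended action and write $\sum_j\omega_{t,j}F(a_t(j))=\sum_a\bar\omega(a)F(a)$ with $\bar\omega(a)\le p_t(a)/(1-\gamma)$. The goal is then $\sum_{a\in U}p_t(a)\,\varphi_U(a)^\top\Sigma_{p}^{-1}\varphi_U(a)$-type quantities, which equal $\mathrm{tr}(I_{d_U})\cdot(\text{block mass})$ for the restricted weighted design. Summing over blocks gives $\sum_U d_U\le r_l$.

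Because the true optimizer also enforces the $\ell^1$ and sup constraints, I would instead evaluate the objective at a truncated version of the $\Sigma_t^{-1}$ candidate. Points $a$ with leverage $\ell(a)=p_t(a)\varphi_U(a)^\top\Sigma_t^{-1}\varphi_U(a)$ too large get handled by the exploration floor. A dyadic peeling over leverage levels between $\gamma/L_l$ and $1$ then costs the factor $\ln(\mathrm eL_l/\gamma)$. The $\ell^1$ constraint would be met by mixing with the G-optimal-based candidate from the first step on low-mass actions. I expect the delicate part to be verifying simultaneous feasibility of this truncated candidate while preserving the logarithmic, rather than $L_l/\gamma$, overhead.
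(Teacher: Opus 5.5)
\textbf{Verdict.} Your feasibility, pointwise second-moment, bias and variance steps are sound and match the paper. The recommendation-weighted bound, which is the substantive part of the lemma, is left as a plan, and that plan does not work as sketched.

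\textbf{The parts that work.} To finish feasibility, make the unspecified design explicit: take $q'=(\gamma/L_l)\pi_U^{\rm G}\le p_t$ on $U$. Then $\sum_x p_t|h|\le\sqrt{q'(U)\,\varphi_U(a)^\top\Sigma_{q'}^{-1}\varphi_U(a)}\le\sqrt{(\gamma/L_l)(L_l/\gamma)d_U}=\sqrt{d_U}$. Both $|h(x)|$ and $\sum_x p_t h^2$ are at most $L_ld_U/\gamma\le r_l/\gamma$. This is the paper's terminal candidate. Incidentally, your first candidate $\varphi_U(x)^\top\Sigma_t^{-1}\varphi_U(a)$ already meets the sup constraint; only the $\ell^1$ constraint fails for it.

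\textbf{Why the weighted-bound plan fails.} The trace identity does control $\sum_a q(a)\varphi_U(a)^\top\Sigma_t^{-1}\varphi_U(a)$. But for the $\Sigma_t$ candidate the only $\ell^1$ control is $\sqrt{p_t(U)\,\varphi_U(a)^\top\Sigma_t^{-1}\varphi_U(a)}$. This is useless once the $\Sigma_t$-leverage exceeds $4d_U/p_t(U)$, which happens for low-mass actions that require polynomial extrapolation from high-mass ones. The two repairs you sketch both fail:
\begin{itemize}
\item Sending all such actions to the exploration candidate costs up to $q(\cdot)\,L_ld_U/\gamma$, since that candidate only guarantees $L_ld_U/\gamma$. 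Markov's inequality bounds the recommendation mass of these actions only by a constant fraction of $q(U)$. The result is polynomial in $1/\gamma$, not logarithmic.
\item Dyadic peeling over levels of the \emph{fixed} $\Sigma_t$-leverage does not help either. At level $2^kd_U/p_t(U)$ the $\Sigma_t$ candidate's $\ell^1$ bound is $\sqrt{2^kd_U}$, which is infeasible for $k\ge3$.
\end{itemize}

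\textbf{The missing idea.} At each peeling step, rebuild the design from the recommendation mass not yet handled, so that the design's total mass shrinks together with the leverage threshold. For $S\subseteq U$ define
\begin{itemize}
\item $\widetilde p_S(x):=(1-\gamma)q(x)\mathbf 1\{x\in S\}+(\gamma/L_l)\pi_U^{\rm G}(x)\le p_t(x)$,
\item $A_S:=\sum_{x\in U}\widetilde p_S(x)\varphi_U(x)\varphi_U(x)^\top$,
\item $\ell_S(a):=\varphi_U(a)^\top A_S^{-1}\varphi_U(a)$.
\end{itemize}

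The peeling runs as follows.
\begin{enumerate}
\item Start from $S=U$. While $(1-\gamma)q(S)>\gamma/L_l$, remove $C_S:=\{a\in S:\ell_S(a)\le 2d_U/[(1-\gamma)q(S)]\}$ and recompute leverages against the shrunken $A_S$.
\item Since $(1-\gamma)\sum_{a\in S}q(a)\ell_S(a)\le\operatorname{tr}(A_S^{-1}A_S)=d_U$, Markov's inequality gives $q(C_S)\ge q(S)/2$. Hence there are at most $1+\lceil\log_2(L_l/\gamma)\rceil$ rounds.
\item For $a\in C_S$, use $\widetilde h(a,x):=\widetilde p_S(x)p_t(x)^{-1}\varphi_U(a)^\top A_S^{-1}\varphi_U(x)$. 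It reproduces $\varphi_U(a)$ and has second moment at most $\ell_S(a)$.
\item Its $\ell^1$ norm is at most $\sqrt{\widetilde p_S(U)\ell_S(a)}\le2\sqrt{d_U}$, because $\widetilde p_S(U)\le2(1-\gamma)q(S)$ during the loop.
\item It meets the sup constraint because $A_S\succeq(\gamma/L_l)\Sigma_U$.
\item Leftover actions, of mass at most $\gamma/[L_l(1-\gamma)]$, use your exploration candidate.
\end{enumerate}

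For the accounting, each round contributes at most $2d_U/(1-\gamma)$ to $\sum_a q(a)\ell_S(a)$, and the leftover contributes at most $d_U/(1-\gamma)$. Summing over blocks with $\sum_U d_U\le r_l$ and $\gamma\le 1/2$, and using that $h_t$ minimizes the objective over a feasible set containing these candidates, gives $C r_l\ln(\mathrm eL_l/\gamma)$. The recomputation of leverages against the restricted design $A_S$ is exactly what your truncation of the $\Sigma_t^{-1}$ candidate lacks.
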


\textbf{Updating the model weights.}
Convert the estimated backup at each recommended pair into a
normalized loss $X_{t,j}$ and use $v_{t,j}$ to bound its conditional
second moment. These two quantities determine the weight update:
\begin{equation}
\begin{aligned}
X_{t,j}&:=1-
\frac{\widehat\mu_t(a_t(j),V_{f_t(j)}^2)
+\min\{H,2\bar b_t(s_t^1,a_t(j))\}}{2H},\\  v_{t,j}&:=2+\frac12
\sum_{x\in\mathcal A_T}p_t(x)h_t(a_t(j),x)^2,\\
\omega_{t+,j}&:=
\frac{\omega_{t,j}(1-\eta X_{t,j}+4\eta^2v_{t,j})}
{\sum_{k=1}^M\omega_{t,k}(1-\eta X_{t,k}+4\eta^2v_{t,k})}.
\end{aligned}
\label{eq:expert-weight-update}
\end{equation}
Here $X_{t,j}$ is the estimated normalized loss at model $j$'s
recommended pair $(a_t(j),f_t(j))$. It converts the estimated Bellman
backup, augmented by the comparison allowance
$\min\{H,2\bar b_t(s_t^1,a_t(j))\}$, into a loss:
a larger estimated backup gives a smaller loss.

The quantity $v_{t,j}$ bounds the conditional second moment of this
loss estimate: $\mathbb E_t[X_{t,j}^2]\le v_{t,j}$ by
Lemma~\ref{lem:first-step-feedback}. It is determined before the
model index and action are sampled and measures the possible size
of the random loss estimate.

In the multiplicative factor $1-\eta X_{t,j}+4\eta^2v_{t,j}$,
the loss term favors models with smaller estimated losses,
while the positive second-moment correction cushions their
down-weighting when the estimates have larger second-moment bounds.
We multiply each current weight by this factor and normalize
the resulting weights to obtain the next sampling distribution.
The choice of $\eta$ in \eqref{eq:first-step-tuning} ensures that
every factor is positive.
The symbol $t+$ denotes the weights saved for the next visit
to this same level and epoch.

Algorithm~\ref{alg:first-selection} selects the first action and
continuation, executes the first step, and updates the current
epoch's records and weights. It returns the selected pair and the
observed transition, after which the main algorithm follows
$\pi_{f_t}$ from step two onward.

\begin{algorithm}[ht]
\caption{First action and value function selection (\textsc{SelectFirstAction})}
\label{alg:first-selection}
\small
\begin{algorithmic}[1]
\Require Episode $t$, initial state $s_t^1$, version space $\mathcal F_t$, and the fixed constructions above.
\Require Stored visit counts and, at each level, the current epoch's fixed net, parameters, first-step records, and model weights; access to first-step observations.
\Ensure $(a_t^1,f_t,R_t^1,s_t^2)$ and saved level counts, epoch records, and model weights.
\State Select $l_t$ by \eqref{eq:adaptive-cube}, increment its visit count, and determine $e_t$ by \eqref{eq:epoch-index}.
\If{$n_t(l_t)=2^{e_t}$}
    \State Start empty first-step records for this epoch; construct its fixed $\varepsilon$-net and set $(\eta,\gamma)$ by \eqref{eq:first-step-tuning}.
    \State Initialize $\omega_{t,j}=1/M$ for $j\in[M]$.
\Else
    \State Resume this level's current epoch with the same fixed net, parameters, records, and saved weights.
\EndIf
\If{$\mathcal F_t\ne\varnothing$}
    \State Use this epoch's records to compute the scores and recommendations by \eqref{eq:optimistic-score}-\eqref{eq:continuation-choice}.
    \State Form $p_t$ by \eqref{eq:first-step-sampling} and compute $h_t$ by \eqref{eq:minimum-variance-feedback}, before sampling.
    \State Draw $j_t$ with probabilities $(\omega_{t,j})_{j=1}^M$ and set $f_t=f_t(j_t)$.
    \State With probability $1-\gamma$, set $a_t^1=a_t(j_t)$; otherwise draw $a_t^1\sim\pi_{l_t}^{\rm G}$.
\Else
    \State Set $a_t^1=a_0$ and $f_t=0$.
\EndIf
\State Play $a_t^1$ and observe $(R_t^1,s_t^2)$.
\If{$\mathcal F_t\ne\varnothing$}
    \State Compute the feedback estimates by \eqref{eq:first-step-feedback-estimate} and save the updated weights from \eqref{eq:expert-weight-update}.
\EndIf
\State Append $(s_t^1,a_t^1,R_t^1,s_t^2)$ to this epoch's first-step records.
\State \Return $(a_t^1,f_t,R_t^1,s_t^2)$.
\end{algorithmic}
\end{algorithm}

Algorithm~\ref{alg:skovi} calls Algorithms~\ref{alg:version-space}
and~\ref{alg:first-selection} in each episode, then executes the
selected continuation policy and stores its trajectory.
The fixed constructions and stored records are shared with both subroutines.

Specifically, \texttt{SOVI} begins with the prescribed fixed
constructions, empty trajectory records, and zero visit counts.
In episode $t$, it constructs $\mathcal F_t$ according to Algorithm \ref{alg:version-space} from past trajectories
and then observes the initial state $s_t^1$. By Algorithm \ref{alg:first-selection}, the first-step
subroutine selects $(a_t^1,f_t)$, executes $a_t^1$, and processes
the observed reward and transition. The algorithm then follows
the greedy policy $\pi_{f_t}$ at stages $2,\ldots,H$, keeping
the same candidate $f_t$ throughout the episode. Finally, it
stores the completed trajectory for future version-space tests
and increments the initial-state visit count in every counting
partition.

\begin{algorithm}[H]
\caption{Sobolev Optimal Value Iteration (\texttt{SOVI})}
\label{alg:skovi}
\small
\begin{algorithmic}[1]
\Require Spaces $\cS=[0,1]^{d_S}$ and $\cA=[0,1]^{d_A}$, kernel $k$, known norm bounds $u,B_r$, smoothness $\nu$, horizon $H\ge2$, episode budget $T\ge2$, and confidence level $\delta\in(0,1)$.
\Require Reference kernel and embedding bound $(k_0,\Lambda)$, continuation-complexity bound $J$, and the fixed constants specified above.
\State Construct the candidate class, local features, partitions, action grid, test parameters, and G-optimal exploration policies defined above.
\State Initialize empty trajectory records, $m_1(B)=0$ for every counting cell, and $n_0(l)=0$ for every level.
\For{$t=1,\ldots,T$}
    \State $\mathcal F_t\gets$ \Call{ConstructVersionSpace}{$t$, stored trajectories}.
    \State Observe the initial state $s_t^1$.
    \State $(a_t^1,f_t,R_t^1,s_t^2)\gets$ \Call{SelectFirstAction}{$t,s_t^1,\mathcal F_t$}.
    \For{$h=2,\ldots,H$}
        \State Select $a_t^h=\pi_{f_t}^h(s_t^h)\in\operatorname*{arg\,max}_{a\in\cA}f_t^h(s_t^h,a)$ using the fixed greedy selector.
        \State Execute $a_t^h$ and observe the reward $R_t^h$.
        \If{$h<H$}
            \State Observe the next state $s_t^{h+1}$.
        \EndIf
    \EndFor
    \State Store $(s_t^1,a_t^1,R_t^1,\ldots,s_t^H,a_t^H,R_t^H)$ for future version-space tests.
    \State Increment the count of $B_l(s_t^1)$ in every counting partition $\mathcal D_l$.
\EndFor
\end{algorithmic}
\end{algorithm}

\section{Theoretical Guarantee}
\label{sec:guarantees}

We establish a regret bound and a matching minimax exponent in $T$.
For fixed smoothness $\nu$ and reference-kernel inverse lengthscale
$\kappa$, the upper-bound prefactor is polynomial in $d,H,u,B_r,\Lambda$,
up to logarithmic factors. Throughout this section, we
use the continuous-action model of Section~\ref{sec:setup}, with
$H\ge2$, $T\ge2$, and $d=d_S+d_A$.

Define the regret exponents
\[
e_A:=\frac{\nu+d_A}{2\nu+d_A},\
e_B:=\frac{\nu+d_A+3d_S/2}{2\nu+d_A+2d_S},\
e_*:=\max\{e_A,e_B\}.
\]
For a confidence level $\delta\in(0,1)$, put
\[
\mathfrak B:=1+H+u+\Lambda(B_r+Hu),\
\iota_T:=\ln\frac{\mathrm e(2+d)T\mathfrak B}{\delta},
\]
where, for a general Sobolev kernel,
\[
\Lambda=\frac1{c_1}
\left\{\frac{\Gamma(\nu)}{(4\pi)^{d/2}\Gamma(\nu+d/2)}\right\}^{1/2},
\ \kappa=1.
\]
If $k$ is a normalized Mat\'ern kernel with lengthscale $\ell_0$,
we instead take $\Lambda=1$ and $\kappa=\sqrt{2\nu}/\ell_0$.

\begin{theorem}[Regret upper bound]
\label{thm:regret-upper}
For any $\delta\in(0,1)$, $M\in\mathcal M(u,B_r)$, and admissible
initial-state adversary, run \texttt{SOVI} with confidence level
$\delta$ and the parameters of
Sections~\ref{sec:skovi}--\ref{sec:sovi-first-selection}, choosing
the confidence constants sufficiently large.
With probability at least $1-\delta$,
\begin{equation}
\reg(T)\le
C_{\nu,\kappa}(2+d)^{\lfloor\nu\rfloor+7\nu+6}
\mathfrak B^3\iota_T^{\,2+6\nu}T^{e_*}.
\label{eq:regret-upper}
\end{equation}
The constant $C_{\nu,\kappa}$ depends only on $\nu$ and $\kappa$.
\end{theorem}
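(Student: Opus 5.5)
We will work on the intersection $\mathcal E$ of the good events of Lemma~\ref{lem:local-ridge} (applied to the fixed version-space partition with $\eta=\delta/8$), Lemma~\ref{lem:uniform-confidence}, Lemma~\ref{lem:initial-confidence}, and two martingale concentration events (one for the first-step exponential-weights analysis and one converting conditional expectations of regret into realized regret). A union bound gives $\Pr(\mathcal E)\ge1-\delta$. On $\mathcal E$ we have $Q_*\in\mathcal F_t$ for all $t$, so $\mathcal F_t\neq\varnothing$.

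The per-episode regret splits as
\[
V_*^1(s_t^1)-V^1(s_t^1;\pi_t)=\Bigl[V_*^1(s_t^1)-\mathbb E_t(\mathcal B^1V_{f_t}^2)(s_t^1,a_t^1)\Bigr]+\mathbb E_t\Bigl[(P^1(V_{f_t}^2-V^2(\cdot;\pi_{f_t})))(z_t^1)\Bigr].
\]
Since $Q_*\in\mathcal F_t$, the first bracket is at most the gap between the ideal objective \eqref{eq:initial-objective} and what is played, plus the $T^{-2}$ grid loss from $\mathcal A_T$. The second bracket is the continuation error. A standard telescoping (performance difference) argument along $\pi_{f_t}$ rewrites it as $\sum_{h\ge2}\mathbb E[\Delta_{f_t}^h(z_t^h)]$, where each $z_t^h$ is generated by the true dynamics under $\pi_{f_t}$.

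\textbf{First-step term.} We fix a level $l$ and epoch $e$ of capacity $N=2^e$. Lemma~\ref{lem:finite-remainder-comparison} provides an optimistic expert $j_*$. We then run an exponential-weights-with-exploration (EXP3-style, second-order) analysis of \eqref{eq:expert-weight-update}. Lemma~\ref{lem:first-step-feedback} supplies bias $(1+2\sqrt{d_U})\epsilon$ from polynomial approximation on action blocks of width $1/F_l$ and degree $j_l$, with $\epsilon\lesssim \bar B F_l^{-k}$ (Taylor/Sobolev). It also bounds the weighted second moment by $r_l\ln(eL_l/\gamma)$. This yields regret of order $\gamma N + \ln(M)/\eta+\eta N r_l\log$ plus the optimism bonuses $\sum_t\bar b_t$. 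The latter is summed by the elliptical potential in each of the $|\mathcal C_l|$ cells, giving $\overline\beta\sqrt{|\mathcal C_l|DN\log}$ plus the $\varepsilon$ terms, with $\varepsilon=S_l^{-\nu/2}(r_l/N)^{1/4}$. The metric entropy $\ln M$ of $\mathcal R_l$ is controlled by the remainder size $S_l^{-\nu}$ from Lemma~\ref{lem:feature-remainder}.

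Next we sum over epochs (geometric) and levels. The counting rule \eqref{eq:adaptive-cube} ensures that episodes assigned to level $l$ fall in at most $|\mathcal D_{l}|\lesssim S_l^{d_S}$ counting cells, each visited fewer than $N_l$ times, while the level was entered only after a coarser cell saturated. Balancing the bias $S_l^{-\nu}$ per visit against the variance $\sqrt{S_l^{d_A}\cdot}$ per state cell, with $N_l=T(S_l/S_{\max})^{2\nu}$, produces $T^{e_A}$ in the action-dominated regime and $T^{e_B}$ via the $S_{\max}$ cap otherwise. The two cases for $(F_l,j_l)$ correspond to $d_A<4\nu$ versus $d_A\ge4\nu$.

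\textbf{Continuation term (main obstacle).} For $h\ge2$ and $f\in\mathcal F_t$, the first test and Lemma~\ref{lem:bellman-residual-decomposition} give
\[
|\Delta_f^h(z)|\le(\overline\beta+\beta)w_i^h(z)+|\langle\Delta_f^h,g_i^h(z)\rangle_{\mathcal H_0}|.
\]
Summing the width part over visited $z_t^h$ with the elliptical potential gives $H\overline\beta\sqrt{|\mathcal C|DT\log}\approx R^{d/2}\sqrt T$. The remainder part $\sum_t\langle\Delta_{f_t}^h,g_t^h(z_t^h)\rangle$ is where the second test enters.

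The plan is to argue by contradiction. If the realized sum exceeds the target, then under the Gaussian law $\mathsf P_p^h$ the weights $\chi_i^h(Z_i)$ correlate with $\langle\Delta_f^h,g_i^h\rangle$. This follows because $Z$ is a Gaussian with the same covariance as $\langle\cdot,g_i^h\rangle$ features, by a Cameron--Martin/tilting argument with shift $\alpha_{\rm vs}^{-1}\Delta_f^h$. The resulting statistic $\Xi_p^h(f)$ would then exceed the threshold in \eqref{eq:version-space}, contradicting $f\in\mathcal F_t$.

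Quantifying this yields a bound
\[
\frac{HB_{\rm res}\sqrt{uL_{\rm vs}T}}{\alpha_{\rm vs}}+\alpha_{\rm vs}q_0A_{\rm vs}(1+\sqrt{K_c})T.
\]
Here Lemma~\ref{lem:state-occupancy-bound} controls how much of $\Delta_f^h$ the states visited at step $h$ can see. The state measure is bounded by $u$, and action blocks counted by $L_{\rm vs}$ absorb the greedy choice. The choice \eqref{eq:version-space-tuning} balances this to $\sqrt{HB_{\rm res}q_0A_{\rm vs}}\,(uL_{\rm vs})^{1/4}T^{3/4}$ up to logs, with $A_{\rm vs}\lesssim R^{-\nu}$ and $q_0$ bounded by a Dudley chaining estimate polylogarithmic in $R$.

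Finally, we plug in $R$ and $F_{\rm vs}$ from \eqref{eq:version-space-resolution}--\eqref{eq:vs-action-resolution}. The exponent arithmetic should give $T^{e_B}$ when $d_A<4\nu$ or $d_S\ge d_A$, and at most $T^{e_*}$ otherwise. We expect this tilting step — making the Gaussian test detect large remainder sums uniformly over the data-dependent $f_t$ while keeping the occupancy-averaging factor $u\mu(E)$ — to be the hardest part. We would first attempt it via a Donsker--Varadhan lower bound on $\Xi_p^h(f)$ at the tilted measure.

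Collecting the polynomial factors in $d$, $H$, $\mathfrak B$ and the powers of $\iota_T$ from $\beta$, $J$, $q_0$ and $\ln M$ gives \eqref{eq:regret-upper}.
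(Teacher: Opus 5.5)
Your decomposition of each episode's regret into the first-step gap plus $\sum_{h\ge2}\mathbb E[\Delta_{f_t}^h(z_t^h)\mid\mathcal I_t^1]$ matches the paper. The exponential-weights skeleton for the first step and the tilting idea also match. The continuation argument, however, has a genuine gap. The sum you must control uses a different candidate $f_t$ in each episode, evaluated on the occupancy generated by its own greedy policy. The second test certifies $f_t$ only on the earlier points $z_1^h,\dots,z_{t-1}^h$ (prefixes $p\le t$). So a large $\sum_t\langle\Delta_{f_t}^h,g_t^h(z_t^h)\rangle_{\mathcal H_0}$ does not produce any single retained $f$ and prefix $p$ with $\Xi_p^h(f)$ above threshold, and the contradiction does not close.

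Two ingredients are missing. First, the test must yield a bound on historical occupancy integrals for every $f\in\mathcal F_t$: $\sum_{i<t}\mathbb E_i^h[(|\Delta_f^h(z_i^h)|-\sigma_i^h(z_i^h))_+^2]\lesssim J^2+H^2(1+B_{\rm res}^2/\alpha_{\rm vs}^2+\ln(8HT/\delta))$, with $\sigma_i^h=\min\{H,2\tau_i^h\}$. For this, the shift by $\Delta_f^h/\alpha_{\rm vs}$ is applied to the prior conditioned on the symmetric event $\{\max_c\sup_z|W_c(z)|\le q_0A_{\rm vs}\}$. Then the soft-thresholded weights carry the sign of $\Delta_f^h$, with magnitude at least $(|\Delta_f^h|-2\tau_i^h)_+$ at every counterfactual query. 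A fixed-prior supermartingale step then passes from realized to integrated squared weights. Second, you need an eluder-type transfer from past occupancies to the new one that tolerates a changing candidate (Lemma~\ref{lem:upper-residual-conversion}). It runs separately at each state with whitened action-polynomial features of rank $r$ on $L_{\rm vs}$ action rectangles, with an elliptical potential integrated against densities bounded by $u$. This lemma is the source of the $\sqrt{uL_{\rm vs}T}$ factor you quote. It also produces an approximation term of order $\sqrt{ur}\,ET$ up to logarithms, where in the first regime $E\lesssim\Lambda(B_r+Hu)R^{-\nu_S}$ from the mixed-norm high-pass bound at cutoff $a_{\rm mix}R$. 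Since $TR^{-\nu_S}=T^{e_B}$, this term is of leading order and is missing from your bound. The phrase ``action blocks absorb the greedy choice'' supplies neither ingredient.

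The first-step bias bookkeeping also fails as written. A function of Sobolev smoothness $\nu$ does not have uniform action-polynomial error $\bar BF_l^{-k}$. Only its action-low-frequency part is Taylor-approximable; the high-pass part is just $O(S_l^{-\nu})$ at a given state. Charging $S_l^{-\nu}$ per visit, as you propose, gives $\sum_lT_lS_l^{-\nu}$. With $T_l$ controlled only by $\min\{T,S_l^{d_S}N_l\}$, this is of order $T^{1-2\nu(1-e_*)/(2\nu+d_S)}$, which exceeds $T^{e_*}$. The paper instead shows the high-pass envelope is square-summable over counting cells: $\sum_{B\in\mathcal D_l}\sup_{s\in B}e_l(s)^2\lesssim(d_S+1)d_A^\nu\bar B^2S_l^{-2\nu}$. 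This uses fixed reward and transition coefficients and the decay of the high-pass kernel across separated cells. Cauchy--Schwarz with at most $N_l$ visits per cell then gives $S_l^{-\nu}\sqrt{N_lT_l}$, which is what $N_l\propto S_l^{2\nu}$ is tuned for.

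Likewise, the width term must use the occupied-cell count $P_l\le\min\{S_l^{d_S},(d_S+1)T/N_{l-1}\}$, not $|\mathcal C_l|\le S_l^d$, because $\sqrt{S_{\max}^dT}$ already exceeds $T^{e_*}$. Finally, when $d_S\ge d_A\ge4\nu$ the continuation rank is $(C\iota_T)^{d_A}$, which is not polynomial in $d$. The stated prefactor requires interpolating the $T^{e_B}$ bound with the trivial bound $HT$, and your final collection of factors does not provide this step.
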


For a normalized Mat\'ern kernel, $\Lambda=1$. For a general Sobolev
kernel, the bound retains its dependence on $\Lambda$ explicitly.
Since $\mathfrak B\le(1+\Lambda)(1+H)(1+u+B_r)$, the dependence
on these parameters is polynomial. The matching lower bound below
identifies the optimal exponent in $T$.

Let $\mathsf{Alg}$ range over nonanticipating algorithms that know
the model-class parameters, and $\mathsf{Adv}$ over the admissible
initial-state adversaries of Section~\ref{sec:setup}.
We consider the minimax expected regret by
$$\inf_{\mathsf{Alg}}
\sup_{M\in\mathcal M(u,B_r)}
\sup_{\mathsf{Adv}}
\mathbb E^{M,\mathsf{Alg},\mathsf{Adv}}[\reg(T)].$$
Then, we have the following theorem

\begin{theorem}[Minimax lower bound]
\label{thm:regret-lower}
Suppose $u>\|\mathbf1_{\cZ}\|_{\cH_k}$ and $B_r>0$.
There exist $c_A,c_B>0$ and $T_0\ge2$, depending only on
$\nu,d_S,d_A,c_2,B_r$ and $u-\|\mathbf1_{\cZ}\|_{\cH_k}$,
such that, for every $T\ge T_0$,
\begin{equation}
\inf_{\mathsf{Alg}}
\sup_{M\in\mathcal M(u,B_r)}
\sup_{\mathsf{Adv}}
\mathbb E^{M,\mathsf{Alg},\mathsf{Adv}}[\reg(T)]\ge
\max\{c_AT^{e_A},\ \lfloor H/2\rfloor c_BT^{e_B}\}\ge \min\cbr{c_A,\floor{H/2}c_B}T^{\max\cbr{e_A,e_B}}.
\label{eq:regret-lower}
\end{equation}
The constants are independent of $H,T$, and the bound holds
with deterministic rewards.
\end{theorem}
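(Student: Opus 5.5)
We prove the two lower bounds $c_AT^{e_A}$ and $\lfloor H/2\rfloor c_BT^{e_B}$ separately, each by a hard-instance construction followed by a standard information-theoretic (Fano/Assouad-type) argument. The second inequality in \eqref{eq:regret-lower} is then immediate, since $T^{e_A},T^{e_B}\le T^{\max\{e_A,e_B\}}$. All instances share a base model: the uniform transition density $P^h(s'\mid z)\equiv1$, whose section is $\mathbf1_{\cZ}$ with RKHS norm $\|\mathbf1_{\cZ}\|_{\cH_k}<u$, and a constant reward $1/2$ scaled to have norm below $B_r$. Perturbations are bump functions $\varepsilon\,\psi((z-z_0)/r)$ with $\psi\in C_c^\infty$. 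A bump of width $r$ and height $\varepsilon$ has Sobolev norm of order $\varepsilon r^{-\nu-d/2}\cdot r^{d/2}$ (after accounting for its support) when it is localized in all coordinates. By Definition~\ref{def:sobolev-kernel} with constant $c_2$, the norm budget $B_r$ and the slack $u-\|\mathbf1_{\cZ}\|_{\cH_k}$ therefore fix the admissible heights. Assumption~\ref{ass:action-regularity} holds because all functions involved are smooth.

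\textbf{Exponent $e_A$ (action search at a fixed state).} Let the adversary always start at $s^1=s_0$, and keep transitions uniform. Put a single reward bump at step $1$ on $\{s_0\}$-neighborhood $\times$ one of $K\asymp r^{-d_A}$ disjoint action cubes of width $r$. Since the bump must be smooth in $s$ as well, we make it constant-profile in $s$ over a fixed region, or a fixed-width bump in $s$; its norm is then of order $\varepsilon r^{-\nu}$, so we take $\varepsilon\asymp r^{\nu}$. This is exactly a $d_A$-dimensional Sobolev bandit embedded at step $1$, since all later steps give identical rewards under every instance. The Scarlett-type lower bound (KL per pull of order $\varepsilon^2$ with Bernoulli rewards, or the deterministic-reward variant obtained by making the reward observation noisy through transitions) gives regret $\gtrsim\min\{T\varepsilon, K/\varepsilon\}$. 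Balancing with $r=T^{-1/(2\nu+d_A)}$ yields $T^{(\nu+d_A)/(2\nu+d_A)}$. The theorem claims deterministic rewards, so information must come from next states instead. We therefore encode the bump in the transition density, $P^1(s'\mid z)=1+\varepsilon\psi_j(z)\varphi(s')$ with $\int\varphi=0$, and reward $r^2(s)$ correlated with $\varphi$. A bump of height $\varepsilon$ then changes $V$ by order $\varepsilon$ and has KL of order $\varepsilon^2$, again using the slack $u-\|\mathbf1_{\cZ}\|_{\cH_k}$.

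\textbf{Exponent $e_B$ (learning across states).} Use $\lfloor H/2\rfloor$ independent two-step blocks. In each block, step $h$ has state-action perturbations on a grid of $N_S\asymp r^{-d_S}$ state cells, each with $K\asymp r^{-d_A}$ action cells. The signs/locations are chosen independently per state cell, and a single overall Sobolev norm budget is shared. Summing squared norms of disjoint bumps shows the heights must satisfy $\varepsilon^2 r^{-2\nu-d}\cdot N_SK r^{d}\lesssim1$, i.e., $\varepsilon\asymp r^{\nu+d_S/2}$ after normalization; this is the coupling effect. The uniform transitions make each state cell visited with probability $\asymp r^{d_S}$, so each cell receives about $Tr^{d_S}$ samples. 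The per-cell regret is $\gtrsim r^{d_S}\min\{T\varepsilon,K/(\varepsilon r^{d_S})\}$-type, giving total per step $\min\{T\varepsilon,\ N_SK/\varepsilon\}$. With $\varepsilon=r^{\nu+d_S/2}$ and $N_SK=r^{-d}$, balancing $T r^{\nu+d_S/2}=r^{-d-\nu-d_S/2}$ gives $r=T^{-1/(2\nu+2d_S+d_A)}$. The regret is $T^{1-(\nu+d_S/2)/(2\nu+2d_S+d_A)}=T^{e_B}$, as required. Each block contributes independently, and the blocks are separated by resets to uniform states, which produces the factor $\lfloor H/2\rfloor$.

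\textbf{Main obstacle.} The hardest step is the per-cell Assouad argument with adaptively visited states. I would use a sum over cells of KL divergences bounded by $\varepsilon^2\,\mathbb E[\text{visits to the cell}]$, where the expected visit count is controlled by Lemma~\ref{lem:state-occupancy-bound}. Two further points need care: keeping the transition perturbation a valid density (nonnegativity requires $\varepsilon\,\|\psi\varphi\|_\infty\le1$, true for large $T\ge T_0$), and verifying the pointwise-in-$s'$ norm bound of Assumption~\ref{ass:kernel-transition} with the tensor structure $\varphi(s')$.
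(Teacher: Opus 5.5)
Your outline matches the paper's. You use a single needle at a fixed initial state for $e_A$, and one hidden action needle per state cube under a shared norm budget for $e_B$. Because rewards are deterministic, information is carried by a two-region transition and a correlated next-stage reward (the paper's $P_f=(1+2f)\mathbf1_{\cS^+}+(1-2f)\mathbf1_{\cS^-}$ with $q=c(1-s_1)$). Uniform resets repeat the second family over $\lfloor H/2\rfloor$ blocks, and your exponent balancing is right. However, two norm computations are wrong as written.

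For $e_A$, a needle that is constant or of fixed width in $s$, and of width $r$ only in $a$, has $H^{\nu+d/2}$ norm of order $\varepsilon r^{d_A/2-\nu-d/2}=\varepsilon r^{-\nu-d_S/2}$, not $\varepsilon r^{-\nu}$. The admissible height is then $r^{\nu+d_S/2}$, and the best bound is $T^{(\nu+d_A+d_S/2)/(2\nu+d)}$, strictly below $T^{e_A}$. You must also localize at scale $r$ in $s$ around $s_0$. This costs nothing, since every episode starts at $s_0$; it is the paper's $f_U=\varepsilon\psi((s-s^\circ)/w)\vartheta_U(a)$.

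For $e_B$, your displayed inequality counts needles in all $N_SK$ state--action cells and gives $\varepsilon\lesssim r^{\nu+d/2}$. The height $r^{\nu+d_S/2}$ is correct only with one needle per state cube, at a hidden action cell, so that the squared norm is of order $N_S\varepsilon^2r^{-2\nu}$ (Lemma~\ref{lem:needle-norms} with $N=w^{-d_S}$). Independent signs on all action cells would both lower the height and make a good action easy to find.

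The substantive gap is the step you flag as the main obstacle. The bound $\operatorname{KL}\le3\varepsilon^2\mathbb E_-[n_r(B,U)]$ is fine. But suppose you apply Pinsker to the count $n_r^\circ(B,U)$ with its trivial range $[0,T]$. Averaging over the $K=w^{-d_A}$ action cells then leaves an error $T\varepsilon\sqrt{3Tw^{d_S}/(2K)}$ against a main term of order $Tw^{d_S}$. This requires $\varepsilon^2Tw^{d_A-d_S}\lesssim1$ rather than $\varepsilon^2Tw^{d}\lesssim1$. At height $\varepsilon\asymp w^{\nu+d_S/2}$ it forces $w\lesssim T^{-1/(2\nu+d_A)}$, and the bound becomes $\lfloor H/2\rfloor T^{1-(\nu+d_S/2)/(2\nu+d_A)}$, whose exponent is below both $e_A$ and $e_B$.

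Your per-cube heuristic (a $K$-armed bandit with about $Tw^{d_S}$ rounds) is correct. Making it rigorous requires localizing the change of measure to the cube's own visit count. The paper makes the informative-stage states exogenous: i.i.d.\ uniform initial states from the adversary (derandomized afterwards by averaging) plus uniform resets. These states are independent of the hidden array and of the actions. It then conditions on the $\sigma$-field $\mathcal J$ of all these states and the learner's seed. Given $\mathcal J$, $n_r^\circ(B)$ is fixed and conditional Pinsker has range $n_r^\circ(B)$. The binomial moment bound $\mathbb E[n_r(B)^{3/2}]\le\sqrt2\,a^{3/2}$ with $a=Tw^{d_S}$ then finishes under $\varepsilon^2Tw^{d}\le1/48$. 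Lemma~\ref{lem:state-occupancy-bound} only caps occupancy by $u\mu(E)$ and does not supply this localization.

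Finally, the second inequality in \eqref{eq:regret-lower} holds because the maximum dominates the term with the larger exponent. The fact $T^{e_A},T^{e_B}\le T^{\max\{e_A,e_B\}}$ that you cite points the other way.
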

Combining these two theorems, we conclude that $$\inf_{\mathsf{Alg}}
\sup_{M\in\mathcal M(u,B_r)}
\sup_{\mathsf{Adv}}\mathbb E^{M,\mathsf{Alg},\mathsf{Adv}}[\reg(T)]=\widetilde\Theta(T^{e_*}).$$

Thus, \texttt{SOVI} is minimax optimal in $T$ up to logarithmic
factors.


\section{Discussion}
\label{sec:discussion}

We establish matching upper and lower bounds for episodic Sobolev kernel-based reinforcement learning, determining the minimax regret rate in the number of episodes and thus partially solved the open problem in \citet{vakili2024}. The minimax exponent reflects the distinct roles of the state and action dimensions. Our algorithm, Sobolev Optimal Value Iteration (\texttt{SOVI}), attains the optimal exponent in $T$. For fixed smoothness and reference-kernel lengthscale, its upper-bound prefactor is polynomial in $d,H,u,B_r,\Lambda$, up to logarithmic factors. Its design separates initial action selection from continuation-value estimation and combines local polynomial approximation with the shared RKHS structure of the approximation errors.

Several directions remain open. First, sharper approximation and estimation bounds could reduce the polynomial dependence on dimension, horizon, and RKHS norm bounds in the regret upper bound. Second, extending the analysis beyond Sobolev RKHSs would clarify which kernel properties determine the minimax regret rate. Sharper lower-bound constructions are also needed for characterizing the joint minimax dependence on $T$, $d_S$, $d_A$, $H$, and the reward and transition norm bounds. Finally, \texttt{SOVI} attains statistical minimax optimality, while constructing and using its partitions and $\varepsilon$-nets can incur substantial computational costs. Developing a computationally efficient Sobolev kernel-based RL algorithm that retains the optimal regret rate in $T$ and performs well empirically is an important direction for future work.

\appendix

\section{Mathematical Tools}
\label{app:mathematical-tools}

We collect the RKHS, covering Gaussian and martingale results
used in the proofs. Model-specific approximation and regret bounds
are established in the corresponding proof sections.

For a real positive definite kernel $k$ on $\mathcal X$,
write $k_x:=k(x,\cdot)$.

\begin{lemma}[\cite{aronszajn1950}]
\label{lem:point-evaluation}
For every $f\in\cH_k$ and $x,y\in\mathcal X$,
\[
|f(x)|\le\|f\|_{\cH_k}\sqrt{k(x,x)},\ |f(x)-f(y)|
\le\|f\|_{\cH_k}
\sqrt{k(x,x)-2k(x,y)+k(y,y)}.
\]
Moreover, the linear span of $\{k_x:x\in\mathcal X\}$
is dense in $\cH_k$.
\end{lemma}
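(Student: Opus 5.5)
The two pointwise bounds follow from the reproducing property and Cauchy--Schwarz in $\cH_k$. For the first, I write $f(x)=\langle f,k_x\rangle_{\cH_k}$, so that $|f(x)|\le\|f\|_{\cH_k}\|k_x\|_{\cH_k}$. The reproducing property applied to $k_x$ itself gives $\|k_x\|_{\cH_k}^2=\langle k_x,k_x\rangle_{\cH_k}=k(x,x)$, which is the first inequality.

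For the increment bound I use linearity of the inner product:
\[
f(x)-f(y)=\langle f,k_x-k_y\rangle_{\cH_k}.
\]
Cauchy--Schwarz then gives $|f(x)-f(y)|\le\|f\|_{\cH_k}\|k_x-k_y\|_{\cH_k}$. Expanding the squared norm with the reproducing identity $\langle k_x,k_y\rangle_{\cH_k}=k(x,y)$ and symmetry of the real kernel yields
\[
\|k_x-k_y\|_{\cH_k}^2=k(x,x)-2k(x,y)+k(y,y),
\]
which gives the stated bound.

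For density of the span, there are two routes. In the construction of Section~\ref{sec:setup}, $\cH_k$ is by definition the completion of $\cH_{k,0}=\operatorname{span}\{k_x\}$. Its image is therefore dense, provided the identification of completion elements with pointwise-limit functions is injective. That injectivity is standard (Aronszajn); I would cite it rather than reprove it.

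The cleaner route is orthogonal complements. Suppose $g\in\cH_k$ is orthogonal to every $k_x$. Then $g(x)=\langle g,k_x\rangle_{\cH_k}=0$ for all $x$, so $g=0$ as a function and hence as an element of $\cH_k$. So the closed span of $\{k_x\}$ has trivial orthogonal complement and equals $\cH_k$. The only subtle point is that elements of $\cH_k$ are genuinely functions, so that vanishing pointwise means zero norm; the paper grants this through its identification of $\cH_k$ with a space of functions. I do not expect any real obstacle.
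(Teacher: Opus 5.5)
Your proposal is correct and is the standard argument behind this cited result. The paper gives no proof of its own, deferring to Aronszajn, but it uses exactly your reproducing-property/Cauchy--Schwarz bound and your expansion $\|k_x-k_y\|_{\cH_k}^2=k(x,x)-2k(x,y)+k(y,y)$ in Section~\ref{sec:setup} and in the proof of Lemma~\ref{lem:optimal-values}. For density, note that both of your routes rest on the same injectivity fact, which you were right to isolate: a Cauchy sequence $(f_n)$ in $\cH_{k,0}$ that converges pointwise to zero converges to zero in norm. It has a three-line proof via $\|f_n\|^2\le\|f_n\|\,\|f_n-f_m\|+\langle f_n,f_m\rangle_{\cH_k}$, since $\langle f_n,f_m\rangle_{\cH_k}\to0$ as $n\to\infty$ for each fixed $m$.
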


\begin{lemma}[\cite{aronszajn1950}]
\label{lem:rkhs-uniqueness-restriction}
For $\mathcal X_0\subseteq\mathcal X$, let
$k_0:=k|_{\mathcal X_0\times\mathcal X_0}$. Then $\cH_{k_0}=\{f|_{\mathcal X_0}:f\in\cH_k\}$, and every $g\in\cH_{k_0}$ satisfies
\[
\|g\|_{\cH_{k_0}}
=\min\{\|f\|_{\cH_k}:f\in\cH_k,\ f|_{\mathcal X_0}=g\}.
\]
In particular, restriction cannot increase the RKHS norm.
\end{lemma}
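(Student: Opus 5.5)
The plan is to build a Hilbert space of functions on $\mathcal X_0$ whose reproducing kernel is $k_0$, show that its norm is the minimal-extension norm, and then invoke uniqueness of the RKHS for a given kernel (Moore--Aronszajn). Let
\[
\mathcal N:=\{f\in\cH_k:\ f(x)=0\ \text{for all } x\in\mathcal X_0\}.
\]
Since each evaluation $f\mapsto f(x)=\langle f,k_x\rangle_{\cH_k}$ is a bounded linear functional, $\mathcal N=\bigcap_{x\in\mathcal X_0}\ker\langle\cdot,k_x\rangle$ is a closed subspace. Its orthogonal complement is $\mathcal N^\perp=\overline{\operatorname{span}}\{k_x:x\in\mathcal X_0\}$. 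Let $\Pi$ denote the orthogonal projection onto $\mathcal N^\perp$.

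Consider the restriction map $\mathsf R:f\mapsto f|_{\mathcal X_0}$ on $\cH_k$. Two facts drive the argument.
\begin{itemize}
\item $\mathsf R$ vanishes exactly on $\mathcal N$, so $\mathsf R f=\mathsf R(\Pi f)$ for every $f$.
\item $\mathsf R$ is injective on $\mathcal N^\perp$.
\end{itemize}
Hence $\mathcal G:=\{f|_{\mathcal X_0}:f\in\cH_k\}=\mathsf R(\mathcal N^\perp)$. Equip $\mathcal G$ with the transported inner product $\langle \mathsf Rf,\mathsf Rf'\rangle_{\mathcal G}:=\langle f,f'\rangle_{\cH_k}$ for $f,f'\in\mathcal N^\perp$. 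Then $\mathcal G$ is a Hilbert space, being isometric to the closed subspace $\mathcal N^\perp$.

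Next I check that this norm is the minimal-extension norm. Every extension of $g\in\mathcal G$ has the form $f=\Pi f+n$ with $n\in\mathcal N$, and $\Pi f$ is the same element of $\mathcal N^\perp$ for all such $f$ by injectivity. Pythagoras then gives $\|f\|_{\cH_k}^2=\|\Pi f\|^2+\|n\|^2\ge\|\Pi f\|^2=\|g\|_{\mathcal G}^2$. Equality holds at $f=\Pi f$, so the minimum is attained.

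The remaining step is to show $\mathcal G$ has reproducing kernel $k_0$. For $x\in\mathcal X_0$ we have $k_x\in\mathcal N^\perp$ and $\mathsf Rk_x=k_0(x,\cdot)$, so $k_0(x,\cdot)\in\mathcal G$. For $g=\mathsf Rf$ with $f\in\mathcal N^\perp$,
\[
\langle g,k_0(x,\cdot)\rangle_{\mathcal G}=\langle f,k_x\rangle_{\cH_k}=f(x)=g(x).
\]
Thus $\mathcal G$ is a Hilbert space of functions on $\mathcal X_0$ with reproducing kernel $k_0$. By uniqueness of the RKHS associated with a positive definite kernel, $\mathcal G=\cH_{k_0}$ with identical norms, which yields both claims.

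The only delicate point is this identification with $\cH_{k_0}$. The paper defines $\cH_{k_0}$ as the completion of $\operatorname{span}\{k_0(x,\cdot)\}$, and uniqueness must be checked against that definition. I would handle it directly. The isometry $\mathsf R$ maps $\operatorname{span}\{k_x:x\in\mathcal X_0\}$ onto $\operatorname{span}\{k_0(x,\cdot):x\in\mathcal X_0\}$ with matching inner products, because both equal $k(x,y)$. Density of this span in $\mathcal N^\perp$ and completeness of $\mathcal G$ then identify $\mathcal G$ with the completion, pointwise limits being preserved because evaluations are continuous. The final assertion, that restriction cannot increase the norm, follows immediately, since $\|f|_{\mathcal X_0}\|_{\cH_{k_0}}\le\|f\|_{\cH_k}$ is the minimum property.
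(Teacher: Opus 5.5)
Your proof is correct and is essentially the classical argument from Aronszajn's paper, which the paper cites without reproducing a proof. You split $\mathcal{H}_k=\mathcal N\oplus\mathcal N^\perp$ with $\mathcal N$ the functions vanishing on $\mathcal X_0$, transport the norm of $\mathcal N^\perp$ to the restrictions, get the minimal-extension norm from Pythagoras, and check that $k_0$ reproduces; your added step identifying $\mathcal G$ with the completion of $\operatorname{span}\{k_0(x,\cdot)\}$, via the isometry on kernel spans and continuity of evaluations, correctly handles the paper's completion-based definition of $\mathcal{H}_{k_0}$.
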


We use the Fourier convention of Section~\ref{sec:setup}.
For $\tau\ge0$ and $F\in L^2(\mathbb R^n)$, write
\[
\|F\|_{H^\tau(\mathbb R^n)}^2
:=\int_{\mathbb R^n}
(1+4\pi^2|\eta|^2)^\tau|\widehat F(\eta)|^2\ d\eta,
\]
allowing the value $+\infty$. The spectral characterization below
also gives the norm comparison for Sobolev kernels.

\begin{lemma}[\cite{wendland2004}]
\label{lem:fourier-rkhs}
\label{lem:fourier-criterion}
Let $\kappa\in C(\mathbb R^n)\cap L^1(\mathbb R^n)$
be a real positive definite function, and suppose that
$\widehat\kappa>0$ almost everywhere. For
$k(x,y):=\kappa(x-y)$, we have
\[
\cH_k=
\left\{F\in C(\mathbb R^n)\cap L^2(\mathbb R^n):
\int_{\mathbb R^n}
\frac{|\widehat F(\eta)|^2}{\widehat\kappa(\eta)}
\ d\eta<\infty\right\},
\]
with norm $\|F\|_{\cH_k}^2
=\int_{\mathbb R^n}
\frac{|\widehat F(\eta)|^2}{\widehat\kappa(\eta)}
\ d\eta$.

If, for some $\tau>n/2$ and $0<c_{\mathrm{lo}}\le c_{\mathrm{hi}}<\infty$, we have $c_{\mathrm{lo}}(1+4\pi^2|\eta|^2)^{-\tau}
\le\widehat\kappa(\eta)
\le c_{\mathrm{hi}}(1+4\pi^2|\eta|^2)^{-\tau}$,
then $\cH_k=H^\tau(\mathbb R^n)$ and
\[
c_{\mathrm{hi}}^{-1/2}\|F\|_{H^\tau(\mathbb R^n)}
\le\|F\|_{\cH_k}
\le c_{\mathrm{lo}}^{-1/2}\|F\|_{H^\tau(\mathbb R^n)}.
\]
The same comparison holds on every restriction
$\mathcal X\subseteq\mathbb R^n$, with
$c_1=c_{\mathrm{hi}}^{-1/2}$ and $c_2=c_{\mathrm{lo}}^{-1/2}$.
\end{lemma}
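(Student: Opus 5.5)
This lemma is cited from \citet{wendland2004}; here is how I would reprove it. The plan is to build the candidate Hilbert space from the Fourier side and check that it reproduces $k$. Then uniqueness of the RKHS identifies it with $\cH_k$.

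Define $\mathcal G:=\{F\in C(\mathbb R^n)\cap L^2(\mathbb R^n):\int|\widehat F|^2/\widehat\kappa<\infty\}$ with inner product $\langle F,G\rangle_{\mathcal G}:=\int\widehat F\,\overline{\widehat G}/\widehat\kappa\,d\eta$. Since $\kappa$ is continuous, integrable and positive definite, Bochner's theorem gives $\widehat\kappa\ge0$ and $\widehat\kappa\in L^1$, and Fourier inversion gives $\kappa(x)=\int\widehat\kappa(\eta)e^{2\pi ix\cdot\eta}d\eta$. The steps are as follows.

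\textbf{Step 1: completeness and bounded evaluations.} For $F\in\mathcal G$, Cauchy--Schwarz gives
\[
\int|\widehat F|=\int\frac{|\widehat F|}{\sqrt{\widehat\kappa}}\sqrt{\widehat\kappa}\le\|F\|_{\mathcal G}\,\kappa(0)^{1/2}.
\]
So $\widehat F\in L^1$, $F$ equals the inverse Fourier transform of $\widehat F$ pointwise (this uses continuity of $F$), and $|F(x)|\le\sqrt{\kappa(0)}\|F\|_{\mathcal G}$. Completeness follows because $\mathcal G$ is isometric to the weighted space $L^2(1/\widehat\kappa)$ through $F\mapsto\widehat F$. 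The map is onto: each element of $L^2(1/\widehat\kappa)$ lies in $L^1\cap L^2$ by the same bound, so its inverse transform is continuous and in $L^2$. Here $\widehat\kappa>0$ a.e.\ is what makes $1/\widehat\kappa$ a legitimate weight.

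\textbf{Step 2: reproducing property.} The translate $k_x=\kappa(\cdot-x)$ has $\widehat{k_x}(\eta)=\widehat\kappa(\eta)e^{-2\pi ix\cdot\eta}$. Its $\mathcal G$-norm squared is $\int\widehat\kappa<\infty$, so $k_x\in\mathcal G$. For $F\in\mathcal G$,
\[
\langle F,k_x\rangle_{\mathcal G}=\int\widehat F(\eta)e^{2\pi ix\cdot\eta}d\eta=F(x).
\]
By Moore--Aronszajn uniqueness, $\mathcal G=\cH_k$ with the displayed norm. Realness of $\kappa$ makes the inner product real on real functions.

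\textbf{Step 3: Sobolev comparison.} If $c_{\rm lo}w^{-\tau}\le\widehat\kappa\le c_{\rm hi}w^{-\tau}$ with $w=1+4\pi^2|\eta|^2$, then
\[
c_{\rm hi}^{-1}w^\tau\le1/\widehat\kappa\le c_{\rm lo}^{-1}w^\tau.
\]
Integrating against $|\widehat F|^2$ gives the two-sided norm bound. Equality of the sets $\cH_k=H^\tau(\mathbb R^n)$ uses that for $\tau>n/2$ elements of $H^\tau$ have continuous representatives, so the continuity requirement in $\mathcal G$ is not restrictive.

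\textbf{Step 4: restriction.} For $\mathcal X\subseteq\mathbb R^n$, apply Lemma~\ref{lem:rkhs-uniqueness-restriction}: $\|g\|_{\cH_k(\mathcal X)}$ is the minimal $\cH_k$ norm over extensions of $g$. Sandwiching this infimum between the infima defining $\|g\|_{H^\tau(\mathcal X)}$, with the constants from Step 3, gives $c_1=c_{\rm hi}^{-1/2}$ and $c_2=c_{\rm lo}^{-1/2}$.

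The only delicate point is Step 1. There I must justify that $F$ coincides pointwise (not just a.e.) with its inverse transform, which needs both continuity of $F$ and $\widehat F\in L^1$. I must also show that the weighted $L^2$ space corresponds exactly to the continuous $L^2$ functions, so that $\mathcal G$ is complete.
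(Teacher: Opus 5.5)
Your argument is correct and is essentially the standard Fourier characterization of native spaces, together with its Sobolev corollary and restriction argument, from \citet{wendland2004}; the paper cites this source rather than reproving the lemma. One small repair: elements of $L^2(1/\widehat\kappa)$ lie in $L^2$ not ``by the same bound'' (Cauchy--Schwarz only gives $L^1$) but because $\|\widehat\kappa\|_\infty\le\|\kappa\|_{L^1}$, and likewise $k_x\in L^2$ because $|\kappa|\le\kappa(0)$ and $\kappa\in L^1$.
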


The following regularity properties give compact candidate classes
and admissible smooth functions for the lower bound.

\begin{lemma}[\cite{adamsFournier2003}]
\label{lem:sobolev-spaces}
If $n\ge1$ and $\tau>n/2$, the following statements hold.
\begin{enumerate}
\item[(i)] Every $F\in H^\tau(\mathbb R^n)$ has a
bounded continuous representative, and there is a
constant $C$, depending only on $n,\tau$, such that $\|F\|_\infty\le C\|F\|_{H^\tau(\mathbb R^n)}$.

\item[(ii)] $\forall 0<\rho\le1$,
$\rho<\tau-n/2$, there is a constant $C_\rho$,
depending only on $n,\tau,\rho$, such that
\[
|F(x)-F(y)|
\le C_\rho\|F\|_{H^\tau(\mathbb R^n)}|x-y|^\rho
\ \text{for all }x,y\in\mathbb R^n.
\]

\item[(iii)] For every $\tau'\ge0$, $C_c^\infty(\mathbb R^n)\subseteq H^{\tau'}(\mathbb R^n)$.
\end{enumerate}
\end{lemma}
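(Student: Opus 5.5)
The plan is to prove all three parts directly on the Fourier side, using the convention $\widehat F(\eta)=\int F(x)e^{-2\pi i x\cdot\eta}\,dx$ and the weight $w_\tau(\eta):=(1+4\pi^2|\eta|^2)^\tau$ from Section~\ref{sec:setup}. The common tool is the Cauchy--Schwarz inequality: for $F\in H^\tau(\mathbb R^n)$,
\[
\int_{\mathbb R^n}|\widehat F(\eta)|\,d\eta\le\Bigl(\int_{\mathbb R^n}w_\tau(\eta)^{-1}d\eta\Bigr)^{1/2}\|F\|_{H^\tau(\mathbb R^n)}.
\]
The integral $\int w_\tau^{-1}$ is finite exactly because $2\tau>n$; in polar coordinates the integrand behaves like $r^{n-1-2\tau}$ at infinity. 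Hence $\widehat F\in L^1$.

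For (i), define $\widetilde F(x):=\int\widehat F(\eta)e^{2\pi i x\cdot\eta}\,d\eta$. Since $\widehat F\in L^1$, dominated convergence shows that $\widetilde F$ is continuous. By $L^2$ Fourier inversion, $\widetilde F=F$ almost everywhere, so $\widetilde F$ is the continuous representative. The bound $|\widetilde F(x)|\le\|\widehat F\|_{L^1}$ together with the display above gives $\|F\|_\infty\le C\|F\|_{H^\tau}$ with $C=(\int w_\tau^{-1})^{1/2}$. This constant depends only on $n$ and $\tau$, and can be computed explicitly with Gamma functions.

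For (ii), write
\[
|F(x)-F(y)|\le\int|\widehat F(\eta)|\,\bigl|e^{2\pi i x\cdot\eta}-e^{2\pi i y\cdot\eta}\bigr|\,d\eta
\]
and use the elementary bound $|e^{i\theta}-e^{i\theta'}|\le\min\{2,|\theta-\theta'|\}\le2^{1-\rho}|\theta-\theta'|^\rho$, valid for $0<\rho\le1$. This gives
\[
|F(x)-F(y)|\le 2^{1-\rho}(2\pi)^\rho|x-y|^\rho\int|\eta|^\rho|\widehat F(\eta)|\,d\eta .
\]
Cauchy--Schwarz then bounds the last integral by $\bigl(\int|\eta|^{2\rho}w_\tau(\eta)^{-1}d\eta\bigr)^{1/2}\|F\|_{H^\tau}$. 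That integral is finite precisely when $2\tau-2\rho>n$, which is the hypothesis $\rho<\tau-n/2$, so $C_\rho$ depends only on $n,\tau,\rho$.

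For (iii), take $\varphi\in C_c^\infty(\mathbb R^n)$. Then $\varphi$ is a Schwartz function, and so is $\widehat\varphi$; for instance, integrating by parts gives $(2\pi i\eta)^\alpha\widehat\varphi=\widehat{\partial^\alpha\varphi}$, and each $\widehat{\partial^\alpha\varphi}$ is bounded by $\|\partial^\alpha\varphi\|_{L^1}$. So $|\widehat\varphi(\eta)|\le C_N(1+|\eta|)^{-N}$ for every $N$. Choosing $2N>2\tau'+n$ makes $\int w_{\tau'}|\widehat\varphi|^2<\infty$, and $\varphi\in L^2$ trivially.

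The only delicate step is in (i): identifying the continuous representative and making sure the $L^2$ Fourier transform coincides with an $L^1$ function. This is handled by $L^2$ inversion together with the fact that $\widehat F\in L^1\cap L^2$. Everything else is a weighted Cauchy--Schwarz estimate plus an integrability check in polar coordinates.
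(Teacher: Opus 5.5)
Your proposal is correct and follows the route the paper indicates: Fourier inversion plus weighted Cauchy--Schwarz, with the same constant $C=C_{n,\tau}$ in (i). In (ii) your constant is at most the paper's $2C_{n,\tau-\rho}$, because $(2\pi|\eta|)^{2\rho}\le(1+4\pi^2|\eta|^2)^{\rho}$ gives $(2\pi)^{2\rho}\int|\eta|^{2\rho}(1+4\pi^2|\eta|^2)^{-\tau}\,d\eta\le C_{n,\tau-\rho}^2$.
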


Admissible constants are $C=C_{n,\tau}$ and
$C_\rho=2C_{n,\tau-\rho}$, where, for $\sigma>n/2$,
\[
C_{n,\sigma}:=
\left(\int_{\mathbb R^n}
(1+4\pi^2|\eta|^2)^{-\sigma}\ d\eta\right)^{1/2}.
\]
These follow from Fourier inversion and weighted Cauchy--Schwarz.
Since $H^\tau(\mathbb R^n)$ is an RKHS for $\tau>n/2$,
Lemma~\ref{lem:rkhs-uniqueness-restriction} also gives
\[
\|g\|_{H^\tau(\mathcal X)}
=\min_{\substack{F\in H^\tau(\mathbb R^n)\\
F|_{\mathcal X}=g}}
\|F\|_{H^\tau(\mathbb R^n)}.
\]
Taking infima over extensions preserves the constants in any
RKHS--Sobolev norm comparison.

\begin{lemma}[\cite{sunWu2009}]
\label{lem:continuity-compactness}
Let $\mathcal X$ be a compact metric space and let $k$ be a
continuous positive definite kernel on $\mathcal X\times\mathcal X$.
Then $\cH_k\subseteq C(\mathcal X)$, and each closed ball
$\{g\in\cH_k:\|g\|_{\cH_k}\le R\}$, $R<\infty$, is compact
in the uniform norm.
\end{lemma}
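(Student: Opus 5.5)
The plan is a direct Arzel\`a--Ascoli argument, with continuity and equicontinuity both coming from the increment bound in Lemma~\ref{lem:point-evaluation}, and closedness of the ball coming from weak compactness in $\cH_k$.

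\textbf{Step 1: continuity of elements of $\cH_k$.} Fix $f\in\cH_k$ and $x,y\in\mathcal X$. Lemma~\ref{lem:point-evaluation} gives
\[
|f(x)-f(y)|\le\|f\|_{\cH_k}\,\omega(x,y),\qquad \omega(x,y):=\sqrt{k(x,x)-2k(x,y)+k(y,y)}.
\]
Since $k$ is continuous on $\mathcal X\times\mathcal X$, $\omega(x,y)\to\omega(x,x)=0$ as $y\to x$. Hence $f$ is continuous, and so $\cH_k\subseteq C(\mathcal X)$.

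\textbf{Step 2: relative compactness of the ball.} Let $\mathcal B_R:=\{g\in\cH_k:\|g\|_{\cH_k}\le R\}$.
\begin{enumerate}
\item[(a)] \emph{Uniform boundedness.} $\mathcal X$ is compact and $k$ is continuous, so $\kappa^2:=\sup_x k(x,x)<\infty$. Then $\sup_{g\in\mathcal B_R}\|g\|_\infty\le R\kappa$.
\item[(b)] \emph{Uniform equicontinuity.} $\mathcal X\times\mathcal X$ is compact, so $k$ is uniformly continuous there. Consequently $\omega(x,y)\to0$ uniformly as the distance between $x$ and $y$ tends to $0$. The bound in Step~1 then gives every $g\in\mathcal B_R$ the common modulus of continuity $R\,\omega$.
\end{enumerate}
By Arzel\`a--Ascoli, $\mathcal B_R$ is relatively compact in $(C(\mathcal X),\|\cdot\|_\infty)$.

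\textbf{Step 3: closedness in the uniform norm (main point).} It remains to show that $\mathcal B_R$ is closed in $\|\cdot\|_\infty$. Suppose $g_n\in\mathcal B_R$ and $g_n\to g$ uniformly.
\begin{enumerate}
\item[(a)] Closed balls of a Hilbert space are weakly compact, so a subsequence $g_{n_j}$ converges weakly in $\cH_k$ to some $f$ with $\|f\|_{\cH_k}\le R$.
\item[(b)] By the reproducing property, weak convergence implies pointwise convergence: $g_{n_j}(x)=\langle g_{n_j},k_x\rangle_{\cH_k}\to\langle f,k_x\rangle_{\cH_k}=f(x)$ for every $x$.
\item[(c)] The same subsequence also converges pointwise to $g$. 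Hence $g=f\in\mathcal B_R$.
\end{enumerate}
Therefore $\mathcal B_R$ is closed, and together with Step~2 it is compact in the uniform norm.

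This closedness step is the only one that is not routine bookkeeping. The key is that uniform convergence, weak convergence in $\cH_k$, and pointwise convergence all identify the same limit, which is what keeps the limit inside the RKHS ball.
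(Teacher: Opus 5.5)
Your proof is correct. The paper gives no argument for this lemma: it imports the result from \citet{sunWu2009} and later only checks its hypotheses for Sobolev kernels, so there is no in-paper proof to compare against. Your route is the standard self-contained one and uses nothing beyond Lemma~\ref{lem:point-evaluation}. You get continuity and the common modulus $R\sqrt{k(x,x)-2k(x,y)+k(y,y)}$ from the increment bound, relative compactness from Arzel\`a--Ascoli, and closedness in $\|\cdot\|_\infty$ from weak sequential compactness of the Hilbert ball together with the reproducing property. Step~3 is the only nontrivial point, and you handle it correctly: bounded sequences in any Hilbert space have weakly convergent subsequences with limit in the same closed ball, and a uniform limit must agree with the pointwise, hence weak, limit.
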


For a Sobolev kernel of order $\tau>n/2$ on compact
$\mathcal X\subseteq\mathbb R^n$, minimum-norm extensions and
Lemma~\ref{lem:sobolev-spaces} give uniformly bounded and
H\"older-continuous representatives on each RKHS ball.
Point evaluation then gives continuity of $x\mapsto k_x$ and of $k$,
so Lemma~\ref{lem:continuity-compactness} applies.

Measurable maximization applies to both greedy actions and
history-dependent candidate sets.

\begin{lemma}[\cite{kechris1995}]
\label{lem:measurable-maximum}
Let $\mathcal X$ be a standard Borel space, $\mathcal Y$ a Polish
space, and $\Gamma\subseteq\mathcal X\times\mathcal Y$ a Borel set
with $\sigma$-compact sections $\Gamma_x$.
Its projection onto $\mathcal X$ is Borel and admits a Borel selector.
If every $\Gamma_x$ is nonempty and compact, $Q:\Gamma\to\mathbb R$ is Borel and $Q(x,\cdot)$ is continuous
on $\Gamma_x$, then $x\mapsto\max_{y\in\Gamma_x}Q(x,y)$ is Borel.
There is a Borel map $\pi:\mathcal X\to\mathcal Y$ with
$\pi(x)\in\Gamma_x$ and $Q(x,\pi(x))=\max_{y\in\Gamma_x}Q(x,y)$.
\end{lemma}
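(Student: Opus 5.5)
This lemma is a standard result from descriptive set theory. We plan to take its first assertion directly from the literature and derive the maximization assertions from it. The first assertion says that a Borel set $\Gamma\subseteq\mathcal X\times\mathcal Y$ with $\sigma$-compact sections has a Borel projection and admits a Borel uniformization. This is the Arsenin--Kunugui theorem (\citealt{kechris1995}, Theorem~35.46, together with 18.18 for the selector). We invoke it as is rather than reprove it. Its proof combines the Lusin--Novikov style analysis of Borel sets with ``small'' sections with Kunugui's reduction of $K_\sigma$ sections to a countable union of Borel sets with compact sections. It is by far the hardest ingredient, and we treat it as a black box.

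\textbf{Borel measurability of the maximum.} Assume now that every $\Gamma_x$ is nonempty and compact and that $Q(x,\cdot)$ is continuous on $\Gamma_x$. Then $m(x):=\max_{y\in\Gamma_x}Q(x,y)$ is well defined. To show $m$ is Borel, it suffices to show that $\{x:m(x)>c\}$ is Borel for each rational $c$. This set is the projection of
\[
\Gamma^{c}:=\{(x,y)\in\Gamma:Q(x,y)>c\},
\]
which is Borel because $Q$ is Borel on $\Gamma$. Each section $\Gamma^c_x=\{y\in\Gamma_x:Q(x,y)>c\}$ is a relatively open subset of the compact metrizable set $\Gamma_x$, by continuity of $Q(x,\cdot)$. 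Such a set is a countable union of compact sets, namely $\{y\in\Gamma_x:Q(x,y)\ge c+1/n\}$ for $n\ge1$, so it is $\sigma$-compact. The first part of the lemma therefore shows that the projection of $\Gamma^c$ is Borel, and hence $m$ is Borel.

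\textbf{Borel maximizer.} Next, define the argmax set
\[
\Gamma^*:=\{(x,y)\in\Gamma:Q(x,y)\ge m(x)\}.
\]
It is Borel, being defined by Borel functions on the Borel set $\Gamma$. Each section $\Gamma^*_x$ is a closed subset of the compact set $\Gamma_x$, again by continuity of $Q(x,\cdot)$. It is nonempty because a continuous function attains its maximum on a compact set, and it is compact, hence $\sigma$-compact. Applying the selector part of the first assertion to $\Gamma^*$ gives a Borel map $\pi$ on the projection of $\Gamma^*$, which is all of $\mathcal X$, with $(x,\pi(x))\in\Gamma^*$. This means $\pi(x)\in\Gamma_x$ and $Q(x,\pi(x))=m(x)$, as required.

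\textbf{Main obstacle.} Beyond citing Arsenin--Kunugui, the only delicate point is checking that every auxiliary set keeps $\sigma$-compact sections. This relies on continuity in $y$ only along the section; no joint continuity of $Q$ is needed, which is why the argument is phrased entirely sectionwise.
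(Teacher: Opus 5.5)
Your proposal is correct and matches what the paper relies on. The paper gives no proof of this lemma and simply cites it from Kechris, where the projection and uniformization claim is the Arsenin--Kunugui theorem and the maximization claims follow exactly as you derive them: project $\{(x,y)\in\Gamma:Q(x,y)>c\}$ to get Borel measurability of the maximum, then uniformize the Borel argmax set, whose sections are nonempty and compact (a minor simplification: projecting $\{Q\ge c\}$ instead gives compact rather than merely $\sigma$-compact sections, so the compact-section case of the cited theorem already suffices).
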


For quantitative covers, write $\mathcal N(\varepsilon,\mathcal G,d)$
for the covering number with centers in $\mathcal G$.
The fat-shattering dimension $\operatorname{fat}_\varepsilon(\mathcal G)$
is the largest $q$ for which there are points $x_1,\ldots,x_q$
and thresholds $b_1,\ldots,b_q$ such that every
$\sigma\in\{-1,1\}^q$ has a witness $g\in\mathcal G$ satisfying
$\sigma_i(g(x_i)-b_i)\ge\varepsilon$ for all $i\in[q]$; it is zero if no point is shattered.

\begin{lemma}[\cite{alon1997}]
\label{lem:fat-covering}
Let $|\mathcal X|=m\ge1$ and
$\varnothing\ne\mathcal G\subseteq[-M,M]^{\mathcal X}$, where $M>0$.
There is a universal constant $C$ such that, for $0<\varepsilon<2M$,
\[
\ln\mathcal N(\varepsilon,\mathcal G,\|\cdot\|_\infty)
\le C\operatorname{fat}_{\varepsilon/4}(\mathcal G)
\ln^2\!\left[Cm\left(1+\frac M\varepsilon\right)\right].
\]
\end{lemma}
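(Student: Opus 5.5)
The plan is to follow the discretization and combinatorial-counting argument of \citet{alon1997}, which reduces the real-valued covering problem to a Sauer--Shelah type bound for integer-valued functions. Since covering numbers with centers in $\mathcal G$ are bounded by packing numbers, $\mathcal N(\varepsilon,\mathcal G,\|\cdot\|_\infty)\le\mathcal M(\varepsilon,\mathcal G,\|\cdot\|_\infty)$, it suffices to bound the size of a maximal $\varepsilon$-separated subset $\mathcal G_0\subseteq\mathcal G$ in the sup norm on the finite set $\mathcal X$.

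\textbf{Step 1 (quantization).} Map each $g\in\mathcal G_0$ to $\bar g(x):=\lfloor (g(x)+M)/(\varepsilon/2)\rfloor\in\{0,1,\ldots,b\}$, where $b:=\lceil 4M/\varepsilon\rceil$. Two functions that are $\varepsilon$-separated at some point $x$ have quantized values differing by at least $2$ there. Hence the quantized class $\bar{\mathcal G}_0$ is $2$-separated: every pair differs by at least $2$ at some coordinate, and $|\bar{\mathcal G}_0|=|\mathcal G_0|$. Conversely, if $\bar{\mathcal G}_0$ ``$2$-shatters'' a set (integer thresholds $s_i$ with witnesses satisfying $\bar g(x_i)\ge s_i+1$ or $\le s_i-1$ according to the sign pattern), then $\mathcal G$ $\varepsilon/4$-shatters the same set with real thresholds. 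Therefore the strong dimension of $\bar{\mathcal G}_0$ is at most $q:=\operatorname{fat}_{\varepsilon/4}(\mathcal G)$.

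\textbf{Step 2 (combinatorial lemma).} This is the main step. For a $2$-separated class $F\subseteq\{0,\ldots,b\}^{m}$ whose $2$-shattering dimension is at most $q$, I will show
\[
|F|\le 2(m(b+1)^2)^{\lceil\log_2 y\rceil},\qquad y:=\sum_{i=0}^{q}\binom mi b^i .
\]
The argument is the Alon et al.\ induction on the quantity $t(h,m)$, defined as the minimum over $2$-separated classes of size $h$ of the number of pairs (set, threshold assignment) that are $2$-shattered.

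\begin{itemize}
\item Split $F$ into pairs agreeing except at the first coordinate, with a gap of at least $2$ there.
\item Both halves are $2$-separated on the remaining $m-1$ coordinates, and a pair shattered by both halves extends to a shattered pair containing the first coordinate.
\item This yields $t(2mh(b+1)^2,m)\ge 2t(h,m-1)$ and hence $t(2(m(b+1)^2)^r,m)\ge 2^r$.
\item Since at most $y$ shattered pairs can exist when the dimension is $q$, we take $r=\lceil\log_2 y\rceil$.
\end{itemize}

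I expect this counting-and-pigeonhole induction to be the delicate part, specifically verifying that the pairing preserves $2$-separation and correctly merges thresholds.

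\textbf{Step 3 (arithmetic).} Take logarithms. We have $\ln y\le q\ln(emb/q)+1\lesssim q\ln(m(1+M/\varepsilon))$ and $\ln(m(b+1)^2)\lesssim\ln(m(1+M/\varepsilon))$, so
\[
\ln|\mathcal G_0|\le C q\ln^2[Cm(1+M/\varepsilon)]
\]
for a universal $C$. The edge case $q=0$ is handled directly: then no point is $\varepsilon/4$-shattered, so $|\bar{\mathcal G}_0|\le1$ by $2$-separation, and the bound holds trivially. This gives the stated inequality for $0<\varepsilon<2M$.
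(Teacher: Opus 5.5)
Your overall route is the one in \citet{alon1997}, which the paper cites without proof. You quantize at mesh $\varepsilon/2$ to a $2$-separated integer class, bound its strong dimension by $\operatorname{fat}_{\varepsilon/4}(\mathcal G)$, apply the Sauer-type counting lemma, and take logarithms. Steps~1 and~3 and your treatment of $q=0$ are correct.

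The gap is in the mechanism you give for the induction in Step~2. A $2$-separated class cannot in general be split into pairs that agree except at the first coordinate: two members may differ at many coordinates, and nothing singles out the first. Moreover, pairs that merely have a gap at a common coordinate $x$ do not make the halves $2$-separated on $\mathcal X\setminus\{x\}$, because two members of the same half could be separated only at $x$. What makes the halves work is that each half is \emph{constant} at $x$, and this requires a pigeonhole over coordinates and value pairs.

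The repair goes as follows. Pair up $F$ arbitrarily, and record for each pair a coordinate $x$ and values $i<j$ with $j\ge i+2$ at which it is separated. There are at most $m\binom{b+1}{2}$ such triples, so if $|F|\ge hm(b+1)^2$, some triple $(x,i,j)$ is recorded by at least $h$ pairs. Let $F_1$ and $F_2$ be the members of those pairs taking the values $i$ and $j$ at $x$. Being constant at $x$, each $F_k$ is $2$-separated on $\mathcal X\setminus\{x\}$. Every pair $(Y,s)$ strongly shattered by $F_1$ or $F_2$ is shattered by $F$. Every pair shattered by both extends to the new pair $(Y\cup\{x\},s')$ with $s'(x)=\lfloor(i+j)/2\rfloor$. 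Hence $t(hm(b+1)^2,m)\ge2t(h,m-1)$; this pigeonhole is exactly where the factor $m(b+1)^2$ in your recursion comes from.

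There is also some smaller bookkeeping in the same step.
\begin{itemize}
\item Your displayed recursion $t(2mh(b+1)^2,m)\ge2t(h,m-1)$ iterates from $t(2,m)\ge1$ to $t\bigl(2(2m(b+1)^2)^r,m\bigr)\ge2^r$, not to the bound you state. The sharper recursion above gives yours; either version only changes constants inside the logarithm.
\item You need $t$ to be nondecreasing in its first argument, since subclasses of a $2$-separated class are $2$-separated. This is used both to compare $|F|$ with the threshold size and to replace $m-1$ by $m$ along the iteration.
\item The contradiction with $r=\lceil\log_2y\rceil$ needs a strict inequality. A witnessing threshold must lie in $\{1,\ldots,b-1\}$, so at most $\sum_{i\le q}\binom mi(b-1)^i<y$ pairs can be strongly shattered when $q\ge1$, whereas $t\ge2^r\ge y$.
\end{itemize}
With these repairs your argument proves the lemma as stated.
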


Gaussian entropy and comparison control the widths used in the
remainder tests and finite-model approximation.

\begin{lemma}[\cite{vershynin2018high}]
\label{lem:gaussian-entropy-comparison}
Let $(X_t)_{t\in\mathcal T}$ be a centered separable Gaussian process
with canonical pseudometric
$d_X(s,t):=(\mathbb E|X_s-X_t|^2)^{1/2}$.
If $(\mathcal T,d_X)$ is totally bounded, then, for $t_0\in\mathcal T$,
\[
\mathbb E\sup_{t\in\mathcal T}|X_t-X_{t_0}|
\le C\int_0^{\operatorname{diam}(\mathcal T,d_X)}
\sqrt{\ln\!\bigl(2\mathcal N(\varepsilon,\mathcal T,d_X)\bigr)}\,d\varepsilon,
\]
where $C$ is universal.
If $(Y_t)_{t\in\mathcal T}$ is another centered separable Gaussian
process, both absolute suprema are integrable, and
$d_X(s,t)\le d_Y(s,t)$ for all $s,t$, then
$\mathbb E\sup_tX_t\le\mathbb E\sup_tY_t$.
\end{lemma}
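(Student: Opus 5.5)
The statement has two parts of different nature, and I would prove them separately: the first is Dudley's entropy bound, proved by chaining, and the second is the Sudakov--Fernique comparison, proved by Gaussian interpolation. Throughout I would reduce to finite index sets. Separability means $\sup_{t\in\mathcal T}$ equals a supremum over a countable dense set. That countable set is an increasing union of finite sets, so monotone convergence lets me prove both claims for finite $\mathcal T$ with constants independent of $|\mathcal T|$. Restricting to a subset can only decrease covering numbers up to the usual factor from centers (a cover of $\mathcal T$ at scale $\varepsilon$ gives one of a subset at scale $2\varepsilon$ with centers in the subset), so this changes only the universal constant $C$.

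\textbf{Dudley bound.} Put $\Delta:=\operatorname{diam}(\mathcal T,d_X)$ and $\varepsilon_k:=2^{-k}\Delta$. For each $k\ge1$ choose an $\varepsilon_k$-net $\mathcal N_k\subseteq\mathcal T$ of minimal cardinality $N_k=\mathcal N(\varepsilon_k,\mathcal T,d_X)$, and set $\mathcal N_0=\{t_0\}$. Let $\pi_k(t)$ be a nearest point of $\mathcal N_k$ to $t$. For finite $\mathcal T$ one has $\pi_K(t)=t$ for large $K$, so
\[
X_t-X_{t_0}=\sum_{k\ge1}\bigl(X_{\pi_k(t)}-X_{\pi_{k-1}(t)}\bigr),
\]
where each increment is centered Gaussian with standard deviation at most $d_X(\pi_k(t),\pi_{k-1}(t))\le\varepsilon_k+\varepsilon_{k-1}=3\varepsilon_k$. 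At level $k$ there are at most $N_kN_{k-1}\le N_k^2$ distinct increments. The maximal inequality $\mathbb E\max_{i\le m}|g_i|\le\sigma\sqrt{2\ln(2m)}$ then bounds level $k$ by $C\varepsilon_k\sqrt{\ln(2N_k)}$. Summing over $k$ and comparing the sum with the integral through $\varepsilon_k=2(\varepsilon_k-\varepsilon_{k+1})$ and the monotonicity of $\varepsilon\mapsto N(\varepsilon)$ gives $C\int_0^\Delta\sqrt{\ln(2\mathcal N(\varepsilon,\mathcal T,d_X))}\,d\varepsilon$.

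\textbf{Sudakov--Fernique.} For finite $\mathcal T$ I would take $X$ and $Y$ independent and interpolate with $Z(s)=\sqrt s\,X+\sqrt{1-s}\,Y$. I would work with the soft-max $F_\beta(x)=\beta^{-1}\ln\sum_t e^{\beta x_t}$, which satisfies $\max_t x_t\le F_\beta(x)\le\max_t x_t+\beta^{-1}\ln|\mathcal T|$. Gaussian integration by parts gives
\[
\frac{d}{ds}\mathbb E F_\beta(Z(s))=\tfrac12\sum_{t,t'}\bigl(\Sigma^X_{tt'}-\Sigma^Y_{tt'}\bigr)\,\mathbb E\,\partial_{tt'}F_\beta .
\]
I would write $\partial_{tt'}F_\beta=\beta(p_t\mathbf 1_{t=t'}-p_tp_{t'})$, where $p_t$ are the softmax weights. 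Because $\sum_{t'}\partial_{tt'}F_\beta=0$, the derivative can be rewritten as
\[
-\tfrac{\beta}{4}\sum_{t,t'}p_tp_{t'}\bigl(d_X(t,t')^2-d_Y(t,t')^2\bigr)\le0 .
\]
So $\mathbb E F_\beta(X)\le\mathbb E F_\beta(Y)$, and letting $\beta\to\infty$ gives $\mathbb E\max_t X_t\le\mathbb E\max_t Y_t$.

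\textbf{Main obstacle.} The delicate step is the algebraic rewrite of the interpolation derivative in terms of increment variances rather than covariances. It is needed because only $d_X\le d_Y$ is assumed, not any ordering of the variances. The key point is that the row sums of the Hessian $\partial_{tt'}F_\beta$ vanish, which makes the diagonal variance terms cancel. The second technical point is the passage from finite to separable processes, which is where integrability of the suprema is used: it justifies monotone convergence and the interchange of limits in both parts.
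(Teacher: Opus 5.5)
Your proposal is correct and is essentially the standard argument of the cited source; the paper gives no proof of its own, only the reference to Vershynin. That argument is Dudley's bound by dyadic chaining with the $\sigma\sqrt{2\ln(2m)}$ maximal inequality, and Sudakov--Fernique by Gaussian interpolation with the soft-max, both after reducing to finite index sets.

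One correction is needed in the step you flag as delicate. The row-sum cancellation gives $\frac{d}{ds}\mathbb E F_\beta(Z(s))=+\frac{\beta}{4}\,\mathbb E\sum_{t,t'}p_tp_{t'}\bigl(d_X(t,t')^2-d_Y(t,t')^2\bigr)$, with the expectation present because the weights $p_t$ depend on $Z(s)$. This is $\le0$ under $d_X\le d_Y$; with your minus sign the displayed quantity would be nonnegative and would give the reverse inequality.
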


The regression estimates use the following two bounds with
a fixed deterministic regularization parameter $\lambda>0$. For $A\succ0$, write
$\|x\|_A:=\sqrt{x^\top Ax}$.

\begin{lemma}[\cite{abbasiYadkori2011}]
\label{lem:self-normalized}
Let $(\mathcal J_i)_{i\ge0}$ be a filtration.
Suppose that $x_i\in\mathbb R^D$ is $\mathcal J_{i-1}$-measurable
and $\xi_i$ is $\mathcal J_i$-measurable, with
\[
\mathbb E[\exp(\theta\xi_i)\mid\mathcal J_{i-1}]
\le\exp(\theta^2R^2/2)
\ \text{for every }\theta\in\mathbb R,
\]
where $R>0$ is deterministic. Define $G_n:=\lambda I_D+\sum_{i=1}^nx_ix_i^\top,\ 
M_n:=\sum_{i=1}^nx_i\xi_i$.
For every $\eta\in(0,1)$, with probability at least $1-\eta$,
simultaneously for all $n\ge0$, we have
\[
\|M_n\|_{G_n^{-1}}
\le R\sqrt{\ln\frac{\det G_n}{\lambda^D}+2\ln(1/\eta)}.
\]
\end{lemma}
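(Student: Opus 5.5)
This is the self-normalized bound of \citet{abbasiYadkori2011}. I will follow their method-of-mixtures argument rather than derive it from scratch.

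\emph{Step 1 (exponential supermartingale).} For each fixed $x\in\mathbb R^D$, define
\[
S_n(x):=\exp\bigl(x^\top M_n-\tfrac{R^2}{2}x^\top(G_n-\lambda I_D)x\bigr).
\]
Because $x_i$ is $\mathcal J_{i-1}$-measurable, applying the conditional sub-Gaussian bound with $\theta=x^\top x_i$ gives
\[
\mathbb E[S_n(x)\mid\mathcal J_{n-1}]\le S_{n-1}(x).
\]
So $S_n(x)$ is a nonnegative supermartingale with $S_0(x)=1$.

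\emph{Step 2 (mixing over $x$).} Integrate $S_n(x)$ against the Gaussian density $\mathcal N(0,(R^2\lambda)^{-1}I_D)$ to obtain $\bar S_n:=\int S_n(x)\,dx$. By Tonelli, $\bar S_n$ is again a nonnegative supermartingale with $\bar S_0=1$. Completing the square in the Gaussian integral yields the closed form
\[
\bar S_n=\Bigl(\frac{\lambda^D}{\det G_n}\Bigr)^{1/2}\exp\Bigl(\frac{1}{2R^2}\|M_n\|_{G_n^{-1}}^2\Bigr).
\]
Here the quadratic form combines $R^2\lambda I_D$ with $R^2(G_n-\lambda I_D)$ into $R^2G_n$.

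\emph{Step 3 (uniformity in $n$).} To make the bound hold simultaneously for all $n\ge0$, use Ville's maximal inequality for nonnegative supermartingales:
\[
\Pr\bigl(\exists n:\bar S_n\ge 1/\eta\bigr)\le\eta.
\]
Equivalently, define the stopping time $\tau:=\min\{n:\bar S_n\ge1/\eta\}$ and apply Fatou's lemma with optional stopping to $\bar S_{\tau\wedge n}$. On the complement event, take logarithms:
\[
\|M_n\|_{G_n^{-1}}^2\le 2R^2\Bigl(\ln\frac{\det G_n}{\lambda^D}\cdot\frac12+\ln\frac1\eta\Bigr),
\]
which is the claimed bound.

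\emph{Main obstacle.} The delicate points are measurability and the application of Tonelli in Step 2, since the integrand must be jointly measurable and nonnegative. The other is the uniform-in-time step, which needs Ville's inequality rather than a fixed-$n$ Markov bound. The Gaussian completion of squares is routine.
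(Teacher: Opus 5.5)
Your method-of-mixtures argument is correct and is the proof of \citet{abbasiYadkori2011} specialized to $V=\lambda I_D$; the paper cites that result without reproving it. The supermartingale step, the Gaussian completion of squares giving $\bar S_n=(\lambda^D/\det G_n)^{1/2}\exp(\|M_n\|_{G_n^{-1}}^2/(2R^2))$, and the Ville step all check out. Two cosmetic points. First, $\bar S_n$ should be written as the integral of $S_n(x)$ against the $\mathcal N(0,(R^2\lambda)^{-1}I_D)$ measure, not against $dx$. Second, substituting the $\mathcal J_{n-1}$-measurable $\theta=x^\top x_n$ into the hypothesis uses the routine fact that the conditional moment-generating bound can be taken to hold for all $\theta$ outside a single null set.
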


\begin{lemma}[\cite{abbasiYadkori2011}]
\label{lem:elliptical-potential}
Let $\|x_i\|_2\le L$ and
$G_i:=\lambda I_D+\sum_{j=1}^ix_jx_j^\top$. Then
\[
\sum_{i=1}^n\min\{1,\|x_i\|_{G_{i-1}^{-1}}^2\}
\le2\ln\frac{\det G_n}{\lambda^D}
\le2D\ln\!\left(1+\frac{nL^2}{D\lambda}\right).
\]
If $\lambda\ge L^2$, every $\|x_i\|_{G_{i-1}^{-1}}^2\le1$,
so the minimum can be omitted. In particular, $\lambda=L^2>0$
gives the bound $2D\ln(1+n/D)$.
\end{lemma}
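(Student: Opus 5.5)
The plan is the standard determinant-telescoping argument. I will prove it in three short steps and then specialize.

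\textbf{Step 1 (one-step determinant update).} Since $G_i=G_{i-1}+x_ix_i^\top$ with $G_{i-1}\succeq\lambda I_D\succ0$, I factor
\[
G_i=G_{i-1}^{1/2}\bigl(I_D+G_{i-1}^{-1/2}x_ix_i^\top G_{i-1}^{-1/2}\bigr)G_{i-1}^{1/2}.
\]
The middle matrix is a rank-one perturbation of the identity, so the matrix determinant lemma gives
\[
\det G_i=\det G_{i-1}\,\bigl(1+\|x_i\|_{G_{i-1}^{-1}}^2\bigr).
\]
Iterating from $G_0=\lambda I_D$, I get
\[
\ln\frac{\det G_n}{\lambda^D}=\sum_{i=1}^n\ln\bigl(1+\|x_i\|_{G_{i-1}^{-1}}^2\bigr).
\]

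\textbf{Step 2 (scalar inequality).} For $y\ge0$ I use $\min\{1,y\}\le 2\ln(1+y)$.
\begin{itemize}
\item On $[0,1]$: $\ln(1+y)$ is concave and equals $\ln2\ge1/2$ at $y=1$. Hence $\ln(1+y)\ge y\ln2\ge y/2$ there.
\item For $y\ge1$: $2\ln(1+y)\ge2\ln2\ge1$.
\end{itemize}
Applying this with $y=\|x_i\|_{G_{i-1}^{-1}}^2$ and summing over $i$ yields the first inequality,
\[
\sum_{i=1}^n\min\{1,\|x_i\|_{G_{i-1}^{-1}}^2\}\le2\ln\frac{\det G_n}{\lambda^D}.
\]

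\textbf{Step 3 (bounding the determinant).} By AM--GM on the eigenvalues of $G_n$,
\[
\det G_n\le\left(\frac{\operatorname{tr}G_n}{D}\right)^D.
\]
Since $\operatorname{tr}G_n=D\lambda+\sum_{i}\|x_i\|_2^2\le D\lambda+nL^2$, this gives
\[
\ln\frac{\det G_n}{\lambda^D}\le D\ln\left(1+\frac{nL^2}{D\lambda}\right),
\]
which is the second inequality.

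\textbf{Specialization.} From $G_{i-1}\succeq\lambda I_D$ we have $G_{i-1}^{-1}\preceq\lambda^{-1}I_D$. Therefore $\|x_i\|_{G_{i-1}^{-1}}^2\le L^2/\lambda\le1$ whenever $\lambda\ge L^2$, so the minimum can be dropped. Substituting $\lambda=L^2$ gives the bound $2D\ln(1+n/D)$.

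There is no real obstacle here. The only points needing care are the elementary scalar inequality in Step 2 and correct use of the rank-one determinant identity.
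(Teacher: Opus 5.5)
Your proof is correct and is the standard argument behind this result, which the paper cites from \citet{abbasiYadkori2011} without reproducing a proof. The rank-one determinant identity telescopes $\ln(\det G_n/\lambda^D)$ into $\sum_i\ln(1+\|x_i\|_{G_{i-1}^{-1}}^2)$, the scalar bound $\min\{1,y\}\le 2\ln(1+y)$ gives the first inequality, AM--GM on the eigenvalues gives the second, and $G_{i-1}\succeq\lambda I_D$ gives the specialization.
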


We also use maximal inequalities for the feedback and occupancy sums.

\begin{lemma}[\cite{howard2020time}]
\label{lem:martingale-tools}
Let $(\mathcal J_i)_{i\ge0}$ be a filtration, and fix $x>0$ and $N\in\mathbb N$. Then, we have
\begin{enumerate}
\item[(i)] If $(L_i)_{i\ge0}$ is a nonnegative supermartingale
with $\mathbb E L_0\le1$, then
$\mathbb P(\sup_{i\ge0}L_i\ge e^x)\le e^{-x}$.

\item[(ii)] Let $D_i$ be martingale differences with $|D_i|\le b$,
and put $S_k:=\sum_{i=1}^kD_i$ and
$W_k:=\sum_{i=1}^k\mathbb E[D_i^2\mid\mathcal J_{i-1}]$.
For deterministic $b,v>0$,
\[
\mathbb P\!\left(\exists k\le N:
S_k\ge\sqrt{2vx}+\frac{2bx}{3},\ W_k\le v\right)\le e^{-x}.
\]

\item[(iii)] If $Z_i$ is $\mathcal J_i$-measurable and $0\le Z_i\le B$
for deterministic $B>0$, put $\mu_i:=\mathbb E[Z_i\mid\mathcal J_{i-1}]$.
Then $\mathbb E[e^{\theta(Z_i-\mu_i)}\mid\mathcal J_{i-1}]
\le e^{\theta^2B^2/8}$ for every $\theta\in\mathbb R$, and
\[
\begin{aligned}
\mathbb P\!\left(\max_{k\le N}\sum_{i=1}^k(Z_i-\mu_i)
\ge B\sqrt{Nx/2}\right)&\le e^{-x},\ 
\mathbb P\!\left(\exists k\ge1:
\sum_{i=1}^k\mu_i>2\sum_{i=1}^kZ_i+2Bx\right)&\le e^{-x}.
\end{aligned}
\]
\end{enumerate}
\end{lemma}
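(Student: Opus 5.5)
The three parts are classical exponential-supermartingale facts, and I would derive (ii) and (iii) from (i) by building a suitable nonnegative supermartingale and applying Ville's inequality.

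\textbf{Part (i).} I would use optional stopping. Let $\sigma:=\inf\{i:L_i\ge e^x\}$. For each fixed $n$, the stopped process satisfies $\mathbb E L_{\sigma\wedge n}\le\mathbb E L_0\le1$. Hence
\[
e^x\,\mathbb P(\sigma\le n)\le\mathbb E\bigl[L_{\sigma\wedge n}\mathbf1\{\sigma\le n\}\bigr]\le1 .
\]
Letting $n\to\infty$ and using monotone convergence gives $\mathbb P(\sup_iL_i\ge e^x)\le e^{-x}$.

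\textbf{Part (ii).} This is Freedman's inequality. Set $\psi(\theta):=(e^{\theta b}-1-\theta b)/b^2$ for $\theta\in[0,3/b)$. Since $D_i\le b$ and $e^y\le1+y+y^2\psi(\theta)/b^{-2}\cdots$ in the standard form $e^{\theta y}\le1+\theta y+\psi(\theta)y^2$ for $y\le b$, we get
\[
\mathbb E\bigl[e^{\theta D_i}\mid\mathcal J_{i-1}\bigr]\le\exp\bigl(\psi(\theta)\,\mathbb E[D_i^2\mid\mathcal J_{i-1}]\bigr).
\]
Therefore $L_k:=\exp(\theta S_k-\psi(\theta)W_k)$ is a nonnegative supermartingale with $L_0=1$.

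On the event $\{S_k\ge a,\ W_k\le v\}$ we have $L_k\ge\exp(\theta a-\psi(\theta)v)$. I would use the bound $\psi(\theta)\le\theta^2/\bigl(2(1-\theta b/3)\bigr)$ and choose the deterministic value
\[
\theta:=\frac{\sqrt{2x/v}}{1+(b/3)\sqrt{2x/v}}.
\]
With $a=\sqrt{2vx}+2bx/3$, a direct computation gives $\theta a-\psi(\theta)v\ge x$. Part (i) then yields probability at most $e^{-x}$. Because $b$ and $v$ are deterministic, $\theta$ does not depend on the data, and the argument needs no union bound. This algebra is the only step that needs care; I expect it to be the main obstacle, though it is routine.

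\textbf{Part (iii), sub-Gaussian bound and first inequality.} The conditional sub-Gaussian bound is Hoeffding's lemma applied conditionally to a variable in $[0,B]$. The supermartingale $\exp\bigl(\theta\sum_{i\le k}(Z_i-\mu_i)-k\theta^2B^2/8\bigr)$ dominates
\[
\exp\Bigl(\theta\sum_{i\le k}(Z_i-\mu_i)-N\theta^2B^2/8\Bigr)\qquad\text{for }k\le N .
\]
Take $\theta=4a/(NB^2)$ with $a=B\sqrt{Nx/2}$. Then $\theta a-N\theta^2B^2/8=2a^2/(NB^2)=x$, and (i) gives the first inequality.

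\textbf{Part (iii), second inequality.} For $Z\in[0,B]$, convexity gives
\[
\mathbb E[e^{-\theta Z}\mid\mathcal J_{i-1}]\le1-c\,\mu_i\le e^{-c\mu_i},\qquad c:=\frac{1-e^{-\theta B}}{B}.
\]
So $\exp\bigl(c\sum\mu_i-\theta\sum Z_i\bigr)$ is a nonnegative supermartingale. Choose $\theta=\ln2/B$, so that $c=1/(2B)$ and $\theta\le2c$.

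On the event $\sum\mu_i>2\sum Z_i+2Bx$, the exponent satisfies
\[
c\sum\mu_i-\theta\sum Z_i>(2c-\theta)\sum Z_i+2cBx\ge x .
\]
Part (i) then gives probability at most $e^{-x}$, uniformly over all $k$.
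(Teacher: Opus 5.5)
Your proof is correct and follows the exponential-supermartingale-plus-Ville route of the cited \citet{howard2020time}; the paper gives no proof of its own. Your parameter choices check out: in (ii) you get $\theta a=2x$ and $\psi(\theta)v\le x/(1+b\sqrt{2x/v}/3)$, and in (iii) $\theta=\ln 2/B$ gives $c=1/(2B)$ and $\theta\le 2c$.

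The only slip is in (i). Optional stopping bounds $\mathbb P(\exists i:L_i\ge e^x)$, but $\{\sup_iL_i\ge e^x\}$ also contains paths whose supremum equals $e^x$ without being attained. The fix is to run your argument at a level $\lambda<e^x$ and let $\lambda\uparrow e^x$. Parts (ii) and (iii) only use the attained event, so they are unaffected.
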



The exploration distribution uses a G-optimal design on the span
of the action features.

\begin{lemma}[\cite{kieferWolfowitz1960}]
\label{lem:g-optimal-design}
Let $U$ be a nonempty compact metric space and
$\varphi:U\to\mathbb R^p$ continuous.
Put $E:=\operatorname{span}\{\varphi(a):a\in U\}$ and $q:=\dim E$.
There is a finitely supported probability distribution $\pi$ on $U$
such that, for $\Sigma_\pi:=\int_U\varphi(a)\varphi(a)^\top\,d\pi(a)$,
\[
\operatorname{range}(\Sigma_\pi)=E,\ 
\max_{a\in U}\varphi(a)^\top\Sigma_\pi^\dagger\varphi(a)=q,
\]
where $\Sigma_\pi^\dagger$ is the Moore--Penrose inverse.
\end{lemma}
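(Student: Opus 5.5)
We will prove the statement by the classical D-optimal design argument of Kiefer and Wolfowitz, after reducing to coordinates on $E$. First, fix an orthonormal basis of $E$ and let $B\in\mathbb R^{p\times q}$ have these vectors as columns, so $B^\top B=I_q$ and $BB^\top$ is the orthogonal projector onto $E$. Write $\varphi(a)=B\psi(a)$ with $\psi(a):=B^\top\varphi(a)\in\mathbb R^q$. Then $\psi$ is continuous, and $\{\psi(a):a\in U\}$ spans $\mathbb R^q$. If $q=0$, every $\varphi(a)=0$; any point mass works and both sides of the claimed identity are $0$. So we assume $q\ge1$.

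Next, we define the moment set
\[
\mathcal K:=\operatorname{conv}\{\psi(a)\psi(a)^\top:a\in U\}\subseteq\mathbb R^{q\times q}_{\rm sym}.
\]
This set is the convex hull of a compact set in a finite-dimensional space, hence compact. By Carath\'eodory's theorem, every $M\in\mathcal K$ equals $\Sigma^\psi_\pi:=\int\psi\psi^\top d\pi$ for some $\pi$ supported on at most $q(q+1)/2+1$ points. Since the $\psi(a)$ span $\mathbb R^q$, a finite mixture of spanning points gives some $M\in\mathcal K$ with $\det M>0$. The function $\log\det$ is continuous and strictly concave on the positive definite cone, and it tends to $-\infty$ at singular matrices. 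Therefore it attains its maximum over $\mathcal K$ at some $\Sigma\succ0$, which is realized by a finitely supported $\pi$.

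The key step is the first-order optimality (equivalence) condition. For any $a\in U$ and $t\in(0,1)$, the matrix $(1-t)\Sigma+t\psi(a)\psi(a)^\top$ lies in $\mathcal K$. Its one-sided derivative of $\log\det$ at $t=0$ is
\[
\operatorname{tr}\bigl(\Sigma^{-1}(\psi(a)\psi(a)^\top-\Sigma)\bigr)=\psi(a)^\top\Sigma^{-1}\psi(a)-q,
\]
and this derivative must be $\le0$. Hence $\max_{a\in U}\psi(a)^\top\Sigma^{-1}\psi(a)\le q$. Conversely, averaging over the optimizing $\pi$ gives
\[
\int\psi^\top\Sigma^{-1}\psi\,d\pi=\operatorname{tr}(\Sigma^{-1}\Sigma)=q.
\]
So the maximum is at least $q$, and equality holds. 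The maximum is attained because $\psi$ is continuous and $U$ is compact.

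Finally, we translate back to $\varphi$. We have $\Sigma_\pi=\int\varphi\varphi^\top d\pi=B\Sigma B^\top$. Since $B$ has orthonormal columns and $\Sigma\succ0$, this gives $\operatorname{range}(\Sigma_\pi)=\operatorname{range}(B)=E$. One checks the four Penrose conditions directly to obtain $\Sigma_\pi^\dagger=B\Sigma^{-1}B^\top$. Then
\[
\varphi(a)^\top\Sigma_\pi^\dagger\varphi(a)=\psi(a)^\top B^\top B\Sigma^{-1}B^\top B\psi(a)=\psi(a)^\top\Sigma^{-1}\psi(a),
\]
and the claim follows. The only delicate point is ensuring the maximizer is attained by a finitely supported measure rather than a general one. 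Working directly in the finite-dimensional compact set $\mathcal K$ and using Carath\'eodory handles this, so no weak-compactness argument on measures is needed.
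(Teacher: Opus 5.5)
Your proof is correct. It is the classical Kiefer--Wolfowitz argument: maximize $\log\det$ over the compact moment set, realize the maximizer with a finitely supported design via Carath\'eodory, and read off $\max_a\psi(a)^\top\Sigma^{-1}\psi(a)=q$ from the first-order condition together with the trace identity. The paper does not prove this lemma and only cites Kiefer--Wolfowitz, so your argument supplies the standard proof behind that citation; your reduction to orthonormal coordinates on $E$, with $\Sigma_\pi^\dagger=B\Sigma^{-1}B^\top$, correctly handles the rank-deficient, Moore--Penrose form stated in the lemma.
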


Finally, change of measure is used in the Gaussian remainder test,
and Pinsker's inequality is used in the lower bound.
For probability measures $P_0,P_1$, define
$\operatorname{KL}(P_0\|P_1):=\int\ln(dP_0/dP_1)\ dP_0$
when $P_0\ll P_1$, and $+\infty$ otherwise.

\begin{lemma}[Cameron-Martin formula \cite{hairer2025spdes}]
\label{lem:isonormal-shift}
Let $W$ be a centered Gaussian family indexed by a separable real
Hilbert space $\mathcal H$, with
$\mathbb E[W(v)W(w)]=\langle v,w\rangle_{\mathcal H}$.
Write $\mathsf P$ for its law on the sigma-field generated by its
evaluations, and let $\mathsf P_h$ be the law of
$(W(v)+\langle h,v\rangle_{\mathcal H})_{v\in\mathcal H}$ for $h\in\mathcal H$.
Then $\mathsf P_h$ and $\mathsf P$ are equivalent, and
\[
\frac{d\mathsf P_h}{d\mathsf P}(W)
=\exp\!\left\{W(h)-\frac12\|h\|_{\mathcal H}^2\right\},\ 
\operatorname{KL}(\mathsf P_h\|\mathsf P)=\frac12\|h\|_{\mathcal H}^2.
\]
\end{lemma}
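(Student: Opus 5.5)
The plan is to identify the density on finite-dimensional cylinder events and then extend it to the whole $\sigma$-field by a monotone-class argument. We write $\mathcal G$ for the $\sigma$-field generated by the evaluations $\{W(v):v\in\mathcal H\}$. Since $h\in\mathcal H$ is itself an index, $W(h)$ is $\mathcal G$-measurable. Hence
\[
L:=\exp\{W(h)-\tfrac12\|h\|_{\mathcal H}^2\}
\]
is a strictly positive $\mathcal G$-measurable random variable. Because $W(h)\sim\mathcal N(0,\|h\|_{\mathcal H}^2)$, we have $\mathbb E L=1$. It therefore suffices to show that the measure $L\,d\mathsf P$ coincides with $\mathsf P_h$ on $\mathcal G$.

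First I would check the identity on cylinder functions via characteristic functions. Fix $v_1,\ldots,v_m\in\mathcal H$ and $t\in\mathbb R^m$, and set $X:=\sum_j t_jW(v_j)$. The pair $(W(h),X)$ is jointly centered Gaussian with
\[
\operatorname{Var}X=\Bigl\|\sum_jt_jv_j\Bigr\|_{\mathcal H}^2,\qquad \operatorname{Cov}(W(h),X)=\sum_jt_j\langle h,v_j\rangle_{\mathcal H}.
\]
The complex Gaussian moment formula $\mathbb E e^{Y}=e^{\frac12\mathbb E Y^2}$ holds for centered complex-linear combinations of a Gaussian vector. Applying it with $Y=W(h)+iX$ gives
\[
\mathbb E\bigl[L\,e^{iX}\bigr]
=\exp\Bigl\{-\tfrac12\|h\|^2+\tfrac12\bigl(\|h\|^2-\operatorname{Var}X+2i\operatorname{Cov}(W(h),X)\bigr)\Bigr\}
=e^{i\sum_jt_j\langle h,v_j\rangle}\,e^{-\frac12\operatorname{Var}X}.
\]
The right-hand side is exactly the characteristic function of $\bigl(W(v_j)+\langle h,v_j\rangle\bigr)_{j\le m}$ at $t$. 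By uniqueness of characteristic functions for the finite measure $L\,d\mathsf P$ pushed forward to $\mathbb R^m$, we get $\mathbb E[L\,F(W(v_1),\ldots,W(v_m))]=\mathsf P_h[F]$ for every bounded Borel $F$.

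Next I would extend to all of $\mathcal G$. Cylinder events $\{(W(v_1),\ldots,W(v_m))\in B\}$ form a $\pi$-system that generates $\mathcal G$. The two finite measures $L\,d\mathsf P$ and $\mathsf P_h$ agree on this $\pi$-system and both have total mass $1$. By Dynkin's $\pi$-$\lambda$ theorem they agree on $\mathcal G$, so $d\mathsf P_h/d\mathsf P=L$. Since $0<L<\infty$ almost surely, we also have $\mathsf P\ll\mathsf P_h$ with density $1/L$, so the two laws are equivalent.

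Finally, the KL divergence follows by direct computation:
\[
\operatorname{KL}(\mathsf P_h\|\mathsf P)=\mathbb E_{\mathsf P_h}\bigl[W(h)\bigr]-\tfrac12\|h\|^2.
\]
Under $\mathsf P_h$ the coordinate $W(h)$ has the law of $W(h)+\langle h,h\rangle$, so its mean is $\|h\|^2$, and the divergence equals $\tfrac12\|h\|_{\mathcal H}^2$. The only delicate point I expect is bookkeeping rather than substance: $\mathsf P$ and $\mathsf P_h$ must be treated as laws on the same product space $\mathbb R^{\mathcal H}$ with the evaluation $\sigma$-field. On that space $W(h)$ is simply the coordinate at $h$, so no linearity or continuity of $v\mapsto W(v)$ is needed, and separability of $\mathcal H$ is not even used.
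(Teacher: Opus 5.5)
Your proof is correct. The complex-Gaussian computation of $\mathbb E[L e^{iX}]$ on finite-dimensional marginals, the $\pi$--$\lambda$ extension to the evaluation $\sigma$-field, the use of strict positivity of $L$ for equivalence, and the KL computation via $\mathbb E_{\mathsf P_h}[W(h)]=\|h\|_{\mathcal H}^2$ are all sound, and you are right that separability is never used. The paper gives no proof of its own and only cites Hairer's lecture notes, which prove the Cameron--Martin theorem by the same Fourier-transform comparison for Gaussian measures on a Banach space, so your canonical-space version is essentially the intended argument and the KL identity follows from it immediately.
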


\begin{lemma}[Gibbs variational formula \cite{anantharam2017variational}]
\label{lem:gibbs-variational}
For a probability measure $\mathsf P$ and a bounded measurable
real-valued function $F$,
\[
\ln\mathbb E_{\mathsf P}e^F
=\sup_{\mathsf Q\ll\mathsf P}
\left\{\mathbb E_{\mathsf Q}F
-\operatorname{KL}(\mathsf Q\|\mathsf P)\right\}.
\]
The supremum is attained by
$d\mathsf Q/d\mathsf P=e^F/\mathbb E_{\mathsf P}e^F$.
\end{lemma}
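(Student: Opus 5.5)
The plan is to exhibit the candidate maximizer explicitly and reduce the statement to nonnegativity of a single relative entropy. Since $F$ is bounded, say $|F|\le K$, the normalizer $Z:=\mathbb E_{\mathsf P}e^F$ satisfies $e^{-K}\le Z\le e^K$. Hence $\ln Z$ is finite, and the measure $\mathsf Q_*$ defined by $d\mathsf Q_*/d\mathsf P=e^F/Z$ is a probability measure. Its density is bounded above and below by positive constants, so $\mathsf Q_*$ and $\mathsf P$ are mutually absolutely continuous. In particular, a probability $\mathsf Q$ satisfies $\mathsf Q\ll\mathsf P$ if and only if $\mathsf Q\ll\mathsf Q_*$, and then
\[
\frac{d\mathsf Q}{d\mathsf Q_*}=\frac{d\mathsf Q}{d\mathsf P}\,Z e^{-F}.
\]

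The key step is the identity, valid for every $\mathsf Q\ll\mathsf P$ with $\operatorname{KL}(\mathsf Q\|\mathsf P)<\infty$,
\[
\operatorname{KL}(\mathsf Q\|\mathsf Q_*)=\operatorname{KL}(\mathsf Q\|\mathsf P)-\mathbb E_{\mathsf Q}F+\ln Z .
\]
It follows by taking logarithms in the displayed density relation and integrating against $\mathsf Q$. This is legitimate because $\ln(d\mathsf Q/d\mathsf Q_*)$ and $\ln(d\mathsf Q/d\mathsf P)$ differ by the bounded function $\ln Z-F$. Consequently one is $\mathsf Q$-quasi-integrable exactly when the other is, and their $\mathsf Q$-integrals differ by $\ln Z-\mathbb E_{\mathsf Q}F$. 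The same argument shows that $\operatorname{KL}(\mathsf Q\|\mathsf P)=\infty$ if and only if $\operatorname{KL}(\mathsf Q\|\mathsf Q_*)=\infty$. In that case the objective $\mathbb E_{\mathsf Q}F-\operatorname{KL}(\mathsf Q\|\mathsf P)$ equals $-\infty$, because $\mathbb E_{\mathsf Q}F$ is finite, and such $\mathsf Q$ cannot affect the supremum.

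Nonnegativity of relative entropy then finishes the proof. This is Jensen's inequality applied to the convex function $x\ln x$ on the density $d\mathsf Q/d\mathsf Q_*$, with $\mathsf Q_*$ as the reference measure. From $\operatorname{KL}(\mathsf Q\|\mathsf Q_*)\ge0$ and the identity, every admissible $\mathsf Q$ satisfies $\mathbb E_{\mathsf Q}F-\operatorname{KL}(\mathsf Q\|\mathsf P)\le\ln Z$. Taking $\mathsf Q=\mathsf Q_*$ makes the left side of the identity zero. Directly, $\operatorname{KL}(\mathsf Q_*\|\mathsf P)=\mathbb E_{\mathsf Q_*}F-\ln Z$, which is finite, and so equality holds at $\mathsf Q_*$. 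This proves both the formula and the attainment claim.

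The main obstacle is only measure-theoretic bookkeeping. The step needing care is justifying the splitting of $\int\ln(d\mathsf Q/d\mathsf P)\,d\mathsf Q$ when this integral might be infinite. Boundedness of $F$, which makes the two log-densities differ by a bounded function, is exactly what removes that difficulty.
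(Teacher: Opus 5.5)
Your proof is correct. The paper does not prove this lemma; it cites it from Anantharam (2017) as a standard tool. Your argument is the standard Gibbs-measure proof: you reduce the claim to $\operatorname{KL}(\mathsf Q\|\mathsf Q_*)\ge0$ via the identity $\operatorname{KL}(\mathsf Q\|\mathsf Q_*)=\operatorname{KL}(\mathsf Q\|\mathsf P)-\mathbb E_{\mathsf Q}F+\ln Z$. The measure-theoretic details are handled correctly, namely mutual absolute continuity of $\mathsf Q_*$ and $\mathsf P$, log-densities differing by a bounded function, and infinite-KL competitors contributing $-\infty$.
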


\begin{lemma}[\cite{tsybakov2009}]
\label{lem:pinsker}
For probability measures $P_0,P_1$ on the same measurable space,
\[
\operatorname{TV}(P_0,P_1)
\le\sqrt{\frac12\operatorname{KL}(P_0\|P_1)},
\]
where $\operatorname{TV}(P_0,P_1):=\sup_E|P_0(E)-P_1(E)|$.
\end{lemma}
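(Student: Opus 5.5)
We reduce the inequality to two-point distributions and then verify the resulting scalar inequality by calculus. If $\operatorname{KL}(P_0\|P_1)=+\infty$ there is nothing to prove, so we assume $P_0\ll P_1$ and that the divergence is finite.

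\textbf{Step 1: reduction to a single event.} Fix a measurable event $E$ and set $p:=P_0(E)$ and $q:=P_1(E)$. Consider the measurable map $x\mapsto\mathbf 1_E(x)$, which pushes $P_0,P_1$ forward to the Bernoulli laws $\mathrm{Ber}(p),\mathrm{Ber}(q)$. We will show the data-processing inequality
\[
\operatorname{KL}(P_0\|P_1)\ge p\ln\frac pq+(1-p)\ln\frac{1-p}{1-q}=:\operatorname{kl}(p,q).
\]
To do this, split the integral defining $\operatorname{KL}$ over $E$ and $E^c$. On each piece, apply Jensen's inequality to the convex function $x\mapsto x\ln x$, with respect to the conditional law $P_1(\cdot\mid E)$, evaluated at the likelihood ratio $dP_0/dP_1$. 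The edge cases $q\in\{0,1\}$ are forced by absolute continuity and are handled by the conventions $0\ln 0=0$ and $0\ln(0/0)=0$. With this reduction it suffices to prove $\operatorname{kl}(p,q)\ge2(p-q)^2$ for all $p,q\in[0,1]$. Indeed, we then get $|P_0(E)-P_1(E)|\le\sqrt{\operatorname{KL}/2}$ for every $E$, and taking the supremum over $E$ gives the claim.

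\textbf{Step 2: the binary inequality.} Fix $p$ and define, for $q\in(0,1)$,
\[
g(q):=\operatorname{kl}(p,q)-2(p-q)^2 .
\]
Then $g(p)=0$ and
\[
g'(q)=-\frac pq+\frac{1-p}{1-q}+4(p-q)=(q-p)\Bigl(\frac1{q(1-q)}-4\Bigr).
\]
Since $q(1-q)\le1/4$, the second factor is nonnegative. Hence $g'\le0$ on $(0,p)$ and $g'\ge0$ on $(p,1)$, so $g$ attains its minimum at $q=p$ and $g\ge0$. The endpoints $q\in\{0,1\}$ follow by continuity, or are immediate because $\operatorname{kl}=+\infty$ there unless $p=q$.

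\textbf{Main obstacle.} The only delicate point is Step 1, the data-processing reduction. There we must handle the sets where $dP_0/dP_1$ vanishes and the cases $P_1(E)\in\{0,1\}$ carefully; Step 2 is a routine derivative computation. As an alternative to Jensen in Step 1, we could use the Gibbs variational formula (Lemma~\ref{lem:gibbs-variational}) with $F=\theta\mathbf 1_E$: it gives
\[
\operatorname{KL}(P_0\|P_1)\ge\theta p-\ln\bigl(1-q+qe^\theta\bigr).
\]
Optimizing over $\theta$ recovers $\operatorname{kl}(p,q)$ directly, which sidesteps the conditional-measure bookkeeping.
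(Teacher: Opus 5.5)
Your proof is correct. The paper gives no proof of this lemma and only cites \citet{tsybakov2009}, so the relevant comparison is with the argument in that reference, which takes a genuinely different route. There, writing $p_0,p_1$ for densities with respect to a dominating measure $\nu$ and $\psi(x):=x\ln x-x+1$, one never passes to events. Instead one uses the pointwise inequality $3(x-1)^2\le(4+2x)\psi(x)$ for $x\ge0$ and a single Cauchy--Schwarz step:
\[
\frac12\int|p_0-p_1|\,d\nu\le\frac12\Bigl(\int\frac{4p_1+2p_0}{3}\,d\nu\Bigr)^{1/2}\Bigl(\int_{\{p_1>0\}}p_1\psi(p_0/p_1)\,d\nu\Bigr)^{1/2}=\sqrt{\tfrac12\operatorname{KL}(P_0\|P_1)}.
\]
That route then needs Scheff\'e's identity $\sup_E|P_0(E)-P_1(E)|=\frac12\int|p_0-p_1|\,d\nu$ to reach the paper's event-wise definition of $\operatorname{TV}$.

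Your route instead pays for a data-processing step. Your conditional Jensen argument on $E$ and $E^c$, with the conventions you state, is sound. In exchange it works directly with the event-wise definition. It also reduces everything to the transparent identity $g'(q)=(q-p)\{1/[q(1-q)]-4\}$, whereas the textbook scalar inequality is less obvious to verify.

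Your Gibbs-formula variant is also valid. It only uses the easy direction of Lemma~\ref{lem:gibbs-variational}, with the bounded $F=\theta\mathbf 1_E$ and $\mathsf Q=P_0$. It can even replace Step~2 entirely. Hoeffding's lemma (Lemma~\ref{lem:martingale-tools}(iii)) gives $\ln(1-q+qe^\theta)\le\theta q+\theta^2/8$, so $\operatorname{KL}(P_0\|P_1)\ge\theta(p-q)-\theta^2/8$. Taking $\theta=4(p-q)$ yields $2(p-q)^2$ with no boundary cases in $p$ or $q$.
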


\section{Proofs in Section~\ref{sec:setup}}
\label{app:section-3-proofs}

Throughout this section, $\mu$ is Lebesgue measure on
$\cS=[0,1]^{d_S}$, so $\mu(\cS)=1$.
For a single episode, write $\mathcal I^h$ for the information
available after choosing $a^h$ and before observing the current
reward and, when $h<H$, the next state.

\begin{proof}[Proof of Lemma~\ref{lem:optimal-values}]
We establish measurability and boundedness by backward induction,
starting from $V_*^{H+1}=0$.
The reward functions are measurable because they belong to the
RKHS of a measurable kernel.
Suppose that $V_*^{h+1}$ is measurable and
$0\le V_*^{h+1}\le H-h$.
For $h<H$, joint measurability of the transition density implies
that $P^hV_*^{h+1}$ is measurable, and its normalization gives
$0\le P^hV_*^{h+1}\le H-h$.
At $h=H$, the same conclusion follows from $P^HV\equiv0$.
Thus, $Q_*^h=r^h+P^hV_*^{h+1}$ is measurable and takes values
in $[0,H-h+1]$.

Fix $s\in\cS$.
For $h<H$ and $a,b\in\cA$, the reproducing property and
Assumption~\ref{ass:kernel-transition} give
\begin{align*}
\left|(P^hV_*^{h+1})(s,a)-(P^hV_*^{h+1})(s,b)\right|
&\le\int_{\cS}|V_*^{h+1}(s')|
\left|P^h(s'\mid s,a)-P^h(s'\mid s,b)\right|\ \mu(ds')\\
&\le u\|V_*^{h+1}\|_{L^1(\mu)}
\|k_{(s,a)}-k_{(s,b)}\|_{\cH_k}.
\end{align*}
By Assumption~\ref{ass:action-regularity},
\[
\|k_{(s,a)}-k_{(s,b)}\|_{\cH_k}^2
=k((s,a),(s,a))-2k((s,a),(s,b))+k((s,b),(s,b))
\longrightarrow0\ \text{as }b\to a.
\]
Hence the continuation term is continuous in the action;
at $h=H$, it is identically zero.
The reward mean is also continuous in the action, so
$a\mapsto Q_*^h(s,a)$ is continuous.
Since $\cA=[0,1]^{d_A}$ is compact,
Lemma~\ref{lem:measurable-maximum} yields a measurable $V_*^h$
and a measurable selector $\pi_*^h$ satisfying
$Q_*^h(s,\pi_*^h(s))=V_*^h(s)$.
Also, $0\le V_*^h\le H-h+1$, completing the induction.
The selectors define a deterministic Markov policy $\pi_*$,
and backward substitution gives
$V^h(\cdot;\pi_*)=V_*^h$ for every $h\in[H]$.

To compare with history-dependent policies, fix $s\in\cS$
and $\pi\in\Pi$.
For $h<H$, \eqref{eq:observation-law} implies
\[
r^h(s^h,a^h)
+\mathbb E_s^{M,\pi}[V_*^{h+1}(s^{h+1})\mid\mathcal I^h]
=Q_*^h(s^h,a^h)\le V_*^h(s^h).
\]
At the terminal step, $r^H(s^H,a^H)\le V_*^H(s^H)$.
Taking expectations and summing yields
\begin{align*}
V^1(s;\pi)
&=\sum_{h=1}^H\mathbb E_s^{M,\pi}[r^h(s^h,a^h)]\le\sum_{h=1}^H\mathbb E_s^{M,\pi}[V_*^h(s^h)]
-\sum_{h=1}^{H-1}\mathbb E_s^{M,\pi}[V_*^{h+1}(s^{h+1})]
=V_*^1(s).
\end{align*}
Equality holds for $\pi_*$, proving the claim.
In particular, $0\le\reg(T)\le HT$.
\end{proof}

\begin{proof}[Proof of Lemma~\ref{lem:bellman-completeness}]
Fix $h\in[H-1]$ and a bounded measurable $V:\cS\to\mathbb R$.
We first verify the measurability needed to define a bounded
linear functional on $\cH_k$.
For any $g\in\cH_k$, Lemma~\ref{lem:point-evaluation}
provides finite kernel expansions
$g_n=\sum_{i=1}^{m_n}b_{n,i}k_{z_{n,i}}$ converging to $g$ in
$\cH_k$.
The reproducing property and Assumption~\ref{ass:kernel-transition}
give
\begin{align*}
\langle P^h(s'\mid\cdot),g_n\rangle_{\cH_k}
&=\sum_{i=1}^{m_n}b_{n,i}P^h(s'\mid z_{n,i}),\ \left|\langle P^h(s'\mid\cdot),g_n-g\rangle_{\cH_k}\right|
\le u\|g_n-g\|_{\cH_k}\longrightarrow0.
\end{align*}
Each finite sum is measurable in $s'$, so its limit
$s'\mapsto\langle P^h(s'\mid\cdot),g\rangle_{\cH_k}$
is measurable as well.

Define the bounded linear functional $L_V$ by
\begin{align*}
L_V(g)&:=\int_{\cS}V(s')
\langle P^h(s'\mid\cdot),g\rangle_{\cH_k}\ \mu(ds'),\ 
|L_V(g)|\le u\|V\|_{L^1(\mu)}\|g\|_{\cH_k}.
\end{align*}
The Riesz representation theorem gives $F\in\cH_k$ such that
$L_V(g)=\langle F,g\rangle_{\cH_k}$ for every $g\in\cH_k$,
with $\|F\|_{\cH_k}\le u\|V\|_{L^1(\mu)}$.
Evaluating at $g=k_z$ identifies this function:
\[
F(z)=L_V(k_z)
=\int_{\cS}V(s')P^h(s'\mid z)\ \mu(ds')
=(P^hV)(z),\ z\in\cZ.
\]
Consequently, $P^hV\in\cH_k$ and
$\|P^hV\|_{\cH_k}\le u\|V\|_{L^1(\mu)}\le u\|V\|_\infty$,
which proves \eqref{eq:bellman-completeness}.

Taking $V=\mathbf1_{\cS}$ gives
$P^h\mathbf1_{\cS}=\mathbf1_{\cZ}\in\cH_k$.
The reproducing property and $k(z,z)\le1$ then imply
$1\le\|\mathbf1_{\cZ}\|_{\cH_k}\le u$;
hence $u\ge1$ whenever the model class is nonempty.

For $h<H$, Assumption~\ref{ass:kernel-reward} and the triangle
inequality give $\mathcal B^hV=r^h+P^hV\in\cH_k$ and
\[
\|\mathcal B^hV\|_{\cH_k}
\le\|r^h\|_{\cH_k}+\|P^hV\|_{\cH_k}
\le B_r+u\|V\|_{L^1(\mu)}
\le B_r+u\|V\|_\infty.
\]
At $h=H$, we have $\mathcal B^HV=r^H\in\cH_k$ and
$\|\mathcal B^HV\|_{\cH_k}\le B_r$.
This proves \eqref{eq:bellman-backup-norm} at every stage.

To obtain \eqref{eq:q-rkhs-bound}, note that backward induction
gives $0\le V^{h+1}(\cdot;\pi)\le H-h$ for every Markov policy
$\pi$, and the same bound holds for $V_*^{h+1}$.
Since $Q_*^h=\mathcal B^hV_*^{h+1}$ and
$Q^h(\cdot;\pi)=\mathcal B^hV^{h+1}(\cdot;\pi)$,
both functions belong to $\cH_k$ and have norm at most
$B_r+u(H-h)$ by \eqref{eq:bellman-backup-norm}.
\end{proof}

\begin{proof}[Proof of Lemma~\ref{lem:state-occupancy-bound}]
Lemma~\ref{lem:point-evaluation} and
Assumption~\ref{ass:kernel-transition} imply
\[
0\le P^h(s'\mid z)
\le\|P^h(s'\mid\cdot)\|_{\cH_k}\sqrt{k(z,z)}
\le u,\ h\in[H-1],\ z\in\cZ,\ s'\in\cS.
\]
Fix $s\in\cS$, $\pi\in\Pi$, and $h\in\{2,\ldots,H\}$.
By \eqref{eq:observation-law} and Tonelli's theorem,
the distribution of $s^h$ has density
$p_h(s'):=\mathbb E_s^{M,\pi}
[P^{h-1}(s'\mid s^{h-1},a^{h-1})]$ with respect to $\mu$.
Since $0\le p_h\le u$, integrating over a measurable
$E\subseteq\cS$ proves \eqref{eq:state-occupancy-bound}.

During interaction, $\mathcal I_t^1\subseteq\mathcal I_t^{h-1}$.
Thus, \eqref{eq:observation-law} and the tower property give
\begin{align*}
\mathbb P(s_t^h\in E\mid\mathcal I_t^1)
&=\mathbb E\left[
\mathbb P(s_t^h\in E\mid\mathcal I_t^{h-1})
\mid\mathcal I_t^1\right]=\mathbb E\left[P^{h-1}(E\mid z_t^{h-1})
\mid\mathcal I_t^1\right]\le u\mu(E)
\end{align*}
almost surely.
The argument allows history-dependent policies and does not
require independence between the current reward and next state.
\end{proof}

\section{Proofs in Sections~\ref{sec:skovi}-\ref{sec:sovi-first-selection}}
\label{app:section-4-proofs}

We prove the algorithmic lemmas in their order of appearance.
Write $\mathcal I_i^h$ for the information available after choosing
$a_i^h$ and before observing its reward and next state.
Sample assignments are measurable with respect to this information.
For every fixed $V\in\mathcal V_h$,
$\mathbb E[Y_i^h(V)\mid\mathcal I_i^h]
=(\mathcal B^hV)(z_i^h)$ and $0\le Y_i^h(V)\le H-h+1$.
At $h=H$, $V=0$ and $Y_i^H(0)=R_i^H$.
Throughout, $C$ denotes a universal constant, while $C_\nu$ and
$C_{\nu,\kappa}$ depend only on their indicated parameters.

\subsection{Proofs for Section~\ref{sec:skovi}}
\label{app:local-approximation-regression}

We first bound the local polynomial remainder and the complexity of
the continuation classes. These bounds give a single confidence
event for all local regressions, including those in
Sections~\ref{sec:sovi-version-space} and~\ref{sec:sovi-first-selection}.

\begin{proof}[Proof of Lemma~\ref{lem:feature-remainder}]
We approximate the kernel feature by a polynomial and then apply
the local Legendre projection. The first step controls smoothness;
the second preserves the features used by the algorithm.

Writing $\omega=2\pi\eta$ for angular frequency, the normalized
spectral law in Example~\ref{ex:Matern} has the representation
$\omega=\kappa Z/\sqrt W$, where
$Z\sim\mathcal N(0,I_d)$ and $W\sim\chi^2_{2\nu}$ are independent.
Thus, $\varphi(z):=\exp(\mathrm i\omega^\top z)$ is a feature map
in complex $L^2$, with
$\langle\varphi(z),\varphi(z')\rangle=k_0(z,z')$.
The closed span of these features is isometric to the complexification
of $\mathcal H_0$.

Fix a bounding rectangle $\overline c$ with center $z_c$, and put
$t:=\kappa\sqrt d\,a$ and $k:=\rho+1>\nu$.
When $t\ge1$, the zero polynomial approximates $\varphi$ with error
one. For $0<t<1$, use the Hilbert-space-valued polynomial
\[
P_c(z):=\mathbf1\{W\ge t^2\}\exp(\mathrm i\omega^\top z_c)
\sum_{j=0}^{\rho}\frac{[\mathrm i\omega^\top(z-z_c)]^j}{j!}.
\]
Its coefficients belong to $L^2$ because $W$ is bounded away from
zero on the retained event. The Taylor remainder for a real argument,
the identity $\mathbb E|Z^\top v|^{2k}=(2k-1)!!\|v\|_2^{2k}$,
and $\|z-z_c\|_2\le\sqrt d\,a$ give
\begin{align*}
\|\varphi(z)-P_c(z)\|_{L^2}^2
&\le\mathbb P(W<t^2)
+\frac{\kappa^{2k}(2k-1)!!\|z-z_c\|_2^{2k}}{(k!)^2}
\mathbb E[W^{-k}\mathbf1\{W\ge t^2\}]\\
&\le C_\nu\left[t^{2\nu}
+t^{2k}\int_{t^2}^{\infty}w^{\nu-k-1}e^{-w/2}\,dw\right]
\le C_\nu t^{2\nu}.
\end{align*}
The last inequality follows by splitting the integral at one and
using $k>\nu$. In particular, it also holds when $\nu$ is an integer.
Projecting the coefficients of $P_c$ onto the closed feature span
cannot increase the error. Transferring this polynomial through the
isometry and taking its real part therefore gives an
$\mathcal H_0$-valued polynomial of degree at most $\rho$ with
uniform error at most $C_\nu\min\{1,t^\nu\}$ for $\Phi$.

Let $\Pi_c$ denote the normalized Lebesgue projection onto these
polynomials. Its kernel is
$K_c(z,u):=\sum_{|\mathbf b|\le\rho}p_{c,\mathbf b}(z)p_{c,\mathbf b}(u)$.
Orthonormality and Cauchy--Schwarz imply
\[
\frac1{|\overline c|}\int_{\overline c}|K_c(z,u)|\,du
\le\left(\frac1{|\overline c|}\int_{\overline c}|K_c(z,u)|^2\,du\right)^{1/2}
=\|\phi_c(z)\|_2\le X.
\]
Hence $\|\Pi_c\|_{\infty\to\infty}\le X$, also for
Hilbert-space-valued functions. Since $\Pi_c$ reproduces the
approximating polynomial and $\Psi_{\mathcal C}=\Phi-\Pi_c\Phi$
on $c$, we obtain
\[
\sup_{z\in c}\|\Psi_{\mathcal C}(z)\|_{\mathcal H_0}
\le C_\nu(1+X)\min\{1,(\kappa\sqrt d\,a)^\nu\}.
\]
This bound is uniform over cells and proves the lemma.
\end{proof}

\begin{proof}[Proof of Lemma~\ref{lem:continuation-entropy}]
We track the dimension dependence through the Mat\'ern spectrum.
A spectral approximation bounds the fat-shattering dimension;
Lemma~\ref{lem:fat-covering} then gives the required uniform covers.

\textit{Spectral approximation.}
Use the representation $\omega=\kappa Z/\sqrt W$ from the preceding
proof. The chi-square lower tail and the Gaussian radial moment give,
for $r>0$,
\[
\mathbb P(\|\omega\|_2>r)
\le\frac{\kappa^{2\nu}\Gamma(d/2+\nu)}
{\Gamma(d/2)\Gamma(\nu+1)r^{2\nu}}
\le C_\nu\left(\frac{\kappa\sqrt d}{r}\right)^{2\nu}.
\]
For $L\ge1$, retain frequencies with
$\|\omega\|_2\le(1+\kappa)\sqrt d\,L$ and expand
$\exp(\mathrm i\omega^\top(z-z_0))$ at
$z_0=(1/2,\ldots,1/2)$ through degree
$m=\lceil C_\nu(1+\kappa)dL\rceil$.
Here $|\omega^\top(z-z_0)|\le(1+\kappa)dL/2$.
Choosing $C_\nu$ sufficiently large makes the Taylor error at most
$L^{-\nu}$: indeed, $|u|^{m+1}/(m+1)!\le(e|u|/(m+1))^{m+1}$,
and $m+1$ can exceed both $2e|u|$ and $\nu\log_2L$.
Discarded frequencies contribute at most $C_\nu L^{-2\nu}$ to
the squared feature error. The approximating feature has rank at
most $\binom{d+m}{d}\le(aL)^d$, for a constant
$a\ge1$ depending only on $\nu,\kappa$.

For any probability measure $\sigma$ on $[0,1]^d$, let $K_\sigma$
be the kernel integral operator on $L^2(\sigma)$, and let
$\lambda_j(K_\sigma)$ be its eigenvalues in decreasing order.
Projection onto the span of the polynomial coefficients gives
$\sum_{j>(aL)^d}\lambda_j(K_\sigma)\le C_\nu L^{-2\nu}$.
This bound holds for atomic measures as well as continuous ones.
For $j\ge2a^d$, take $L=(j/2)^{1/d}/a$ and use the eigenvalue tail;
for $j<2a^d$, use $\lambda_j(K_\sigma)\le1/j$, which follows
from $\operatorname{tr}(K_\sigma)=1$.
Both cases give
\begin{equation}
\lambda_j(K_\sigma)\le C_{\nu,\kappa}j^{-1-2\nu/d},
\ j\ge1.
\label{eq:appendix-uniform-spectrum}
\end{equation}
For the second case, the apparent factor $a^d$ enters only through
$(2a^d)^{2\nu/d}\le2^{2\nu}a^{2\nu}$, so the constant is
independent of $d$.

\textit{Uniform covers of an RKHS ball.}
Let $\mathcal G_B:=\{g\in\mathcal H_0:\|g\|_{\mathcal H_0}\le B\}$,
where $B>0$, and write $p:=2d/(2\nu+d)<2$.
If $n$ points are $\varepsilon$-fat-shattered by $\mathcal G_B$,
their evaluation ellipsoid is $E:=BK_n^{1/2}B_2^n$, where $K_n$
is the Gram matrix and $B_2^n$ is the Euclidean unit ball.
For the shattering thresholds $b\in\mathbb R^n$, the support
function of $E$ satisfies
$h_E(w)\ge\langle w,b\rangle+\varepsilon\|w\|_1$ for every $w$.
Thus, $b+\varepsilon[-1,1]^n\subseteq E$; symmetry and convexity
also give $\varepsilon[-1,1]^n\subseteq E$.
Apply \eqref{eq:appendix-uniform-spectrum} to the empirical measure
on these points. Comparing volumes, with $q:=1+2\nu/d$, yields
\[
(2\varepsilon)^n
\le B^n\operatorname{vol}(B_2^n)\sqrt{\det K_n},\ 
\det K_n\le\frac{(C_{\nu,\kappa}n)^n}{(n!)^q}.
\]
Since $\operatorname{vol}(B_2^n)^{1/n}\le\sqrt{2\pi e/n}$,
$n!\ge(n/e)^n$, and $q\le1+2\nu$, this implies
$\operatorname{fat}_\varepsilon(\mathcal G_B)
\le(C_{\nu,\kappa}B/\varepsilon)^p$.

To pass from finite sets to the whole cube, put
$\alpha:=\min\{\nu/2,1/2\}$.
The same spectral representation gives
\[
\mathbb E|\omega^\top v|^{2\alpha}
=\frac{\kappa^{2\alpha}\Gamma(\alpha+1/2)\Gamma(\nu-\alpha)}
{\sqrt\pi\,\Gamma(\nu)}\|v\|_2^{2\alpha},\ 
|g(z)-g(z')|\le C_{\nu,\kappa}B\|z-z'\|_2^\alpha.
\]
The second inequality follows from
$|e^{\mathrm ix}-1|\le2|x|^\alpha$ and the reproducing property.
For $0<\varepsilon\le B$, a coordinate grid of mesh at most
$d^{-1/2}(\varepsilon/(C_{\nu,\kappa}B))^{1/\alpha}$
approximates every function between adjacent grid points within
$\varepsilon/4$. Its cardinality $m_\varepsilon$ satisfies
$\log m_\varepsilon\le C_{\nu,\kappa}d
\log(C_{\nu,\kappa}dB/\varepsilon)$.
Apply Lemma~\ref{lem:fat-covering} on this grid and extend its
centers to functions in $\mathcal G_B$.
The grid approximation and the fat-shattering bound give
\begin{equation}
\log\mathcal N(\varepsilon,\mathcal G_B,\|\cdot\|_\infty)
\le C_{\nu,\kappa}d^2(B/\varepsilon)^p
\log^2(C_{\nu,\kappa}dB/\varepsilon),\ 0<\varepsilon\le B.
\label{eq:appendix-rkhs-entropy}
\end{equation}
The constants absorb the fixed changes of covering radius.

\textit{Continuation classes.}
Take $B=\Lambda(B_r+uH)$.
Every candidate component belongs to $\mathcal G_B$, and
$g\mapsto\max_a g(\cdot,a)$ is $1$-Lipschitz in the uniform norm.
Thus, \eqref{eq:appendix-rkhs-entropy} also bounds the entropy of
$\mathcal V_h$, after a fixed change of radius to obtain centers
inside the class. The zero function belongs to $\mathcal V_h$,
and every member has supremum norm at most $\min\{H,B\}$.
Writing $s:=1-p/2=2\nu/(2\nu+d)$ and integrating gives
\begin{align*}
\int_0^H\sqrt{\log\bigl(2\mathcal N(\varepsilon,\mathcal V_h,
\|\cdot\|_\infty)\bigr)}\,d\varepsilon
&\le CH+C_{\nu,\kappa}dB
\int_0^1x^{-p/2}\log(C_{\nu,\kappa}d/x)\,dx\\
&=CH+C_{\nu,\kappa}dB
\left[\frac{\log(C_{\nu,\kappa}d)}s+\frac1{s^2}\right]
\le C_{\nu,\kappa}\left[H+\frac{dB}{s^2}\right].
\end{align*}
Here $\log(C_{\nu,\kappa}d)\le C_{\nu,\kappa}/s$.
This proves \eqref{eq:continuation-entropy-bound}; we choose its
constant large enough that $J$ is also at least twice the entropy
integral, uniformly over $h$.
\end{proof}

\begin{proof}[Proof of Lemma~\ref{lem:local-ridge}]
We first control the response noise for all continuation values.
The exact feature decomposition then converts this noise bound
into the claimed regression inequality.

Fix a partition with at most $M$ cells.
For a cell $c$ and step $h$, let $x_i$ equal $\phi_c(z_i^h)$ when
observation $i$ is assigned to this cell and zero otherwise.
The assignment and $x_i$ are known given $\mathcal I_i^h$.
Write $G_t=\lambda I_D+\sum_{i<t}x_ix_i^\top$ and
$S_t(V):=\sum_{i<t}x_i[Y_i^h(V)-(\mathcal B^hV)(z_i^h)]$.
Since $\lambda=X^2$ and $\|x_i\|_2\le X$, the arithmetic--geometric
mean inequality gives
$\log(\det G_t/\lambda^D)\le D\log(1+T/D)$.
For $h<H$, split $S_t(V)=S_t(0)+S_{P,t}(V)$, where
$S_{P,t}(V):=\sum_{i<t}x_i[V(s_i^{h+1})-(P^hV)(z_i^h)]$.
For $h=H$, the continuation class is $\{0\}$ and the second term
is absent. Conditional Hoeffding's lemma makes the reward noise
$1/2$-sub-Gaussian. For every deterministic bounded function $U$,
the increment $U(s_i^{h+1})-(P^hU)(z_i^h)$ is conditionally
$\|U\|_\infty$-sub-Gaussian. Therefore,
Lemma~\ref{lem:self-normalized} and this determinant bound give
\[
\|S_{P,t}(U)\|_{G_t^{-1}}
\le\|U\|_\infty
\sqrt{D\log(1+T/D)+2\log(1/\eta')}
\]
simultaneously over $t\le T$, with failure probability at most
$\eta'$. The same bound with scale $1/2$ applies to $S_t(0)$.

To make the bound uniform in $V$, set $\varepsilon_q:=H2^{-q}$
and choose a smallest internal $\varepsilon_q$-net
$\mathcal N_q$ of $\mathcal V_h$, for $q\ge1$.
For every $V$, choose $V^{(q)}\in\mathcal N_q$ within
$\varepsilon_q$ of $V$, and put $V^{(0)}=0$.
Then $\|V^{(q)}-V^{(q-1)}\|_\infty\le3\varepsilon_q$.
Apply the preceding bound to all such pairs, cells, and steps,
with failure probability
$\eta2^{-q}/(2MH|\mathcal N_q||\mathcal N_{q-1}|)$ per pair,
where $\mathcal N_0=\{0\}$.
Also allocate $\eta/(2MH)$ to each reward-noise bound.
The total failure probability is at most $\eta$.

On this event, summing the increments gives, with
$L:=D\log(1+T/D)+\log(8MH/\eta)$,
\begin{align*}
\|S_t(V)\|_{G_t^{-1}}
&\le C H\sqrt L
+C\sum_{q\ge1}\varepsilon_q
\left[\sqrt q+\sqrt{\log(2|\mathcal N_q|)}\right]\\
&\le C\left[H\sqrt L+
\int_0^H\sqrt{\log\bigl(2\mathcal N(\varepsilon,\mathcal V_h,
\|\cdot\|_\infty)\bigr)}\,d\varepsilon\right]
\le\beta(M,\eta).
\end{align*}
Indeed, the entropy summand is at most twice its integral over
$[\varepsilon_q/2,\varepsilon_q]$, and
$\sum_q\varepsilon_q\sqrt q\le CH$.
The finite-net approximation converges in the displayed norm:
$\|x_i\|_{G_t^{-1}}\le1$ implies
$\|S_{P,t}(V-V^{(q)})\|_{G_t^{-1}}\le2T\varepsilon_q\to0$.
The event uses countably many fixed functions, so its extension
to every $V\in\mathcal V_h$ also covers data-dependent choices.
No independence between the reward and next state is required.

Now fix $z\in c$ and put $m:=\mathcal B^hV\in\mathcal H_0$ and
$\theta:=\theta_c(m)$, using Lemma~\ref{lem:bellman-completeness}
and the reference-kernel norm comparison.
Normalized Legendre orthogonality gives
$\|\theta\|_2\le\|m\|_{L^\infty(\overline c)}\le H$.
The normal equations and \eqref{eq:feature-decomposition} give
the exact identity
\begin{align*}
m(z)-\widehat{\mathcal B}_t^h[V](z)
-\langle m,g_t^h(z)\rangle_{\mathcal H_0}
&=\lambda\phi_c(z)^\top G_t^{-1}\theta
-\phi_c(z)^\top G_t^{-1}S_t(V),\\
\left|m(z)-\widehat{\mathcal B}_t^h[V](z)
-\langle m,g_t^h(z)\rangle_{\mathcal H_0}\right|
&\le\left(\sqrt\lambda H+\|S_t(V)\|_{G_t^{-1}}\right)w_t^h(z)
\le\overline\beta(M,\eta)w_t^h(z).
\end{align*}
The noise event is uniform over $t,h,c,V$, and this algebraic
inequality holds for every query $z$. Empty groups have
$G_t=\lambda I_D$ and $S_t(V)=0$, so they satisfy the same bounds.
This proves both assertions of the lemma.
\end{proof}

\subsection{Proofs for Section~\ref{sec:sovi-version-space}}

\begin{proof}[Proof of Lemma~\ref{lem:bellman-residual-decomposition}]
We apply the exact local feature decomposition to the Bellman residual
and use the noise event from Lemma~\ref{lem:local-ridge}.
Fix $t,h,f,c$, and abbreviate $G:=G_t^h(c)$ and
$\mathcal T:=\mathcal T_t^h(c)$.
Both $f^h$ and $\mathcal B^hV_f^{h+1}$ belong to $\mathcal H_0$
and take values in $[0,H-h+1]$.
Hence $\|\Delta_f^h\|_\infty\le H$ and, by the coefficient bound
following \eqref{eq:feature-decomposition},
$\|\theta_c(\Delta_f^h)\|_2\le H$.
Substituting $d_i^h(f)=\Delta_f^h(z_i^h)-\xi_i^h(V_f^{h+1})$
into $v_{t,c}^h(f)$ and using
$G-\lambda I_D=\sum_{i\in\mathcal T}\phi_c(z_i^h)\phi_c(z_i^h)^\top$
gives
\begin{align*}
&\Delta_f^h(z)-\phi_c(z)^\top G^{-1}v_{t,c}^h(f)
-\langle\Delta_f^h,g_t^h(z)\rangle_{\mathcal H_0}\\
&=\lambda\phi_c(z)^\top G^{-1}\theta_c(\Delta_f^h)
+\phi_c(z)^\top G^{-1}
\sum_{i\in\mathcal T}\phi_c(z_i^h)\xi_i^h(V_f^{h+1}).
\end{align*}
Since $G\succeq\lambda I_D$, the absolute value of the last line
is at most
\[
w_t^h(z)\left[
\sqrt\lambda\|\theta_c(\Delta_f^h)\|_2
+\left\|\sum_{i\in\mathcal T}\phi_c(z_i^h)
\xi_i^h(V_f^{h+1})\right\|_{G^{-1}}\right]
\le\bigl(\sqrt\lambda H+\beta(M,\eta)\bigr)w_t^h(z).
\]
The final inequality holds simultaneously for all the stated
arguments on the noise event of Lemma~\ref{lem:local-ridge}.
This proves the claim without any additional failure probability.
\end{proof}

The Gaussian family used in the second test is well defined on the
cell closures. The spectral calculation in the proof of
Lemma~\ref{lem:continuation-entropy} gives a positive H\"older
exponent for $\Phi$; subtracting the finite polynomial projection
preserves this property on each closure. Gaussian increments have
moments of every order, so Kolmogorov's continuity criterion gives
a version continuous on each cell closure. Euclidean grid covers
and Lemma~\ref{lem:gaussian-entropy-comparison} give a finite
expected supremum on each closure, and hence on their finite union.
This justifies the expectation defining $q_0$.

The second test uses a quadratic term to control weighted observation
noise uniformly over continuation values. The following exponential
bound gives the numerical constant in its definition.

\begin{lemma}[An exponential bound for weighted continuation noise]
\label{lem:innovation-offset-moment}
Fix $h\in[H]$, $p\in[T]$, and finite predictable scalar weights
$(a_i)_{i<p}$ that are the same for every $V\in\mathcal V_h$.
Choose the constant in Lemma~\ref{lem:continuation-entropy}
large enough that its entropy integral is at most $J/2$.
For either choice of sign,
\[
\mathbb E\exp\rbr{
\frac{\displaystyle
\sup_{V\in\mathcal V_h}[\pm\sum_{i<p}a_i\xi_i^h(V)]
-\frac1{16}\sum_{i<p}a_i^2}{320H^2}
}\le\exp\{J^2/H^2\}.
\]
\end{lemma}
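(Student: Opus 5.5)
The plan is to split the weighted noise into a reward part and a continuation part, $\xi_i^h(V)=\xi_i^h(0)+\zeta_i(V)$ with $\zeta_i(V):=V(s_i^{h+1})-(P^hV)(z_i^h)$ for $h<H$; at $h=H$ only the reward part appears. Write $\lambda_0:=1/(320H^2)$. After Cauchy--Schwarz in the exponent, $\mathbb E e^{\lambda_0(X+Y)}\le(\mathbb E e^{2\lambda_0X})^{1/2}(\mathbb E e^{2\lambda_0Y})^{1/2}$, it suffices to handle each part separately, with the quadratic offset $\frac1{16}\sum a_i^2$ split in half between them.

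\textbf{Reward part.} Conditionally on $\mathcal I_i^h$, the term $\xi_i^h(0)$ is $1/2$-sub-Gaussian by Hoeffding's lemma, and the weights $a_i$ are predictable. Hence
\[
\exp\Bigl\{\theta\sum_{i<p}a_i\xi_i^h(0)-\frac{\theta^2}{8}\sum_{i<p}a_i^2\Bigr\}
\]
is a supermartingale with expectation at most one for every fixed $\theta$. With $\theta=2\lambda_0$, the compensator $\theta^2\sum a_i^2/8$ is far smaller than the available $\theta\cdot\frac1{32}\sum a_i^2$, because $\theta\le 1/160$. So this factor is bounded by one.

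\textbf{Continuation part.} This is a supremum over $V\in\mathcal V_h$, and I would handle it by chaining in the exponent, which is exactly the mechanism behind $J$ in Lemma~\ref{lem:continuation-entropy}. Take the nets $\mathcal N_q$ at scales $\varepsilon_q=H2^{-q}$, as in the proof of Lemma~\ref{lem:local-ridge}, and write $V=\sum_q\bigl(V^{(q)}-V^{(q-1)}\bigr)$. Each link $U$ satisfies $\|U\|_\infty\le 3\varepsilon_q$, so $a_i\zeta_i(U)$ is conditionally $3\varepsilon_q|a_i|$-sub-Gaussian.

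For a fixed link, the process $\exp\{\theta\sum a_i\zeta_i(U)-\tfrac{\theta^2}{2}9\varepsilon_q^2\sum a_i^2\}$ is again a supermartingale. Take a maximum over the at most $|\mathcal N_q||\mathcal N_{q-1}|$ links by bounding it with their sum. Then use a weighted combination across $q$, via Hölder with weights $\pi_q\propto\varepsilon_q\sqrt{\ln(2|\mathcal N_q|)}$ normalized by the entropy sum. Together these give
\[
\ln\mathbb E\exp\Bigl\{\theta\sup_V\sum a_i\zeta_i(V)-c\,\theta^2H^2\sum a_i^2\Bigr\}\lesssim\sum_q\frac{\ln(2|\mathcal N_q|^2)}{\pi_q^{-1}}.
\]
After optimizing, the right-hand side is bounded by $\bigl(\sum_q\varepsilon_q\sqrt{\ln 2|\mathcal N_q|}\bigr)^2/H^2\le (J/2)^2\cdot 4/H^2$. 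This is where the choice that the entropy integral is at most $J/2$ enters.

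The role of the offset is that the quadratic variance terms $c\theta^2\varepsilon_q^2\sum a_i^2/\pi_q$, summed over $q$, are dominated by $\theta\cdot\frac1{32}\sum a_i^2$ once $\theta=2\lambda_0=1/(160H^2)$. This holds because $\sum_q\varepsilon_q^2/\pi_q\lesssim H^2\sum_q\ldots$, and the constant $320$ is chosen precisely to absorb these numerical factors. The limiting step from finite chains to all of $\mathcal V_h$ follows as in Lemma~\ref{lem:local-ridge}, since the residual is at most $2T\max|a_i|\varepsilon_q\to0$.

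\textbf{Main obstacle.} The delicate point is carrying out the chaining inside an exponential moment, rather than on a high-probability event, while keeping the variance penalty proportional to $\sum a_i^2$ with a universal constant. I would first try the generic-chaining moment bound obtained by applying Hölder over levels, with the level weights $\pi_q$ chosen as above. If the level-dependent variance factors do not sum cleanly, the fallback is to convert the tail bound already proved in Lemma~\ref{lem:local-ridge} into an exponential moment by integrating the tail. Either route yields $\exp\{J^2/H^2\}$ after squaring back through Cauchy--Schwarz.
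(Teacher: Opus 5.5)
Your architecture matches the paper's. Splitting off the reward term by Cauchy--Schwarz amounts to giving it H\"older weight $1/2$; you then use a supermartingale per link, a sum over the links of each level, H\"older across levels, and a limit in the chain. The gap is in the step that carries the content, the choice of level weights, and as written it fails. Let $b_q=3\varepsilon_q$, $Q=\sum_{i<p}a_i^2$, and $\ell_q$ the log-number of level-$q$ links. If level $q$ gets weight $\pi_q$, it runs at rate $\lambda_q=\theta/\pi_q$, subtracts $\theta\lambda_qb_q^2Q/2$ from the exponent, and costs $\pi_q\ell_q$. Because $Q$ is random you cannot tune $\lambda_q$ against it. These terms must fit under the share $sQ$ of the offset given to this factor, so $\sum_q\lambda_qb_q^2/2\le s$, while the total cost is $\theta\sum_q\ell_q/\lambda_q$. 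Optimizing forces $\lambda_q\propto\sqrt{\ell_q}/b_q$, i.e.\ $\pi_q\propto\varepsilon_q/\sqrt{\ell_q}$, which is the reciprocal of your choice.

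With your $\pi_q=\varepsilon_q\sqrt{\ell_q}/S$, the cost is $S^{-1}\sum_q\varepsilon_q\ell_q^{3/2}$. For $\ell_q\asymp\varepsilon_q^{-p}$ with $2/3\le p<2$ (the growth permitted by the bound in the proof of Lemma~\ref{lem:continuation-entropy} when $d\ge\nu$), this diverges even though $S<\infty$. The variance sum $\sum_q\varepsilon_q^2/\pi_q=S\sum_q\varepsilon_q/\sqrt{\ell_q}$ can also be of order $SH\gg H^2$. So your display and the elided ``$\ldots$'' are not justified.

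The inverse weights alone also fail: levels with small $\ell_q$ get huge weight, and then $\sum_q\pi_q\le1$ breaks. The missing device is a geometric floor. Take $\lambda_q=\max\{2^q\theta,\sqrt{\ell_q}/(cAb_q)\}$ with $A=\sum_qb_q\sqrt{\ell_q}\le3\sqrt2J$. Then $\pi_q\le2^{-q}$, $\sum_q\lambda_qb_q^2/2\le\tfrac92\theta H^2+\tfrac1{2c}$, and the cost is at most $\theta cA^2$. The floor term $\tfrac92\theta H^2=\tfrac9{320}$ is harmless only because $\theta H^2$ is a fixed small number; that is what the $320$ pays for.

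Second, the constants do not close with your even split of the $\tfrac1{16}$ offset. For any weights with $\sum_q\pi_q\le1$, Cauchy--Schwarz gives $\sum_q\lambda_qb_q^2/2=\tfrac\theta2\sum_qb_q^2/\pi_q\ge\tfrac\theta2(\sum_qb_q)^2=\tfrac9{320}$. Hence $s=\tfrac1{32}=\tfrac{10}{320}$ leaves almost no room. The scheme above then needs $c\ge160$ and yields only $\exp\{A^2/(2H^2)\}\le\exp\{9J^2/H^2\}$ after your square root. The reward factor needs only $\theta/8$ of the offset, so give the rest to the continuation factor. Then $c=16$ is admissible and gives $\exp\{A^2/(20H^2)\}\le\exp\{J^2/H^2\}$, as stated.

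Your fallback does not rescue the argument. Lemma~\ref{lem:local-ridge} is a self-normalized bound whose radius carries the determinant term $D\ln(1+T/D)$ and an unspecified constant. Integrating its tail would put $\ln T$ and unknown constants into the exponent instead of $J^2/H^2$.
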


\begin{proof}
Write $M(V):=\sum_{i<p}a_i\xi_i^h(V)$,
$Q:=\sum_{i<p}a_i^2$, and $\lambda_*:=(320H^2)^{-1}$.
We prove the positive-sign bound; reversing all noise signs proves
the other bound.
Let $\mathcal J_h$ denote the entropy integral for $\mathcal V_h$
in Lemma~\ref{lem:continuation-entropy}, so $\mathcal J_h\le J/2$.
Use covering numbers with centers in the class.
For $r_j:=H2^{-j}$, choose an $r_j$-net $\mathcal N_j$
containing zero, with at most
$2\mathcal N(r_j,\mathcal V_h,\|\cdot\|_\infty)$ points,
and set $\mathcal N_0:=\{0\}$.
Link each continuation's successive net approximants, choosing
zero itself as the approximant of zero at every level.
Every level-$j$ link has length at most $b_j:=3r_j$ in the
supremum norm. If $K_j$ is the number of distinct such links,
then
\begin{align*}
\ln K_j
&\le\ln\bigl(2\mathcal N(r_j,\mathcal V_h,\|\cdot\|_\infty)\bigr)
+\ln\bigl(2\mathcal N(r_{j-1},\mathcal V_h,\|\cdot\|_\infty)\bigr),\\
A:=\sum_{j\ge1}b_j\sqrt{\ln K_j}
&\le3\sqrt2\sum_{j\ge1}r_j
\sqrt{\ln\bigl(2\mathcal N(r_j,\mathcal V_h,\|\cdot\|_\infty)\bigr)}
\le6\sqrt2\mathcal J_h.
\end{align*}
The last inequality follows by integrating on the disjoint intervals
$[r_j/2,r_j]$.

Let $L_j$ be the maximum of $M(V)-M(V')$ over the level-$j$
links. Their centered next-state increments have conditional range
of length at most $2b_j$.
Conditional Hoeffding's lemma and the exponential supermartingale
bound therefore give, for deterministic $\lambda_j>0$,
\[
\mathbb E e^{\lambda_jL_j-\lambda_j^2b_j^2Q/2}\le K_j,
\ \mathbb E e^{\lambda_0M(0)-\lambda_0^2H^2Q/8}\le1.
\]
The second bound uses the reward range $[0,1]\subseteq[0,H]$.
Both bounds hold for predictable weights and allow dependence
between the reward and next state.

Suppose $A>0$ and choose
$\lambda_0:=2\lambda_*$ and
$\lambda_j:=\max\{2^{j+1}\lambda_*,
\sqrt{\ln K_j}/(16Ab_j)\}$ for $j\ge1$.
The H\"older weights $p_j:=\lambda_*/\lambda_j$ satisfy
$p_0=1/2$ and $p_j\le2^{-j-1}$, so their sum is at most one.
The total quadratic coefficient and entropy cost satisfy
\begin{align*}
\frac{\lambda_0H^2}{8}
+\sum_{j\ge1}\frac{\lambda_jb_j^2}{2}
&\le\frac{\lambda_*H^2}{4}
+9\lambda_*H^2\sum_{j\ge1}2^{-j}
+\frac1{32A}\sum_{j\ge1}b_j\sqrt{\ln K_j}
=\frac{77}{1280}<\frac1{16},\\
\sum_{j\ge1}p_j\ln K_j
&\le16\lambda_*A^2
\le\frac{18\mathcal J_h^2}{5H^2}\le\frac{J^2}{H^2}.
\end{align*}
For a finite chain, generalized H\"older's inequality, with any
unused weight assigned to the constant one, now yields
\[
\mathbb E\exp\left\{\lambda_*
\left[M(0)+\sum_{j=1}^nL_j-Q/16\right]\right\}
\le\exp\{J^2/H^2\}.
\]
The omitted net remainder is bounded uniformly in $V$ by
$2r_n\sum_{i<p}|a_i|$, which tends to zero at every finite prefix.
The chained maxima dominate $\sup_VM(V)$ in the limit, so
Fatou's lemma proves the claim.
If $A=0$, every level has a single link, which is the zero link
because the nets contain zero; the base reward bound suffices.
\end{proof}

\begin{proof}[Proof of Lemma~\ref{lem:uniform-confidence}]
The first test controls the historical residual prediction.
We combine this fact with Lemma~\ref{lem:innovation-offset-moment}
and integrate over the fixed auxiliary Gaussian law.

Fix $h$ and $p$, and write $c_i:=c(z_i^h)$.
For the Gaussian family on the cell closures, write
$W(g_i^h(z_i^h))$ for the linear combination of
$W_{c_i}(z_i^h)$ and $W_{c_i}(z_j^h)$,
$j\in\mathcal T_i^h(c_i)$, with the coefficients in
\eqref{eq:profiled-features}.
Set $Z_i(W):=\alpha_{\rm vs}W(g_i^h(z_i^h))$ and
$a_i(W):=\chi_i^h(Z_i(W))$.
For each fixed $W$, the feature $g_i^h$, query $z_i^h$, and
threshold are known before the current response is observed.
Thus $a_i(W)$ is predictable, satisfies $|a_i(W)|\le H$,
and is the same for all continuation values.
Lemma~\ref{lem:innovation-offset-moment}, Tonelli's theorem,
and Markov's inequality give, with probability at least
$1-\delta/(8HT)$,
\[
\ln\mathbb E_W\exp\left\{
\frac{\displaystyle
\sup_{V\in\mathcal V_h}\left[-\sum_{i<p}a_i(W)\xi_i^h(V)\right]
-\frac1{16}\sum_{i<p}a_i(W)^2}{320H^2}
\right\}
\le\frac{J^2}{H^2}+\ln\frac{8HT}{\delta}.
\]
A union bound makes this inequality simultaneous over all
$2\le h\le H$ and $p\in[T]$, with failure probability at most
$\delta/8$.
The integration uses the original Gaussian law, independent of
the observations. For each realized history, its adjusted
coordinates have exactly the finite-dimensional law $\mathsf P_p^h$.

For any $t\ge p$ and $f\in\mathcal F_t^1$, all historical
constraints in \eqref{eq:coarse-feasibility} hold.
Cauchy--Schwarz gives
$|\phi_{c_i}(z_i^h)^\top G_i^h(c_i)^{-1}v_{i,c_i}^h(f)|
\le\beta w_i^h(z_i^h)$ for every $i<p$.
Using the definition of $U_i^h(f)$, we obtain
\begin{align*}
&\sum_{i<p}\left[a_i(W)U_i^h(f)
-\beta w_i^h(z_i^h)|a_i(W)|-a_i(W)^2/16\right]\\
&\le\sum_{i<p}\left[a_i(W)\Delta_f^h(z_i^h)
-a_i(W)\xi_i^h(V_f^{h+1})-a_i(W)^2/16\right]\\
&\le H(p-1)\|\Delta_f^h\|_\infty
+\sup_{V\in\mathcal V_h}\left[-\sum_{i<p}a_i(W)\xi_i^h(V)\right]
-\frac1{16}\sum_{i<p}a_i(W)^2.
\end{align*}
Exponentiating, integrating, and applying the preceding bound
proves \eqref{eq:innovation-statistic-bound}, with room for its
additional constant $1$.
The event is uniform over $V$, so the conclusion holds for every
retained candidate and every $t\ge p$, including data-dependent
candidates. The empty prefix $p=1$ satisfies the claim directly.
\end{proof}

For completeness, $v_{p,c}^h(Q_*)=-\sum_{i<p:z_i^h\in c}
\phi_c(z_i^h)\xi_i^h(V_*^{h+1})$, so
Lemma~\ref{lem:local-ridge} makes $Q_*$ pass the first test with
probability at least $1-\delta/8$.
Since $\Delta_{Q_*}^h=0$, Lemma~\ref{lem:uniform-confidence}
makes it pass the second test on an event of probability at least
$1-\delta/8$ whenever it passes the first.
Intersecting these events gives $Q_*\in\mathcal F_t$ for every
$t\in[T]$ with probability at least $1-\delta/4$.
All test parameters are fixed, and episode $t+1$ retains every
constraint from episode $t$. Hence
$\mathcal F_{t+1}\subseteq\mathcal F_t$ on every history.

\subsection{Proofs for Section~\ref{sec:sovi-first-selection}}

\begin{proof}[Proof of Lemma~\ref{lem:initial-confidence}]
We apply the local regression bound separately to each level and
epoch, and then combine their confidence events.
Fix $(l,e)$ and use the sample-inclusion indicator
$\mathbf1\{l_i=l,e_i=e\}$ with partition $\mathcal C_l$.
The level and epoch are determined before the first-step response
is observed, so this assignment is predictable.
Lemma~\ref{lem:local-ridge}, with failure probability
$\delta_{l,e}$, therefore gives \eqref{eq:initial-confidence}
at every visit to this epoch, uniformly over all continuations
and queries. The assigned observations are exactly
$\widetilde{\mathcal T}_t(c)$, including at the epoch's first visit,
when these sets are empty.
The probability that any such bound fails is at most
\[
\sum_{l,e\ge0}\delta_{l,e}
=\frac{\delta}{64}\left(\sum_{j\ge1}j^{-2}\right)^2
=\frac{\pi^4\delta}{2304}<\frac{\delta}{8}.
\]
Since $V_f^2\in\mathcal V_1$ for every $f\in\mathcal F$,
this proves the stated simultaneous bound.
\end{proof}

\begin{proof}[Proof of Lemma~\ref{lem:finite-remainder-comparison}]
Fix a level and epoch $(l,e)$.
We first show that one element of the remainder class represents
the true model for all continuations. A fixed net approximation
to this element then remains accurate after the regression adjustment.

The map $s'\mapsto P^1(s'\mid\cdot)$ is strongly measurable
in $\mathcal H_0$: joint measurability gives measurable inner
products against finite kernel expansions, density extends this
to all of $\mathcal H_0$, and $\mathcal H_0$ is separable.
Also, $\|P^1(s'\mid\cdot)\|_{\mathcal H_0}\le\Lambda u$.
Identify $\mathcal H_0\otimes(\mathbb R\oplus L^2(\mu))$
with $\mathcal H_0\oplus L^2(\mu;\mathcal H_0)$ and put
$\Theta_*:=(r^1,(H-1)P^1(\cdot\mid\cdot))$.
Since $\mu(\cS)=1$, its norm and the corresponding remainder satisfy
\begin{align*}
\|\Theta_*\|^2
&=\|r^1\|_{\mathcal H_0}^2
+(H-1)^2\int_{\cS}\|P^1(s'\mid\cdot)\|_{\mathcal H_0}^2\,\mu(ds')
\le\Lambda^2\bigl[B_r^2+(H-1)^2u^2\bigr],\\
\tau_*(z,V)
&:=\left\langle\Theta_*,
\Psi_{\mathcal C_l}(z)\otimes(1,V/(H-1))\right\rangle
=\langle\mathcal B^1V,\Psi_{\mathcal C_l}(z)\rangle_{\mathcal H_0}.
\end{align*}
Thus $\tau_*\in\mathcal R_l$.

The finite net used by the algorithm exists as well.
The continuation class $\mathcal V_1$ is compact in the uniform
norm by Lemma~\ref{lem:continuity-compactness} and the
$1$-Lipschitz map $f\mapsto\max_a f(\cdot,a)$.
On each cell closure, the associated expression for
$\Psi_{\mathcal C_l}$ is continuous in $\mathcal H_0$.
Consequently, the lifted feature set over the finitely many cell
closures and $\mathcal V_1$ is compact.
The bounded linear forms defining $\mathcal R_l$ are uniformly
bounded and equicontinuous on this compact set.
Arzel\`a--Ascoli gives total boundedness in the uniform norm,
and hence a finite internal $\varepsilon$-net.

Choose an index $j_*$ with
$\|\tau_{j_*}-\tau_*\|_\infty\le\varepsilon$.
Both the net and $\tau_*$ are fixed before interaction, so this
index does not vary within the epoch.
Write $\Delta\tau:=\tau_{j_*}-\tau_*$, and, for a query $z\in c$,
abbreviate $\mathcal T:=\widetilde{\mathcal T}_t(c)$,
$n:=|\mathcal T|$, and $G:=G_t(c)$.
Since $\sum_{i\in\mathcal T}\phi_c(z_i^1)\phi_c(z_i^1)^\top\preceq G$,
Cauchy--Schwarz yields
\begin{align*}
\left|\Delta\tau(z,V)
-\phi_c(z)^\top G^{-1}
\sum_{i\in\mathcal T}\phi_c(z_i^1)\Delta\tau(z_i^1,V)\right|
&\le\varepsilon+\varepsilon\sqrt n\,w_t(z)
=\varepsilon\bigl(1+\sqrt n\,w_t(z)\bigr).
\end{align*}
The same regression adjustment applied to $\tau_*$ is exactly
$\langle\mathcal B^1V,g_t(z)\rangle_{\mathcal H_0}$.
Substituting the score definition and applying
Lemma~\ref{lem:initial-confidence} therefore gives
\[
\left|\widehat Q_{t,j_*}(a,f)-\bar b_t(z)
-(\mathcal B^1V_f^2)(z)\right|
\le b_t(z)+\varepsilon\bigl(1+\sqrt n\,w_t(z)\bigr)
=\bar b_t(z),\ z=(s_t^1,a).
\]
Rearranging proves both inequalities, simultaneously over the
epoch's visits, all grid actions, and $f\in\mathcal F_t$.
\end{proof}

\begin{proof}[Proof of Lemma~\ref{lem:first-step-feedback}]
We construct feasible coefficients whose second moments are
small on average under the recommendation distribution.
The minimizing coefficients inherit these bounds; polynomial
reproduction then controls their bias.

Condition on the history and current initial state, and fix the
current level $l$ and exploration probability $\gamma$.
Write $q(a):=\sum_j\omega_{t,j}\mathbf1\{a_t(j)=a\}$ for the
recommendation distribution and $p:=p_t$ for the executed-action
distribution. Fix a block $U\in\mathscr U_l$, and write
$q(S):=\sum_{x\in S}q(x)$ for $S\subseteq U$. For $x\in U$, define
\begin{align*}
\widetilde p_S(x)&:=(1-\gamma)q(x)\mathbf1\{x\in S\}
+\frac{\gamma}{L_l}\pi_U^{\rm G}(x), A_S:=\sum_{x\in U}\widetilde p_S(x)\varphi_U(x)\varphi_U(x)^\top, 
\ell_S(a):=\varphi_U(a)^\top A_S^{-1}\varphi_U(a).
\end{align*}
$A_S$ are positive definite on the chosen feature span,
and $\widetilde p_S(x)\le p(x)$.
Starting with $S=U$, whenever
$(1-\gamma)q(S)>\gamma/L_l$, remove
\[
C_S:=\left\{a\in S:
\ell_S(a)\le\frac{2d_U}{(1-\gamma)q(S)}\right\}.
\]
The trace bound
$(1-\gamma)\sum_{a\in S}q(a)\ell_S(a)\le d_U$
implies $q(C_S)\ge q(S)/2$.
Thus at most $1+\lceil\log_2(L_l/\gamma)\rceil$ sets are removed.
For each removed set, assign its points the measure $\widetilde p_S$ and
matrix $A_S$ used at removal.
For points remaining at termination, assign
$\widetilde p_\varnothing=(\gamma/L_l)\pi_U^{\rm G}$ and
$A_\varnothing=(\gamma/L_l)\Sigma_U$.
This last assignment also covers blocks with $q(U)=0$.

For a query $a$ assigned to $S$ (or to $\varnothing$), define
$\widetilde h(a,x):=\widetilde p_S(x)p(x)^{-1}
\varphi_U(a)^\top A_S^{-1}\varphi_U(x)$ when $x\in U$
and $p(x)>0$, and set it to zero otherwise.
The covariance identity and $\widetilde p_S\le p$ give
\begin{align*}
\sum_xp(x)\widetilde h(a,x)\varphi_U(x)=\varphi_U(a),\ 
\sum_xp(x)\widetilde h(a,x)^2\le\ell_S(a),\ 
\sum_xp(x)|\widetilde h(a,x)|
\le\sqrt{\widetilde p_S(U)\ell_S(a)}.
\end{align*}
Here $\widetilde p_S(U):=\sum_{x\in U}\widetilde p_S(x)$; all sums
involving $\varphi_U$ are over $U$.
For a removed-set query, we have
$\widetilde p_S(U)\le2(1-\gamma)q(S)$, so the last bound is at most
$2\sqrt{d_U}$. For a remaining query it is at most $\sqrt{d_U}$.
Also $A_S\succeq(\gamma/L_l)\Sigma_U$ and
\eqref{eq:feedback-design} imply
\[
\ell_S(a)\le\frac{L_ld_U}{\gamma}\le\frac{r_l}{\gamma},\ 
|\widetilde h(a,x)|
\le\sqrt{\ell_S(a)\ell_S(x)}
\le\frac{L_ld_U}{\gamma}\le\frac{r_l}{\gamma}.
\]
Thus these coefficients satisfy every constraint in
\eqref{eq:minimum-variance-feedback}.
On the positive-mass support its feasible set is compact, so a
minimizer exists. Its objective is no larger than that of
$\widetilde h$, proving the pointwise second-moment bound.

Each removed set contributes at most $2d_U/(1-\gamma)$ to
$\sum_aq(a)\ell_S(a)$.
The remaining set has recommendation mass at most
$\gamma/[L_l(1-\gamma)]$ and leverage at most
$L_ld_U/\gamma$, so its contribution is at most
$d_U/(1-\gamma)$.
Summing over these sets and blocks yields
\begin{align*}
\sum_{j=1}^M\omega_{t,j}
\sum_xp(x)h_t(a_t(j),x)^2
&=\sum_aq(a)\sum_xp(x)h_t(a,x)^2\le\frac{3+2\lceil\log_2(L_l/\gamma)\rceil}{1-\gamma}
\sum_{U\in\mathscr U_l}d_U
\le C r_l\ln(\mathrm e L_l/\gamma),
\end{align*}
where we used $\gamma\le1/2$ and
$\sum_Ud_U\le r_l$.
The removal procedure serves only to establish these bounds for
the convex minimizer.

Finally, fix $a\in U$ and $V\in\mathcal V_1$ as in the lemma.
Choose a polynomial approximation with residual
$e(x):=(\mathcal B^1V)(s_t^1,x)-\varphi_U(x)^\top b$
satisfying $\max_{x\in U}|e(x)|\le\epsilon$.
Such a best approximation exists because $U$ is finite.
The observation law gives
$\mathbb E_t[Y_t^1(V)\mid a_t^1=x]=(\mathcal B^1V)(s_t^1,x)$,
and the reproduction constraint therefore implies
\begin{align*}
\left|\mathbb E_t\widehat\mu_t(a,V)
-(\mathcal B^1V)(s_t^1,a)\right|
&=\left|\sum_{x\in U}p(x)h_t(a,x)e(x)-e(a)\right|\le\epsilon\left(1+\sum_{x\in U}p(x)|h_t(a,x)|\right)
\le(1+2\sqrt{d_U})\epsilon.
\end{align*}
Squaring \eqref{eq:first-step-feedback-estimate}, using
$0\le Y_t^1(V)\le H$, and averaging over $a_t^1\sim p_t$
gives the stated second-moment bound.
\end{proof}

\section{Proof of Theorem~\ref{thm:regret-upper}}
\label{app:section-5-proofs}
\label{app:upper-bound}

The proof follows the two decisions made by \texttt{SOVI}.
We first control selection of the first action and continuation
candidate, then control the Bellman residuals of the selected
continuation policy. Four lemmas establish the first bound and two
establish the second; the resulting propositions are combined at
the end of this section. Their proofs are in
Appendix~\ref{app:upper-auxiliary-proofs}.

Fix a model, an admissible initial-state adversary, and the budget
$T\ge2$. We retain $\rho$, $\mathfrak B$, and $\iota_T$ from the
algorithm and theorem. A constant $C$ is numerical, whereas
$C_{\nu,\kappa}$ depends only on $\nu,\kappa$; neither hides a
dimension or model-norm factor.

\paragraph{First-step selection.}
The first-step proof separates sample allocation, approximation, and
model comparison. The next lemma controls the amount of data assigned
to each level and the number of local regressions that these data occupy.

\begin{lemma}[Partition counts]
\label{lem:upper-partition-counts}
Let $T_l:=|\{t\le T:l_t=l\}|$, and let $P_l$ be the number of
cells of $\mathcal P_l$ containing an initial state assigned to level $l$.
Every counting cell receives at most $N_l$ selected visits, and
\[
T_l\le\min\{T,S_l^{d_S}N_l\},\ P_0\le1,\
P_l\le\min\{S_l^{d_S},(d_S+1)T/N_{l-1}\},\ l\ge1.
\]
Moreover, $|\mathcal C_l|\le S_l^d$ and
$L_{\max}+1\le C_\nu d\iota_T$.
\end{lemma}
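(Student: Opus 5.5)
The argument rests on the level-selection rule \eqref{eq:adaptive-cube} and on the geometry of the nested vertex grids. I would prove the claims in the order stated.

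\textbf{Per-cell visit cap and $T_l$.} An episode $t$ selects level $l$ only if $m_t(B_l(s_t^1))<N_l$. Here $m_t$ counts all earlier initial-state visits to that counting cell, including those assigned to other levels. Each selection of level $l$ from a cell $B\in\mathcal D_l$ therefore happens while fewer than $N_l$ prior visits have occurred, and every such selection increments $m(B)$. Hence at most $N_l$ selections of level $l$ occur from $B$. The counting partition has $(M_l+1)^{d_S}$ cells, and $M_l+1\le S_l$ for $M_l\ge2$, since $M_l\le S_l/(64d_S^2\sqrt{d_A})$ and $S_l\ge 2^{0}$ needs care. If $M_l=1$, there is a single cell when the grid is trivial. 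Combining the cell count with the per-cell cap gives $T_l\le S_l^{d_S}N_l$. The bound $T_l\le T$ is trivial.

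\textbf{Bound on $P_l$.} For $l=0$, $S_0=1$, so $\widetilde M_0=1$ and $\mathcal P_0$ is the single cube; hence $P_0\le1$. For $l\ge1$, the first bound $P_l\le|\mathcal P_l|=(\widetilde M_l+1)^{d_S}\le S_l^{d_S}$ follows from the count of vertices. For the second bound, note that if $s_t^1$ is assigned to level $l\ge1$, then level $l-1$ was rejected. Its counting cell $B_{l-1}(s_t^1)$ therefore already had at least $N_{l-1}$ visits.

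The key geometric step is that each regression cell $P\in\mathcal P_l$ that is hit must contain, or be matched with, a saturated cell of $\mathcal D_{l-1}$ near it. Because $\widetilde M_l\ll M_{l-1}$ (the constants $4096(d_S+1)d_S^{5/2}$ versus $64d_S^2$, and $S_l\le 2^{1/d}S_{l-1}$), each counting cell of $\mathcal D_{l-1}$ lies within the $\operatorname{dist}$-neighborhood of at most $d_S+1$ regression cells of $\mathcal P_l$. The point is that nearest-vertex cells under this polyhedral distance meet a bounded number of coarse cells, and a small cell near a boundary touches at most $d_S+1$ regions.

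The saturated cells of $\mathcal D_{l-1}$ are disjoint, and each has at least $N_{l-1}$ visits, so there are at most $T/N_{l-1}$ of them. Charging each hit regression cell to one saturated counting cell, with multiplicity at most $d_S+1$, gives $P_l\le (d_S+1)T/N_{l-1}$. I expect this charging step to be the main obstacle. It needs a precise statement that a nearest-vertex cell of the fine grid $\mathcal G_{l-1}$ intersects at most $d_S+1$ cells of the coarse grid $\widetilde{\mathcal G}_l$. I would prove it by noting that the fine cell has $\operatorname{dist}$-diameter at most about $2/M_{l-1}$, which is much smaller than $1/\widetilde M_l$. A set of that diameter can meet coarse Voronoi-type cells only near a vertex of the coarse arrangement, and the specific $\operatorname{dist}$ (the sum of the positive and negative max parts) yields simplicial-type cells, so at most $d_S+1$ of them meet there.

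\textbf{Remaining counts.} The bound $|\mathcal C_l|=|\mathcal P_l|\,|\mathscr U(S_l)|\le S_l^{d_S}S_l^{d_A}=S_l^d$ uses the stated cardinality $2^{\lfloor d_A\log_2 r\rfloor}\le r^{d_A}$. Finally, $L_{\max}+1=\lceil d\log_2S_{\max}\rceil+1\le d\log_2 T/(2\nu+d_A)+2$. This is at most $C_\nu d\iota_T$, since $\iota_T\ge\ln T\ge\ln2$ and $\iota_T\ge1$.
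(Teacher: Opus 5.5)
Your bookkeeping matches the paper's proof. This covers the distinct-prior-count argument for the per-cell cap, $T_l\le|\mathcal D_l|N_l$, the charging of each occupied cell of $\mathcal P_l$ to a saturated cell of $\mathcal D_{l-1}$ (there are at most $T/N_{l-1}$ of these), and the cell and level counts. (For the cell count, $M_l\ge2$ forces $S_l\ge2a$ with $a=64d_S^2\sqrt{d_A}$, so $M_l+1\le S_l/a+1\le S_l$; the one-cell branch has a single cell.)

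The gap is the step you flag as the main obstacle: that a cell of $\mathcal D_{l-1}$ meets at most $d_S+1$ cells of $\mathcal P_l$. Your sketch does not prove it, for three reasons. First, the cells are not simplices: each is the part of a vertex star of the Kuhn triangulation where that vertex's barycentric coordinate dominates. Second, knowing that at most $d_S+1$ cells meet at any single point does not bound how many cells a set of positive diameter meets. Barycenters of adjacent Kuhn simplices, at which different $(d_S+1)$-tuples of cells meet, can be only $\sqrt2/[\widetilde M_l(d_S+1)]$ apart. So once $d_S$ is large, a set whose diameter is a fixed fraction of $1/\widetilde M_l$ can meet $d_S+2$ cells, and your criterion ``much smaller than $1/\widetilde M_l$'' is not the right one. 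Third, the specific distance has to be used in an essential way: under the Euclidean nearest-vertex rule the cells are cubes, and $2^{d_S}$ of them meet at the center of every grid cube.

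The missing idea is the paper's hat-function and coloring argument. For a grid with $M\ge2$ intervals, $\lambda_v(s):=(1-M\operatorname{dist}(s,v))_+$ is the nodal function of the Kuhn triangulation, so the nearest-vertex rule selects a largest one. These functions sum to one, at most $d_S+1$ of them are positive at any state, and each is $\sqrt2M$-Lipschitz in the Euclidean norm. Hence $\lambda_v\ge1/(d_S+1)$ on the cell of $v$.

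Now color the vertices by $\sum_qMv_q$ modulo $d_S+1$. The vertices of each simplex receive distinct colors, so two distinct same-colored nodal functions are never simultaneously positive. The Lipschitz bound then separates distinct same-colored cells by Euclidean distance at least $1/[\sqrt2M(d_S+1)]$. Consequently any set of smaller diameter meets at most one cell per color, i.e., at most $d_S+1$ cells.

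It remains to compare scales. On a cell of $\mathcal D_{l-1}$ we have $\|s-v\|_\infty\le1/M_{l-1}$. Together with $M_{l-1}\ge S_{l-1}/(2a)$ and $S_l\le2S_{l-1}$, this bounds the cell's Euclidean diameter by $8a\sqrt{d_S}/S_l$. The separation for $\mathcal P_l$ is at least $a'/[\sqrt2(d_S+1)S_l]$, where $a'=4096(d_S+1)d_S^{5/2}\sqrt{d_A}=64(d_S+1)\sqrt{d_S}\,a$, and this exceeds $8a\sqrt{d_S}/S_l$. This is exactly why $\widetilde M_l$ carries the extra factor $(d_S+1)\sqrt{d_S}$. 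The one-cell branches are immediate, because $M_{l-1}=1$ forces $\widetilde M_l=1$.
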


The feedback estimator reproduces local action polynomials.
We next bound its approximation error, including the sum of errors
over the states admitted by the counting rule.

\begin{lemma}[Action polynomial approximation]
\label{lem:upper-action-approximation}
For the action blocks and degrees in Section~\ref{sec:first-step-g-optimal},
define
\[
\epsilon_l(s):=\sup_{V\in\mathcal V_1}\max_{U\in\mathscr U_l}
\inf_{\theta\in\mathbb R^{d_U}}\max_{a\in U}
\left|(\mathcal B^1V)(s,a)-\varphi_U(a)^\top\theta\right|.
\]
There is a nonnegative function $e_l$ such that
\[
\epsilon_l(s)\le e_l(s)+
\begin{cases}
T^{-2},&d_A<4\nu,\\
C_{\nu,\kappa}\bar B d_A^{\nu/2}T^{-\nu/(2\nu+d_A)},&d_A\ge4\nu,
\end{cases}
\ 
\sum_{B\in\mathcal D_l}\sup_{s\in B}e_l(s)^2
\le C_{\nu,\kappa}\bar B^2(d_S+1)d_A^\nu S_l^{-2\nu}.
\]
Consequently, we have
\begin{equation}
\begin{aligned}
\sum_{t:l_t=l}\epsilon_l(s_t^1)
&\le C_{\nu,\kappa}\bar B\sqrt{d_S+1}\,d_A^{\nu/2}
S_l^{-\nu}\sqrt{N_lT_l}+T_lT^{-2},\ &d_A<4\nu,\\
\sum_{t:l_t=l}\epsilon_l(s_t^1)
&\le C_{\nu,\kappa}\bar B d_A^{\nu/2}
\left(\sqrt{d_S+1}\,S_l^{-\nu}\sqrt{N_lT_l}
+T_lT^{-\nu/(2\nu+d_A)}\right),\ &d_A\ge4\nu.
\end{aligned}
\label{eq:upper-first-approximation-sum}
\end{equation}
The local action dimension obeys
\[
\frac{r_l}{L_l}=\binom{d_A+j_l}{j_l}
\le\begin{cases}
C_\nu\iota_T^{d_A},&d_A<4\nu,\\
C_\nu(2+d)^{\lceil2\nu\rceil},&d_A\ge4\nu.
\end{cases}
\]

For any $g\in\mathcal H_0$ with $\|g\|_{\mathcal H_0}\le B$,
there are local action-polynomial approximants $p_g(s,a)$ such that
$|g(s,a)-p_g(s,a)|\le\varepsilon_g(s)$ for all actions, with the
following ranks and $L^2(\mu)$ errors. Recall $\nu_S=\nu+d_S/2$
and use the action partition $\mathscr U(F)$.
\begin{enumerate}
\item If $F=a_{\rm mix}R$ and
$k':=\lceil8\mathrm e[d_A+\ln(2T)]\rceil$, the degree is $k'-1$,
the local rank is at most $[C\ln(\mathrm eT)]^{d_A}$, and
$\|\varepsilon_g\|_2\le B(R^{-\nu_S}+T^{-2})$.
Also $a_{\rm mix}\le C_{\nu,\kappa}(2+d)$.
\item If $d_A\ge4\nu$ and $d_S<d_A$, put $k:=\lceil2\nu\rceil+1$.
For any $F,R\ge1$, the degree is $k-1$, the local rank is
$\binom{d_A+k-1}{k-1}\le C_\nu(2+d)^{\lceil2\nu\rceil}$, and
\[
\|\varepsilon_g\|_2\le C_{\nu,\kappa}B
\left[(2+d)^{2\nu}R^{-\nu d/d_A}
+(2+d)^{\nu/2}F^{-k}R^{k-\nu}\right].
\]
\end{enumerate}
The envelope may depend on $g$; the same envelope controls all
action blocks, without summing its squared norm over those blocks.

\end{lemma}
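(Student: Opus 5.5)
The plan is to prove the action-polynomial approximation bounds by one mechanism: a Taylor expansion in the action variable of the spectral feature map $\varphi(s,a)=\exp(\mathrm i\omega^\top(s,a))$, $\omega=\kappa Z/\sqrt W$, from the proof of Lemma~\ref{lem:feature-remainder}. By Lemma~\ref{lem:bellman-completeness} and the norm comparison, every $\mathcal B^1V$ with $V\in\mathcal V_1$ lies in the $\mathcal H_0$-ball of radius $\bar B$. Hence it suffices to bound, uniformly over the ball, the error of a degree-$j$ action polynomial on a block $U$ of side at most $2/F$. Writing $g(s,a)=\langle g,\Phi(s,a)\rangle$, I will expand $\exp(\mathrm i\omega_A^\top(a-a_U))$ around the block center $a_U$ to degree $j$. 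The resulting coefficient is an $\mathcal H_0$-valued polynomial in $a$, so $p_g(s,a)$ is genuinely an action polynomial for each fixed $s$. The error is then at most $B$ times the $L^2$ norm over the spectral law of the Taylor remainder. Assuming the action displacement obeys $|a-a_U|\le\sqrt{d_A}/F$, this remainder is bounded by
\[
\min\Bigl\{2,\ \frac{(|\omega_A|\sqrt{d_A}/F)^{j+1}}{(j+1)!}\Bigr\}.
\]

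The first step is the case $d_A<4\nu$ with degree $k_T-1$. I will split the frequencies into $|\omega_A|\le F$ and $|\omega_A|>F$, with $F=S_l$. On the low-frequency set, the Taylor term is at most $(2\sqrt{d_A})^{k_T}/k_T!\le T^{-2}/\bar B$ by the definition of $k_T$, which yields the additive $T^{-2}$. The high-frequency tail is the part that can only be controlled on average over states, and it defines the envelope
\[
e_l(s)^2:=\bar B^2\,\mathbb E\bigl[\mathbf1\{|\omega_A|>S_l\}\,|\text{state-localized part}|^2\bigr].
\]
To sum $\sup_{s\in B}e_l(s)^2$ over counting cells, I will localize the state variable on cells of side about $1/S_l$ and use the state-frequency decay of the Mat\'ern spectrum. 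A counting cell of side $1/M_l$ meets at most $C(d_S+1)$ such pieces, and summing over cells is like an $L^2(\mu)$ integral, which gives $(d_S+1)d_A^\nu S_l^{-2\nu}$; the $d_A^\nu$ factor comes from $\mathbb P(|\omega_A|>r)\le C_\nu(\kappa\sqrt{d_A}/r)^{2\nu}$. The step I consider hardest is turning this $L^2$ tail into a sup-over-cell envelope. I would first try the Hölder continuity from the proof of Lemma~\ref{lem:continuation-entropy} within cells of side $\lesssim1/S_l$, and fall back on a Sobolev-type embedding on each cell.

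For $d_A\ge4\nu$, the degree is $k_A-1$ and $F_l=T^{\nu/[k_A(2\nu+d_A)]}S_l^{(k_A-\nu)/k_A}$. I will split at the frequency $|\omega_A|\le S_l$. Below it, the Taylor remainder is $(S_l\sqrt{d_A}/F_l)^{k_A}\approx T^{-\nu/(2\nu+d_A)}S_l^{\nu}$ up to $d_A$ factors, which together with the tail bound gives the stated $C\bar B d_A^{\nu/2}T^{-\nu/(2\nu+d_A)}$ term; this is where the exponents in $F_l$ are meant to balance. The tail above $S_l$ again goes into $e_l$ exactly as before. The summed bound \eqref{eq:upper-first-approximation-sum} then follows from Cauchy--Schwarz over visits. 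Grouping visits by counting cell, each cell receives at most $N_l$ visits by Lemma~\ref{lem:upper-partition-counts}, so
\[
\sum_t e_l(s_t^1)\le\sqrt{T_l\,N_l\sum_B\sup_B e_l^2}.
\]
The rank bounds are the count $\binom{d_A+j}{j}$. In the first case, $k_T\le C_\nu\iota_T$ by Stirling, giving $C_\nu\iota_T^{d_A}$. In the second, the bound is $\binom{d_A+k_A-1}{k_A-1}\le C_\nu(2+d)^{\lceil2\nu\rceil}$.

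For the two enumerated items, I will use the same expansion in $L^2(\mu)$ over states, with the joint spectral tail now taken in the state coordinates. In item 1, with degree $k'-1$ and $F=a_{\rm mix}R$, the Taylor error for $|\omega_A|\le F/(2\mathrm e\sqrt{d_A})$ is $\le e^{-k'}\le T^{-2}$. The tail is controlled through the marginal state-spectral density: integrating out $\omega_S$ in the $L^2(\mu)$ norm yields $R^{-\nu_S}$. The constant $a_{\rm mix}$ is chosen precisely so that this Gamma-ratio constant becomes $\le1$, and $a_{\rm mix}\le C_{\nu,\kappa}(2+d)$ follows by Stirling on $\Gamma(\nu_S)/\Gamma(\nu)$. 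Item 2 splits into $|\omega|\le R^{d/d_A}$, giving the $F^{-k}R^{k-\nu}$ Taylor term, and its complement, giving the $R^{-\nu d/d_A}$ tail term. In both items the envelope $\varepsilon_g$ is a single function of $s$ serving every block, since the expansion is uniform in the block center.
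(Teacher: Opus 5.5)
There is a genuine gap at the core of the lemma: the cellwise bound $\sum_{B\in\mathcal D_l}\sup_{s\in B}e_l(s)^2\le C_{\nu,\kappa}\bar B^2(d_S+1)d_A^\nu S_l^{-2\nu}$. Write $\omega=(\xi,\zeta)$ for the state and action frequencies, and $P_{>S}$ for restriction to $\|\zeta\|_2>S$.

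First, your reduction ``it suffices to bound uniformly over the $\bar B$-ball'' is harmless for the Taylor part but fatal here. Square-summability over cells holds only for one fixed function. An envelope valid for the whole ball must be of order $\bar BS_l^{-\nu}$ (up to constants) in every counting cell, because a needle of action width $\asymp1/S_l$ and norm $\bar B$ fits in any cell. Summing then costs an extra factor $|\mathcal D_l|$. The supremum over $V\in\mathcal V_1$ must instead be absorbed through the fixed environment. Take $e_l(s):=\sup_a|(P_{>S_l}r^1)(s,a)|+(H-1)\int\sup_a|(P_{>S_l}P^1(y\mid\cdot))(s,a)|\,\mu(dy)$. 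It dominates the high-pass part of every $\mathcal B^1V$ because $|V|\le H-1$. Minkowski over the cell index then needs only $\|r^1\|_{\mathcal H_0}\le\Lambda B_r$ and $\|P^1(y\mid\cdot)\|_{\mathcal H_0}\le\Lambda u$.

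Second, you have not identified why the high-pass part is square-summable over cells at all. $P_{>S}$ cuts only action frequencies, so ``state-frequency decay'' is not the mechanism. Your count is also reversed. A counting cell has side $\asymp d_S^2\sqrt{d_A}/S_l$, so it contains many pieces of side $1/S_l$. The factor $d_S+1$ is instead the number of color classes of the counting grid. The H\"older route fails as well: its exponent is at most $\nu/2$, so per-cell oscillation bounds summed over $|\mathcal D_l|$ cells are nowhere near $S_l^{-2\nu}$, and the Sobolev fallback is unspecified.

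The missing idea is the coupling through $W$ in $\omega=\kappa Z/\sqrt W$. The event $\|\zeta\|_2>S$ forces $W$ to be of order at most $d_A\kappa^2/S^2$, hence a short state correlation length. Quantitatively, $|\langle P_{>S}\Phi(s,a),P_{>S}\Phi(s',a')\rangle|\le C_\nu\kappa^{2\nu}d_A^\nu S^{-2\nu}e^{-S\|s-s'\|_2/(2\sqrt{d_A})}$ uniformly in the actions. Pick arbitrary representatives $(s_B,a_B)$ with $s_B\in B$ from one well-separated color class. The high-pass Gram matrix $K$ of these points has row sums at most $C_{\nu,\kappa}d_A^\nu S_l^{-2\nu}$. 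Then $\sum_B|(P_{>S_l}g)(s_B,a_B)|^2\le\|K\|_{\mathrm{op}}\|g\|_{\mathcal H_0}^2$ bounds the sum of suprema directly, with no $L^2$-to-supremum conversion.

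Your Taylor step for $d_A<4\nu$, the Cauchy--Schwarz summation over visits, and the rank bounds are fine. The other approximation estimates, however, fall short of the statement as written.

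\emph{Case $d_A\ge4\nu$.} Your low-frequency bound $(S_l\sqrt{d_A}/F_l)^{k_A}$ equals $T^{-\nu/(2\nu+d_A)}S_l^{\nu}$ up to $d_A$-factors, an excess of $S_l^\nu$. The paper averages over the spectral law instead. From $\Pr(\|\zeta\|_2>x)\le C_{\nu,\kappa}d_A^\nu x^{-2\nu}$ and $k_A>\nu$ one gets $\mathbb E[\|\zeta\|_2^{2k_A}\mathbf 1\{\|\zeta\|_2\le S_l\}]\le C d_A^\nu S_l^{2k_A-2\nu}$. Rotational symmetry, $\mathbb E|U^\top v|^{2k}=(2k-1)!!\,\|v\|_2^{2k}/[d_A(d_A+2)\cdots(d_A+2k-2)]$ for $U$ uniform on the sphere, cancels the $d_A^{k_A/2}$ from $\|a-a_U\|_2\le\sqrt{d_A}/F_l$. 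This leaves $C_{\nu,\kappa}d_A^{\nu/2}F_l^{-k_A}S_l^{k_A-\nu}=C_{\nu,\kappa}d_A^{\nu/2}T^{-\nu/(2\nu+d_A)}$.

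\emph{Item 2.} Splitting at $|\omega|\le R^{d/d_A}$ puts action frequencies up to $R^{d/d_A}$ into the Taylor part. Its error is then of order $F^{-k}R^{(k-\nu)d/d_A}$, not $F^{-k}R^{k-\nu}$. This matters downstream, where $F_{\rm vs}$ is tuned so that $TF_{\rm vs}^{-k}R^{k-\nu}=T^{e_A}$. The paper cuts action frequencies at $R$. It then interpolates the sup-norm tail $C_{\nu,\kappa}d_A^{\nu/2}R^{-\nu}$ with the $L^2(\mu)$ tail of order $R^{-\nu_S}$ at weight $\theta=2\nu/d_A$. This gives $R^{-\nu d/d_A}$ with only a $(2+d)^{2\nu}$ factor.

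\emph{Item 1.} Cut at $F$ itself. At $F/(2\mathrm e\sqrt{d_A})$ the tail grows by $(2\mathrm e\sqrt{d_A})^{\nu_S}$, which $a_{\rm mix}$ does not absorb. Meanwhile $k'\ge8\mathrm ed_A$ already makes the remainder at cutoff $F$ at most $4^{-k'}\le T^{-2}$. Moreover, the quantity to bound is $\|\sup_a|(P_{>F}g)(\cdot,a)|\|_{L^2(\mu)}$ with a state-dependent maximizer, which a fixed-action Fourier computation does not control. The paper bounds the high-pass kernel, uniformly in actions, by $\sqrt2A_F^{-\nu}k_\nu(\kappa\sqrt{A_F}\|s-s'\|_2)$. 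Here $A_F:=1+F^2/(2d_A\kappa^2)$ and $k_\nu$ is the unit-lengthscale radial Mat\'ern function. It then applies Schur's test on the graph of an arbitrary measurable action selector.
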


The regression remainder is modeled jointly across all continuations.
The following bound controls how many fixed models are required;
only its logarithm enters the model-comparison cost.

\begin{lemma}[Covering the remainder models]
\label{lem:upper-remainder-cover}
For every level $l$ and $\varepsilon>0$, the class $\mathcal R_l$
has an internal $\varepsilon$-net of minimum cardinality $M$ satisfying
\begin{equation}
\ln M\le C_{\nu,\kappa}\left[
1+\frac{\bar B^2S_l^{-2\nu}}{\varepsilon^2}
(2+d)^{\rho+9\nu}
\left(d\iota_T+\frac{J^2}{(H-1)^2}\right)\right].
\label{eq:upper-remainder-cover}
\end{equation}
The net is fixed independently of the observations.
\end{lemma}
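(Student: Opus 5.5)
The plan is to view $\mathcal R_l$ as the image of a Hilbert ball under a bounded linear map into $\ell^\infty$ of a compact index set, and to bound its sup-norm entropy by a Gaussian (Sudakov/dual-Sudakov) argument. Put $\mathcal K:=\mathcal H_0\otimes(\mathbb R\oplus L^2(\mu))$ and $\bar R:=\Lambda\sqrt{B_r^2+(H-1)^2u^2}\le\sqrt2\,\bar B$. The index set is $\mathcal I:=\{\Psi_{\mathcal C_l}(z)\otimes(1,V/(H-1)):z\in\cZ,\ V\in\mathcal V_1\}\subseteq\mathcal K$. Then $\mathcal R_l=\{\langle\Theta,\cdot\rangle|_{\mathcal I}:\|\Theta\|\le\bar R\}$, and its sup-norm distance is the dual seminorm of $\mathcal I$.

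By dual Sudakov minoration (Pajor--Tomczak-Jaegermann), for the Hilbert ball of radius $\bar R$ and the seminorm $\|\Theta\|_{\mathcal I}:=\sup_{x\in\mathcal I}|\langle\Theta,x\rangle|$,
\[
\ln\mathcal N(\varepsilon,\mathcal R_l,\|\cdot\|_\infty)\le C\,\frac{\bar R^2}{\varepsilon^2}\Bigl(\mathbb E\sup_{x\in\mathcal I}|G(x)|\Bigr)^2,
\]
where $G$ is the isonormal Gaussian process on $\mathcal K$. The passage to an internal net at radius $\varepsilon$ costs only a constant change of radius, and compactness from Lemma~\ref{lem:finite-remainder-comparison} guarantees finiteness. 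The added $1$ in \eqref{eq:upper-remainder-cover} covers the case $\varepsilon\ge$ diameter.

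The remaining task is to show
\[
\mathbb E\sup_{\mathcal I}|G|\le C_{\nu,\kappa}S_l^{-\nu}(2+d)^{(\rho+9\nu)/2}\sqrt{d\iota_T+J^2/(H-1)^2}.
\]
I would split $G(\Psi(z)\otimes(1,v))=W(z)+\langle W'(z),v\rangle_{L^2(\mu)}$, where $W$ is the Gaussian process with covariance $\langle\Psi(z),\Psi(z')\rangle_{\mathcal H_0}$. Here $W'$ is an independent $L^2(\mu)$-valued family, which may be realized as $W'(z)(s')=\sum_k\gamma_k(s')\ldots$; concretely, conditional on $V$ it is again a process with covariance $\langle\Psi(z),\Psi(z')\rangle\langle v,v'\rangle_{L^2}$.

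\emph{Reward part.} Apply Dudley's bound (Lemma~\ref{lem:gaussian-entropy-comparison}) over $z\in\cZ$. The diameter is at most $2\sup_z\|\Psi_{\mathcal C_l}(z)\|\le C_\nu(1+X)(\kappa\sqrt d\,S_l^{-1})^\nu$ by Lemma~\ref{lem:feature-remainder}, since the side lengths are at most $2/S_l$. The increments are controlled in H\"older fashion by the spectral argument used for Lemma~\ref{lem:continuation-entropy}. Hence the grid cover has log-cardinality at most $C_{\nu,\kappa}d\,\iota_T$ once one truncates at scales $T^{-C}$, which produces the $d\iota_T$ term. Factors $X^2=3^\rho D\le(2+d)^{\rho}\cdot C_\nu$ and $d^{\nu}$ are absorbed into $(2+d)^{\rho+9\nu}$.

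\emph{Continuation part.} The canonical metric satisfies
\[
d\bigl((z,V),(z',V')\bigr)\le\|\Psi(z)-\Psi(z')\|+\sup\|\Psi\|\,\|V-V'\|_\infty/(H-1),
\]
using $\|v\|_{L^2}\le\|v\|_\infty$ with $\mu(\cS)=1$. A product cover then gives an entropy integral bounded by the $z$-part above plus $\sup\|\Psi\|\cdot\int\sqrt{\ln2\mathcal N(\varepsilon(H-1),\mathcal V_1)}\,d\varepsilon\le\sup\|\Psi\|\,J/(H-1)$, by Lemma~\ref{lem:continuation-entropy} after rescaling. Squaring and using $(a+b)^2\le2a^2+2b^2$ gives the bracket in \eqref{eq:upper-remainder-cover}.

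The main obstacle I expect is the $z$-Dudley integral. The cells are disconnected pieces, so $\Psi_{\mathcal C_l}$ jumps across cell boundaries. I would handle this by taking a maximum over at most $|\mathcal C_l|\le S_l^d$ cells, which costs only $\sqrt{d\ln S_l}\le\sqrt{d\iota_T}$. Within each cell closure, the H\"older modulus of $\Psi_c=\Phi-\Pi_c\Phi$ needs the operator bound $\|\Pi_c\|\le X$ and a derivative bound on the Legendre features of order $(\text{side})^{-1}$ times polynomial in $d$. Tracking these powers of $(2+d)$ so that they stay within $9\nu+\rho$ is the delicate bookkeeping, and I would accept a generous exponent there.
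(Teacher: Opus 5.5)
Your proposal follows essentially the paper's route. You bound the Gaussian width of the lifted feature set $\{\Psi_{\mathcal C_l}(z)\otimes(1,V/(H-1))\}$ by $C_{\nu,\kappa}A(\sqrt{d\iota_T}+J/(H-1))$, using cell-by-cell H\"older grids plus the continuation entropy, and then convert width into sup-norm entropy of the image of the Hilbert ball by dual Sudakov. The paper proves that last step inline via the Gaussian-shift packing argument, and uses Sudakov--Fernique comparison where you use a product cover; the two are interchangeable here.

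One correction to your bookkeeping. Only the action sides of the cells in $\mathcal C_l$ are at most $2/S_l$. The state bounding rectangles $\overline P$ have sides up to about $4\cdot4096(d_S+1)d_S^{5/2}\sqrt{d_A}/S_l$. Lemma~\ref{lem:feature-remainder} therefore gives $A=C_{\nu,\kappa}(2+d)^{\rho/2+9\nu/2}S_l^{-\nu}$, not the $d^{\nu/2}$-type factor you obtain. This is exactly the source of the exponent $\rho+9\nu$, so your argument still closes, but with no slack in that exponent.
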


We now analyze the actual weight update. A single fixed remainder
model supplies the comparator from Lemma~\ref{lem:finite-remainder-comparison};
the feedback bounds allow the algorithm to compete with it throughout
an epoch, including an unfinished final epoch.

\begin{lemma}[Comparison within one epoch]
\label{lem:upper-epoch-comparison}
Fix a level $l$ and epoch $e$, with capacity $N=2^e$ and the
net, accuracy, and update parameters used by the algorithm.
Let $P_{l,e}$ be the number of joint regression cells occupied in
this epoch. There is an event of probability at least
$1-\delta_{l,e}$ on whose intersection with the event of
Lemma~\ref{lem:initial-confidence}, every available prefix of the epoch,
before the first empty version space, satisfies
\begin{equation}
\begin{aligned}
&\sum_{t:(l_t,e_t)=(l,e)}
\left[\sup_{a\in\mathcal A_T,\ f\in\mathcal F_t}
(\mathcal B^1V_f^2)(s_t^1,a)
-(\mathcal B^1V_{f_t}^2)(s_t^1,a_t^1)\right]\\
&\le C\Bigg[
\overline\beta(|\mathcal C_l|,\delta_{l,e})
\sqrt{DP_{l,e}N\ln(1+N/D)}
+\varepsilon N\left(1+\sqrt{2D\ln(1+N/D)}\right)\\
&
+H\sqrt{r_lN\ln\frac{8L_lN}{\delta_{l,e}}
\ln\frac{2M}{\delta_{l,e}}}
+\left(1+\sqrt{r_l/L_l}\right)
\sum_{t:(l_t,e_t)=(l,e)}\epsilon_l(s_t^1)\Bigg].
\end{aligned}
\label{eq:upper-epoch-comparison}
\end{equation}
Both sums are restricted to the prefix in question.
\end{lemma}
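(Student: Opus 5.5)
We would decompose the per-episode loss into three pieces by inserting the comparator model $j_*$ of Lemma~\ref{lem:finite-remainder-comparison}. Write $(a_t^*,f_t^*)$ for a maximizer of $(\mathcal B^1V_f^2)(s_t^1,a)$ over $\mathcal A_T\times\mathcal F_t$, and write $\mathrm{val}_t(a,f):=(\mathcal B^1V_f^2)(s_t^1,a)$. We then split
\[
\mathrm{val}_t(a_t^*,f_t^*)-\mathrm{val}_t(a_t^1,f_t)
\le\bigl[\widehat Q_{t,j_*}(a_t(j_*),f_t(j_*))-\mathrm{val}_t(a_t(j_*),f_t(j_*))\bigr]
+\bigl[\mathrm{val}_t(a_t(j_*),f_t(j_*))-\mathrm{val}_t(a_t^1,f_t)\bigr].
\]
This uses optimism of $\widehat Q_{t,j_*}$ at $(a_t^*,f_t^*)$ and the maximality of $(a_t(j_*),f_t(j_*))$ for that score. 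By Lemma~\ref{lem:finite-remainder-comparison}, the first bracket is at most $2\bar b_t$ evaluated at model $j_*$'s recommendation. We will handle this term together with the allowance inside $X_{t,j}$. The loss $X_{t,j}$ is affine in estimated value plus allowance $\min\{H,2\bar b_t\}$. Therefore a regret bound for the multiplicative-weights update against $j_*$ will control
\[
\sum_t\Bigl[\bigl(\mathrm{val}_t+2\bar b_t\bigr)(j_*)-\sum_j\omega_{t,j}\bigl(\mathrm{val}_t+2\bar b_t\bigr)(j)\Bigr],
\]
up to the bias of $\widehat\mu_t$. The executed pair differs from the $\omega$-mixture only on the exploration branch, which costs at most $\gamma HN$. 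Hence the first two pieces reduce to the following quantities: (i) a second-order Hedge (Prod-type) regret; (ii) the sum of $\bar b_t$ at the selected pairs; (iii) the feedback bias; and (iv) the exploration cost.

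For (i), we would use the standard potential argument for the update with factors $1-\eta X+4\eta^2v$. Lemma~\ref{lem:first-step-feedback} gives $|\eta X_{t,j}|\le \eta r_l/\gamma\cdot O(1)\le 1/2$ because $\gamma=128r_l\eta$. The log-potential against $j_*$ then gives $\ln M/\eta+O(\eta)\sum_t\sum_j\omega_{t,j}v_{t,j}$ plus a martingale term. The weighted second-moment bound in Lemma~\ref{lem:first-step-feedback} makes $\sum_j\omega_{t,j}v_{t,j}\le Cr_l\ln(\mathrm eL_l/\gamma)$. The martingale deviations are controlled by Lemma~\ref{lem:martingale-tools}(ii),(i) with the $4\eta^2v$ correction acting as the variance cushion. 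Because $v_{t,j}$ is predictable, the supermartingale in Lemma~\ref{lem:martingale-tools}(i) holds for the product of factors. After multiplying by $2H$ and choosing $\eta$ as in \eqref{eq:first-step-tuning}, we obtain $H\sqrt{r_lN\ln(8L_lN/\delta_{l,e})\ln(2M/\delta_{l,e})}$. The exploration cost $\gamma HN$ has the same order. The bias (iii) at each recommended pair is at most $(1+2\sqrt{d_U})\epsilon_l(s_t^1)$, which gives the last term once we use $d_U\le r_l/L_l$.

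For (ii), the allowance enters linearly through the executed and recommended pairs. We need $\sum_t\bar b_t(s_t^1,a_t^1)$, and we would transfer the recommended-pair sum to the executed pair by the same mixture/martingale argument. We would then apply Lemma~\ref{lem:elliptical-potential} cell by cell. Cauchy--Schwarz over the $P_{l,e}$ occupied cells gives $\sum w_t\le\sqrt{2DP_{l,e}N\ln(1+N/D)}$. For the $\varepsilon$ part we use $\sqrt{n}w_t$ together with $\sum_t w_t^2\le 2D\ln(1+N/D)$ in each cell, yielding $\varepsilon N(1+\sqrt{2D\ln(1+N/D)})$. The main obstacle is this step: the allowance is evaluated at recommended rather than executed actions, while elliptical potentials only accrue at executed actions. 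We would try to resolve it by keeping the allowance inside the loss (it is known, not estimated) and bounding the mixture–executed gap by Freedman's inequality. The clipping at $H$ keeps increments bounded. Finally, we would apply a union bound over the events, with total probability $\delta_{l,e}$.
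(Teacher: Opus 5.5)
Your proposal is correct and follows essentially the same route as the paper's proof. It uses the fixed optimistic comparator $j_*$, the augmented loss carrying the known allowance $\min\{H,2\bar b_t\}$, a Ville/supermartingale bound for the comparator in which the $4\eta^2 v_{t,j}$ correction absorbs its variance, Freedman for the $\omega$-mixture, a $\gamma$-per-round exploration cost, and elliptical potentials on the executed points. The only difference is bookkeeping: the paper folds the executed-action allowance into a single played loss $\ell_t^{\rm play}$, so that one bounded-difference martingale step gives the per-round bound $2H(\ell_t^{\rm play}-\ell_{t,j_*})+\min\{H,2\bar b_t(s_t^1,a_t^1)\}$, which is exactly your proposed resolution of the recommended-versus-executed allowance issue.
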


The preceding ingredients have separate roles: occupied regression
cells control the confidence-width sum, the remainder cover controls
model comparison, and action approximation controls feedback bias.
Summing these three costs gives the first-step contribution to regret.

\begin{proposition}[First-step selection error]
\label{prop:upper-first-step}
With probability at least $1-\delta/4$, for every $q\le T$ such that
$\mathcal F_t\ne\varnothing$ for all $t\le q$,
\begin{equation}
\begin{aligned}
&\sum_{t=1}^q\left[
\sup_{a\in\cA,\ f\in\mathcal F_t}(\mathcal B^1V_f^2)(s_t^1,a)
-(\mathcal B^1V_{f_t}^2)(s_t^1,a_t^1)\right]\le C_{\nu,\kappa}\mathfrak B^2
(2+d)^{\rho+5\nu+6}\iota_T^{2+2\nu}T^{e_*}.
\end{aligned}
\label{eq:upper-first-step-total}
\end{equation}
The bound uses the fixed episode budget $T$ also for shorter prefixes.
\end{proposition}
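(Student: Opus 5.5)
The plan is to sum the per-epoch bound of Lemma~\ref{lem:upper-epoch-comparison} over all levels $l\le L_{\max}$ and all epochs $e$, and then add the grid-discretization loss. First, on the intersection of the event of Lemma~\ref{lem:initial-confidence} (failure $\delta/8$) with the events of Lemma~\ref{lem:upper-epoch-comparison} (total failure $\sum_{l,e}\delta_{l,e}<\delta/8$), we have total failure at most $\delta/4$. Replacing $\sup_{a\in\cA}$ by $\sup_{a\in\mathcal A_T}$ costs at most $T^{-2}$ per episode, hence at most $1$ overall. Every prefix $q$ decomposes into complete or truncated epochs at each level; the epoch lemma is stated for every available prefix, so it applies directly.

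Next, I bound each term in \eqref{eq:upper-epoch-comparison} after summing over epochs $e\le\log_2T_l$.

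\textbf{(i) Regression width.} Use $\sum_eP_{l,e}\le(\log_2T+1)\,P_lS_l^{d_A}$ and Cauchy--Schwarz across epochs, so that this term is at most $\overline\beta\sqrt{D\,P_lS_l^{d_A}T_l}$ up to logarithmic factors. Combine $P_l\le\min\{S_l^{d_S},(d_S+1)T/N_{l-1}\}$ with $N_{l-1}\approx T(S_l/S_{\max})^{2\nu}$ and $T_l\le T$.

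\textbf{(ii) Model-approximation and comparison terms.} With $\varepsilon=S_l^{-\nu/2}(r_l/N)^{1/4}$, we get $\varepsilon N\approx S_l^{-\nu/2}r_l^{1/4}N^{3/4}$. Lemma~\ref{lem:upper-remainder-cover} gives $\ln M\lesssim S_l^{-2\nu}\varepsilon^{-2}=S_l^{-\nu}\sqrt{N/r_l}$ times polynomial factors. Hence $H\sqrt{r_lN\ln M}\lesssim S_l^{-\nu/2}r_l^{1/4}N^{3/4}$ as well, and the epoch choice balances (ii) against itself. Summing the geometric epochs gives $S_l^{-\nu/2}r_l^{1/4}T_l^{3/4}$.

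\textbf{(iii) Feedback bias.} Lemma~\ref{lem:upper-action-approximation}, with $(1+\sqrt{r_l/L_l})$ polylogarithmic or $\mathrm{poly}(d)$, bounds this term by $S_l^{-\nu}\sqrt{N_lT_l}+T_lT^{-\nu/(2\nu+d_A)}$. The second piece is present only when $d_A\ge4\nu$, in which case it is at most $T^{e_A}$.

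Finally, insert $r_l\le L_l\cdot\mathrm{poly}\le F_l^{d_A}\cdot\mathrm{poly}$, $T_l\le\min\{T,S_l^{d_S}N_l\}$, and $N_l\approx T(S_l/S_{\max})^{2\nu}$, and maximize over $S_l\le S_{\max}$. In the case $d_A<4\nu$ we have $F_l=S_l$, and term (ii) becomes $S_l^{d_A/4-\nu/2}T_l^{3/4}$. Using $T_l\le T$ at small $S_l$ and $T_l\le S_l^{d_S}N_l$ at large $S_l$, the maximum is attained at a crossover. The two branches in the definition of $S_{\max}$ produce $T^{e_A}$ and $T^{e_B}$ respectively. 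For example, at $S_{\max}=T^{1/(2\nu+d_A)}$ term (iii) gives $S^{-\nu}T=T^{e_A}$, while the $e_B$ branch comes from balancing with $S^{d_S}$-many state cells. In the case $d_A\ge4\nu$, the choice $F_l=T^{\nu/[k_A(2\nu+d_A)]}S_l^{(k_A-\nu)/k_A}$ enlarges $r_l$ only to the extent allowed by $e_A$, because $e_A$ dominates there. Summing over $L_{\max}+1\le C_\nu d\iota_T$ levels and collecting the polynomial factors $\overline\beta\lesssim\mathfrak B(2+d)^{\mathrm{poly}}\iota_T$, $\bar B\le\mathfrak B$, and $(2+d)^{\rho+9\nu}$ yields the stated prefactor.

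The main obstacle is this exponent bookkeeping in the last step. One must verify, uniformly over $S_l$ and in both regimes for $d_A$, that each of the three terms is at most $T^{e_*}$ up to the stated powers of $(2+d)$ and $\iota_T$. The most delicate part is term (ii), whose fourth-root dependence on $r_l$ must combine with the occupancy bound on $T_l$. I would first check the endpoint $S_l=S_{\max}$ and the crossover $S_l^{d_S}N_l=T$, then argue that the resulting expression is a monotone power of $S_l$ on each side of the crossover.
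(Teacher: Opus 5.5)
Your route is the paper's: intersect the two confidence events, sum Lemma~\ref{lem:upper-epoch-comparison} over epochs and levels with $\varepsilon=S_l^{-\nu/2}(r_l/N)^{1/4}$, sum the epoch capacities geometrically, and finish with a scale calculation based on the partition counts. However, one step in (ii) fails as written.

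\textbf{The gap in (ii).} The comparison term is $H\sqrt{r_lN\ln(8L_lN/\delta_{l,e})\ln(2M/\delta_{l,e})}$. Here $\ln(2M/\delta_{l,e})=\ln M+\ln(2/\delta_{l,e})$, and Lemma~\ref{lem:upper-remainder-cover} bounds $\ln M$ only by $C_{\nu,\kappa}[1+\cdots]$. The parts that are not proportional to $S_l^{-2\nu}\varepsilon^{-2}$ leave a cost of order $H\iota_T\sqrt{r_lN}$.

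This cost is not dominated by $S_l^{-\nu/2}r_l^{1/4}N^{3/4}$. The ratio of the two is $(N/(r_lS_l^{2\nu}))^{1/4}$, which is below one in every early epoch (for instance $N=1$). So the choice of $\varepsilon$ does not absorb it, and your claim that (ii) ``balances against itself'' leaves it unaccounted for.

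It needs its own bound:
\begin{itemize}
\item Summing over epochs gives $\sqrt{r_lT_l}\le\sqrt{r_l/L_l}\,F_l^{d_A/2}\sqrt{T_l}$.
\item In both regimes $F_l\le T^{1/(2\nu+d_A)}$.
\item Cauchy--Schwarz over levels then yields $\sum_lF_l^{d_A/2}\sqrt{T_l}\le\sqrt{L_{\max}+1}\,T^{e_A}$.
\end{itemize}
This is exactly the separate square-root term the paper keeps (``even when the kernel remainder is small'').

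\textbf{Smaller corrections.}
\begin{itemize}
\item In (i), Cauchy--Schwarz across epochs via $\sum_eP_{l,e}\le(\log_2T+1)P_lS_l^{d_A}$ costs an extra $\sqrt{\log T}$. That pushes this term to $\iota_T^{5/2}$, which exceeds the stated $\iota_T^{2+2\nu}$ when $\nu<1/4$. Instead, bound each epoch by $P_{l,e}\le P_lS_l^{d_A}$ and use $\sum_e2^{e/2}\le CT_l^{1/2}$, as the paper does.
\item The crossover $S_*=T^{1/(2\nu+2d_S+d_A)}$ is the maximizer only when $d_A\le2\nu$. For $2\nu<d_A<4\nu$ the powers of $S_l$ in (i) and (ii) are nonnegative, so the maximum is at $S_{\max}=T^{1/(2\nu+d_A)}$.
\item For $d_A\ge4\nu$, your phrase ``enlarges $r_l$ only to the extent allowed by $e_A$'' should be replaced by the explicit check
$S_l^{-\nu/2}F_l^{d_A/4}T^{3/4}=T^{3/4+d_A\nu/[4k_A(2\nu+d_A)]}S_l^{d_A(k_A-\nu)/(4k_A)-\nu/2}\le T^{e_A}$.
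This uses that the power of $S_l$ is nonnegative because $k_A>2\nu$ and $d_A\ge4\nu$, together with $S_l\le T^{1/(2\nu+d_A)}$.
\end{itemize}
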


\paragraph{Continuation policies.}
The version-space tests control each retained candidate on the earlier
occupancies. The next lemma makes this control explicit and bounds the
part of the residual covered by the regression uncertainty.
For these continuation bounds, write
$\mathbb E_i^h\varphi:=\int\varphi(z)\,\mathbb P(z_i^h\in dz\mid\mathcal I_i^1)$
for integration against the stage-$h$ occupancy after the first
action and continuation candidate are selected. A candidate $f$
chosen later is held fixed inside $\mathbb E_i^h\varphi_f$; this is
an integral against a historical occupancy, not the conditional
expectation of a retrospectively selected random function.

\begin{lemma}[Historical residual control]
\label{lem:upper-historical-residual}
Put $\sigma_i^h(z):=\min\{H,2\tau_i^h(z)\}$ and
$\ell_\delta:=\ln(8HT/\delta)$.
With probability at least $1-\delta/2$, simultaneously for
$2\le h\le H$, $t\in[T]$, and $f\in\mathcal F_t$,
\begin{equation}
\sum_{i<t}\mathbb E_i^h
\bigl[(|\Delta_f^h(z_i^h)|-\sigma_i^h(z_i^h))_+^2\bigr]
\le C\left[J^2+H^2\left(1+
\frac{B_{\rm res}^2}{\alpha_{\rm vs}^2}+\ell_\delta\right)\right].
\label{eq:upper-historical-residual}
\end{equation}
On the same event, with $M:=|\mathcal C|$ and
$\beta:=\beta(M,\delta/8)$,
\begin{equation}
\sum_{i=1}^T\mathbb E_i^h \sigma_i^h(z_i^h)
\le C\left[(XH+\beta)\sqrt{MTK_{\rm c}}
+\alpha_{\rm vs}q_0A_{\rm vs}(1+\sqrt{K_{\rm c}})T
+H\ell_\delta\right].
\label{eq:upper-slack-sum}
\end{equation}
Here and below, $C$ denotes a numerical constant.
\end{lemma}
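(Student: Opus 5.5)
Two pieces: a change-of-measure argument from the second test for \eqref{eq:upper-historical-residual}, and elliptical-potential sums for \eqref{eq:upper-slack-sum}. The probability budget is the event of Lemma~\ref{lem:local-ridge} at $\eta=\delta/8$ (which also yields Lemma~\ref{lem:bellman-residual-decomposition}), the event of Lemma~\ref{lem:uniform-confidence}, and two martingale events for passing from realized to conditional-occupancy sums, each at level $\delta/8$, for a total of $\delta/2$.

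\textbf{Step 1 (a Gaussian tilt gives a large value of $\Xi$).} Fix $f\in\mathcal F_t$, a stage $h$, and $p=t$. The test statistic $\Xi_p^h(f)$ is at most the acceptance threshold. By the Gibbs variational formula (Lemma~\ref{lem:gibbs-variational}), $\Xi_p^h(f)\ge \mathbb E_{\mathsf Q}[F]-\operatorname{KL}(\mathsf Q\|\mathsf P_p^h)$ for any $\mathsf Q$. I would take $\mathsf Q$ to be the law of $W$ shifted by the Cameron--Martin vector $\Delta_f^h/\alpha_{\rm vs}$ times a constant, so that $Z_i$ is shifted by $\langle\Delta_f^h,g_i^h(z_i^h)\rangle_{\mathcal H_0}$. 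Lemma~\ref{lem:isonormal-shift} then gives $\operatorname{KL}\le B_{\rm res}^2/(2\alpha_{\rm vs}^2)$.

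Under $\mathsf Q$, write $Z_i=m_i+\alpha_{\rm vs}W_i'$ with $m_i:=\langle\Delta_f^h,g_i^h(z_i^h)\rangle$. On an event of probability at least $1/2$ (the definition of $q_0$), the noise $|\alpha_{\rm vs}W_i'|$ is at most the first part of the threshold $\tau_i^h$. Here the adjusted coordinate is handled by Cauchy--Schwarz, which produces the factor $1+\sqrt{n_i^h(c)}\,w_i^h$. On that event, $\chi_i^h(Z_i)$ has the sign of $m_i$ and magnitude at least $(|m_i|-2\tau_i^h)_+$, capped at $H$.

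From \eqref{eq:innovation-decomposition}, $U_i^h=m_i+e_i^h-\xi_i^h$ with $|e_i^h|\le\overline\beta w_i^h$. The first-test display before the second test shows $|\Delta_f^h(z_i^h)-m_i|\le(\overline\beta+\beta)w_i^h$. Combining these, each summand is at least $c(|\Delta_f^h(z_i^h)|-\sigma_i^h)_+^2-\chi_i\xi_i$, where the penalty $\chi^2/16$ is absorbed because $\chi\le |m|$.

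\textbf{Step 2 (the noise term and the low-probability branch).} The noise term $-\sum_i\chi_i\xi_i^h(V_f^{h+1})$ under $\mathsf Q$ is controlled uniformly in $V$ using the same device as in the proof of Lemma~\ref{lem:uniform-confidence}. I would apply Lemma~\ref{lem:innovation-offset-moment} with the tilted weights, at the cost of another factor $\delta/8$. This is the step I expect to be the main obstacle: the tilted weights depend on $f$ through the shift, so they are no longer shared across candidates. My first attempt would be to apply the bound to the shifted Gaussian law for each shift in a countable dense set, and then use continuity in $\Delta$.

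The half-probability event only costs a factor $2$, since the bounded summands are nonnegative up to noise. Rearranging yields
\[
\sum_{i<t}\bigl(|\Delta_f^h(z_i^h)|-\sigma_i^h(z_i^h)\bigr)_+^2\lesssim J^2+H^2\bigl(1+B_{\rm res}^2/\alpha_{\rm vs}^2+\ell_\delta\bigr).
\]
To pass from realized $z_i^h$ to $\mathbb E_i^h$, I would use the ratio inequality of Lemma~\ref{lem:martingale-tools}(iii). The summands are bounded by $H^2$, and I would make the inequality uniform over $f$ by discretizing $\Delta_f^h$ with the RKHS entropy bound \eqref{eq:appendix-rkhs-entropy}; this contributes only the additive term $H^2\ell_\delta$.

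\textbf{Step 3 (the slack sum \eqref{eq:upper-slack-sum}).} Since $\sigma_i^h\le 2\tau_i^h$, expand $\tau_i^h$ and sum over realized points, splitting the sum by cell. Within each cell, Lemma~\ref{lem:elliptical-potential} with $\lambda=X^2$ gives $\sum w_i^2\le K_{\rm c}$. Cauchy--Schwarz across the $M$ cells then bounds $\sum_i(\overline\beta+\beta)w_i$ by $(XH+\beta)\sqrt{MTK_{\rm c}}$. For the Gaussian part of the threshold, $\sum_i\sqrt{n_i}\,w_i\le\sqrt{\sum n_i\cdot\sum w_i^2}$ within each cell, which gives at most $T\sqrt{K_{\rm c}}$ overall. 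Finally, converting from realized points to $\mathbb E_i^h$ uses Lemma~\ref{lem:martingale-tools}(iii) with range $H$, which contributes the additive $H\ell_\delta$.
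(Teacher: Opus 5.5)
Your Step 1 and Step 3 match the paper: the Gibbs lower bound on the accepted statistic with a Cameron--Martin shift by $\Delta_f^h/\alpha_{\rm vs}$, the threshold forcing $\chi_i^h$ to carry the residual's sign with $|\chi_i^h|\ge(|\Delta_f^h|-2\tau_i^h)_+$, and elliptical potentials plus Lemma~\ref{lem:martingale-tools}(iii) for \eqref{eq:upper-slack-sum}. There is a genuine gap, though: both places where you need uniformity over the data-dependent candidate $f$ fail.

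\textbf{The noise term.} The obstacle you flag in Step~2 disappears once you stop re-expressing the weights through the shift. The weights $a_i(W,z):=\chi_i^h(\alpha_{\rm vs}W(g_i^h(z)))$ depend only on the prior draw $W$ and the history. For each fixed $W$ they are predictable and shared by all $V$, so Lemma~\ref{lem:innovation-offset-moment} applies. Integrating under the fixed prior $\mathsf P$ and applying Markov once gives a single event on which $\ln\mathbb E_{\mathsf P}\exp\{\frac{1}{320H^2}[\sup_V\sum_{i<t}a_i\xi_i^h(V)-\frac1{16}\sum_{i<t}a_i^2]\}\le J^2/H^2+\ell_\delta$. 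The Gibbs formula (Lemma~\ref{lem:gibbs-variational}) then yields $\mathbb E_{\mathsf Q}\sum_{i<t}a_i\xi_i^h(V)\le\frac1{16}\mathbb E_{\mathsf Q}\sum_{i<t}a_i^2+320[J^2+H^2(\operatorname{KL}(\mathsf Q\|\mathsf P)+\ell_\delta)]$ simultaneously for every $\mathsf Q\ll\mathsf P$ and every $V$. The candidate enters only through the posterior $\mathsf Q_f$, so no union over $f$ is needed. Your proposed repair does not work. A union bound over a countable dense set of shifts needs summable budgets $\delta_k$, so the $\ln(1/\delta_k)$ terms are unbounded and continuity gives no uniform bound. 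A finite net fine enough for $t$ summands has resolution of order $1/T$, and by \eqref{eq:appendix-rkhs-entropy} its log-cardinality is polynomial in $T$.

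\textbf{The passage to $\mathbb E_i^h$.} The same problem defeats this step. The summands $(|\Delta_f^h(z)|-\sigma_i^h(z))_+^2$ depend on $f$, and the map is $2H$-Lipschitz, so the net must have resolution about $H/T$. Discretizing $\Delta_f^h$ at that scale costs $H^2$ times a log-covering number of order $(TB_{\rm res}/H)^{2d/(2\nu+d)}$ up to logarithms, not $H^2\ell_\delta$. That term is not of the form \eqref{eq:upper-historical-residual}. The paper reverses the order. It first proves the occupancy conversion for the candidate-free query functions $z\mapsto a_i(W,z)^2\in[0,H^2]$, using the supermartingale built from $e^{-X/H^2}$, integrated under $\mathsf P$, then Gibbs. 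This gives $\mathbb E_{\mathsf Q}\sum_{i<t}\mathbb E_i^h a_i(W,z_i^h)^2\le2\mathbb E_{\mathsf Q}\sum_{i<t}a_i(W,z_i^h)^2+2H^2(\operatorname{KL}(\mathsf Q\|\mathsf P)+\ell_\delta)$ for all $\mathsf Q$. Only afterwards does it use $(|\Delta_f^h(z)|-2\tau_i^h(z))_+\le|a_i(W,z)|$ at every counterfactual query $z$, for every $W$ in the support of $\mathsf Q_f$. The empirical $\sum_{i<t}a_i^2$ is bounded by combining your Step~1 comparison with the prior-integrated noise bound.

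\textbf{The probability-$1/2$ event.} Your treatment here is also incorrect. Off that event $\chi_i^h$ may take the sign opposite to the residual, so individual summands can be of order $-H^2$. The summands are therefore not ``nonnegative up to noise,'' and the factor-$2$ argument fails. Instead, take $\mathsf Q_f$ to be $\mathsf P$ conditioned on the symmetric event $\{\max_c\sup_{z\in\overline c}|W_c(z)|\le q_0A_{\rm vs}\}$ and then shifted. By Lemma~\ref{lem:isonormal-shift} and symmetry this costs only $\operatorname{KL}\le B_{\rm res}^2/(2\alpha_{\rm vs}^2)+\ln2$.

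With these changes the failure budget is the coarse regression event, the two prior-integrated events, and the slack martingale event, each $\delta/8$. The event of Lemma~\ref{lem:uniform-confidence} is not needed, and including it together with your extra noise event would exceed $\delta/2$.
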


Historical accuracy must still control the new occupancy generated by
the current candidate. The following deterministic conversion allows
both that candidate and its action-approximation error to change with
the episode. Its approximation cost depends on the polynomial rank
within a block, rather than the total number of action blocks.

\begin{lemma}[Cumulative residual conversion]
\label{lem:upper-residual-conversion}
Fix a stage $h\ge2$ and a realized sequence of selected policies.
Suppose its state densities satisfy $p_i\le u$, and write
$a_i(s)$ for its selected stage-$h$ action at state $s$.
Let $g_t:\mathcal S\times\mathcal A\to[-H,H]$ and let $\sigma_i\ge0$
satisfy $\sum_{i<t}\mathbb E_i^h[(|g_t(z_i^h)|-\sigma_i(z_i^h))_+^2]\le b$
for every $t$.
On a partition into $L$ action rectangles, suppose $g_t$ admits a
measurable local polynomial approximant of rank at most $r$, with error at most
a measurable envelope $\varepsilon_t(s)$ simultaneously over all actions, where
$\|\varepsilon_t\|_{L^2(\mu)}\le E$.
Put $G_T:=1+\lceil\log_2(2T)\rceil$ and
$\Omega_T:=\ln(\mathrm e+LuT)$. Then
\begin{equation}
\begin{aligned}
\sum_{t=1}^T\mathbb E_t^h|g_t(z_t^h)|
\le C\bigg[&uHLr\ln(1+T)
+G_T\sqrt{uLrT(b+H^2)\ln(1+T)}\\
&+\sqrt{r\Omega_T}\sum_{i=1}^T\mathbb E_i^h \sigma_i(z_i^h)
+G_T\sqrt{ur\Omega_T}\,ET\bigg]+H.
\end{aligned}
\label{eq:upper-residual-conversion}
\end{equation}
The statement allows data-dependent $g_t$, $\sigma_i$, and
$\varepsilon_t$; the historical bound is interpreted as integration
against the already selected policies.
\end{lemma}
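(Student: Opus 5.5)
The plan is to reduce the claim, state by state, to an elliptical-potential argument for a rank-$r$ linear model in the action variable, and then integrate over states using the density bound $p_i\le u$. Write $\bar p_t(s,a)$ for the local polynomial approximant of $g_t$. On the action rectangle $U$ containing $a$, it has the form $\bar p_t(s,a)=\varphi_U(a)^\top\theta_{t,U}(s)$ with $\varphi_U\in\mathbb R^{r}$, and $|g_t-\bar p_t|\le\varepsilon_t(s)$.

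We first peel off the approximation error. Since $\mathbb E_t^h\varepsilon_t(s_t^h)\le u\|\varepsilon_t\|_{L^1(\mu)}\le uE$, the direct contribution is at most $uET$. The same envelope reappears in the historical terms below, and there it is amplified by the leverage factor. This produces the term $G_T\sqrt{ur\Omega_T}\,ET$.

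\textbf{Per-state design.} Fix $s$ and a block $U$, and define the weighted design
\[
\Gamma_{t,U}(s):=\lambda_0I_r+u^{-1}\sum_{i<t}p_i(s)\mathbf 1\{a_i(s)\in U\}\varphi_U(a_i(s))\varphi_U(a_i(s))^\top ,
\]
where the basis is normalized as in the G-optimal design so that leverages are at most $r$. By Cauchy--Schwarz,
\[
|\bar p_t(s,a_t(s))|\le\|\theta_{t,U}(s)\|_{\Gamma_{t,U}(s)}\,\|\varphi_U(a_t(s))\|_{\Gamma_{t,U}(s)^{-1}} .
\]
The second factor, weighted by $p_t(s)/u\le1$, is controlled by Lemma~\ref{lem:elliptical-potential}. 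Summed over $t$, the squared leverage terms total at most $Cr\ln(1+T)$ per block and state, hence $CLr\ln(1+T)$ per state.

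Whenever the weighted leverage exceeds one, we bound the summand crudely by $H$. These contribute at most $uHLr\ln(1+T)$ after integrating over states, because $\int p_t\le1$ and the number of large-leverage rounds is bounded by the log-determinant.

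\textbf{Coefficient bound.} The first factor satisfies
\[
\|\theta\|_\Gamma^2\lesssim\lambda_0H^2r+u^{-1}\sum_{i<t}p_i(s)\,\bar p_t(s,a_i(s))^2 .
\]
Replacing $\bar p_t$ by $g_t$ costs $\varepsilon_t(s)^2$ per historical round. We then split $|g_t|\le\sigma_i+(|g_t|-\sigma_i)_+$. Integrating over $s$, the positive parts contribute at most $b/u$ by hypothesis, while the $\sigma_i$ parts must be handled separately.

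\textbf{Main obstacle: the slack $\sigma_i$.} The difficulty is that $\sigma_i$ is not square-summable in a way that fits this Cauchy--Schwarz step. Our plan is a dyadic peeling over $G_T$ magnitude levels $2^{-k}H$, $k\le\lceil\log_2(2T)\rceil$. At level $k$, we restrict the design to historical points where $\sigma_i\le 2^{-k}H$ and apply the potential argument to that subdesign. Points with larger slack are charged linearly to $\sum_i\mathbb E_i^h\sigma_i$, each with a leverage factor at most $\sqrt{r\Omega_T}$; the logarithm $\Omega_T=\ln(\mathrm e+LuT)$ comes from bounding log-determinants after the truncation.

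\textbf{Assembly.} Combining the levels, applying Cauchy--Schwarz in $t$ and then in $s$ with $\int p_t\,d\mu\le1$, and summing over the $L$ blocks gives
\[
G_T\sqrt{uLrT(b+H^2)\ln(1+T)}
\]
for the residual part, together with the $\sigma$ and $E$ terms already described. The trailing $+H$ absorbs the rounding in the first round.

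Measurability of all data-dependent objects follows because everything is a pointwise-in-$s$ deterministic inequality integrated against the fixed historical occupancies, so $g_t$, $\sigma_i$ and $\varepsilon_t$ may depend on the data.
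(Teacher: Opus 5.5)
\textbf{There is a genuine gap: the slack step does not close as you describe it.} The rest of your scaffolding matches the paper's. You use a per-state rank-$r$ design weighted by the occupancies, an integrated elliptical-potential bound, a burn-in for leverage above one, and $G_T$ dyadic levels. One detail there: $\lambda_0$ must be of order $1/L$, with the basis whitened so that $\|\theta\|\le 2H$, so that the coefficient terms sum to $O(H^2)$ over blocks.

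\emph{Why slack peeling fails.} Suppose that at level $k$ you keep only historical points with $\sigma_i\le 2^{-k}H$. On that design the hypothesis controls only $(|g_t|-2^{-k}H)_+$. So the coefficient bound for the current point still contains a term $2^{-k}H\sqrt{n_t}\,w_t$. The potential argument bounds $\sum_t q_t\sqrt{n_t}\,w_t$ only by $\sqrt{2r\Omega_T}$ times the design mass. At a coarse level the resulting bound is therefore of order $2^{-k}H\sqrt{r\Omega_T}\,T$, which is linear in $T$, and nothing in your argument pays for it. Peeling on slack alone cannot repair this: points with tiny slack can carry large mass and inflate $n_t$ relative to the current point's slack.

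\emph{Two further ambiguities.} You never say at which level the current point is evaluated. And ``charging larger-slack points with a leverage factor'' has no clear meaning. Historical points excluded from the design need no charge. If you keep them, $\sigma_i$ enters $\|\theta\|_\Gamma^2$ squared, which is exactly the obstruction you identified.

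\textbf{The missing idea is to bin on the \emph{current} residual and make the argument self-bounding.}
\begin{itemize}
\item For the bin $\tau\le|g_t(s,a_t(s))|<2\tau$, set $\epsilon=\tau/(C_0\sqrt{r\Omega_T})$.
\item A current point whose own slack exceeds $\epsilon$ has residual below $2\tau<2C_0\sqrt{r\Omega_T}\,\sigma_t(s,a_t(s))$. This is the linear charge to the slack sum; the factor $\sqrt{r\Omega_T}$ is the ratio $\tau/\epsilon$, not a leverage.
\item The remaining points define subdensities $q_{t,j}\le p_t\mathbf 1\{a_t(s)\in j\}$ that also carry the bin and low-slack indicators. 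The design at time $t$ is built only from these earlier same-bin, low-slack points.
\item On that design the hypothesis bounds the weighted squares of $(|g_t|-\epsilon-\varepsilon_t)_+$ by $b$.
\item The leftover $\epsilon$-term costs at most $C\epsilon\sqrt{r\Omega_T}$ times the retained mass $\int\sum_j S_j\,d\mu$. The retained regret is at least $\tau$ times that same mass, so a large $C_0$ absorbs this cost into half of the retained regret.
\end{itemize}
This absorption is why the design must be restricted to the current bin.

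\textbf{Smaller points.}
\begin{itemize}
\item The same binning gives the burn-in term without a $G_T$ factor. In a bin, leverage-above-one points cost at most $2\tau$ per unit mass, and $\sum\tau\le H$, so they total $uHLr\ln(1+T)$.
\item The trailing $+H$ comes from residuals below $H/T$, not from the first round.
\item Bound the direct envelope term by $\int p_t\varepsilon_t\,d\mu\le\|p_t\|_{L^2(\mu)}E\le\sqrt u\,E$, not by $uE$. A term $uET$ does not fit under $G_T\sqrt{ur\Omega_T}\,ET$ when $u$ is large.
\end{itemize}
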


Applying these two lemmas to the candidates actually selected by
\texttt{SOVI} gives the continuation contribution. The action
approximation in Lemma~\ref{lem:upper-action-approximation} supplies
the ranks and errors needed in the conversion.

\begin{proposition}[Cumulative continuation error]
\label{prop:upper-continuation-error}
With probability at least $1-5\delta/8$, $\mathcal F_t$ is nonempty
for every $t\in[T]$, and the candidates selected by \texttt{SOVI}
satisfy
\begin{equation}
\sum_{h=2}^H\sum_{t=1}^T
\mathbb E_t^h|\Delta_{f_t}^h(z_t^h)|
\le C_{\nu,\kappa}\mathfrak B^3
(2+d)^{\rho+7\nu+6}\iota_T^{\,2+6\nu}T^{e_*}.
\label{eq:upper-continuation-error}
\end{equation}
\end{proposition}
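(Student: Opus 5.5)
The plan is to apply Lemma~\ref{lem:upper-residual-conversion} at each stage $h\ge2$ with $g_t:=\Delta_{f_t}^h$, the residual of the selected candidate, and to feed in the historical control from Lemma~\ref{lem:upper-historical-residual}.

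\textbf{Step 1: good event and nonemptiness.} Intersect three events:
\begin{itemize}
\item the event $Q_*\in\mathcal F_t$ for all $t$, of probability at least $1-\delta/4$ by the first test and Lemma~\ref{lem:uniform-confidence};
\item the event of Lemma~\ref{lem:upper-historical-residual}, of probability at least $1-\delta/2$;
\item the local-ridge event underlying Lemma~\ref{lem:bellman-residual-decomposition}, which is shared with the first test.
\end{itemize}
The events are counted so that the total failure probability is at most $5\delta/8$; the shared event is not counted twice. On this intersection $\mathcal F_t\ni Q_*$, so $\mathcal F_t$ is nonempty and $f_t\in\mathcal F_t$.

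\textbf{Step 2: hypotheses of the conversion lemma.} Each selected $f_t\in\mathcal F_t$ is in particular in every $\mathcal F_i$ with $i\le t$, since $\mathcal F_{t+1}\subseteq\mathcal F_t$. Hence \eqref{eq:upper-historical-residual} gives the historical hypothesis with $\sigma_i:=\sigma_i^h$ and
\[
b=C\bigl[J^2+H^2(1+B_{\rm res}^2/\alpha_{\rm vs}^2+\ell_\delta)\bigr].
\]
Lemma~\ref{lem:state-occupancy-bound} gives state densities bounded by $u$. We also have $|\Delta_{f_t}^h|\le H$ and $\|\Delta_{f_t}^h\|_{\mathcal H_0}\le B_{\rm res}$. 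The local action-polynomial approximation of $g_t$ comes from the final part of Lemma~\ref{lem:upper-action-approximation}, applied to $g=\Delta_{f_t}^h$ with $B=B_{\rm res}$ on the partition $\mathscr U(F_{\rm vs})$, so $L=L_{\rm vs}$. This yields two regimes:
\begin{itemize}
\item In case 1 ($d_A<4\nu$ or $d_S\ge d_A$), the rank is $r\le[C\ln(eT)]^{d_A}$ and $E\le B_{\rm res}(R^{-\nu_S}+T^{-2})$.
\item In case 2, the rank is $r\le C_\nu(2+d)^{\lceil2\nu\rceil}$ and $E$ is the displayed two-term bound.
\end{itemize}

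\textbf{Step 3: insert the bounds.} Substitute \eqref{eq:upper-slack-sum} for $\sum_i\mathbb E_i^h\sigma_i^h$, with $M=|\mathcal C|\le R^d$ and $K_{\rm c}=2D\ln(1+T/D)$. Then use the choice \eqref{eq:version-space-tuning} of $\alpha_{\rm vs}$. This choice balances two terms:
\begin{itemize}
\item $G_T\sqrt{uL_{\rm vs}rT\,H^2B_{\rm res}^2/\alpha_{\rm vs}^2}$, coming from $b$;
\item $\sqrt{r\Omega_T}\,\alpha_{\rm vs}q_0A_{\rm vs}(1+\sqrt{K_{\rm c}})T$, coming from the slack sum.
\end{itemize}
Their common value is of order $\sqrt{HB_{\rm res}\,q_0A_{\rm vs}\sqrt{K_{\rm c}}}\,(uL_{\rm vs})^{1/4}T^{3/4}$, up to factors of $r$ and logarithms.

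Next bound $A_{\rm vs}\lesssim(1+X)(\kappa\sqrt d/R)^\nu$ by Lemma~\ref{lem:feature-remainder}. Bound $q_0$ by Gaussian chaining via Lemma~\ref{lem:gaussian-entropy-comparison}: Dudley's integral over $R^d$ cells with Hölder increments gives $q_0\lesssim\sqrt{d\ln(dRT)}$ up to constants.

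The remaining terms are as follows:
\begin{itemize}
\item $(XH+\beta)\sqrt{R^dTK_{\rm c}}$ is of order $T^{(1+d/(2\nu+2d_S+d_A))/2}$ in case 1.
\item $uHL_{\rm vs}r$ is lower order.
\item $\sqrt{ur\Omega_T}\,ET$ is of order $\sqrt{L_{\rm vs}}\,R^{-\nu_S}T$.
\end{itemize}

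\textbf{Step 4: exponent check.} This step is the main obstacle: verify that every term is at most $T^{e_*}$.

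In case 1, $R=T^{1/(2\nu+2d_S+d_A)}$ and $L_{\rm vs}\approx(a_{\rm mix}R)^{d_A}$. Then
\[
\sqrt{L_{\rm vs}}\,R^{-\nu-d_S/2}T=T^{1-(\nu+d_S/2-d_A/2)/(2\nu+2d_S+d_A)}=T^{e_B}.
\]
The balanced term gives $T^{3/4}L_{\rm vs}^{1/4}R^{-\nu/2}$, which is again $T^{e_B}$. The regression term has exponent
\[
\frac{2\nu+2d_S+d_A+d}{2(2\nu+2d_S+d_A)}=\frac{\nu+1.5d_S+d_A}{2\nu+2d_S+d_A}=e_B.
\]
So all case-1 terms are of order $T^{e_B}$.

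In case 2, substitute $R=T^{d_A/[d(2\nu+d_A)]}$ and $F_{\rm vs}$ from \eqref{eq:vs-action-resolution}, and check that each term is at most $T^{e_A}$. The approximation term $R^{-\nu d/d_A}T=T^{1-\nu/(2\nu+d_A)}=T^{e_A}$. The term $F^{-k}R^{k-\nu}$ is designed to equal $T^{-\nu/(2\nu+d_A)}$. The regression term is at most $T^{e_A}$ when $d_S<d_A$ and $d_A\ge4\nu$. Here I expect delicate bookkeeping, including whether $\sqrt{L_{\rm vs}}$ is compensated.

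\textbf{Step 5: collect factors.} Sum over $h\le H$ and gather the polynomial factors:
\begin{itemize}
\item $\mathfrak B^3$, from $HB_{\rm res}$, $u$, and the factor $H$ summed over stages;
\item $(2+d)^{\rho+7\nu+6}$, from $X$, $D$, $a_{\rm mix}^{d_A/2}$-type terms, and $r$;
\item $\iota_T^{2+6\nu}$, from the logarithms.
\end{itemize}
This yields \eqref{eq:upper-continuation-error}.
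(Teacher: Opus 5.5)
Your skeleton matches the paper's: Lemma~\ref{lem:upper-historical-residual} fed into Lemma~\ref{lem:upper-residual-conversion}, the $\alpha_{\rm vs}$ balance, the two approximation regimes, and the same $5\delta/8$ event count. Two steps fail, however.

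\textbf{The spurious $\sqrt{L_{\rm vs}}$ in the envelope term.} In Step~3 you attach a factor $\sqrt{L_{\rm vs}}$ to the envelope term. Lemma~\ref{lem:upper-residual-conversion} charges the approximation error as $G_T\sqrt{ur\Omega_T}\,ET$, with no dependence on the number of action blocks; removing that dependence is the purpose of its matrix argument. The identity you then use in Step~4 is also false: $\sqrt{L_{\rm vs}}\,R^{-\nu_S}T$ has exponent $(\nu+\tfrac32d_S+\tfrac32d_A)/(2\nu+2d_S+d_A)=e_B+d_A/[2(2\nu+2d_S+d_A)]$, not $e_B$. With that factor the argument would miss the rate in case~1, and in case~2 it would lose the positive power $L_{\rm vs}^{1/2}$ of $T$, which is the worry you raise there. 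Without it, the term is $TR^{-\nu_S}=T^{e_B}$ in case~1 and $TR^{-\nu d/d_A}=TF_{\rm vs}^{-k}R^{k-\nu}=T^{e_A}$ in case~2. The block count enters only three places:
\begin{itemize}
\item the burn-in term;
\item the statistical term $G_T\sqrt{uLrT(b+H^2)\ln(1+T)}$, where $\sqrt{R^{d_A}T}\le T^{e_B}$ in case~1 and $\sqrt{L_{\rm vs}T}\le T^{e_A}$ in case~2;
\item the balanced tail $(uL_{\rm vs})^{1/4}T^{3/4}R^{-\nu/2}$, whose case-2 exponent lies below $e_A$ by $d_S[d_A(k-\nu)-2\nu k]/[4kd(2\nu+d_A)]\ge0$.
\end{itemize}

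\textbf{The prefactor in Step~5.} Step~5 does not yield the stated prefactor. In case~1 the raw bound carries $r\le(C\iota_T)^{d_A}$ and $L_{\rm vs}\le(a_{\rm mix}R)^{d_A}$ with $a_{\rm mix}\le C_{\nu,\kappa}(2+d)$, that is, a factor $(C_{\nu,\kappa}(2+d)\iota_T)^{d_A}$. This is harmless when $d_A<4\nu$. Case~1, however, also contains the subregime $d_A\ge4\nu$, $d_S\ge d_A$, where $d_A/\nu$ is unbounded. There the factor cannot be absorbed into $C_{\nu,\kappa}(2+d)^{\rho+7\nu+6}\iota_T^{2+6\nu}$, so listing $a_{\rm mix}^{d_A/2}$ and $r$ among the ``polynomial factors'' is wrong.

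The missing idea is to interpolate with the trivial cap $\sum_t\mathbb E_t^h|\Delta_{f_t}^h(z_t^h)|\le HT$. In that subregime $e_A\ge e_B$. Put $\theta:=(1-e_A)/(1-e_B)\in(0,1]$ and let $x$ be the raw $T^{e_B}$ bound. Then $\min\{x,HT\}\le x^\theta(HT)^{1-\theta}$. The time exponent becomes $\theta e_B+1-\theta=e_A=e_*$. The offending exponent becomes $\theta d_A\le\nu(2+d/\nu_S)\le6\nu$, using $d_S\ge d_A$. This is the source of the $6\nu$ in both the dimension and logarithmic exponents of the proposition.
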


\begin{proof}[Proof of Theorem~\ref{thm:regret-upper}]
Intersect the events of Propositions~\ref{prop:upper-first-step}
and~\ref{prop:upper-continuation-error}. Their total failure
probability is at most $\delta/4+5\delta/8<\delta$.
The latter event includes $Q_*\in\mathcal F_t$ for every $t$,
so the first-step bound applies throughout the interaction.

Conditional on $\mathcal I_t^1$, the selected first action and
continuation policy are fixed. Greediness gives
$V_{f_t}^h(s_t^h)=f_t^h(z_t^h)$ for $h\ge2$.
Using the observation law and telescoping the continuation values,
we obtain
\begin{equation}
\begin{aligned}
V_*^1(s_t^1)-V^1(s_t^1;\pi_t)
&=V_*^1(s_t^1)-(\mathcal B^1V_{f_t}^2)(z_t^1)
+\sum_{h=2}^H\mathbb E[\Delta_{f_t}^h(z_t^h)\mid\mathcal I_t^1]\\
&\le\sup_{a\in\cA,\,f\in\mathcal F_t}
(\mathcal B^1V_f^2)(s_t^1,a)
-(\mathcal B^1V_{f_t}^2)(z_t^1)
+\sum_{h=2}^H\mathbb E[|\Delta_{f_t}^h(z_t^h)|\mid\mathcal I_t^1].
\end{aligned}
\label{eq:upper-final-decomposition}
\end{equation}
The inequality uses $Q_*\in\mathcal F_t$ and the Bellman
optimality equations \eqref{eq:bellman-optimality}.
Summing over episodes and applying the two propositions gives
\[
\reg(T)\le C_{\nu,\kappa}T^{e_*}
\left[\mathfrak B^2(2+d)^{\rho+5\nu+6}\iota_T^{2+2\nu}
+\mathfrak B^3(2+d)^{\rho+7\nu+6}\iota_T^{2+6\nu}\right].
\]
Since $\mathfrak B,2+d,\iota_T\ge1$ and $\rho=\lfloor\nu\rfloor$,
the second term absorbs the first, proving \eqref{eq:regret-upper}.
\end{proof}

\section{Proofs of Lemmas in Appendix~\ref{app:upper-bound}}
\label{app:upper-auxiliary-proofs}

All selections used below are measurable. Indeed, the RKHS ball
compactness in Lemma~\ref{lem:continuity-compactness} makes
$\mathcal F$ compact in the uniform norm. The finite-prefix
residual tests are continuous in $f$: the innovations are continuous,
the test weights are bounded, and their Gaussian integrals admit
a common finite bound for each history. Hence $\mathcal F_t$ is
compact with a Borel graph. Each fixed remainder model is continuous
in its continuation argument, so
Lemma~\ref{lem:measurable-maximum} supplies the stated maximizers.
The feasible feedback sets are compact and have Borel graphs;
the same lemma supplies their minimizers. These facts and the
positive update factors verified below justify the conditional
expectations and stopping rules used in the proof.

\begin{proof}[Proof of Lemma~\ref{lem:upper-partition-counts}]
We first verify the geometric property of the partitions actually
used by the algorithm. For a grid with $M\ge2$ intervals per
coordinate, triangulate each grid cube by ordering its rescaled
coordinates. Its barycentric coordinates are
$1-x_{\pi(1)}$, $x_{\pi(j)}-x_{\pi(j+1)}$ for $j<d_S$,
and $x_{\pi(d_S)}$. The nodal function at a grid vertex $v$ is
\[
\lambda_v(s)=\bigl(1-M\operatorname{dist}(s,v)\bigr)_+.
\]
To check the formula, on each simplex containing $v$ the two maxima
in $\operatorname{dist}$ are its adjacent ordered coordinates;
their difference gives the corresponding barycentric coordinate.
Outside the union of those simplices the right side is zero.
Thus the nearest-vertex rule in the algorithm chooses a largest
nodal function, with the same prescribed tie-breaking.

There are at most $d_S+1$ positive nodal functions at every state,
their sum is one, and each is $\sqrt2M$-Lipschitz. If $s$ belongs
to the cell assigned to $v$, then $\lambda_v(s)\ge1/(d_S+1)$
and $\|s-v\|_\infty\le1/M$. Color a vertex by
$\sum_qMv_q$ modulo $d_S+1$. Vertices of one simplex have
different colors, so distinct nodal functions of the same color
cannot both be positive at a state. Their assigned cells are
therefore separated by at least
$1/[\sqrt2M(d_S+1)]$.

For a grid defined by $M=\max\{1,\lfloor S/a\rfloor\}$ with
$a\ge2$, the one-cell branch applies when $S<2a$.
Otherwise the number of cells is $(M+1)^{d_S}\le S^{d_S}$,
their diameter is at most $4a\sqrt{d_S}/S$, and a ball of radius
$a/[4\sqrt2(d_S+1)S]$ meets at most $d_S+1$ cells.
The same ball statement holds in the one-cell branch.

Apply these facts with
$a=64d_S^2\sqrt{d_A}$ for $\mathcal D_l$ and
$a'=64(d_S+1)\sqrt{d_S}\,a$ for $\mathcal P_l$.
Since $S_l/S_{l-1}\le2$, each cell of $\mathcal D_{l-1}$
has diameter at most $8a\sqrt{d_S}/S_l$, strictly smaller
than $a'/[4\sqrt2(d_S+1)S_l]$.
It consequently meets at most $d_S+1$ cells of $\mathcal P_l$.
This argument also covers the one-cell branches and does not
require the partitions to be nested.

Each selected visit to a fixed counting cell has a distinct
previous count below $N_l$, so there are at most $N_l$ such visits.
This proves the bound for $T_l$. If $l\ge1$ is selected, its
preceding-level counting cell is already saturated. At most
$T/N_{l-1}$ disjoint cells can be saturated, and each meets at most
$d_S+1$ current regression cells. This proves the bound for $P_l$.
Both root partitions have one cell. The action partition has at
most $S_l^{d_A}$ cells, proving $|\mathcal C_l|\le S_l^d$.
The level bound follows from its definition and
$\ln S_{\max}\le C_\nu\ln T$.
\end{proof}

\begin{proof}[Proof of Lemma~\ref{lem:upper-action-approximation}]
Use the canonical spectral realization of $\mathcal H_0$, with
angular frequency $\omega=\kappa Z/\sqrt W$,
$Z\sim\mathcal N(0,I_d)$ and $W\sim\chi^2_{2\nu}$ independent.
Its feature is $\Phi(s,a)(\omega)=e^{\mathrm i\omega^\top(s,a)}$;
the real-symmetric realization gives the same norms and bounds.
Write $\omega=(\xi,\zeta)$ for the state and action coordinates,
and let $P_{>S}$ restrict to $\|\zeta\|_2>S$.

We first bound the high-frequency component uniformly in the
actions. The Gaussian Chernoff bound
$\Pr(\chi^2_{d_A}>x)\le\sqrt2e^{-x/(4d_A)}$ gives
\begin{equation}
\begin{aligned}
\left|\left\langle P_{>S}\Phi(s,a),P_{>S}\Phi(s',a')\right\rangle\right|\ \le\sqrt2\,\mathbb E\exp\left\{
-\frac{\kappa^2\|s-s'\|_2^2}{2W}
-\frac{WS^2}{4d_A\kappa^2}\right\}
\le C_\nu\kappa^{2\nu}d_A^\nu S^{-2\nu}
e^{\frac{-S\|s-s'\|_2}{2\sqrt{d_A}}}.
\end{aligned}
\label{eq:upper-highpass-kernel}
\end{equation}
For the last inequality, combine the second exponential with the
$\chi^2$ density, rescale $W$ by
$1+S^2/(2d_A\kappa^2)$, and use
$w/4+x^2/(2w)\ge x/\sqrt2$ in the remaining integral.

In one color class of $\mathcal D_l$, arbitrary representatives
$s_B\in B$ are separated by at least
$64d_S^2\sqrt{d_A}/[\sqrt2(d_S+1)S_l]$.
Take a radius slightly smaller than half this distance as a packing
radius. A ball of radius
$(k+1)$ times the packing radius contains at most $(2k+3)^{d_S}$
representatives. The exponential in \eqref{eq:upper-highpass-kernel}
on its $k$th annulus is at most $e^{-5d_Sk}$.
Since $(2k+3)^{d_S}\le5^{kd_S}$ for $k\ge1$, the absolute
row sums of the high-pass Gram matrix are at most
$C_{\nu,\kappa}d_A^\nu S_l^{-2\nu}$.
The operator-norm bound and the $d_S+1$ colors imply, for every
fixed $g\in\mathcal H_0$,
\begin{equation}
\sum_{B\in\mathcal D_l}
\sup_{s\in B,\ a\in\cA}|(P_{>S_l}g)(s,a)|^2
\le C_{\nu,\kappa}(d_S+1)d_A^\nu S_l^{-2\nu}
\|g\|_{\mathcal H_0}^2.
\label{eq:upper-cell-highpass}
\end{equation}
Use approximate maximizers to justify the suprema when a cell
excludes parts of its boundary. The one-cell branch follows
directly from the diagonal bound in \eqref{eq:upper-highpass-kernel}.

The reward and transition sections have fixed canonical
coefficients. Define
\[
e_l(s):=\sup_a|(P_{>S_l}r^1)(s,a)|
+(H-1)\int\sup_a|(P_{>S_l}P^1(y\mid\cdot))(s,a)|\,d\mu(y).
\]
It bounds the high-pass part of every $\mathcal B^1V$,
simultaneously in $V\in\mathcal V_1$.
Apply \eqref{eq:upper-cell-highpass} to the reward and to each
transition section, then use Minkowski's inequality in the
Euclidean space indexed by counting cells. Their norms are at
most $\Lambda B_r$ and $\Lambda u$, giving the claimed
cellwise squared-envelope bound. This step uses the fixed
environment coefficients; it does not apply one norm bound to
different continuation functions in different cells.

It remains to approximate the action-low-frequency component.
On an action block centered at $a_U$, Taylor-expand
$e^{\mathrm i\zeta^\top(a-a_U)}$ to degree $j_l$ on
$\|\zeta\|_2\le S_l$. If $d_A<4\nu$, then $F_l=S_l$ and
$|\zeta^\top(a-a_U)|\le2\sqrt{d_A}$. The real-axis Taylor
remainder, paired with a coefficient of norm at most $\bar B$,
is at most $\bar B(2\sqrt{d_A})^{k_T}/k_T!\le T^{-2}$.

For $d_A\ge4\nu$, put $k=k_A>2\nu$.
The mixture representation and the lower tail of $W$ yield
$\Pr(\|\zeta\|_2>x)\le C_{\nu,\kappa}d_A^\nu x^{-2\nu}$.
Tail integration and rotational symmetry give, for any $v$,
\begin{align*}
\mathbb E[\|\zeta\|_2^{2k}\mathbf1\{\|\zeta\|_2\le S\}]
&\le C_{\nu,k,\kappa}d_A^\nu S^{2k-2\nu},\ \mathbb E|U^\top v|^{2k}
=\frac{(2k-1)!!\|v\|_2^{2k}}
{d_A(d_A+2)\cdots(d_A+2k-2)},
\end{align*}
where $U$ is uniform on the unit sphere in $\mathbb R^{d_A}$.
Since $\|a-a_U\|_2\le\sqrt{d_A}/F_l$, the action dimension
in this diameter cancels the denominator in the directional
moment. The feature Taylor remainder therefore has norm at most
$C_{\nu,\kappa}d_A^{\nu/2}F_l^{-k_A}S_l^{k_A-\nu}$.
Pairing with the Bellman coefficient and substituting $F_l$
proves the asserted low-pass error.

The selected visits to a counting cell are at most $N_l$ by
Lemma~\ref{lem:upper-partition-counts}. Hence Cauchy--Schwarz
gives the first term in \eqref{eq:upper-first-approximation-sum}.
The low-pass terms sum directly.
Finally, $k!\ge(k/\mathrm e)^k$ gives
\[
k_T\le\left\lceil\max\{1,4\mathrm e\sqrt{d_A},
\log_2(\bar BT^2)\}\right\rceil.
\]
For $d_A<4\nu$, this bounds
$\binom{d_A+k_T-1}{d_A}$ by $C_\nu\iota_T^{d_A}$.
For $d_A\ge4\nu$, the degree $k_A-1=\lceil2\nu\rceil$
is fixed, giving the other dimension bound.

We next establish the mixed-norm assertions. Keep the spectral
representation and the notation $P_{>F}$ used above.
The first inequality in \eqref{eq:upper-highpass-kernel}, with the
change of variable $W\mapsto A_FW$, gives the sharper identity-bound
\[
|k_{>F}((s,a),(s',a'))|
\le\sqrt2 A_F^{-\nu}
k_\nu(\kappa\sqrt{A_F}\|s-s'\|),\
A_F:=1+\frac{F^2}{2d_A\kappa^2},
\]
where $k_\nu$ is the normalized radial Mat\'ern function with
unit inverse lengthscale. Write $g_{>F}:=P_{>F}g$.
Integrating the same Gaussian mixture over $\mathbb R^{d_S}$ gives
\[
\int_{\mathbb R^{d_S}}k_\nu(\kappa\sqrt{A_F}\|x\|)\,dx
=(4\pi)^{d_S/2}\kappa^{-d_S}A_F^{-d_S/2}
\frac{\Gamma(\nu_S)}{\Gamma(\nu)}.
\]
Apply Schur's test to the positive semidefinite kernel restricted
to the graph of any measurable action selector $a(s)$.
The squared norm of its evaluation map from the spectral feature
space into $L^2(\mu)$ is at most the displayed row-integral bound.
Thus
\begin{equation}
\left\|\sup_a|g_{>F}(\cdot,a)|\right\|_2
\le B\left[\sqrt2(4\pi)^{d_S/2}\kappa^{-d_S}
\frac{\Gamma(\nu_S)}{\Gamma(\nu)}A_F^{-\nu_S}\right]^{1/2}.
\label{eq:upper-mixed-highpass}
\end{equation}
The supremum follows by measurable approximate maximizers;
each spectral component is continuous in the action.
Since $A_F\ge F^2/(2d_A\kappa^2)$, the coefficient of $BF^{-\nu_S}$ is
\[
G=(2\sqrt{d_A})^{\nu_S}
2^{1/4}(\kappa/\sqrt2)^\nu(2\pi)^{d_S/4}
\left(\frac{\Gamma(\nu_S)}{\Gamma(\nu)}\right)^{1/2}.
\]
The definition of $a_{\rm mix}$ gives $G\le a_{\rm mix}^{\nu_S}$.
The gamma bound $\Gamma(\nu_S)^{1/\nu_S}\le C_\nu(1+\nu_S)$ also gives
$a_{\rm mix}\le C_{\nu,\kappa}\sqrt{d_A(1+\nu_S)}
\le C_{\nu,\kappa}(2+d)$.

For the first assertion take the cutoff $F=a_{\rm mix}R$ and
Taylor-expand its complementary feature in the action coordinates
around the center of each action rectangle.
The degree-$k'-1$ remainder has feature norm at most
$(2\sqrt{d_A})^{k'}/k'!$.
Because $k'\ge8\mathrm e[d_A+\ln(2T)]$, the inequality
$k'!\ge(k'/\mathrm e)^{k'}$ bounds this by
$4^{-k'}\le T^{-2}$.
Pairing with the coefficient of $g$ and adding its high-frequency
envelope proves the first error bound.
The polynomial rank satisfies
$\binom{d_A+k'-1}{d_A}\le[\mathrm e(d_A+k'-1)/d_A]^{d_A}
\le[C\ln(\mathrm eT)]^{d_A}$.

For the second assertion use cutoff $R$.
Combining the diagonal kernel bound with \eqref{eq:upper-mixed-highpass} gives
\[
\|\sup_a|g_{>R}(\cdot,a)|\|_2
\le B\min\{C_{\nu,\kappa}d_A^{\nu/2}R^{-\nu},GR^{-\nu_S}\}.
\]
Interpolate these two bounds with weight $\theta:=2\nu/d_A$ on
the latter. The exponent becomes $\nu+\theta d_S/2=\nu d/d_A$.
Moreover $\theta \nu_S\le3\nu/2$ when $d_A\ge4\nu$ and $d_S<d_A$, so
$[C_{\nu,\kappa}d_A^{\nu/2}]^{1-\theta}G^\theta
\le C_{\nu,\kappa}(2+d)^{2\nu}$.
The fixed-degree low-frequency Taylor estimate proved above,
on rectangles of side at most $2/F$, contributes at most
$C_{\nu,\kappa}B d_A^{\nu/2}F^{-k}R^{k-\nu}$ pointwise.
Adding it to the high-frequency state envelope proves the second
assertion and completes the mixed-norm calculation.
\end{proof}

\begin{proof}[Proof of Lemma~\ref{lem:upper-remainder-cover}]
We first control the width of the raw remainder features, then
convert that bound into an internal covering number.
Let $A=C_{\nu,\kappa}(2+d)^{\rho/2+9\nu/2}S_l^{-\nu}$.
Every state bounding rectangle has maximum side at most
$4\cdot4096(d_S+1)d_S^{5/2}\sqrt{d_A}/S_l$;
the action sides are at most $2/S_l$.
Lemma~\ref{lem:feature-remainder} therefore gives
$\sup_z\|\Psi_{\mathcal C_l}(z)\|\le A$.

Set $s=\min\{\nu/2,1/2\}$. The one-dimensional spectral
marginal implies
$\|\Phi(z)-\Phi(z')\|\le C_{\nu,\kappa}\|z-z'\|_2^s$.
Within one bounding rectangle, Hilbert-valued Bessel's inequality
gives $\sum_{\mathbf b}\|\zeta_{c,\mathbf b}\|^2\le1$.
The rectangle's shortest side is at least $1/(2S_l)$, and the
Jacobian of the degree-$\rho$ Legendre feature vector has
Frobenius norm at most $C_\nu S_l\sqrt D$.
Consequently, within the same rectangle,
\[
\|\Psi_{\mathcal C_l}(z)-\Psi_{\mathcal C_l}(z')\|
\le C_{\nu,\kappa}(1+S_l\sqrt D)\|z-z'\|_2^s
\ \text{when }\|z-z'\|_2\le1.
\]
Cover the at most $S_l^d$ rectangles separately by Euclidean
grids. For $0<u\le1$, their canonical covering entropy at
distance $uA$ is at most
$C_{\nu,\kappa}d[\ln(\mathrm e+d+S_l)+\ln(1/u)]$.
Indeed a grid with mesh proportional to
$\min\{1,(uA/[C_{\nu,\kappa}(1+S_l\sqrt D)])^{1/s}\}$
suffices, and all factors outside $u$ have logarithms bounded
by $C_{\nu,\kappa}\ln(\mathrm e+d+S_l)$.
Lemma~\ref{lem:gaussian-entropy-comparison} then bounds the
expected absolute supremum of the associated isonormal process
by $C_{\nu,\kappa}A\sqrt{d\iota_T}$.

For the lifted feature
$\psi(z,V):=\Psi_{\mathcal C_l}(z)\otimes(1,V/(H-1))$,
the components associated with $\mathbb R$ and $L^2(\mu)$ are
orthogonal. Apply the squared triangle inequality only to the latter.
Since $\|V\|_{L^2(\mu)}\le H-1$ and
$\|\Psi_{\mathcal C_l}(z')\|\le A$, the squared increment satisfies
\[
\|\psi(z,V)-\psi(z',V')\|^2
\le3\|\Psi_{\mathcal C_l}(z)-\Psi_{\mathcal C_l}(z')\|^2
+\frac{2A^2}{(H-1)^2}\|V-V'\|_{L^2(\mu)}^2.
\]
Apply Gaussian comparison to the sum of independent processes on
the two factors, adjoining negatives and zero for the absolute
supremum. The continuation entropy bound and
$\|V-V'\|_{L^2(\mu)}\le\|V-V'\|_\infty$ imply
\begin{equation}
W_l:=\mathbb E\sup_{z,V}|G(\psi(z,V))|
\le C_{\nu,\kappa}A
\left(\sqrt{d\iota_T}+\frac{J}{H-1}\right).
\label{eq:upper-lifted-width}
\end{equation}
The finite union of cell closures and the totally bounded
continuation class make this feature set totally bounded.

For completeness, the covering step can be proved directly.
In a finite-dimensional feature span define
$\|\theta\|_\psi=\sup_{z,V}|\langle\theta,\psi(z,V)\rangle|$.
Take an $\varepsilon$-separated subset $\theta_1,\ldots,\theta_K$
of the parameter ball of radius
$B_1:=\Lambda\sqrt{B_r^2+(H-1)^2u^2}\le\bar B$.
The translates of the symmetric set
$D_\varepsilon=\{x:\|x\|_\psi<\varepsilon/2\}$ are disjoint.
A centered Gaussian with coordinate variance
$\sigma^2=\varepsilon^2/(16W_l^2)$ assigns probability at least
$1/2$ to $D_\varepsilon$ by Markov's inequality.
Symmetrizing its density about zero gives
\[
\Pr(G_\sigma\in\theta_j+D_\varepsilon)
\ge e^{-\|\theta_j\|_2^2/(2\sigma^2)}
\Pr(G_\sigma\in D_\varepsilon).
\]
Summing the disjoint translates yields
$\ln K\le\ln2+8B_1^2W_l^2/\varepsilon^2$.
A maximal separated set is an internal net.

In the Hilbert space, approximate the compact feature set by a
finite-dimensional orthogonal projection with uniform error at
most $\varepsilon/(2B_1)$, and apply the preceding argument at
accuracy $\varepsilon/2$. Projected Gaussian width cannot increase,
by Gaussian comparison. The embedded centers retain norm at most
$B_1$ and give an internal $\varepsilon$-net of $\mathcal R_l$.
If $B_1W_l=0$, one center suffices.
Substituting \eqref{eq:upper-lifted-width} proves
\eqref{eq:upper-remainder-cover}.
\end{proof}

\begin{proof}[Proof of Lemma~\ref{lem:upper-epoch-comparison}]
All quantities in this proof refer to the fixed level and epoch;
we suppress their indices. The updates are stopped just before
the first empty version space. This is a predictable stopping
rule, so none of the concentration arguments conditions on a
future confidence event.

Put $k=\ln(2M/\delta_{l,e})$ and
$\ell=\ln(8L_lN/\delta_{l,e})$.

For each model $j$, write
$\mu_{t,j}=(\mathcal B^1V_{f_t(j)}^2)(s_t^1,a_t(j))$
and $c_t(a)=\min\{H,2\bar b_t(s_t^1,a)\}$.
Define the normalized augmented loss by
$\ell_{t,j}:=1-[\mu_{t,j}+c_t(a_t(j))]/(2H)$, which lies in $[0,1]$.
Write $\overline X_{t,j}:=\mathbb E_tX_{t,j}$ for the conditional mean
of its estimate, using the pre-sampling expectation from
Section~\ref{sec:first-step-weight-updates}.
Lemma~\ref{lem:first-step-feedback} gives
\begin{equation}
\begin{gathered}
|\overline X_{t,j}-\ell_{t,j}|
\le\frac{1+2\sqrt{r_l/L_l}}{2H}\epsilon_l(s_t^1),\
\mathbb E_tX_{t,j}^2\le v_{t,j}\le R,\\
|X_{t,j}|\le R:=2+\frac{r_l}{2\gamma},\
\sum_j\omega_{t,j}v_{t,j}\le C r_l\ln(\mathrm eL_l/\gamma)
\le C r_l\ell.
\end{gathered}
\label{eq:upper-epoch-moments}
\end{equation}
The last logarithmic bound follows directly from
$\gamma=\tfrac12\min\{1,\sqrt{r_lk/(N\ell)}\}$,
$r_l\ge1$, and $k\ge\ln2$.

We verify the comparison for the actual update in
\eqref{eq:expert-weight-update}. Its multiplier is
$m_{t,j}=1-\eta X_{t,j}+4\eta^2v_{t,j}$.
The prescribed parameters give
$\eta\le1/256$ and $\eta R=2\eta+1/256\le3/256<1/64$.
For $u=-\eta X_{t,j}+4\eta^2v_{t,j}$ this implies
$|u|<1/32$ and
$\mathbb E_tu^2\le(2+1/128)\eta^2v_{t,j}$.
Using $(1+u)^{-1}=1-u+u^2/(1+u)$ therefore gives
\begin{align*}
\mathbb E_t[m_{t,j}^{-1}]
&\le1+\eta\overline X_{t,j}-4\eta^2v_{t,j}
+\frac{32}{31}(2+1/128)\eta^2v_{t,j}
\le e^{\eta\overline X_{t,j}}.
\end{align*}
In particular every update multiplier is positive.

Take the fixed index $j_*$ from
Lemma~\ref{lem:finite-remainder-comparison}.
The process
$\exp(-\eta\sum_{i\le t}\overline X_{i,j_*})/\prod_{i\le t}m_{i,j_*}$
is a nonnegative supermartingale starting at one, since
$\overline X_{i,j_*}$ is predictable and the reciprocal bound above holds.
Ville's inequality implies, with failure probability at most $\delta_{l,e}/3$,
\[
\sum_{i\le t}\ln m_{i,j_*}
\ge-\eta\sum_{i\le t}\overline X_{i,j_*}
-\ln(3/\delta_{l,e})
\]
for every prefix. The total unnormalized weight, initialized at
$1/M$ per model, is at least $M^{-1}\prod_{i\le t}m_{i,j_*}$.
Its logarithm is also the sum of the logarithms of the update
normalizers. Applying $\ln(1+x)\le x$ to each normalizer yields
\begin{equation}
\sum_{i\le t}\left[\sum_j\omega_{i,j}X_{i,j}-\overline X_{i,j_*}\right]
\le\frac{\ln M+\ln(3/\delta_{l,e})}{\eta}
+C\eta r_l\ell N.
\label{eq:upper-multiplier-comparison}
\end{equation}

The martingale difference between
$\sum_j\omega_{t,j}X_{t,j}$ and its conditional mean has absolute
value at most $2R$ and conditional variance at most $Cr_l\ell$,
by Jensen's inequality and \eqref{eq:upper-epoch-moments}.
The maximal Freedman inequality bounds its adverse sum by
$C\sqrt{r_l\ell N\ln(3/\delta_{l,e})}
+CR\ln(3/\delta_{l,e})$.
This uses another failure budget $\delta_{l,e}/3$.

For the actually chosen pair let
$\ell_t^{\rm play}=1-[(\mathcal B^1V_{f_t}^2)(s_t^1,a_t^1)
+c_t(a_t^1)]/(2H)\in[0,1]$.
The exploration branch changes its conditional expectation by
at most $\gamma$, giving
$\mathbb E_t\ell_t^{\rm play}
\le\sum_j\omega_{t,j}\ell_{t,j}+\gamma$.
The maximal bounded-difference inequality, with the remaining
failure budget, bounds the difference between the actual and
conditional sums by $C\sqrt{N\ln(3/\delta_{l,e})}$.
Combining these three events with
\eqref{eq:upper-multiplier-comparison} proves
\begin{align*}
\sum_t(\ell_t^{\rm play}-\ell_{t,j_*})
\le{}&C\left[\frac{k}{\eta}+\eta r_l\ell N+\gamma N
+\sqrt{r_l\ell Nk}+Rk\right]
+\frac{1+2\sqrt{r_l/L_l}}{H}\sum_t\epsilon_l(s_t^1).
\end{align*}
No independence between different models' loss estimates is used;
they are all computed from the same observation.

If $r_lk<N\ell$, substituting the square-root branch of $\eta$
bounds the bracket by $C\sqrt{r_lN\ell k}$.
If $r_lk\ge N\ell$, this square root is at least $N$;
the deterministic first-step gap bound $HN$ already implies
\eqref{eq:upper-epoch-comparison}.
Thus the algorithm requires no extra fallback case.

On the regression confidence event, the fixed comparison model
satisfies
$\sup_{a,f}(\mathcal B^1V_f^2)(s_t^1,a)
\le\mu_{t,j_*}+c_t(a_t(j_*))$.
Consequently the gap for the played pair is at most
$2H(\ell_t^{\rm play}-\ell_{t,j_*})+c_t(a_t^1)$.
It remains to sum this last allowance along the actual plays.
In each occupied regression cell with $n_c$ observations,
Lemma~\ref{lem:elliptical-potential}, $\lambda=X^2$, and
Cauchy-Schwarz inequality give
\begin{align*}
\sum_{t:z_t^1\in c}w_t(z_t^1)^2
&\le2D\ln(1+N/D),\ \sum_{t:z_t^1\in c}
\sqrt{|\widetilde{\mathcal T}_t(c)|}\,w_t(z_t^1)
\le n_c\sqrt{2D\ln(1+N/D)}.
\end{align*}
Sum the second bound over cells; it has no cell-count factor.
Sum the first bound over the $P_{l,e}$ occupied cells and apply
Cauchy--Schwarz once more for the regression-width cost.
Since $\sum_cn_c\le N$, these are exactly the first two terms
in \eqref{eq:upper-epoch-comparison}. All three concentration
bounds are maximal over prefixes, completing the proof.
\end{proof}
\begin{proof}[Proof of Proposition~\ref{prop:upper-first-step}]
We sum the epoch comparison bound without changing the partitions or
the update parameters. In this proof only, put $a=\rho/2+9\nu/2$.
The definitions give
$S_{\max}=T^{(1-e_*)/\nu}$ and
$N_l=\lceil T^{2e_*-1}S_l^{2\nu}\rceil$.
Lemma~\ref{lem:continuation-entropy} and
\eqref{eq:regression-confidence-parameter} imply
\begin{equation}
\overline\beta(|\mathcal C_l|,\delta_{l,e})
\le C_{\nu,\kappa}\mathfrak B
(2+d)^{\max\{\rho/2,3\}}\iota_T^{1/2}.
\label{eq:upper-first-beta}
\end{equation}
Indeed $D\le C_\nu(2+d)^\rho$, $|\mathcal C_l|\le S_l^d$,
$l\le C_\nu d\ln T$, and $e\le\log_2T$.

Substitute the net bound of Lemma~\ref{lem:upper-remainder-cover}
and the algorithm's
$\varepsilon=S_l^{-\nu/2}(r_l/N)^{1/4}$ into
\eqref{eq:upper-epoch-comparison}. Since
$J\le C_{\nu,\kappa}(H+d^3\bar B)$ and $H/(H-1)\le2$,
the terms depending on the net and its accuracy are at most
\[
C_{\nu,\kappa}\mathfrak B^2(2+d)^{a+3}\iota_T
S_l^{-\nu/2}r_l^{1/4}N^{3/4}
+C_\nu H\iota_T\sqrt{r_lN}.
\]
This keeps the separate square-root cost even when the kernel
remainder is small.

For $p\in\{1/2,3/4\}$, the capacities of the epochs started at
level $l$ satisfy $\sum_e(2^e)^p\le C_pT_l^p$.
Also $P_{l,e}\le P_lS_l^{d_A}$ and $r_l\le (r_l/L_l)F_l^{d_A}$.
Lemma~\ref{lem:upper-partition-counts} yields
\begin{equation}
\begin{aligned}
\sqrt{P_lS_l^{d_A}T_l}&\le C_\nu\sqrt{d_S+1}\,T^{e_*},\
S_l^{-\nu/2}F_l^{d_A/4}T_l^{3/4}&\le C_\nu T^{e_*},\\
\sum_lF_l^{d_A/2}\sqrt{T_l}&\le\sqrt{L_{\max}+1}\,T^{e_*},\
\sum_lS_l^{-\nu}\sqrt{N_lT_l}
&\le\sqrt{2(L_{\max}+1)}\,T^{e_*}.
\end{aligned}
\label{eq:upper-level-sums}
\end{equation}
Here are the scale calculations for the first two bounds. If
$d_A\le2\nu$, put $Q=2\nu+d_A+2d_S$ and $S_*=T^{1/Q}$.
Then $e_*=e_B$ and
\[
\sqrt{P_lS_l^{d_A}T_l}
\le C_\nu\sqrt{d_S+1}
\min\{T^{1/2}S_l^{(d_S+d_A)/2},
T^{3/2-e_*}S_l^{(d_A-2\nu)/2}\}.
\]
The two powers meet at $S_*$, where their value is $T^{e_B}$.
For the second bound, $F_l=S_l$ and
$T_l\le\min\{T,2T^{2e_*-1}S_l^{d_S+2\nu}\}$;
substitution again gives an increasing and a decreasing power
meeting at $S_*$. If $2\nu\le d_A<4\nu$, then $e_*=e_A$;
the relevant scale powers are nonnegative and attain their largest
values at $S_{\max}=T^{1/(2\nu+d_A)}$.
For $d_A\ge4\nu$, use $k_A>2\nu$ and
\[
S_l^{-\nu/2}F_l^{d_A/4}T_l^{3/4}
\le T^{3/4+d_A\nu/[4k_A(2\nu+d_A)]}
S_l^{d_A(k_A-\nu)/(4k_A)-\nu/2}
\le T^{e_A}.
\]
The exponent of $S_l$ is nonnegative because $d_A\ge4\nu$.
Finally, the last two bounds in \eqref{eq:upper-level-sums}
follow from $F_l\le S_{\max}$, $\sum_lT_l\le T$,
$e_*\ge e_A$, and $N_l\le2T^{2e_*-1}S_l^{2\nu}$.

Combining these bounds with Lemma~\ref{lem:upper-action-approximation}
gives a factor of at most
\[
C_{\nu,\kappa}\Big[
\mathfrak B(2+d)^{\rho/2+\max\{\rho/2,3\}+3/2}\iota_T^2
+\mathfrak B^2(2+d)^{a+4}(r_l/L_l)^{1/4}\iota_T^2
+\mathfrak B(2+d)^{\nu/2+1}(1+\sqrt{r_l/L_l})\iota_T^{3/2}
\Big]
\]
in front of $T^{e_*}$, with the maximum over the two possible
local action dimensions understood. The low-pass approximation
contributes at most
$C_{\nu,\kappa}\bar B d_A^{\nu/2}(1+\sqrt{r_l/L_l})T^{e_A}$
in the large-action case and $C(1+\sqrt{r_l/L_l})/T$ otherwise.
The action-grid approximation adds $T^{-1}$.
For $d_A<4\nu$, use $(r_l/L_l)^{1/2}\le C_\nu\iota_T^{d_A/2}$;
for $d_A\ge4\nu$, use
$(r_l/L_l)^{1/2}\le C_\nu(2+d)^{\lceil2\nu\rceil/2}$.
Every displayed term is bounded by the right side of
\eqref{eq:upper-first-step-total}; in particular,
$\lceil2\nu\rceil\le2\nu+1$ and $d_A/2<2\nu$ in the respective
cases. All constants outside the displayed powers depend only on
$\nu$ and $\kappa$.

The event of Lemma~\ref{lem:initial-confidence} fails with probability
at most $\delta/8$. The epoch comparison events have total failure
probability at most
$\sum_{l,e}\delta_{l,e}=\pi^4\delta/2304<\delta/8$.
Their intersection proves the claim, including unfinished epochs.
\end{proof}
\begin{proof}[Proof of Lemma~\ref{lem:upper-historical-residual}]
Fix a stage and suppress its superscript. Write $\alpha:=\alpha_{\rm vs}$.
Let $\mathsf P$ be the law of an isonormal family on $\mathcal H_0$,
so that its evaluations at the remainder features give the Gaussian
family used by the test. For a query $z$ in a cell $c\in\mathcal C$, write
\[
\begin{aligned}
a_i(W,z)&:=\operatorname{sgn}(\alpha W(g_i(z)))
\min\{H,(|\alpha W(g_i(z))|-\tau_i(z))_+\},\\
\mu_i(f,z)&:=\Delta_f(z)-\phi_c(z)^\top G_i(c)^{-1}v_{i,c}(f).
\end{aligned}
\]
Thus $U_i(f)=\mu_i(f,z_i)-\xi_i(V_f)$.
On the coarse confidence event, all-prefix feasibility and
Lemma~\ref{lem:bellman-residual-decomposition} give, for every
current retained $f$, every $i<t$, and every $z$,
\begin{equation}
|\mu_i(f,z)-\Delta_f(z)|\le\beta w_i(z),\
|\mu_i(f,z)-\langle\Delta_f,g_i(z)\rangle|
\le(XH+\beta)w_i(z).
\label{eq:upper-innovation-drift}
\end{equation}

We establish two uniform change-of-measure bounds before selecting
a candidate. For fixed $W$, $a_i(W,z_i)$ is bounded by $H$ and is
known before the current reward and next state. The positive-sign
bound in Lemma~\ref{lem:innovation-offset-moment}, followed by
integration under $\mathsf P$, Markov's inequality, and a union
bound over stages and prefixes, gives an event of probability
at least $1-\delta/8$. On this event, the variational formula in
Lemma~\ref{lem:gibbs-variational} gives, simultaneously for every
$\mathsf Q\ll\mathsf P$, $V\in\mathcal V_h$, and $t$,
\begin{equation}
\mathbb E_{\mathsf Q}\sum_{i<t}a_i(W,z_i)\xi_i(V)
\le\frac1{16}\mathbb E_{\mathsf Q}\sum_{i<t}a_i(W,z_i)^2
+320\left[J^2+H^2
 \bigl(\operatorname{KL}(\mathsf Q\|\mathsf P)+\ell_\delta\bigr)\right].
\label{eq:upper-positive-offset}
\end{equation}

For the second bound set $X_i(W):=a_i(W,z_i)^2\in[0,H^2]$.
Its query function is fixed before episode $i$'s trajectory and is
independent of the candidate being tested. For $c:=1-e^{-1}>1/2$,
conditional expectation of $e^{-X_i(W)/H^2}$ is at most
$\exp\{-c\mathbb E_i^hX_i(W)/H^2\}$.
The resulting exponential supermartingale, integrated under the
fixed prior, and the same variational formula therefore give,
with additional failure probability at most $\delta/8$,
\begin{equation}
\mathbb E_{\mathsf Q}\sum_{i<t}\mathbb E_i^h X_i(W)
\le2\mathbb E_{\mathsf Q}\sum_{i<t}X_i(W)
+2H^2\bigl(\operatorname{KL}(\mathsf Q\|\mathsf P)+\ell_\delta\bigr)
\label{eq:upper-fixed-prior-population}
\end{equation}
for all $\mathsf Q$, stages, and prefixes.
Both events hold for every posterior, including a posterior chosen
after observing the history.

Now fix $f\in\mathcal F_t$. Under $\mathsf P$, let $\mathcal E$
be the symmetric event
$\max_c\sup_{z\in\overline c}|W(\Psi_c(z))|\le q_0A_{\rm vs}$.
Its probability is at least $1/2$ by \eqref{eq:vs-gaussian-envelope}.
Condition on $\mathcal E$ and shift by $\Delta_f/\alpha$;
denote this law by $\mathsf Q_f$.
By Lemma~\ref{lem:isonormal-shift} and symmetry,
\[
\operatorname{KL}(\mathsf Q_f\|\mathsf P)
=\frac{\|\Delta_f\|_{\mathcal H_0}^2}{2\alpha^2}
+\ln\frac1{\mathsf P(\mathcal E)}
\le\frac{B_{\rm res}^2}{2\alpha^2}+\ln2.
\]
The symmetry makes the conditional mean of the linear
Cameron--Martin term zero. Passing to Gaussian evaluations can
only decrease relative entropy.

The regression coefficients satisfy
$\sum_{j\in\mathcal T_i(c)}
|\phi_c(z)^\top G_i(c)^{-1}\phi_c(z_j)|
\le\sqrt{n_i(c)}w_i(z)$.
Consequently every $W$ in this comparison law satisfies
\begin{align*}
|\alpha W(g_i(z))-\langle\Delta_f,g_i(z)\rangle|
&\le\alpha q_0A_{\rm vs}(1+\sqrt{n_i(c)}w_i(z)),\ 
|\alpha W(g_i(z))-\Delta_f(z)|\le\tau_i(z).
\end{align*}
The second inequality uses \eqref{eq:upper-innovation-drift}.
Soft thresholding then has the sign of $\Delta_f(z)$,
clipping is inactive since $|\Delta_f(z)|\le H$, and
\begin{equation}
(|\Delta_f(z)|-2\tau_i(z))_+\le|a_i(W,z)|,\
a_i(W,z)\mu_i(f,z)-\beta w_i(z)|a_i(W,z)|
\ge a_i(W,z)^2.
\label{eq:upper-comparison-weight}
\end{equation}
For the latter inequality, when the weight is nonzero, multiply
$\mu_i(f,z)$ by the weight's sign and use the first line above
and the second bound in \eqref{eq:upper-innovation-drift}; the
threshold leaves precisely the additional allowance $\beta w_i(z)$.

Feasibility in \eqref{eq:version-space} and the variational formula
bound the expected test score under $\mathsf Q_f$ by
$320[J^2+H^2(1+\ell_\delta+
\operatorname{KL}(\mathsf Q_f\|\mathsf P))]$.
On the other hand, \eqref{eq:upper-comparison-weight} and
\eqref{eq:upper-positive-offset} bound it below by
$\tfrac78\mathbb E_{\mathsf Q_f}\sum_{i<t}a_i(W,z_i)^2$
minus another $320[J^2+H^2(\ell_\delta+
\operatorname{KL}(\mathsf Q_f\|\mathsf P))]$.
This bounds the empirical squared weights.
Insert that bound into \eqref{eq:upper-fixed-prior-population}
and use the first inequality in \eqref{eq:upper-comparison-weight}
at every counterfactual query. This proves
\eqref{eq:upper-historical-residual}.
Truncating $2\tau_i$ at $H$ does not change its positive-part
residual. In particular, no concentration argument treats the
retrospectively chosen $f$ as predictable.

It remains to charge the predictable slacks along actual observations.
Lemma~\ref{lem:elliptical-potential}, with $\lambda=X^2$, gives
$\sum_{i:z_i\in c}w_i(z_i)^2\le K_{\rm c}$ in each cell.
Applying Cauchy--Schwarz first within each cell and then across cells
yields
\[
\sum_iw_i(z_i)\le\sqrt{MTK_{\rm c}},\
\sum_i\sqrt{n_i(c_i)}w_i(z_i)\le T\sqrt{K_{\rm c}}.
\]
The second inequality uses
$\sum_{i:z_i\in c}n_i(c)\le|\{i:z_i\in c\}|^2$.
The threshold definition bounds $\sum_i \sigma_i(z_i)\le2\sum_i\tau_i(z_i)$
by the first two terms in \eqref{eq:upper-slack-sum}.
Finally $\sigma_i\le H$ is a predictable query function.
Lemma~\ref{lem:martingale-tools}(iii), with a union bound over stages,
gives $\sum_i\mathbb E_i^h \sigma_i(z_i)\le2\sum_i \sigma_i(z_i)+2H\ell_\delta$
with additional failure probability at most $\delta/8$.
Together with the coarse confidence event and the two preceding
events, this totals at most $\delta/2$ and proves the lemma.
\end{proof}

\begin{proof}[Proof of Lemma~\ref{lem:upper-residual-conversion}]
We give the matrix argument explicitly, since it is what permits
changing candidates without multiplying their approximation error
by $\sqrt L$.
Index the action rectangles by $j\in[L]$.
Whiten their polynomial bases under a G-optimal design $\pi_j$ on each
closure, using Lemma~\ref{lem:g-optimal-design}. Denote the resulting
features by $\psi_j(a)$; then
$\int\psi_j\psi_j^\top d\pi_j=I$ and
$\sup_a\|\psi_j(a)\|^2\le r$.
Continuity extends the bounds from a half-open rectangle to its
closure. Write the approximant as $\theta_{t,j}(s)^\top\psi_j(a)$.
Where $\varepsilon_t(s)>H$, replace it by $H$ and this approximant
by zero. This preserves approximation and can only reduce its
$L^2$ bound. Hence we may assume
$\varepsilon_t\le H$ and $\|\theta_{t,j}(s)\|\le2H$.

For any subdensities $q_{i,j}(s)\le
p_i(s)\mathbf1\{a_i(s)\in j\}$, set, only within this proof,
\[
A_{t,j}(s):=L^{-1}I+\sum_{i<t}q_{i,j}(s)
\psi_j(a_i(s))\psi_j(a_i(s))^\top,\
n_{t,j}(s):=\sum_{i<t}q_{i,j}(s),\
S_j(s):=\sum_{i\le T}q_{i,j}(s).
\]
Write $w_{t,j}^2:=\psi_j(a_t)^\top A_{t,j}^{-1}\psi_j(a_t)$
and $v_{t,j}:=q_{t,j}w_{t,j}^2$.
The determinant lemma and the trace bound give
\begin{equation}
\sum_t\ln(1+v_{t,j})
\le r\ln(1+LS_j),\
\sum_j\int\sum_t\ln(1+v_{t,j})\,d\mu
\le Lr\ln(1+T).
\label{eq:upper-integrated-potential}
\end{equation}
For the last inequality use concavity and
$\sum_j\int S_j\,d\mu\le T$.
Also $S_j\le uT$.

Use the bins $[H2^{-j},H2^{1-j})$, $1\le j\le\lceil\log_2T\rceil$,
intersected with $[H/T,H]$, and include the endpoint $H$ in the
first bin. There are at most $G_T$ bins, and their lower endpoints
sum to at most $H$; residuals below $H/T$ contribute at most $H$.
For one bin put $\epsilon:=\tau/(C_0\sqrt{r\Omega_T})$, with
$C_0$ a sufficiently large numerical constant.
Points with current slack $\sigma_t(s,a_t(s))>\epsilon$ cost at most
$2C_0\sqrt{r\Omega_T}\,\sigma_t(s,a_t(s))$.
Since each point belongs to just one bin, their total contribution
is bounded by the slack term in the lemma.
For the remaining points take
\[
q_{t,j}(s):=p_t(s)\mathbf1\{a_t(s)\in j\}
\mathbf1\{\tau\le|g_t(s,a_t(s))|<2\tau\}
\mathbf1\{\sigma_t(s,a_t(s))\le\epsilon\}.
\]
These are unnormalized subdensities, still bounded by $u$.
No conditional density given an action block or a bin is used.

For the current $g_t$, define
$r_t(s,a):=(|g_t(s,a)|-\epsilon-\varepsilon_t(s))_+$.
On every earlier retained point, its historical slack is at most
$\epsilon$. Discarding nonnegative terms in the assumed historical
bound therefore gives
$\sum_{i<t,j}\int q_{i,j}r_t(s,a_i(s))^2\,d\mu\le b$.
Since
$|\theta_{t,j}^\top\psi_j(a_i)|
\le r_t(s,a_i)+\epsilon+2\varepsilon_t(s)$,
matrix Cauchy--Schwarz implies
\begin{equation}
|g_t(s,a_t)|\le
w_{t,j}\sqrt{C_{t,j}}
+C\epsilon\sqrt{n_{t,j}}w_{t,j}
+C\varepsilon_t(s)\sqrt{n_{t,j}}w_{t,j}
+\varepsilon_t(s),
\label{eq:upper-matrix-gap}
\end{equation}
where one may take
$C_{t,j}:=L^{-1}\|\theta_{t,j}\|^2
+3\sum_{i<t}q_{i,j}r_t(s,a_i)^2$.
Thus $\sum_j\int C_{t,j}\,d\mu\le4H^2+3b$.

On $v_{t,j}>1$, the current gap is at most $2\tau$ and
$q_{t,j}\le u$. By \eqref{eq:upper-integrated-potential}, these
points cost at most $C u\tau Lr\ln(1+T)$.
On $v_{t,j}\le1$, the first term in \eqref{eq:upper-matrix-gap}
costs at most
$C\sqrt{uLrT(b+H^2)\ln(1+T)}$ after Cauchy--Schwarz and
$v\le2\ln(1+v)$.
At each fixed $(s,j)$, the constant-$\epsilon$ term satisfies
\[
\sum_{t:v_{t,j}\le1}q_{t,j}\sqrt{n_{t,j}}w_{t,j}
\le S_j\sqrt{2r\ln(1+LS_j)}
\le S_j\sqrt{2r\Omega_T}.
\]
Its integrated cost is at most
$C\epsilon\sqrt{r\Omega_T}$ times the retained mass.
The retained regret is at least $\tau$ times this mass, so the
choice of $C_0$ absorbs this cost into half the retained regret.

For the changing envelope, Cauchy--Schwarz over blocks and states
gives, at each $t$,
\[
\sum_j\int\varepsilon_t q_{t,j}\sqrt{n_{t,j}}w_{t,j}
\mathbf1\{v_{t,j}\le1\}\,d\mu
\le E\sqrt{u I_t},\
I_t:=\sum_j\int q_{t,j}n_{t,j}w_{t,j}^2
\mathbf1\{v_{t,j}\le1\}\,d\mu.
\]
Here $\sum_jq_{t,j}\le u$ and the same envelope controls every
action block. The determinant bound yields
$\sum_t I_t\le2r\sum_j\int S_j\ln(1+LS_j)\,d\mu
\le2rT\Omega_T$.
Thus this term costs at most $C\sqrt{ur\Omega_T}\,ET$ in the
current bin. The direct envelope term costs at most $\sqrt u ET$
by $\|p_t\|_2\le\sqrt u$.

Sum over the bins. Their burn-in terms sum geometrically because
$\sum\tau\le H$, whereas their statistical and envelope bounds
are each used at most $G_T$ times. Adding the charged slacks and
the residuals below $H/T$ proves \eqref{eq:upper-residual-conversion}.
\end{proof}
\begin{proof}[Proof of Proposition~\ref{prop:upper-continuation-error}]
The event in Lemma~\ref{lem:upper-historical-residual} uses the
coarse regression bound and three further concentration events, each
with failure probability at most $\delta/8$. Add the negative-noise
event used in Lemma~\ref{lem:uniform-confidence}. The proof of that
lemma then gives $Q_*\in\mathcal F_t$ for every $t$, and the five
events have total failure probability at most $5\delta/8$.
We work on their intersection.

Fix $h\ge2$. Conditional on $\mathcal I_t^1$, the continuation
policy is fixed. Its state density at stage $h$ is at most $u$
by Assumption~\ref{ass:kernel-transition} and the reproducing bound
$k(z,z)\le1$. Also,
$|\Delta_{f_t}^h|\le H$ and
$\|\Delta_{f_t}^h\|_{\mathcal H_0}\le2\Lambda(B_r+Hu)$.
The first fact gives the useful cap
$\sum_t\mathbb E_t^h|\Delta_{f_t}^h(z_t^h)|\le HT$.

For this proof only, put $\bar d:=2+d$,
$L:=L_{\rm vs}$, $\alpha:=\alpha_{\rm vs}$,
$S:=q_0A_{\rm vs}$, and $\ell:=\ln(1+T)$.
Combining the two preceding lemmas gives, for either action
approximation of rank $r$ and error $E$,
\begin{equation}
\begin{aligned}
\sum_t\mathbb E_t^h|\Delta_{f_t}^h(z_t^h)|
\le C\bigg\{&uHLr\ell
+G_T\sqrt{uLrT\ell}
 \left[J+H\sqrt{1+\ell_\delta}+\frac{HB_{\rm res}}\alpha\right]\\
&+\sqrt{r\Omega_T}\left[(XH+\beta)\sqrt{MTK_{\rm c}}
 +\alpha S(1+\sqrt{K_{\rm c}})T+H\ell_\delta\right]\\
&+G_T\sqrt{ur\Omega_T}\,ET+H\bigg\}.
\end{aligned}
\label{eq:upper-continuation-interface}
\end{equation}
The prescribed $\alpha$ balances the terms containing $\alpha$
and $\alpha^{-1}$ before their logarithmic factors. Their sum is
\[
\sqrt r\bigl(G_T\sqrt\ell+\sqrt{\Omega_T}\bigr)
\sqrt{HB_{\rm res}S(1+\sqrt{K_{\rm c}})}
(uL)^{1/4}T^{3/4}.
\]

We record the dimension dependence of the remaining factors.
The local polynomial construction and
Lemma~\ref{lem:continuation-entropy} give
\begin{align*}
D&\le C_\nu\bar d^\rho,\ X\le C_\nu\bar d^{\rho/2},\
K_{\rm c}\le C_\nu\bar d^\rho\iota_T,\ J\le C_{\nu,\kappa}\mathfrak B\bar d^3,\
XH+\beta\le C_{\nu,\kappa}\mathfrak B
\bar d^{\max\{\rho/2,3\}}\iota_T^{1/2}.
\end{align*}
Here $M\le R^d$ and $d\ln R\le\ln T$ in both choices of $R$,
so the confidence logarithm in $\beta$ is $O(\iota_T)$.
Lemma~\ref{lem:feature-remainder} gives
$A_{\rm vs}\le C_{\nu,\kappa}\bar d^{(\rho+\nu)/2}R^{-\nu}$.
The scalar Gaussian entropy calculation in the proof of
Lemma~\ref{lem:upper-remainder-cover} applies to the $M\le R^d$
version-space rectangles, whose sides lie between $1/(2R)$ and $2/R$.
Use their actual envelope $A_{\rm vs}\ge R^{-\nu}$ in the canonical
covering integral. It gives
$\mathbb E\sup_{c,z}|W_c(z)|\le
C_{\nu,\kappa}A_{\rm vs}\sqrt{d\ln(\mathrm e+d+R)}$,
and hence $q_0\le C_{\nu,\kappa}\bar d^{1/2}\iota_T^{1/2}$.
Consequently,
\begin{equation}
S\le C_{\nu,\kappa}
\bar d^{(\rho+\nu+1)/2}\iota_T^{1/2}R^{-\nu}.
\label{eq:upper-tail-gaussian-size}
\end{equation}

We now substitute the two approximation regimes. If $d_A<4\nu$ or
$d_S\ge d_A$, recall $\nu_S=\nu+d_S/2$ and put $Q:=2\nu+2d_S+d_A$.
Lemma~\ref{lem:upper-action-approximation} gives
$R=T^{1/Q}$,
$L\le(C_{\nu,\kappa}\bar d)^{d_A}R^{d_A}$,
$r\le(C\iota_T)^{d_A}$, and
$E\le2\Lambda(B_r+Hu)(R^{-\nu_S}+T^{-2})$.
In this case $\Omega_T\le C_{\nu,\kappa}\bar d\iota_T$.
Define only for the following power calculation
\[
p_0:=\max\left\{\rho+\tfrac12,
\tfrac72+\tfrac\rho2,\tfrac{2\rho+\nu+3}{4}\right\}.
\]
The time powers in \eqref{eq:upper-continuation-interface} satisfy
\[
\sqrt{TR^d}=R^{(d_A-2\nu)/4}T^{3/4}=TR^{-\nu_S}=T^{e_B},\
R^{d_A}\le T^{e_B},\ \sqrt{R^{d_A}T}\le T^{e_B}.
\]
Thus the coarse, balanced-tail, and approximation terms have the
same exponent, and the other terms have no larger exponent.
Keeping all dimension and logarithmic factors gives
\begin{equation}
\sum_t\mathbb E_t^h|\Delta_{f_t}^h(z_t^h)|
\le C_{\nu,\kappa}\mathfrak B^2\bar d^{p_0}
(C_{\nu,\kappa}\bar d\iota_T)^{d_A}\iota_T^2T^{e_B}.
\label{eq:upper-continuation-raw}
\end{equation}
Indeed the coarse coefficient has dimension power at most
$\max\{\rho,3+\rho/2\}+1/2$, the balanced tail at most
$(2\rho+\nu+3)/4$, and the other coefficients at most $3$.
Their logarithm powers are at most two and their model powers at
most two. These are precisely the factors retained in the display.

When $d_A<4\nu$, the factor raised to $d_A$ in
\eqref{eq:upper-continuation-raw} is bounded by
$C_{\nu,\kappa}\bar d^{4\nu}\iota_T^{4\nu}$.
Otherwise this regime has $d_S\ge d_A\ge4\nu$ and $e_A\ge e_B$.
Set $\theta:=(1-e_A)/(1-e_B)=\nu Q/[\nu_S(2\nu+d_A)]$.
Then $0<\theta\le1$ and
$\theta d_A\le\nu Q/\nu_S=\nu(2+d/\nu_S)\le6\nu$.
Apply $\min\{x,y\}\le x^\theta y^{1-\theta}$ with $x$ equal to
the right side of \eqref{eq:upper-continuation-raw} and $y=HT$.
The time exponent is $\theta e_B+(1-\theta)=e_A=e_*$.
Since $H\le\mathfrak B$, $\theta\le1$, and $\theta d_A\le6\nu$,
this gives
\begin{equation}
\sum_t\mathbb E_t^h|\Delta_{f_t}^h(z_t^h)|
\le C_{\nu,\kappa}\mathfrak B^2
\bar d^{p_0+6\nu}\iota_T^{2+6\nu}T^{e_*}.
\label{eq:upper-continuation-one-stage}
\end{equation}
This interpolation is applied to two bounds for the same algorithm.

If $d_A\ge4\nu$ and $d_S<d_A$, put $k:=\lceil2\nu\rceil+1$ and
$a:=k-1$. The second approximation gives
$r\le C_\nu\bar d^a$, $L\le F_{\rm vs}^{d_A}\le T^{d_A/(2\nu+d_A)}$,
$\Omega_T\le C\iota_T$, and
\[
E\le C_{\nu,\kappa}\Lambda(B_r+Hu)
\left[\bar d^{2\nu}R^{-\nu d/d_A}
+\bar d^{\nu/2}F_{\rm vs}^{-k}R^{k-\nu}\right].
\]
The selected parameters give
$\sqrt{TR^d}=TR^{-\nu d/d_A}
=TF_{\rm vs}^{-k}R^{k-\nu}=T^{e_A}$.
The tail exponent is at most $e_A$, since
\[
e_A-\left(\frac34+\frac{d_A}4\log_TF_{\rm vs}
-\frac\nu2\log_TR\right)
=\frac{d_S[d_A(k-\nu)-2\nu k]}{4kd(2\nu+d_A)}\ge0.
\]
The burn-in and remaining statistics are bounded using
$L\le T^{d_A/(2\nu+d_A)}$.
The largest dimension power in the resulting bound is at most
\[
\max\left\{a,\frac a2+3,\frac a2+\rho,
\frac a2+3+\frac\rho2,
\frac a2+\frac{2\rho+\nu+1}{4},\frac a2+2\nu\right\}
\le p_0+6\nu,
\]
because $a\le2\nu+1$ and $p_0\ge7/2$.
Thus \eqref{eq:upper-continuation-one-stage} also holds in this regime.
Finally $p_0\le\rho+\nu+6$ and $H\le\mathfrak B$.
Summing over $h=2,\ldots,H$ proves the proposition.
\end{proof}

\section{Proof of Theorem~\ref{thm:regret-lower}}
\label{app:lower-bound}

We use two families of transition perturbations. A single perturbation
at a fixed initial state gives $T^{e_A}$; perturbations spread across
state cubes give $T^{e_B}$. Uniform resets allow us to repeat the second
construction at $\lfloor H/2\rfloor$ stages. The three supporting lemmas
are proved in Appendix~\ref{app:lower-bound-lemmas}.

\textbf{Common transitions and rewards.}
Put $\overline u:=(u-\|\mathbf1_{\cZ}\|_{\cH_k})/2>0$ and
$\cS^+:=[0,1/2)\times[0,1]^{d_S-1}$, and let
$\cS^-:=\cS\setminus\cS^+$. For $\|f\|_\infty\le1/4$, define
\begin{equation}
P_f(s'\mid z):=(1+2f(z))\mathbf1_{\cS^+}(s')
 +(1-2f(z))\mathbf1_{\cS^-}(s'),\ 
\overline P(s'\mid z):=1.
\label{eq:two-region-transition}
\end{equation}
Both are probability densities in $[1/2,3/2]$, since each region has
measure $1/2$. If $\|f\|_{\cH_k}\le\overline u$, then every section
satisfies $\|P_f(s'\mid\cdot)\|_{\cH_k}
\le\|\mathbf1_{\cZ}\|_{\cH_k}+2\overline u=u$.
The constant function belongs to $\cH_k$ by extending it smoothly
outside the cube and applying Definition~\ref{def:sobolev-kernel}.

Fix a smooth function $\chi_0:\mathbb R\to[0,1]$ supported in
$(-1/2,1/2)$ and equal to one on $[-1/4,1/4]$. Write
$\psi:=\chi_0^{\otimes d_S}$, $\vartheta:=\chi_0^{\otimes d_A}$,
$\chi:=\psi\otimes\vartheta$, and $\tau:=\nu+d/2$.
For a dyadic width $w=2^{-l}$, let $\mathscr B_w$ and $\mathscr A_w$
be the regular partitions of the state and action cubes into cubes
of side $w$, with fixed boundary assignments. Their cardinalities are
$w^{-d_S}$ and $w^{-d_A}$. Write $x_B,x_U$ for the cell centers and
$\vartheta_U(a):=\vartheta((a-x_U)/w)$.

\begin{lemma}[Norms of localized perturbations]
\label{lem:needle-norms}
There is $C_*>0$, depending only on $\nu,d_S,d_A$, with the following
property. If $0<w\le1$, $\varepsilon>0$, and the open cubes
$y_j+w(-1/2,1/2)^{d_S}$, $j\in[N]$, are pairwise disjoint, then
\[
f(s,a):=\varepsilon\sum_{j=1}^N
\psi((s-y_j)/w)\vartheta((a-x_j)/w),\ 
\|f\|_{\cH_k}\le c_2C_*\varepsilon\sqrt N\,w^{-\nu},
\]
for arbitrary action centers $x_j\in\cA$.
\end{lemma}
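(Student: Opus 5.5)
Since $k$ is a Sobolev kernel of order $\tau=\nu+d/2$, the upper norm comparison in Definition~\ref{def:sobolev-kernel} gives $\|f\|_{\cH_k}\le c_2\|f\|_{H^\tau(\cZ)}$. Lemma~\ref{lem:rkhs-uniqueness-restriction} (restriction cannot increase the norm) then reduces the claim to bounding $\|F\|_{H^\tau(\mathbb R^d)}$ for the natural extension $F(s,a)=\varepsilon\sum_j\chi_j(s,a)$, where $\chi_j(s,a):=\psi((s-y_j)/w)\vartheta((a-x_j)/w)$ is now read on all of $\mathbb R^d$. By Lemma~\ref{lem:sobolev-spaces}(iii), $\chi\in C_c^\infty\subseteq H^{\tau'}$ for every $\tau'$, so every Sobolev norm of the profile below is a finite constant depending only on $\nu,d_S,d_A$.

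\textbf{Scaling.} Each $\chi_j$ is a translate of $\chi(\cdot/w)$, and translation only multiplies the Fourier transform by a unimodular phase. We have $\widehat{\chi(\cdot/w)}(\eta)=w^d\widehat\chi(w\eta)$. Substituting $\xi=w\eta$ and using $1+4\pi^2|\xi|^2/w^2\le w^{-2}(1+4\pi^2|\xi|^2)$ for $w\le1$ gives
\[
\|\chi(\cdot/w)\|_{H^\tau}^2\le w^{-2\tau}w^{d}\|\chi\|_{H^\tau}^2=w^{-2\nu}\|\chi\|_{H^\tau}^2 .
\]
So a single bump has norm at most $\|\chi\|_{H^\tau}w^{-\nu}$.

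\textbf{Almost-orthogonality.} To reach $\sqrt N$ instead of $N$, we exploit that the state supports $y_j+w(-1/2,1/2)^{d_S}$ are pairwise disjoint, so the functions $\chi_j$ have disjoint supports in $\mathbb R^d$ whatever the action centers are.
\begin{itemize}
\item If $\tau$ is an integer, the $H^\tau$ norm is equivalent, up to constants depending on $d,\tau$, to $\sum_{|\alpha|\le\tau}\|\partial^\alpha\cdot\|_{L^2}^2$. This is local, so the squared norm of the sum equals the sum of the squared norms, up to constants.
\item For fractional $\tau=m+\sigma$ with $0<\sigma<1$, use the equivalent Slobodeckij norm: the integer part is local as above. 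The Gagliardo seminorm of $\partial^\alpha F$, $|\alpha|=m$, splits into diagonal terms (same $j$, each bounded by the single-bump bound) and cross terms $j\ne k$. Since the supports are disjoint, a cross term equals $-2\iint \partial^\alpha\chi_j(x)\,\partial^\alpha\chi_k(y)\,|x-y|^{-d-2\sigma}\,dx\,dy$. Bound it by Cauchy--Schwarz together with $\int_{\operatorname{supp}\chi_j^c}|x-y|^{-d-2\sigma}dy\lesssim\operatorname{dist}(x,\partial\operatorname{supp}\chi_j)^{-2\sigma}$. Because $\chi_0$ vanishes on $[1/2\pm]$ edges smoothly (support inside the open cube), $|\partial^\alpha\chi_j(x)|\lesssim w^{-m}(\operatorname{dist}/w)^{K}$ near the boundary for any $K$, which makes the weighted $L^2$ integrals finite and of order $w^{d-2\tau}$.
\end{itemize}
Summing the cross terms as $\sum_j\int|\partial^\alpha\chi_j|^2\operatorname{dist}^{-2\sigma}$ then gives a total of order $N w^{d-2\tau}=Nw^{-2\nu}$. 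Hence $\|F\|_{H^\tau}^2\le C_*^2\varepsilon^2Nw^{-2\nu}$, which yields the claim.

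\textbf{Main obstacle.} The hard step is the fractional case: we must control the interaction across disjoint supports without any separation gap, since adjacent cubes may touch. What makes this work is the rapid vanishing of $\chi$ at the boundary of its support. An alternative we could try first is interpolation: prove the estimate for the integers $m$ and $m+1$, where locality gives it directly. However, the map $\{g_j\}\mapsto\sum g_j$ must then be bounded from $\ell^2(H^s)$ uniformly in $s$ with fixed supports, and this requires a cutoff-based projection argument. We would therefore go with the direct Slobodeckij computation.
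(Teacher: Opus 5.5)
Your argument is correct. The reduction to $\|F\|_{H^\tau(\mathbb R^d)}$ and the integer-order locality step are the same as in the paper; the two routes differ only at fractional order $\tau=m+\sigma$.

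The paper never uses a fractional seminorm. For integer $j$ the Fourier weight expands exactly as $(1+4\pi^2|\eta|^2)^j=\sum_{|\mathbf b|\le j}\frac{j!}{(j-|\mathbf b|)!\,\mathbf b!}\prod_q(2\pi\eta_q)^{2b_q}$. Disjoint supports then give $\|F\|_{H^j}^2\le\varepsilon^2Nw^{d-2j}\|\chi\|_{H^j}^2$ for $j\in\{m,m+1\}$. Hölder's inequality in the Fourier integral finishes the proof: $\|F\|_{H^\tau}\le\|F\|_{H^m}^{1-\sigma}\|F\|_{H^{m+1}}^{\sigma}\le C_*\varepsilon\sqrt N\,w^{d/2-\tau}$, with the explicit constant $C_*=\|\chi\|_{H^m}^{1-\sigma}\|\chi\|_{H^{m+1}}^{\sigma}$.

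Your reason for rejecting interpolation therefore does not apply. No operator on $\ell^2(H^s)$ needs to be bounded, because the log-convexity inequality is applied to the single fixed function $F$, and its two endpoint norms are computed directly.

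Your Slobodeckij computation is still valid. The cross terms are $-2\iint G_j(x)G_k(y)|x-y|^{-d-2\sigma}$, AM--GM reduces them to $\sum_j\int|G_j|^2\operatorname{dist}(\cdot,\partial\operatorname{supp}\chi_j)^{-2\sigma}$, and infinite-order vanishing makes this $O(Nw^{d-2\tau})$. The cost is the Slobodeckij--Fourier norm equivalence, with unspecified constants, plus a boundary-decay estimate, and you get no explicit $C_*$. What your route does give is a concrete picture of near-orthogonality of the bumps in the fractional norm.

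Your ``no separation gap'' concern is also milder than you state. Since $\chi_0$ is supported in the open interval $(-1/2,1/2)$, its closed support lies in some $[-1/2+\delta_0,1/2-\delta_0]$. Distinct bumps are then at distance at least $2\delta_0w$ in the state variables, and the cross terms are bounded directly via $\int_{|y-x|\ge2\delta_0w}|x-y|^{-d-2\sigma}\,dy\lesssim(\delta_0w)^{-2\sigma}$. Your vanishing argument is needed only if supports are allowed to touch, and the paper's interpolation route does not care either way.
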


The reward is the same in every model. The smooth extension
$\widetilde q(x):=(1-x_1)\prod_{j=1}^d\chi_0((x_j-1/2)/2)$
agrees with $1-s_1$ on $\cZ$. Thus, setting
\[
c:=\min\left\{1,\frac{B_r}{c_2\|\widetilde q\|_{H^\tau(\mathbb R^d)}}\right\}>0,\ 
q(s,a):=c(1-s_1),\ 
(P_fq)(z)=\frac c2+\frac c2f(z),
\]
we have $0\le q\le1$ and $\|q\|_{\cH_k}\le B_r$.
The last identity follows by integrating $q$ over $\cS^+$ and
$\cS^-$, which gives $3c/8$ and $c/8$, respectively.
All rewards below are observed deterministically.

Let $M_f$ use $P^1=P_f$, $r^2=q$, and uniform transitions and zero
rewards at all other stages. For $R:=\lfloor H/2\rfloor$ and
$\boldsymbol f=(f_1,\ldots,f_R)$, let
$\widetilde M_{\boldsymbol f}$ use
\[
P^{2r-1}=P_{f_r},\ r^{2r}=q\ (r\in[R]),\ 
P^{2r}=\overline P\ (2r<H),\ r^{2r-1}=0.
\]
An extra terminal odd stage has zero reward. These models belong to
$\mathcal M(u,B_r)$ whenever each perturbation has supremum norm at
most $1/4$ and RKHS norm at most $\overline u$.

Write $f_{r,*}(s):=\max_a f_r(s,a)$. Uniform transitions after the
even stages make the future optimal value independent of the action.
Consequently, the optimal Bellman gap is
$(c/2)(f_{r,*}(s)-f_r(s,a))$ at stage $2r-1$ and zero at every
even stage. Telescoping these gaps and taking expectations gives
\begin{equation}
\mathbb E_{\widetilde M_{\boldsymbol f}}[\reg(T)]
=\frac c2\,\mathbb E_{\widetilde M_{\boldsymbol f}}
\sum_{t=1}^T\sum_{r=1}^R
\bigl[f_{r,*}(s_t^{2r-1})-f_r(z_t^{2r-1})\bigr].
\label{eq:lower-expected-gap}
\end{equation}
This identity also holds for history-dependent policies. For $M_f$,
only the first pair of stages contributes. Each gap is nonnegative.

\begin{lemma}[One-action-cell testing bound]
\label{lem:one-needle}
Fix $s^\circ=(1/2,\ldots,1/2)$ and a dyadic width $w\le1/4$.
Let $0<\varepsilon\le1/4$ satisfy
$c_2C_*\varepsilon w^{-\nu}\le\overline u$ and
$\varepsilon^2Tw^{d_A}\le1/6$. For
$f_U(s,a):=\varepsilon\psi((s-s^\circ)/w)\vartheta_U(a)$,
every algorithm under the fixed initial states $s_t^1=s^\circ$
satisfies
\[
\frac1{|\mathscr A_w|}\sum_{U\in\mathscr A_w}
\mathbb E_{M_{f_U}}[\reg(T)]\ge\frac{c\varepsilon T}{16}.
\]
\end{lemma}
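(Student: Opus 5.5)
This is a standard needle-in-a-haystack testing argument. The null model is $M_0$ with $f\equiv0$, and the alternatives are $M_{f_U}$. Every model $M_{f_U}$ is admissible: Lemma~\ref{lem:needle-norms} with $N=1$ gives $\|f_U\|_{\cH_k}\le c_2C_*\varepsilon w^{-\nu}\le\overline u$, and $\|f_U\|_\infty\le\varepsilon\le1/4$.

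\textbf{Regret lower bound.} By \eqref{eq:lower-expected-gap}, restricted to the first pair of stages, and since $s_t^1=s^\circ$ and $\psi(0)=1$,
\[
\mathbb E_{M_{f_U}}[\reg(T)]=\frac c2\,\mathbb E_{M_{f_U}}\sum_t\bigl[\varepsilon-f_U(s^\circ,a_t^1)\bigr]\ge\frac{c\varepsilon}{2}\,\mathbb E_{M_{f_U}}\bigl[T-N_U\bigr].
\]
Here $N_U:=\sum_t\mathbf1\{a_t^1\in\operatorname{supp}\vartheta_U\}$. The supports of the $\vartheta_U$ lie in the open cubes $U$, so they are disjoint, and $f_U(s^\circ,a)\le\varepsilon\mathbf1\{a\in U\}$. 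Thus it suffices to show $\frac1{|\mathscr A_w|}\sum_U\mathbb E_{M_{f_U}}N_U\le 7T/8$.

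\textbf{Change of measure.} Under $M_0$, $\sum_U N_U\le T$, so the average of $\mathbb E_0N_U$ is at most $Tw^{d_A}\le T/|\mathscr A_w|$. Since $0\le N_U\le T$, Pinsker's inequality (Lemma~\ref{lem:pinsker}) gives
\[
\mathbb E_{f_U}N_U\le\mathbb E_0N_U+T\sqrt{\tfrac12\mathrm{KL}(\mathbb P_0\|\mathbb P_{f_U})}.
\]
The two models differ only in the stage-1 transition, and only when $a_t^1\in\operatorname{supp}\vartheta_U$. All rewards are deterministic and identical, and later transitions coincide, so by the chain rule
\[
\mathrm{KL}(\mathbb P_0\|\mathbb P_{f_U})=\mathbb E_0\sum_t\mathrm{KL}\bigl(\overline P(\cdot)\,\|\,P_{f_U}(\cdot\mid z_t^1)\bigr).
\]
Each summand is the KL divergence between $\mathrm{Bern}(1/2)$ and $\mathrm{Bern}((1+2f)/2)$, on whether the next state lands in $\cS^+$; conditional on that region, the density is uniform in both models. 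This is at most $C f^2\le 3\varepsilon^2$ for $|f|\le1/4$, using $\mathrm{KL}\le\chi^2=4f^2/(1-4f^2)\le 16f^2/3$. Hence $\mathrm{KL}\le 6\varepsilon^2\,\mathbb E_0N_U$ after adjusting constants (the bound $\varepsilon^2Tw^{d_A}\le1/6$ is designed to absorb exactly this).

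\textbf{Averaging.} By Cauchy--Schwarz (Jensen),
\[
\frac1{|\mathscr A_w|}\sum_U\sqrt{\mathbb E_0N_U}\le\sqrt{Tw^{d_A}}.
\]
Therefore the average of $\mathbb E_{f_U}N_U$ is at most
\[
Tw^{d_A}+T\sqrt{3\varepsilon^2Tw^{d_A}}\le T/4+T/\sqrt2\cdot\tfrac{1}{\sqrt{\cdot}},
\]
using $w\le1/4$ (so $w^{d_A}\le1/4$) and $\varepsilon^2Tw^{d_A}\le1/6$. This gives an average of at most $T/4+T/\sqrt{2\cdot 2}\approx 3T/4\le 7T/8$, and hence average regret at least $\frac{c\varepsilon}{2}\cdot\frac T8=\frac{c\varepsilon T}{16}$.

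The main obstacle is the per-step KL constant: the factor $6\varepsilon^2$ must mesh with the condition $\varepsilon^2Tw^{d_A}\le1/6$ to leave the margin $T/8$. If the constant comes out slightly off, I will use the sharper bound $\mathrm{KL}\le 4f^2/(1-4f^2)\le 16\varepsilon^2/3$ and, if needed, bound $\mathbb E_{f_U}N_U$ via $\mathrm{KL}(\mathbb P_{f_U}\|\mathbb P_0)$ in the other direction, which gives the same order.
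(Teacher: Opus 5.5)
Your strategy is the same as the paper's. You compare each $M_{f_U}$ with the null model $f\equiv0$, reduce the chain-rule KL to a first-stage Bernoulli divergence charged only to plays in the cell, apply Pinsker to a count in $[0,T]$, and average over the $m=|\mathscr A_w|=w^{-d_A}\ge4$ cells with Jensen. You count plays in the whole support of $\vartheta_U$ with the full gap $\varepsilon$, whereas the paper uses the core $U^\circ=\{\vartheta_U\ge1/2\}$ with gap $\varepsilon/2$. Your variant is valid and more forgiving: you need the average of $\mathbb E_{f_U}N_U$ to be at most $7T/8$, while the paper needs $3T/4$.

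The gap is in the constants, and as written the argument does not close.

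\begin{itemize}
\item Your per-step claim ``$\le3\varepsilon^2$'' is true, but it does not follow from the bound you cite, since $\chi^2\le\tfrac{16}{3}f^2$ and $16/3>3$.
\item You then switch to $\mathrm{KL}(\mathbb P_0\|\mathbb P_{f_U})\le6\varepsilon^2\mathbb E_0N_U$. With that constant, your own expression $Tw^{d_A}+T\sqrt{3\varepsilon^2Tw^{d_A}}$, under $\varepsilon^2Tw^{d_A}\le1/6$, evaluates to $T/4+T/\sqrt2\approx0.96\,T$, not $3T/4$. This exceeds $7T/8$, so no regret bound follows.
\item The $\chi^2$ fallback does not rescue it: the constant $16/3$ gives $T/4+2T/3=11T/12>7T/8$, hence regret only $c\varepsilon T/24$.
\item The reverse divergence $\mathrm{KL}(\mathbb P_{f_U}\|\mathbb P_0)$ does not rescue it either. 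It is charged to $\mathbb E_{f_U}N_U$, whose average over $U$ is not bounded by $T/m$, so the Jensen step loses the factor $1/m$.
\end{itemize}

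Your chain of inequalities fails for any per-step constant above $75/16$. The condition $\varepsilon^2Tw^{d_A}\le1/6$ is calibrated to the sharp value.

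The fix is the paper's exact computation of the Bernoulli divergence. For $|x|\le1/4$,
$\mathrm{kl}(1/2,1/2+x)=-\tfrac12\ln(1-4x^2)\le\tfrac83x^2\le3x^2$, so $\mathrm{KL}(\mathbb P_0\|\mathbb P_{f_U})\le3\varepsilon^2\mathbb E_0N_U$. Pinsker then gives $\mathbb E_{f_U}N_U\le\mathbb E_0N_U+T\varepsilon\sqrt{\tfrac32\mathbb E_0N_U}$. Averaging over $U$ gives at most $T/4+T\sqrt{\tfrac32\cdot\tfrac16}=3T/4$. Your decomposition then yields average regret at least $\tfrac{c\varepsilon}{2}\cdot\tfrac T4=\tfrac{c\varepsilon T}{8}\ge\tfrac{c\varepsilon T}{16}$.
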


Placing one action bump in each state cube requires a smaller height,
because the RKHS norm grows with the square root of the number of
cubes. Uniform initial states and resets force the learner to face
these cubes and repeat the same testing problem at every odd stage.

\begin{lemma}[State-cube testing bound]
\label{lem:state-cube-needles}
Fix a dyadic width $w\le1/2$ and $0<\varepsilon\le1/4$ such that
\[
c_2C_*\varepsilon w^{-(\nu+d_S/2)}\le\overline u,\ 
Tw^{d_S}\ge1,\ \varepsilon^2Tw^{d_S+d_A}\le1/48.
\]
For each array $\boldsymbol U\in\mathscr A_w^{[R]\times\mathscr B_w}$,
define
\[
f_r(s,a):=\varepsilon\sum_{B\in\mathscr B_w}
\psi((s-x_B)/w)\vartheta_{U_{r,B}}(a),\ r\in[R],
\]
and use $\widetilde M_{\boldsymbol U}:=\widetilde M_{\boldsymbol f}$.
If the initial states are independent and uniform, every algorithm
satisfies
\[
\frac1{|\mathscr A_w|^{R|\mathscr B_w|}}
\sum_{\boldsymbol U}
\mathbb E_{\widetilde M_{\boldsymbol U}}[\reg(T)]
\ge R\,2^{-d_S-4}c\varepsilon T.
\]
\end{lemma}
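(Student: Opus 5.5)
The plan is to reduce the averaged regret to a sum of independent multi-armed testing problems, one per (stage pair, state cube), and then apply a standard Pinsker-based needle argument to each of them. Three preliminary checks come first.

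First, admissibility. Each $f_r$ is a sum of $N=|\mathscr B_w|=w^{-d_S}$ bumps of height $\varepsilon$ on the disjoint state cubes $x_B+w(-1/2,1/2)^{d_S}$. Lemma~\ref{lem:needle-norms} therefore gives
\[
\|f_r\|_{\cH_k}\le c_2C_*\varepsilon w^{-d_S/2}w^{-\nu}\le\overline u .
\]
Also $\|f_r\|_\infty\le\varepsilon\le1/4$, so every $\widetilde M_{\boldsymbol U}$ lies in $\mathcal M(u,B_r)$.

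Second, the gap identity. By \eqref{eq:lower-expected-gap}, the regret equals $(c/2)$ times the summed gaps at odd stages. At stage $2r-1$, the state $s_t^{2r-1}$ is uniform on $\cS$. For $r=1$ this holds because the initial states are uniform. For $r\ge2$ it holds because $P^{2r-2}=\overline P$ is uniform regardless of history. If $s\in B$ lies in the inner cube $x_B+w[-1/4,1/4]^{d_S}$, then $\psi=1$ there. Hence, whenever the chosen action lies outside the inner action cube of $U_{r,B}$ (where $\vartheta_{U_{r,B}}=1$), the gap is at least $\varepsilon$. The inner state cube has probability $4^{-d_S}w^{d_S}$. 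So each $(t,r,B)$ contributes at least
\[
\tfrac c2\,\varepsilon\,4^{-d_S}w^{d_S}\,\mathbb P\bigl(s_t^{2r-1}\in\text{inner}(B),\ a_t^{2r-1}\notin U_{r,B}\bigr).
\]
I will also use $a\notin U\Rightarrow$ gap $\ge\varepsilon$ on the inner state cube, since $\vartheta_U$ vanishes off $U$ and $f_{r,*}=\varepsilon$ there.

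Third, the per-cell reduction. Fix $(r,B)$ and all other coordinates $\boldsymbol U_{-(r,B)}$. Let $N_{U'}$ be the number of episodes in which $s_t^{2r-1}\in\text{inner}(B)$ and $a_t^{2r-1}\in U'$; write $n$ for the total count of such visits. Let $\mathbb P_0$ be the model with the $(r,B)$ bump removed. Observations only change through transitions out of stage $2r-1$ at state-actions where $\psi(\cdot/w)\vartheta_{U}\neq0$, i.e.\ in $B\times U$. Each such transition changes the Bernoulli law of landing in $\cS^+$, from $1/2$ to $(1\pm 2f)/2$ with $|f|\le\varepsilon$. The per-step KL divergence is therefore at most $C\varepsilon^2$; with densities in $[1/2,3/2]$, the chi-square bound gives a constant like $8\varepsilon^2$ for $\varepsilon\le1/4$. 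The chain rule then yields
\[
\mathrm{KL}(\mathbb P_0\|\mathbb P_U)\le 8\varepsilon^2\,\mathbb E_0[\tilde N_U],
\]
where $\tilde N_U$ counts visits to the whole $B\times U$ (not only the inner part). Summing over $U$ gives $\sum_U\mathbb E_0\tilde N_U=\mathbb E_0[\#\{t:s_t^{2r-1}\in B\}]=Tw^{d_S}$.

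Pinsker's inequality (Lemma~\ref{lem:pinsker}) together with Cauchy--Schwarz then gives
\[
\frac1{|\mathscr A_w|}\sum_U\mathbb E_U N_U\le\frac{\mathbb E_0 n}{|\mathscr A_w|}+Tw^{d_S}\cdot\frac1{|\mathscr A_w|}\sum_U\sqrt{4\varepsilon^2\mathbb E_0\tilde N_U}.
\]
The normalisation here should use $n\le$ the count in $B$, which is random and unbounded, so I would instead bound via the conditional-on-visit argument. Concretely, the right-hand side is at most $\mathbb E_0 n\,w^{d_A}+Tw^{d_S}\sqrt{4\varepsilon^2Tw^{d_S}w^{d_A}}$. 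Using $\varepsilon^2Tw^{d_S+d_A}\le1/48$ and $w^{d_A}\le 2^{-d_A}\le1/2$, the total is at most roughly $\tfrac12\mathbb E n+\tfrac{1}{3}Tw^{d_S}$. Since $\mathbb E n=4^{-d_S}Tw^{d_S}$ under every model (uniform states, independent of actions), the average probability of hitting the right action block is a constant fraction below one, which is what we need. The stated constant $2^{-d_S-4}$ arises from summing over the $|\mathscr B_w|=w^{-d_S}$ cubes and the $R$ stage pairs. Averaging over $\boldsymbol U$ decouples the coordinates, because each bound holds for every fixed $\boldsymbol U_{-(r,B)}$.

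The main obstacle is the careful bookkeeping of the visit counts. I need to keep a quantity proportional to $Tw^{d_S}$, not to $n$, inside the square root. The clean approach is to bound the $\mathbb P_0$ versus $\mathbb P_U$ discrepancy of the bounded variable $N_U/(Tw^{d_S}\cdot C)$. One subtlety: under uniform restarts the count is not bounded by $Tw^{d_S}$. I would handle this by capping the counts, or by using the bounded variable $N_U/T\in[0,1]$ with TV $\le\sqrt{4\varepsilon^2\mathbb E_0\tilde N_U}$ and scaling by $T$. The resulting extra factor is then absorbed using the condition $Tw^{d_S}\ge1$ and the constant $1/48$.
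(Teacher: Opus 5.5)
Your architecture matches the paper's: admissibility via Lemma~\ref{lem:needle-norms}, the gap identity \eqref{eq:lower-expected-gap}, a deleted-bump reference model for each $(r,B)$, a Bernoulli KL bound via the chain rule, Pinsker plus Cauchy--Schwarz over the $m=|\mathscr A_w|$ action cells, and summation over cubes and stages. The gap is at the step you flag as the main obstacle, and it is more than bookkeeping.

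You apply Pinsker to a count whose range you take to be $Tw^{d_S}$. But the number of stage-$(2r-1)$ visits to $B$ is $\mathrm{Bin}(T,w^{d_S})$, which is not bounded by its mean. Even granting that range, the bound does not close. The core $x_B+w[-1/4,1/4]^{d_S}$ has side $w/2$, so $\mathbb E n=2^{-d_S}Tw^{d_S}$ (not $4^{-d_S}Tw^{d_S}$). Your penalty $2\varepsilon Tw^{d_S}\sqrt{Tw^{d_S+d_A}}$ is only controlled by the hypothesis up to $Tw^{d_S}/\sqrt{12}=2^{d_S}\mathbb E n/\sqrt{12}$. Your inequality therefore gives only $\frac1m\sum_U\mathbb E_UN_U\le(\frac12+2^{d_S}/\sqrt{12})\,\mathbb E n$, which exceeds $\mathbb E n$ for every $d_S\ge1$, so no regret follows.

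The penalty lives at the whole-cube scale while the signal carries the factor $2^{-d_S}$, and $\varepsilon^2Tw^{d_S+d_A}\le1/48$ cannot pay for $2^{d_S}$. Your sketched repairs do not help. Normalising by $T$ inflates the penalty relative to the signal by a factor of order $w^{-d_S}$; the hypothesis $Tw^{d_S}\ge1$ is a lower bound and cannot absorb it. Capping at the whole-cube scale keeps the $2^{d_S}$ loss.

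The missing idea is to make the Pinsker range the \emph{core} count. The paper conditions on the $\sigma$-field $\mathcal J$ generated by all initial states, all uniform reset states (including the terminal one when $H$ is odd), and the learner's private seed. Given $\mathcal J$, every odd-stage state is fixed under every model, so $n_r(B)$ and $n_r^\circ(B)$ are constants. Only even-stage states remain random, and only the stage-$(2r-1)$ factor on $B\times U$ differs between the deleted model and model $U$. Hence $\operatorname{KL}\le3\varepsilon^2\mathbb E_-[n_r(B,U)\mid\mathcal J]$.

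Conditional Pinsker for $n_r^\circ(B,U)\in[0,n_r^\circ(B)]$ then gives $\frac1m\sum_U\mathbb E_U[n_r^\circ(B,U)\mid\mathcal J]\le n_r^\circ(B)/m+\varepsilon n_r^\circ(B)\sqrt{3n_r(B)/(2m)}$. The binomial moment bound $\mathbb E[n_r^\circ(B)\sqrt{n_r(B)}]=2^{-d_S}\mathbb E[n_r(B)^{3/2}]\le2^{-d_S}\sqrt2\,a^{3/2}$, with $a=Tw^{d_S}\ge1$ (this is where that hypothesis enters), yields a margin $2^{-d_S-2}a$ per $(r,B)$. With gap $\varepsilon/2$ off $U^\circ=\{\vartheta_U\ge1/2\}$ and $|\mathscr B_w|a=T$, this gives exactly $R2^{-d_S-4}c\varepsilon T$.

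An unconditional route also works. Since $s_t^{2r-1}$ is a fresh uniform draw independent of the past and the decision rule does not depend on the model, each episode's probability difference is at most $2^{-d_S}w^{d_S}$ times the total-variation distance. This again puts $2^{-d_S}$ in front of the penalty.

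Two smaller slips. Your per-$(t,r,B)$ display multiplies by the core probability twice. And the gap is not $\ge\varepsilon$ merely outside the inner action cube, only off $\operatorname{supp}\vartheta_U\subseteq U$; your version with $a\notin U$ is the correct one.
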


\begin{proof}[Proof of Theorem~\ref{thm:regret-lower}]
Set $\zeta:=\overline u/(c_2C_*)>0$. For the first family, choose
\[
l=\left\lceil\frac{\log_2(6\zeta^2T)}{2\nu+d_A}\right\rceil,\ 
w=2^{-l},\ \varepsilon=\zeta w^\nu.
\]
For all sufficiently large $T$, $w\le1/4$ and
$\varepsilon\le1/4$. The norm constraint holds with equality, and
$\varepsilon^2Tw^{d_A}\le1/6$. Lemma~\ref{lem:one-needle}
therefore gives, for every algorithm, a model with
\[
\mathbb E[\reg(T)]\ge\frac{c\zeta}{16}w^\nu T
\ge c_AT^{e_A},\ 
c_A:=c\,2^{-\nu-4}6^{-\nu/(2\nu+d_A)}
\zeta^{d_A/(2\nu+d_A)}>0.
\]

For the second family, choose instead
\[
l=\left\lceil\frac{\log_2(48\zeta^2T)}{2\nu+2d_S+d_A}\right\rceil,\ 
w=2^{-l},\ \varepsilon=\zeta w^{\nu+d_S/2}.
\]
Again the norm constraint holds with equality, and
$\varepsilon^2Tw^{d_S+d_A}\le1/48$.
Since $d_S<2\nu+2d_S+d_A$, we also have
$Tw^{d_S}\to\infty$, while $w,\varepsilon\to0$.
Thus all conditions of Lemma~\ref{lem:state-cube-needles} hold
for sufficiently large $T$, and some model satisfies
\[
\mathbb E[\reg(T)]\ge R c_BT^{e_B},\ 
c_B:=c\,2^{-\nu-3d_S/2-4}
48^{-(\nu+d_S/2)/(2\nu+2d_S+d_A)}
\zeta^{d/(2\nu+2d_S+d_A)}>0.
\]
The randomized initial states can also be replaced by a deterministic
sequence with at least the same average regret. Both constructions
satisfy the norm constraints separately at each stage, and the testing
conditions do not involve $H$. A common threshold $T_0$ and the
constants $c_A,c_B$ therefore have exactly the dependencies stated
in the theorem. Taking the supremum over models and adversaries,
followed by the infimum over algorithms, proves
\eqref{eq:regret-lower}.
\end{proof}

\section{Proofs of Lemmas in Appendix~\ref{app:lower-bound}}
\label{app:lower-bound-lemmas}

\begin{proof}[Proof of Lemma~\ref{lem:needle-norms}]
Put $m:=\lfloor\tau\rfloor$ and $\sigma:=\tau-m$, and take
$C_*:=\|\chi\|_{H^m(\mathbb R^d)}^{1-\sigma}
\|\chi\|_{H^{m+1}(\mathbb R^d)}^\sigma$.
Extend $f$ to $F$ on $\mathbb R^d$ by the same formula.
The bump supports, and hence their derivative supports, are disjoint.
For $j\in\{m,m+1\}$, expanding the integer Sobolev norm gives
\begin{align*}
\|F\|_{H^j(\mathbb R^d)}^2
&=\varepsilon^2N\sum_{|\mathbf b|\le j}
\frac{j!}{(j-|\mathbf b|)!\,\mathbf b!}
 w^{d-2|\mathbf b|}\|\partial^{\mathbf b}\chi\|_2^2
\le\varepsilon^2Nw^{d-2j}\|\chi\|_{H^j(\mathbb R^d)}^2,\\
\|F\|_{H^\tau(\mathbb R^d)}
&\le\|F\|_{H^m(\mathbb R^d)}^{1-\sigma}
\|F\|_{H^{m+1}(\mathbb R^d)}^\sigma
\le C_*\varepsilon\sqrt N\,w^{d/2-\tau}.
\end{align*}
Here $\mathbf b!:=\prod_q b_q!$; the second line follows from
H\"older's inequality in the Fourier integral, including
$\sigma=0$. Restriction to $\cZ$ and
Definition~\ref{def:sobolev-kernel} give the claim because
$d/2-\tau=-\nu$.
\end{proof}

\begin{proof}[Proof of Lemma~\ref{lem:one-needle}]
All models are admissible by Lemma~\ref{lem:needle-norms}.
Write $m:=|\mathscr A_w|\ge4$, and let $\mathbb P_U$ and
$\mathbb P_0$ denote the complete interaction laws with perturbations
$f_U$ and zero, respectively. Set
$U^\circ:=\{a:\vartheta_U(a)\ge1/2\}$ and define
$n(U):=\sum_{t=1}^T\mathbf1_{\{a_t^1\in U\}}$ and
$n^\circ(U):=\sum_{t=1}^T\mathbf1_{\{a_t^1\in U^\circ\}}$.
At $s^\circ$, the optimal perturbation value is $\varepsilon$,
whereas actions outside $U^\circ$ lose at least
$\varepsilon/2$. By \eqref{eq:lower-expected-gap},
$\mathbb E_U[\reg(T)]\ge(c\varepsilon/4)
(T-\mathbb E_U n^\circ(U))$.

Only the first transition differs between the two models. Its
likelihood ratio is constant on $\cS^+$ and $\cS^-$, so its KL
divergence is the Bernoulli divergence. For $|x|\le1/4$,
\[
\operatorname{kl}(1/2,1/2+x)
=-\tfrac12\ln(1-4x^2)\le3x^2,\ 
\operatorname{KL}(\mathbb P_0\|\mathbb P_U)
\le3\varepsilon^2\mathbb E_0 n(U).
\]
The second inequality follows by the chain rule for the chronological
observations. The learner uses the same conditional decision rule
in both models, and the deterministic rewards are identical functions
of the observed states and actions, so neither contributes an
additional likelihood ratio. This argument allows randomized,
history-dependent decisions.

For any $X\in[0,L]$, Pinsker's inequality
(Lemma~\ref{lem:pinsker}) gives
$\mathbb E_QX-\mathbb E_PX\le
L\sqrt{\operatorname{KL}(P\|Q)/2}$.
Apply it to $n^\circ(U)\in[0,T]$, average over the action cells,
and use $\sum_U n^\circ(U)\le\sum_U n(U)=T$:
\begin{align*}
\frac1m\sum_U\mathbb E_U n^\circ(U)
&\le\frac1m\sum_U\mathbb E_0 n^\circ(U)
+\frac{T\varepsilon}{m}\sum_U\sqrt{\tfrac32\mathbb E_0 n(U)}\le\frac Tm+T\varepsilon\sqrt{\frac{3T}{2m}}
\le\frac{3T}{4}.
\end{align*}
Substitution in the regret bound proves the lemma.
\end{proof}

\begin{proof}[Proof of Lemma~\ref{lem:state-cube-needles}]
At each informative stage, the perturbation has $w^{-d_S}$ disjoint
state supports. Lemma~\ref{lem:needle-norms} therefore verifies the
RKHS constraint, and $0\le f_r\le\varepsilon\le1/4$ verifies
transition admissibility. The same holds after deleting any one bump.

Write $X_{t,r}:=s_t^{2r-1}$. The initial states and uniform resets
make all $X_{t,r}$ independent and uniform, irrespective of the hidden
array. Condition on all initial states, all uniform reset states (including
the terminal reset when $H$ is odd), and the learner's private random
seed; denote their joint sigma-field by $\mathcal J$.
The learner still uses each state only after observing it;
conditioning on future resets is only a device for comparing laws.
For $B\in\mathscr B_w$, put
$B^\circ:=x_B+w[-1/4,1/4]^{d_S}$ and define
\begin{align*}
n_r(B)&:=\sum_{t=1}^T\mathbf1_{\{X_{t,r}\in B\}},\ 
n_r^\circ(B):=\sum_{t=1}^T\mathbf1_{\{X_{t,r}\in B^\circ\}},\\
n_r(B,U)&:=\sum_{t=1}^T\mathbf1_{\{X_{t,r}\in B,\ a_t^{2r-1}\in U\}},\ 
n_r^\circ(B,U):=\sum_{t=1}^T
\mathbf1_{\{X_{t,r}\in B^\circ,\ a_t^{2r-1}\in U^\circ\}},
\end{align*}
where $U^\circ:=\{a:\vartheta_U(a)\ge1/2\}$.
The first two counts are fixed given $\mathcal J$.
On $B^\circ$, the correct action bump attains $\varepsilon$;
outside $U_{r,B}^\circ$, the gap is at least $\varepsilon/2$.
Thus the realized sum of perturbation gaps is at least
$(\varepsilon/2)\sum_{r,B}
[n_r^\circ(B)-n_r^\circ(B,U_{r,B})]$.

Fix $(r,B)$ and all other entries of the hidden array. Compare a
model with this bump deleted, denoted by $-$, to the model with
$U_{r,B}=U$, denoted by $U$. Conditional on $\mathcal J$, only
the even-stage states remain random. Their chronological likelihood
factors are the corresponding informative transition densities.
The action maps reconstructed from a given history agree in the two
models, since their reward functions and private seed agree.
Only the factor at stage $2r-1$ on $B\times U$ changes.
Consequently, the same Bernoulli calculation as above gives
\[
\operatorname{KL}(\mathbb P_-(\cdot\mid\mathcal J)
\|\mathbb P_U(\cdot\mid\mathcal J))
\le3\varepsilon^2\mathbb E_-[n_r(B,U)\mid\mathcal J].
\]
The reference perturbation is zero on this state cube, which justifies
using $\operatorname{kl}(1/2,1/2+x)$ in this comparison.
Applying conditional Pinsker inequality to
$n_r^\circ(B,U)\in[0,n_r^\circ(B)]$ and averaging over
$m:=|\mathscr A_w|$ action cells yields
\begin{align*}
\frac1m\sum_U\mathbb E_U[n_r^\circ(B,U)\mid\mathcal J]
&\le\frac{n_r^\circ(B)}m
+\frac{\varepsilon n_r^\circ(B)}m
\sum_U\sqrt{\tfrac32\mathbb E_-[n_r(B,U)\mid\mathcal J]}\le\frac{n_r^\circ(B)}m
+\varepsilon n_r^\circ(B)\sqrt{\frac{3n_r(B)}{2m}}.
\end{align*}
We used $\sum_U n_r^\circ(B,U)\le n_r^\circ(B)$ and
$\sum_U n_r(B,U)=n_r(B)$. The deleted model is independent
of $U$, so the same reference law is used throughout this average.

Let $a:=Tw^{d_S}\ge1$. The count $n_r(B)$ is binomial with
mean $a$. Conditional on $n_r(B)$, the expected core count is
$2^{-d_S}n_r(B)$. Hence
\begin{align*}
\mathbb E n_r^\circ(B)&=2^{-d_S}a,\ 
\mathbb E[n_r^\circ(B)\sqrt{n_r(B)}]
=2^{-d_S}\mathbb E[n_r(B)^{3/2}]\\
&\le2^{-d_S}\sqrt{\mathbb E n_r(B)\,\mathbb E n_r(B)^2}
\le2^{-d_S}\sqrt2\,a^{3/2}.
\end{align*}
Since $m\ge2$ and $\varepsilon^2a/m\le1/48$, integrating the
conditional comparison gives
\[
\mathbb E n_r^\circ(B)
-\frac1m\sum_U\mathbb E_U n_r^\circ(B,U)
\ge2^{-d_S}a\left(1-\frac1m
-\varepsilon\sqrt{\frac{3a}{m}}\right)
\ge2^{-d_S-2}a.
\]
This bound holds for every fixed choice of the other entries.
Average over those entries and sum over $r,B$.
Because $|\mathscr B_w|a=T$, the gap bound and
\eqref{eq:lower-expected-gap} give the claimed
$R2^{-d_S-4}c\varepsilon T$.
Only one stage changes in each comparison, so the information bound
uses $T$ observations rather than $HT$; the regret contributions
nevertheless add over all $R$ informative stages.
\end{proof}

\begin{corollary}[Comparison with the full-domain kernel-bandit exponent]
\label{cor:kernel-bandit-not-minimax}
Under the assumptions of Theorem~\ref{thm:regret-lower}, with all
model parameters other than $T$ fixed,
\[
\inf_{\mathsf{Alg}}\sup_{M\in\mathcal M(u,B_r)}
\sup_{\mathsf{Adv}}\mathbb E^{M,\mathsf{Alg},\mathsf{Adv}}[\reg(T)]
=\widetilde\Theta(T^{e_*}),\ 
e_*<\frac{\nu+d}{2\nu+d}.
\]
In particular, the minimax expected regret divided by
$T^{(\nu+d)/(2\nu+d)}$ tends to zero.
\end{corollary}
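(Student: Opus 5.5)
The corollary has two parts. The first is the $\widetilde\Theta(T^{e_*})$ characterization, which I will simply assemble from the two main theorems. The second is the strict inequality $e_*<(\nu+d)/(2\nu+d)$, which is an elementary comparison of rational exponents.

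\textbf{The minimax rate.} For the upper bound, I fix $\delta=1/T$ in Theorem~\ref{thm:regret-upper}. Then with probability at least $1-1/T$ the regret is at most $C_{\nu,\kappa}(2+d)^{\lfloor\nu\rfloor+7\nu+6}\mathfrak B^3\iota_T^{2+6\nu}T^{e_*}$. On the complementary event I use the deterministic cap $\reg(T)\le HT$ from the proof of Lemma~\ref{lem:optimal-values}, which contributes at most $H$ in expectation. This bounds the expected regret by $\widetilde O(T^{e_*})$ uniformly over $M\in\mathcal M(u,B_r)$ and admissible adversaries; with $\delta=1/T$ the factor $\iota_T$ is only logarithmic in $T$. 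Theorem~\ref{thm:regret-lower} supplies the matching bound $\Omega(T^{e_*})$ for $T\ge T_0$, since $\min\{c_A,\lfloor H/2\rfloor c_B\}>0$ when $H\ge2$.

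\textbf{The exponent comparison.} Write $e_{\rm band}:=(\nu+d)/(2\nu+d)$. Using $x\mapsto (\nu+x)/(2\nu+x)=1-\nu/(2\nu+x)$, I compare exponents through $1-e$.

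For $e_A$: $1-e_A=\nu/(2\nu+d_A)$ and $1-e_{\rm band}=\nu/(2\nu+d)$. Since $d=d_A+d_S>d_A$ when $d_S\ge1$, we get $e_A<e_{\rm band}$.

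For $e_B$: $1-e_B=(\nu+d_S/2)/(2\nu+d_A+2d_S)$, and I need this to exceed $\nu/(2\nu+d)$. Cross-multiplying, the claim becomes
\[
(\nu+d_S/2)(2\nu+d)>\nu(2\nu+d+d_S).
\]
Expanding both sides and cancelling the common terms $2\nu^2+\nu d$ leaves
\[
\nu d_S+\tfrac{d_S d}{2}>\nu d_S,
\]
which holds because $d_Sd>0$. Hence $e_B<e_{\rm band}$, and therefore $e_*=\max\{e_A,e_B\}<e_{\rm band}$.

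\textbf{The limit statement.} The minimax regret is at most $C(\log T)^kT^{e_*}$ for a fixed power $k$. Dividing by $T^{e_{\rm band}}$ leaves a polylogarithmic factor times $T^{e_*-e_{\rm band}}$, with $e_*-e_{\rm band}<0$ a fixed negative power, so the ratio tends to zero.

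The only mildly delicate point is converting the high-probability bound into an expectation bound while keeping the logarithmic factors controlled; the choice $\delta=1/T$ handles this. The rest is algebra.
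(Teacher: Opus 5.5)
Your proposal is correct and follows the paper's proof essentially step for step. Both use $\delta=1/T$ in Theorem~\ref{thm:regret-upper}, the cap $\reg(T)\le HT$ on the failure event, Theorem~\ref{thm:regret-lower} for the matching lower bound, and an elementary exponent comparison; your comparison through $1-e$ is a rearrangement of the paper's direct subtraction and gives the same gaps $\nu d_S/[(2\nu+d)(2\nu+d_A)]$ and $d_Sd/[2(2\nu+d)(2\nu+2d_S+d_A)]$.
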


\begin{proof}[Proof of Corollary~\ref{cor:kernel-bandit-not-minimax}]
Apply Theorem~\ref{thm:regret-upper} with $\delta=1/T$.
The failure event contributes at most $H$ to expected regret because
$0\le\reg(T)\le HT$. For fixed model parameters, this gives
$\widetilde O(T^{e_*})$ uniformly over models and adversaries;
Theorem~\ref{thm:regret-lower} gives the matching lower bound.
Finally, direct subtraction yields
\[
\frac{\nu+d}{2\nu+d}-e_A
=\frac{\nu d_S}{(2\nu+d)(2\nu+d_A)}>0,\ 
\frac{\nu+d}{2\nu+d}-e_B
=\frac{d_Sd}{2(2\nu+d)(2\nu+2d_S+d_A)}>0.
\]
A fixed power of $\ln T$ is smaller than either resulting positive
power of $T$, which proves the limit.
\end{proof}

\bibliography{refs}

@inproceedings{qian2024offline,
  author    = {Qian, Jian and Hu, Haichen and Simchi-Levi, David},
  title     = {Offline Oracle-Efficient Learning for Contextual {MDPs} via Layerwise Exploration-Exploitation Tradeoff},
  booktitle = {Advances in Neural Information Processing Systems},
  volume    = {37},
  year      = {2024},
}

@inproceedings{jin2021bellman,
  author    = {Jin, Chi and Liu, Qinghua and Miryoosefi, Sobhan},
  title     = {{Bellman} Eluder Dimension: New Rich Classes of {RL} Problems, and Sample-Efficient Algorithms},
  booktitle = {Advances in Neural Information Processing Systems},
  volume    = {34},
  pages     = {13406--13418},
  year      = {2021},
}

@article{vakili2024kernelized,
  title={Kernelized reinforcement learning with order optimal regret bounds},
  author={Vakili, Sattar and Olkhovskaya, Julia},
  journal={Advances in Neural Information Processing Systems},
  volume={36},
  year={2023}
}

@inproceedings{foster2020beyond,
  author    = {Foster, Dylan J. and Rakhlin, Alexander},
  title     = {Beyond {UCB}: Optimal and Efficient Contextual Bandits with Regression Oracles},
  booktitle = {Proceedings of the 37th International Conference on Machine Learning},
  series    = {Proceedings of Machine Learning Research},
  volume    = {119},
  pages     = {3199--3210},
  publisher = {PMLR},
  year      = {2020},
}

@book{vershynin2018high,
  title={High-dimensional probability: An introduction with applications in data science},
  author={Vershynin, Roman},
  volume={47},
  year={2018},
  publisher={Cambridge university press}
}

@inproceedings{hu2025contextual,
  author    = {Hu, Haichen and Ai, Rui and Bates, Stephen and Simchi-Levi, David},
  title     = {Contextual Online Decision Making with Infinite-Dimensional Functional Regression},
  booktitle = {Proceedings of the 42nd International Conference on Machine Learning},
  series    = {Proceedings of Machine Learning Research},
  volume    = {267},
  pages     = {24176--24202},
  publisher = {PMLR},
  year      = {2025},
}

@inproceedings{abbasiYadkori2011,
  author    = {Yasin Abbasi-Yadkori and D{\'a}vid P{\'a}l and Csaba Szepesv{\'a}ri},
  title     = {Improved Algorithms for Linear Stochastic Bandits},
  booktitle = {Advances in Neural Information Processing Systems (NeurIPS)},
  volume    = {24},
  pages     = {2312--2320},
  year      = {2011}
}

@book{adamsFournier2003,
  author    = {Robert A. Adams and John J. F. Fournier},
  title     = {Sobolev Spaces},
  publisher = {Academic Press},
  address   = {Amsterdam},
  edition   = {2},
  year      = {2003}
}

@article{aronszajn1950,
  author  = {Nachman Aronszajn},
  title   = {Theory of Reproducing Kernels},
  journal = {Transactions of the American Mathematical Society},
  volume  = {68},
  number  = {3},
  pages   = {337--404},
  year    = {1950}
}

@inproceedings{azar2017,
  author    = {Mohammad Gheshlaghi Azar and Ian Osband and R{\'e}mi Munos},
  title     = {Minimax Regret Bounds for Reinforcement Learning},
  booktitle = {Proceedings of the 34th International Conference on Machine Learning (ICML)},
  year      = {2017}
}

@inproceedings{bayrooti2025,
  author    = {Bayrooti, Jasmine and Vakili, Sattar and Prorok, Amanda and Ek, Carl Henrik},
  title     = {No-Regret {Thompson} Sampling for Finite-Horizon {Markov} Decision Processes with {Gaussian} Processes},
  booktitle = {Advances in Neural Information Processing Systems},
  volume    = {38},
  pages     = {81043--81071},
  year      = {2025},
}

@inproceedings{cao2020,
  author    = {Tongyi Cao and Akshay Krishnamurthy},
  title     = {Provably Adaptive Reinforcement Learning in Metric Spaces},
  booktitle = {Advances in Neural Information Processing Systems (NeurIPS)},
  year      = {2020}
}

@inproceedings{chowdhury2019,
  author    = {Sayak Ray Chowdhury and Aditya Gopalan},
  title     = {Online Learning in Kernelized {M}arkov Decision Processes},
  booktitle = {Proceedings of the 22nd International Conference on Artificial Intelligence and Statistics (AISTATS)},
  year      = {2019}
}

@inproceedings{chowdhury2023,
  author    = {Sayak Ray Chowdhury and Rafael Oliveira},
  title     = {Value Function Approximations via Kernel Embeddings for No-Regret Reinforcement Learning},
  booktitle = {Proceedings of the 14th Asian Conference on Machine Learning (ACML 2022)},
  series    = {Proceedings of Machine Learning Research},
  volume    = {189},
  pages     = {249--264},
  year      = {2023}
}

@inproceedings{curi2020,
  author    = {Sebastian Curi and Felix Berkenkamp and Andreas Krause},
  title     = {Efficient Model-Based Reinforcement Learning through Optimistic Policy Search and Planning},
  booktitle = {Advances in Neural Information Processing Systems (NeurIPS)},
  year      = {2020}
}

@inproceedings{domingues2021,
  author    = {Omar Darwiche Domingues and Pierre M{\'e}nard and Matteo Pirotta and Emilie Kaufmann and Michal Valko},
  title     = {Kernel-Based Reinforcement Learning: A Finite-Time Analysis},
  booktitle = {Proceedings of the 38th International Conference on Machine Learning (ICML)},
  year      = {2021}
}

@inproceedings{flynn2026,
  author    = {Flynn, Hamish and Watson, Joe and Posner, Ingmar and Peters, Jan},
  title     = {Posterior Sampling Reinforcement Learning with {Gaussian} Processes for Continuous Control: Sublinear Regret Bounds for Unbounded State Spaces},
  booktitle = {Proceedings of the 43rd International Conference on Machine Learning},
  series    = {Proceedings of Machine Learning Research},
  volume    = {306},
  pages     = {31284--31328},
  publisher = {PMLR},
  year      = {2026},
}

@inproceedings{jacot2018,
  author    = {Arthur Jacot and Franck Gabriel and Cl{\'e}ment Hongler},
  title     = {Neural Tangent Kernel: Convergence and Generalization in Neural Networks},
  booktitle = {Advances in Neural Information Processing Systems (NeurIPS)},
  year      = {2018}
}

@inproceedings{jin2018,
  author    = {Chi Jin and Zeyuan Allen-Zhu and Sebastien Bubeck and Michael I. Jordan},
  title     = {Is {Q}-Learning Provably Efficient?},
  booktitle = {Advances in Neural Information Processing Systems (NeurIPS)},
  year      = {2018}
}

@inproceedings{jin2020,
  author    = {Chi Jin and Zhuoran Yang and Zhaoran Wang and Michael I. Jordan},
  title     = {Provably Efficient Reinforcement Learning with Linear Function Approximation},
  booktitle = {Proceedings of the Thirty Third Conference on Learning Theory (COLT)},
  pages     = {2137--2143},
  year      = {2020}
}

@inproceedings{kakade2020,
  author    = {Sham Kakade and Akshay Krishnamurthy and Kendall Lowrey and Motoya Ohnishi and Wen Sun},
  title     = {Information Theoretic Regret Bounds for Online Nonlinear Control},
  booktitle = {Advances in Neural Information Processing Systems (NeurIPS)},
  year      = {2020}
}

@inproceedings{kar2025,
  author    = {Kar, Avik and Singh, Rahul},
  title     = {Provably Adaptive Average Reward Reinforcement Learning for Metric Spaces},
  booktitle = {Proceedings of the Forty-first Conference on Uncertainty in Artificial Intelligence},
  series    = {Proceedings of Machine Learning Research},
  volume    = {286},
  pages     = {1924--1964},
  publisher = {PMLR},
  year      = {2025},
}

@inproceedings{kayal2025,
  author    = {Kayal, Aya and Vakili, Sattar and Toni, Laura and Bernacchia, Alberto},
  title     = {Near-Optimal Sample Complexity in Reward-Free Kernel-based Reinforcement Learning},
  booktitle = {Proceedings of the 28th International Conference on Artificial Intelligence and Statistics},
  series    = {Proceedings of Machine Learning Research},
  volume    = {258},
  pages     = {874--882},
  publisher = {PMLR},
  year      = {2025},
}

@book{kechris1995,
  author    = {Alexander S. Kechris},
  title     = {Classical Descriptive Set Theory},
  series    = {Graduate Texts in Mathematics},
  volume    = {156},
  publisher = {Springer},
  address   = {New York},
  year      = {1995}
}

@inproceedings{liScarlett2022,
  author    = {Zihan Li and Jonathan Scarlett},
  title     = {Gaussian Process Bandit Optimization with Few Batches},
  booktitle = {Proceedings of the 25th International Conference on Artificial Intelligence and Statistics (AISTATS)},
  year      = {2022}
}

@inproceedings{maran2024a,
  author    = {Davide Maran and Alberto Maria Metelli and Matteo Papini and Marcello Restelli},
  title     = {Local Linearity: The Key for No-Regret Reinforcement Learning in Continuous {MDP}s},
  booktitle = {Advances in Neural Information Processing Systems (NeurIPS)},
  year      = {2024}
}

@inproceedings{maran2024b,
  author    = {Davide Maran and Alberto Maria Metelli and Matteo Papini and Marcello Restelli},
  title     = {Projection by Convolution: Optimal Sample Complexity for Reinforcement Learning in Continuous-Space {MDP}s},
  booktitle = {Proceedings of the Thirty Seventh Conference on Learning Theory (COLT)},
  series    = {Proceedings of Machine Learning Research},
  volume    = {247},
  pages     = {3743--3774},
  year      = {2024}
}

@inproceedings{maran2024c,
  author    = {Davide Maran and Alberto Maria Metelli and Matteo Papini and Marcello Restelli},
  title     = {No-Regret Reinforcement Learning in Smooth {MDP}s},
  booktitle = {Proceedings of the 41st International Conference on Machine Learning (ICML)},
  series    = {Proceedings of Machine Learning Research},
  volume    = {235},
  pages     = {34760--34789},
  year      = {2024}
}

@misc{pavlovic2026,
  author        = {Pavlovic, Nikola and Vakili, Sattar and Zhao, Qing},
  title         = {Learning Kernel-Based {MDP}s from Episodic Preferential Feedback},
  year          = {2026},
  eprint        = {2605.23650},
  archivePrefix = {arXiv},
  primaryClass  = {stat.ML},
}

@book{puterman2014,
  author    = {Martin L. Puterman},
  title     = {Markov Decision Processes: Discrete Stochastic Dynamic Programming},
  publisher = {John Wiley \& Sons},
  year      = {2014}
}

@book{rasmussen2006,
  author    = {Carl Edward Rasmussen and Christopher K. I. Williams},
  title     = {Gaussian Processes for Machine Learning},
  publisher = {MIT Press},
  year      = {2006}
}

@inproceedings{salgia2021,
  author    = {Sudeep Salgia and Sattar Vakili and Qing Zhao},
  title     = {A Domain-Shrinking Based {B}ayesian Optimization Algorithm with Order-Optimal Regret Performance},
  booktitle = {Advances in Neural Information Processing Systems (NeurIPS)},
  volume    = {34},
  year      = {2021}
}

@inproceedings{scarlett2017,
  author    = {Jonathan Scarlett and Ilija Bogunovic and Volkan Cevher},
  title     = {Lower Bounds on Regret for Noisy {G}aussian Process Bandit Optimization},
  booktitle = {Proceedings of the 2017 Conference on Learning Theory (COLT)},
  series    = {Proceedings of Machine Learning Research},
  volume    = {65},
  pages     = {1723--1742},
  year      = {2017}
}

@article{sinclair2019,
  author  = {Sean R. Sinclair and Siddhartha Banerjee and Christina Lee Yu},
  title   = {Adaptive Discretization for Episodic Reinforcement Learning in Metric Spaces},
  journal = {Proceedings of the ACM on Measurement and Analysis of Computing Systems (POMACS / SIGMETRICS 2020)},
  volume  = {3},
  number  = {3},
  year    = {2019}
}

@inproceedings{sinclair2020,
  author    = {Sean R. Sinclair and Tianyu Wang and Gauri Jain and Siddhartha Banerjee and Christina Lee Yu},
  title     = {Adaptive Discretization for Model-Based Reinforcement Learning},
  booktitle = {Advances in Neural Information Processing Systems (NeurIPS)},
  year      = {2020}
}

@article{sinclair2023,
  author  = {Sean R. Sinclair and Siddhartha Banerjee and Christina Lee Yu},
  title   = {Adaptive Discretization in Online Reinforcement Learning},
  journal = {Operations Research},
  year    = {2023}
}

@misc{song2019,
  author = {Zhao Song and Wen Sun},
  title  = {Efficient Model-Free Reinforcement Learning in Metric Spaces},
  note   = {arXiv preprint arXiv:1905.00475},
  year   = {2019}
}

@misc{touati2020,
  author = {Ahmed Touati and Adrien Ali Taiga and Marc G. Bellemare},
  title  = {Zooming for Efficient Model-Free Reinforcement Learning in Metric Spaces},
  note   = {arXiv preprint arXiv:2003.04069},
  year   = {2020}
}

@book{tsybakov2009,
  author    = {Alexandre B. Tsybakov},
  title     = {Introduction to Nonparametric Estimation},
  publisher = {Springer},
  address   = {New York},
  year      = {2009}
}

@inproceedings{vakili2021,
  author    = {Sattar Vakili and Kia Khezeli and Victor Picheny},
  title     = {On Information Gain and Regret Bounds in {G}aussian Process Bandits},
  booktitle = {Proceedings of the 24th International Conference on Artificial Intelligence and Statistics (AISTATS)},
  year      = {2021}
}

@inproceedings{vakili2024,
  author    = {Sattar Vakili},
  title     = {Open Problem: Order Optimal Regret Bounds for Kernel-Based Reinforcement Learning},
  booktitle = {Proceedings of the Thirty Seventh Conference on Learning Theory (COLT)},
  series    = {Proceedings of Machine Learning Research},
  volume    = {247},
  pages     = {5340--5344},
  year      = {2024}
}

@inproceedings{vakiliOlkhovskaya2024,
  author    = {Vakili, Sattar and Olkhovskaya, Julia},
  title     = {Kernel-Based Function Approximation for Average Reward Reinforcement Learning: An Optimist No-Regret Algorithm},
  booktitle = {Advances in Neural Information Processing Systems},
  volume    = {37},
  pages     = {25401--25425},
  year      = {2024},
}

@inproceedings{valko2013,
  author    = {Michal Valko and Nathan Korda and R{\'e}mi Munos and Ilias Flaounas and Nello Cristianini},
  title     = {Finite-Time Analysis of Kernelised Contextual Bandits},
  booktitle = {Proceedings of the Twenty-Ninth Conference on Uncertainty in Artificial Intelligence (UAI)},
  pages     = {654--663},
  year      = {2013}
}

@book{wendland2004,
  author    = {Holger Wendland},
  title     = {Scattered Data Approximation},
  publisher = {Cambridge University Press},
  year      = {2004}
}

@inproceedings{whitehouse2023,
  author    = {Whitehouse, Justin and Ramdas, Aaditya and Wu, Steven Z.},
  title     = {On the Sublinear Regret of {GP-UCB}},
  booktitle = {Advances in Neural Information Processing Systems},
  volume    = {36},
  year      = {2023},
  
}

@inproceedings{yangWang2020,
  author    = {Yang, Lin and Wang, Mengdi},
  title     = {Reinforcement Learning in Feature Space: Matrix Bandit, Kernels, and Regret Bound},
  booktitle = {Proceedings of the 37th International Conference on Machine Learning},
  series    = {Proceedings of Machine Learning Research},
  volume    = {119},
  pages     = {10746--10756},
  publisher = {PMLR},
  year      = {2020},
}

@inproceedings{yeh2023,
  author    = {Sing-Yuan Yeh and Fu-Chieh Chang and Chang-Wei Yueh and Pei-Yuan Wu and Alberto Bernacchia and Sattar Vakili},
  title     = {Sample Complexity of Kernel-Based {Q}-Learning},
  booktitle = {Proceedings of the 26th International Conference on Artificial Intelligence and Statistics (AISTATS)},
  year      = {2023}
}

@inproceedings{yang2020provably,
  title={Provably Efficient Reinforcement Learning with Kernel and Neural Function Approximations},
  author={Yang, Zhuoran and Jin, Chi and Wang, Zhaoran and Wang, Mengdi and Jordan, Michael I.},
  booktitle={Advances in Neural Information Processing Systems},
  volume={33},
  year={2020}
}

@article{sunWu2009,
  author  = {Sun, Hongwei and Wu, Qiang},
  title   = {Application of Integral Operator for Regularized
             Least-Square Regression},
  journal = {Mathematical and Computer Modelling},
  volume  = {49},
  number  = {1--2},
  pages   = {276--285},
  year    = {2009},
}

@book{fedorov2013optimal,
  title={Optimal design for nonlinear response models},
  author={Fedorov, Valerii V and Leonov, Sergei L},
  year={2013},
  publisher={CRC Press}
}

@article{alon1997,
  author = {Alon, Noga and Ben-David, Shai and
            Cesa-Bianchi, Nicol{\`o} and Haussler, David},
  title = {Scale-Sensitive Dimensions, Uniform Convergence,
           and Learnability},
  journal = {Journal of the ACM},
  volume = {44},
  number = {4},
  pages = {615--631},
  year = {1997},
}

@misc{hairer2025spdes,
  author = {Hairer, Martin},
  title = {An Introduction to Stochastic {PDEs}},
  year = {2025},
  note = {Lecture notes, November 10, 2025},
}

@misc{anantharam2017variational,
  author = {Anantharam, Venkat},
  title = {A Variational Characterization of {R{\'e}nyi} Divergences},
  year = {2017},
  eprint = {1701.07796},
  archivePrefix = {arXiv},
}

@article{howard2020time,
  author = {Howard, Steven R. and Ramdas, Aaditya and
            McAuliffe, Jon and Sekhon, Jasjeet},
  title = {Time-uniform {Chernoff} Bounds via Nonnegative Supermartingales},
  journal = {Probability Surveys},
  volume = {17},
  pages = {257--317},
  year = {2020},
}

@article{kieferWolfowitz1960,
  author = {Kiefer, J. and Wolfowitz, J.},
  title = {The Equivalence of Two Extremum Problems},
  journal = {Canadian Journal of Mathematics},
  volume = {12},
  pages = {363--366},
  year = {1960},
}

@article{hu2026model,
  author    = {Hu, Haichen and Qian, Jian and Simchi-Levi, David},
  title     = {Model-Based Reinforcement Learning with Double Oracle Efficiency in Policy Optimization and Offline Estimation},
  journal   = {arXiv preprint arXiv:2605.00393},
  year      = {2026},
}

@inproceedings{zhao2018page,
  author    = {Zhao, Xiangyu and Xia, Long and Zhang, Liang and Ding, Zhuoye and Yin, Dawei and Tang, Jiliang},
  title     = {Deep Reinforcement Learning for Page-wise Recommendations},
  booktitle = {Proceedings of the 12th ACM Conference on Recommender Systems},
  pages     = {95--103},
  publisher = {Association for Computing Machinery},
  year      = {2018},
}

@inproceedings{ie2019slateq,
  author    = {Ie, Eugene and Jain, Vihan and Wang, Jing and Narvekar, Sanmit and Agarwal, Ritesh and Wu, Rui and Cheng, Heng-Tze and Chandra, Tushar and Boutilier, Craig},
  title     = {{SlateQ}: A Tractable Decomposition for Reinforcement Learning with Recommendation Sets},
  booktitle = {Proceedings of the Twenty-Eighth International Joint Conference on Artificial Intelligence},
  pages     = {2592--2599},
  year      = {2019},
}

@article{oroojlooyjadid2022beer,
  author    = {Oroojlooyjadid, Afshin and Nazari, MohammadReza and Snyder, Lawrence V. and Tak{\'a}{\v c}, Martin},
  title     = {A Deep {Q}-Network for the Beer Game: Deep Reinforcement Learning for Inventory Optimization},
  journal   = {Manufacturing \& Service Operations Management},
  volume    = {24},
  number    = {1},
  pages     = {285--304},
  year      = {2022},
}

@article{komorowski2018clinician,
  author    = {Komorowski, Matthieu and Celi, Leo A. and Badawi, Omar and Gordon, Anthony C. and Faisal, A. Aldo},
  title     = {The Artificial Intelligence Clinician learns optimal treatment strategies for sepsis in intensive care},
  journal   = {Nature Medicine},
  volume    = {24},
  number    = {11},
  pages     = {1716--1720},
  year      = {2018},
}

@inproceedings{jia2020value,
  author    = {Jia, Zeyu and Yang, Lin F. and Szepesv{\'a}ri, Csaba and Wang, Mengdi},
  title     = {Model-Based Reinforcement Learning with Value-Targeted Regression},
  booktitle = {Proceedings of the 2nd Conference on Learning for Dynamics and Control},
  series    = {Proceedings of Machine Learning Research},
  volume    = {120},
  pages     = {666--686},
  publisher = {PMLR},
  year      = {2020},
}

@inproceedings{mou2022markovian,
  author    = {Mou, Wenlong and Pananjady, Ashwin and Wainwright, Martin J. and Bartlett, Peter L.},
  title     = {Optimal and Instance-Dependent Guarantees for {Markovian} Linear Stochastic Approximation},
  booktitle = {Proceedings of Thirty Fifth Conference on Learning Theory},
  series    = {Proceedings of Machine Learning Research},
  volume    = {178},
  pages     = {2060--2061},
  publisher = {PMLR},
  year      = {2022},
}

@inproceedings{xia2023krylov,
  author    = {Xia, Eric and Wainwright, Martin J.},
  title     = {{Krylov--Bellman} Boosting: Super-Linear Policy Evaluation in General State Spaces},
  booktitle = {Proceedings of the 26th International Conference on Artificial Intelligence and Statistics},
  series    = {Proceedings of Machine Learning Research},
  volume    = {206},
  pages     = {9137--9166},
  publisher = {PMLR},
  year      = {2023},
}

@inproceedings{jia2024offline,
  author    = {Jia, Zeyu and Rakhlin, Alexander and Sekhari, Ayush and Wei, Chen-Yu},
  title     = {Offline Reinforcement Learning: Role of State Aggregation and Trajectory Data},
  booktitle = {Proceedings of Thirty Seventh Conference on Learning Theory},
  series    = {Proceedings of Machine Learning Research},
  volume    = {247},
  pages     = {2644--2719},
  publisher = {PMLR},
  year      = {2024},
}
\bibliographystyle{plainnat}

\end{document}